\tikzset{brace/.style={decorate, decoration={brace}},
  brace mirrored/.style={decorate, decoration={brace,mirror}},
}
\let\emptyset\varnothing
\let\widehat\hat
\def \iid {\stackrel{\text{i.i.d.}}{\sim}}
\def \calib {\textrm{calib}}
\def \train {\textrm{train}}
\def\EQ#1\EQ{\begin{align}#1\end{align}}
\def\EQS#1\EQS{\begin{align*}#1\end{align*}}
\icmltitlerunning{Multi-Distribution Robust Conformal Prediction}
\begin{document}

\twocolumn[
  \icmltitle{Multi-Distribution Robust Conformal Prediction}

  \begin{icmlauthorlist}
    \icmlauthor{Yuqi Yang}{hkust}
    \icmlauthor{Ying Jin}{penn}
  \end{icmlauthorlist}

  \icmlaffiliation{hkust}{Mathematics Department, The Hong Kong University of Science and Technology, Hong Kong, China}
  \icmlaffiliation{penn}{Department of Statistics and Data Science, University of Pennsylvania, Philadelphia, PA, USA}
  \icmlcorrespondingauthor{Ying Jin}{yjinstat@wharton.upenn.edu}

  \vskip 0.3in
]

\printAffiliationsAndNotice{} 

\begin{abstract}
In many fairness and distribution robustness problems, one has access to labeled data from multiple source distributions yet the test data may come from an arbitrary member or a mixture of them.
We study the problem of constructing a conformal prediction set that is uniformly valid across multiple, heterogeneous distributions, in the sense that no matter which distribution the test point is from, the coverage of the prediction set is guaranteed to exceed a pre-specified level. We first propose a max-p aggregation scheme that delivers finite-sample, multi-distribution coverage given any conformity scores  associated with each distribution. Upon studying several efficiency optimization programs subject to uniform coverage, we prove the optimality and tightness of our aggregation scheme, and propose a general algorithm to learn conformity scores that lead to efficient prediction sets after the aggregation under standard conditions. We discuss how our framework relates to group-wise distributionally robust optimization, sub-population shift, fairness, and multi-source learning. In synthetic and real-data experiments, our method delivers valid worst-case coverage across multiple distributions while greatly reducing the set size compared with naively applying max-p aggregation to single-source conformity scores, and can be comparable in size to single-source prediction sets with popular, standard conformity scores. 
\end{abstract}

\section{Introduction}

Reliable uncertainty quantification is critical for deploying machine learning systems in high-stakes domains~\citep{platt1999probabilistic,gal2016uncertainty,guo2017calibration,kompa2021second}. Conformal prediction is a powerful distribution-free framework for this purpose. Given any prediction model, it offers  prediction sets whose coverage guarantees hold without modeling assumptions on the data generating process~\citep{vovk2005algorithmic,lei2018distribution}. 

This paper studies how to maintain such reliability when models are deployed across multiple heterogeneous environments~\citep{crammer2008learning,NIPS2008_0e65972d,hashimoto2018fairness,romano2019equalize_malice}. 
For example, a clinical risk prediction model trained on data from several hospitals must remain reliable when a new patient's record comes from one of these sites. 
Our goal in this work is to construct prediction sets with valid coverage even when it is impossible to reveal  where that patient came from.  

Formally, we assume access to labeled data $\cD = \cup_{k=1}^K \cD^{(k)}$ from $K\in\NN^+$ heterogeneous sources, where each $\cD^{(k)} = \{(X_i^{(k)},Y_{i}^{(k)})\}_{i\in I_k}$ consists of i.i.d.~samples from an unknown distribution $P^{(k)}$, with features  $X_i^{(k)}\in \cX$ and response $Y_i^{(k)}\in \cY$. Let $n=|\cD|$. 
For a new test point $X_{n+1}$ drawn from any of these sources, we aim to build a prediction set $\hat{C}(X_{n+1})\subseteq \cY$ with \emph{uniform coverage}:
\EQ\label{eq:uniform_coverage}
\min_{k\in[K]}~\PP_{\cD\times P^{(k)}}\big( Y_{n+1} \in \hat{C}(X_{n+1})  \big) \geq 1-\alpha ,
\EQ
where $\PP_{\cD\times P^{(k)}}$ denotes the joint distribution of the labeled data and a new test point $(X_{n+1},Y_{n+1})\sim P^{(k)}$. In words, $\hat{C}(\cdot)$ should achieve the nominal coverage level simultaneously for all possible sources.
Several practically important scenarios motivate such a guarantee:

\noindent\textbf{Fairness without protected attributes.} Fair prediction across protected attributes such as race and gender is a central goal of equitable machine learning~\citep{madras2018learning,hashimoto2018fairness}. 
In this context, group-conditional coverage demands the coverage of the prediction sets to hold for all groups~\citep{romano2019equalize_malice,jung2022batch,gibbs2025conformal}. However, existing methods rely on the test group information (i.e., which distribution the  test point is from) which may be  unavailable or protected  in sensitive scenarios~\citep{gupta2018proxy,martinez2021blind,lahoti2020fairness}, necessitating a single prediction set with coverage over all groups.  
When $P^{(k)}$ denotes a sensitive group, uniform coverage~\eqref{eq:uniform_coverage} provides such a guarantee.

\noindent\textbf{Subpopulation shift.} When each distribution represents a subpopulation, a prediction set with uniform coverage~\eqref{eq:uniform_coverage} protects against arbitrary subpopulation shift~\citep{sagawa2019distributionally,santurkar2020breeds,subbaswamy2021evaluating,yang2023change}. 
Formally, subpopulation shift assumes the labeled data come from $P_\text{train} = \sum_{k=1}^K \pi_k P^{(k)}$, with non-negative mixture weights $\{\pi_k\}_{k=1}^K$ that sum to $1$, and each $P^{(k)}$ is a subpopulation   (e.g., hospitals, demographic groups). The test distribution is $P_{\text{test}} = \sum_{k=1}^K \pi_k' P^{(k)}$, with distinct weights $\{\pi_k'\}_{k=1}^K$. Any prediction set obeying~\eqref{eq:uniform_coverage} guarantees valid coverage under any such shift, since $P_{\text{test}}(Y_{n+1}\in \hat{C}(X_{n+1})) = \sum_{k=1}^K \pi_k' \PP_{P^{(k)}}(Y_{n+1}\in \hat{C}(X_{n+1}))\geq 1-\alpha$ once $\sum_{k=1}^K \pi_k'=1$.

\noindent\textbf{Multi-source data.}
Many scientific and engineering applications naturally aggregate heterogeneous datasets collected under different protocols or environments~\citep{crammer2008learning,NIPS2008_0e65972d}, which has attracted interest in conformal prediction~\citep{lu2023federated,spjuth2019combining,liu2024multi}. 
Standard conformal prediction (calibrated on pooled data from all sites)  is  valid only for the mixture distribution induced by the training sample. In contrast,~\eqref{eq:uniform_coverage} offers guarantees to all individual sources, ensuring reliability even when the test data align with only one of them.

\begin{figure}[t]
    \centering
    \includegraphics[width=\linewidth]{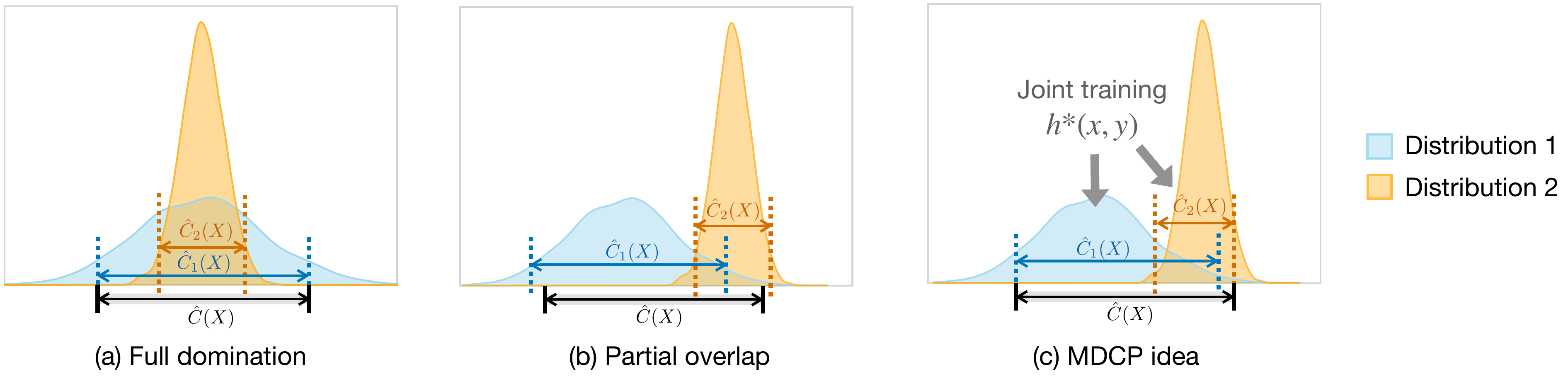}
    \caption{{\small Prediction sets with uniform coverage need to balance multiple distributions. (a) When one source has heavier tails, a valid prediction set $\hat{C}(X)$ may coincide with the larger single-source set. (b) When two sources \emph{partially overlap}, a uniformly valid prediction set $\hat{C}(X)$ may need to be longer than each single-source set. 
    (c) MDCP achieves efficient prediction set with uniform coverage by training a conformity score and aggregating multiple prediction sets from the trained score.}\vspace{-1.5em}}
    \label{fig:intro}
\end{figure}

To achieve uniform coverage~\eqref{eq:uniform_coverage} with a single prediction set $\hat{C}(X_{n+1})$, 
the key challenge is to balance the heterogeneous sources for reasonable efficiency (prediction set size). We demonstrate the efficiency-validity tension via two examples in Figure~\ref{fig:intro}(a-b). 
In panel (a), the label in one source has a more dispersed distribution and therefore requires larger source-wise prediction sets; any single set that attains $1-\alpha$ coverage for that source will typically be conservative for the more concentrated source. In panel (b), different sources place probability mass in different regions of the response, so a uniformly valid set may need to cover multiple regions and can be substantially larger than a single-source set. 

\subsection{Preview of Results}

In this paper, we propose Multi-Distribution Conformal Prediction (MDCP), a general framework for constructing efficient prediction sets that achieve the uniform coverage guarantee~\eqref{eq:uniform_coverage} given per-source datasets. 
Our starting point is a simple, distribution-free \emph{max-p aggregation} mechanism for achieving uniform validity. For each source,
we compute a conformal p-value using any conformity score, and then aggregate these p-values by
taking their maximum. Inverting the aggregated p-value yields a prediction set that is exactly the union of
the single-source conformal sets, and therefore delivers \emph{finite-sample} uniform coverage. 

We then proceed to study the efficiency of this aggregation  when sources are heterogeneous. 
We analyze population-level optimization programs to theoretically characterize the optimal prediction sets with the minimal size/length subject to uniform coverage, which yield concrete guidance for score design.
In specific, there exists one single conformity score function $h^*\colon \cX \times\cY\to \RR$, which depends on a dual problem involving all the source distributions, such that the max-p aggregation based on this score converges to the optimally efficient prediction set. 
Moreover, max-p
aggregation is \emph{tight}:  the optimal set achieves (nearly) exact $1-\alpha$ coverage for at
least one source distribution when coupled with proper per-source conformity scores.

Building on these insights, we develop an end-to-end MDCP procedure that learns the
optimal score and combines it with max-p aggregation to produce a final prediction set (Fig.~\ref{fig:intro}(c)). We prove that MDCP
achieves finite-sample uniform coverage~\eqref{eq:uniform_coverage}, and it asymptotically matches the oracle optimal set under
mild consistency conditions. We instantiate MDCP for both classification and regression using general
learning algorithms and practical training strategies. Finally, in extensive simulations and real-data
applications to satellite imagery and medical service datasets, MDCP achieves tight worst-case coverage while
substantially improving efficiency over naive aggregation baselines, often approaching the size of
single-source prediction sets.

We summarize our contributions as follows:
\vspace{-1em}
\begin{itemize}[left=0.5em]\setlength\itemsep{-0.25em}
    \item We introduce a general max-p aggregation scheme that achieves uniform coverage based on single-source conformal p-values. 
    \item We characterize the population-optimal prediction sets subject to uniform coverage, and show that the max-p aggregation is optimal and tight when paired with properly chosen conformity scores.
    \item We propose an end-to-end pipeline to learn the conformity scores and construct efficient MDCP sets for both classification and regression.
    \item We demonstrate the effectiveness of MDCP in extensive simulations and real data applications. 
\end{itemize}

\noindent\textbf{Related work.}
MDCP sits at the intersection of (i) conformal prediction with multiple data sources,
(ii) robust conformal inference under distribution shift, and (iii) group-conditional or
fairness-motivated coverage guarantees. Prior work in multi-source/federated conformal
prediction studies distinct settings from us such as communication-limited calibration, borrowing strength
across sources, or aggregating source-specific prediction sets~\citep{lu2023federated,liu2024multi}. Robust conformal methods
typically protect against shifts modeled as perturbations around a reference distribution~\citep{cauchois2024robust,jin2023sensitivity,yin2024conformal,ai2024not}, while we address settings where the test distribution is an unknown member or mixture of the sources, thereby necessitating different techniques.
Group-conditional and multi-valid conformal methods target coverage across subgroups, but
require subgroup identity at inference~\citep{romano2019equalize_malice,jung2022batch,gibbs2025conformal}. In contrast, MDCP constructs \emph{one}
prediction set with finite-sample coverage uniformly over all source distributions without
observing the test membership. Finally, this work parallels distributionally-robust optimization (DRO) in standard machine learning, and more specifically, group-DRO which optimizes worst-case performance over several groups/sources~\citep{hashimoto2018fairness,sagawa2019distributionally,mohri2019agnostic}.  Finally, the efficiency optimization is connected to a line of work in optimizing the volume of conformal prediction sets~\citep{lei2014distribution,bai2022efficient,huang2023uncertainty,bars2025volume}. 
See Appendix~\ref{app:related_work} for an extended discussion and references.

\section{Max-$p$ Conformal Prediction}
\label{sec:method}


In this section, we introduce the general max-p aggregation scheme and establish its finite-sample uniform validity. 
Following   split conformal prediction~\citep{vovk2005algorithmic,lei2018distribution}, we assume each data source $\cD^{(k)}$ is randomly split to a training fold $\cD_{\train}^{(k)}$ and a calibration fold $\cD_{\calib}^{(k)}$. 
We assume $\{\cD_{\train}^{(k)}\}_{k=1}^K$ are used to obtain any conformity score function 
$s_k\colon \cX\times\cY\to \RR$ associated with each distribution $k\in [K]$, which can be viewed as independent of the calibration folds and the test data. 

Our method begins with single-source conformal p-values 
\EQS
{p^{(k)}(y):= \frac{1+\sum_{i\in \cD_\calib^{(k)}}\ind\{s_k(X_i,Y_i)\leq s_k(X_{n+1},y)\}}{1+|\cD_{\calib}^{(k)}|}.}
\EQS
By conformal prediction theory, 
inverting $p^{(k)}$ leads to a valid   prediction set for the $k$-th distribution: 
\EQ\label{eq:ind_set}
&\PP_{\cD\times P^{(k)}} \big( Y_{n+1}\in \hat{C}^{(k)}(X_{n+1})\big)\geq 1-\alpha, \\[3pt]
\text{where}\,\,\, &\hat{C}^{(k)}(X_{n+1}) = \big\{y\in \cY\colon p^{(k)}(y) \geq \alpha \}.  \notag
\EQ

Our max-p aggregation scheme computes the maximum p-value, which is then inverted to yield the prediction set:  
\EQ\label{eq:mdcp_set}
\hat{C}(X_{n+1}) = \big\{ y\in \cY\colon p(y)\geq \alpha  \big\},\,\,\,\,  p(y) = \max_{k\in [K]}p^{(k)}(y).
\EQ
It is straightforward to show that this prediction set is the union of single-source prediction sets in~\eqref{eq:ind_set}, and it enjoys finite-sample uniform validity. The proof of Theorem~\ref{thm:uniform_valid} is included in Appendix~\ref{app:subsec_proof_thm}. 


\begin{theorem}
[Finite-sample uniform validity]\label{thm:uniform_valid}
Let $\{p^{(k)}(y)\}_{k=1}^K$ and $p(y)$ be defined above. Then,
    the aggregated set equals the union of the per-source conformal sets:
    $ 
        \widehat{C}(X_{n+1}) = \bigcup_{k=1}^K \widehat{C}^{(k)}(X_{n+1}).
    $ 
    For an independent test point $(X_{n+1},Y_{n+1})\sim P$ from a mixture distribution $P = \sum_k \pi_k P^{(k)}$ with arbitrary weights $\sum_{k=1}^K \pi_k=1$, $\pi_k\geq 0$, the prediction set obeys
    \EQS
    \mathbb{P}_{\mathcal{D} \times P} \bigl( Y_{n+1} \in \widehat{C}(X_{n+1}) \bigr) \geq 1 - \alpha, 
    \EQS 
    which implies the uniform coverage~\eqref{eq:uniform_coverage}  as a special case. 
\end{theorem}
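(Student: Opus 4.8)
The plan is to dispatch the two assertions in turn: first the set identity, which is purely combinatorial, and then the coverage bound, which reduces to the textbook split-conformal guarantee applied source by source.

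\textbf{Set identity.} I would work directly from the definitions in \eqref{eq:ind_set} and \eqref{eq:mdcp_set}. For a fixed candidate label $y$, we have $p(y) = \max_{k\in[K]} p^{(k)}(y) \geq \alpha$ if and only if $p^{(k)}(y)\geq\alpha$ for at least one $k\in[K]$. Translating this equivalence through the definitions of the prediction sets gives $y\in\widehat C(X_{n+1})$ iff $y\in\widehat C^{(k)}(X_{n+1})$ for some $k$, i.e.\ $\widehat C(X_{n+1}) = \bigcup_{k=1}^K \widehat C^{(k)}(X_{n+1})$. No probabilistic input is needed here.

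\textbf{Coverage.} The identity immediately yields the monotonicity $\widehat C^{(k)}(X_{n+1})\subseteq \widehat C(X_{n+1})$ for every $k$, so for a test point drawn from source $k$,
\[
\PP_{\cD\times P^{(k)}}\bigl(Y_{n+1}\in\widehat C(X_{n+1})\bigr)\;\geq\;\PP_{\cD\times P^{(k)}}\bigl(Y_{n+1}\in\widehat C^{(k)}(X_{n+1})\bigr).
\]
It therefore suffices to establish the single-source guarantee \eqref{eq:ind_set}, which I would recall as follows. Conditioning on the training folds $\{\cD^{(j)}_{\train}\}_{j=1}^K$, the score $s_k$ is deterministic; since the samples in $\cD^{(k)}_{\calib}$ and the test point $(X_{n+1},Y_{n+1})$ are i.i.d.\ from $P^{(k)}$, the scores $\{s_k(X_i,Y_i)\}_{i\in\cD^{(k)}_{\calib}}\cup\{s_k(X_{n+1},Y_{n+1})\}$ are exchangeable. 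The usual rank argument --- with the $+1$ in numerator and denominator of $p^{(k)}$ absorbing ties and discreteness --- gives $\PP(p^{(k)}(Y_{n+1})<\alpha)\leq\alpha$ conditionally on the training folds, hence $\PP_{\cD\times P^{(k)}}(Y_{n+1}\in\widehat C^{(k)}(X_{n+1}))\geq 1-\alpha$ after averaging over the training folds.

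\textbf{Mixtures and the uniform-coverage corollary.} For $P=\sum_k\pi_k P^{(k)}$ with $\pi_k\geq 0$ and $\sum_k\pi_k=1$, I would condition on the label of the source component and decompose
\[
\PP_{\cD\times P}\bigl(Y_{n+1}\in\widehat C(X_{n+1})\bigr)=\sum_{k=1}^K\pi_k\,\PP_{\cD\times P^{(k)}}\bigl(Y_{n+1}\in\widehat C(X_{n+1})\bigr)\geq\sum_{k=1}^K\pi_k(1-\alpha)=1-\alpha,
\]
and \eqref{eq:uniform_coverage} follows by specializing to the degenerate mixtures that place all mass on a single source. I do not expect a substantive obstacle; the only point requiring care is the exchangeability step, where one must invoke the standing assumption that each $s_k$ is a function of the training folds alone (so it is legitimately treated as fixed under the conditioning) and that the per-source calibration and test data are genuinely i.i.d.\ draws from $P^{(k)}$ --- everything else is bookkeeping.
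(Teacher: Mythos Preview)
Your proposal is correct and follows essentially the same approach as the paper: both arguments rest on the elementary observation that $\max_k p^{(k)}(y)\geq\alpha$ iff some $p^{(k)}(y)\geq\alpha$ (equivalently, $\widehat C^{(k)}\subseteq\widehat C$), and both then invoke the standard split-conformal validity of each $p^{(k)}$ under $P^{(k)}$. The paper's proof is terser---it argues via the contrapositive on p-values and leaves the mixture decomposition implicit---while you spell out the exchangeability justification and the convex-combination step, but the substance is identical.
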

 
Despite the generality and validity of this approach, several questions remain. 
The first is \emph{tightness}. Since the aggregated set is larger than any single-source prediction set that is valid in its respective distribution, it is unclear whether  the worst-case per-source coverage of~\eqref{eq:mdcp_set} may be way above $1-\alpha$. 
The second is \emph{efficiency}, that is, how to design the individual scores $\{s_k\}_{k=1}^K$ so that the aggregated set is of a reasonable size/length. If the individual sets $\hat{C}^{(k)}(X_{n+1})$ do not overlap enough, their union may be large. 
These are the main questions addressed in the rest of the paper.


\section{Optimality of Max-$p$ Aggregation}
\label{sec:opt_theory}

In this part, we address the questions above by studying the optimality of max-p aggregation. First, we solve several population-level optimization programs to derive the optimal  prediction sets with the smallest size/length subject to uniform coverage. Then, we show that our max-p aggregation is asymptotically equivalent to such optimal sets when the conformity scores converge to an optimal  score.

\subsection{Size Optimization under Uniform Validity}
\label{sec:global_optimization_of_size}

We begin with minimizing prediction set size/length subject to uniform validity when the distributions are known. 
Since our final prediction sets are calibrated to satisfy the marginal uniform coverage~\eqref{eq:uniform_coverage}, one could study a marginal size-minimization program (we include this in Appendix~\ref{app:subsec_mgn_optimal} for completeness). However, in the multi-source deployment setting, the test covariate distribution  can be any mixture of $\{P_X^{(k)}\}_{k=1}^K$, so there is no canonical choice of how to average $|C(X)|$ over $\cX$ when defining the ``optimal size.''  
Instead, we analyze a \emph{pointwise} program: for each fixed $x\in \cX$, minimize $|C(x)|$ subject to uniform conditional coverage across sources: $\min_{k\in[K]}\PP_{P^{(k)}}(Y\in C(X)\given X=x)\geq 1-\alpha$. 
Importantly, we do not claim conditional validity, which is  
in principle impossible to achieve in finite sample~\citep{foygel2021limits}. Rather, we use the optimal form of prediction sets to guide the learning of  conformity scores.

Let $(\mathcal{X},\mathcal{A},\nu)$ and $(\mathcal{Y},\mathcal{B},\mu)$ be finite measure spaces with $\nu(\mathcal{X})<\infty$ and $\mu(\mathcal{Y})<\infty$, and write $\rho := \nu \otimes \mu$, where $\mu$ is the count measure for classification and the Lebesgue measure for regression. 
Throughout the paper, we assume that for each $k=1,\dots,K$, the covariate distribution  $P_X^{(k)}$ admits a density $r_k(x)$ with respect to $\nu$, 
and that $Y\given X=x$ has density
$f_k(\cdot\given x)$ with respect to $\mu$. 
For a measurable subset $C(X)\subseteq\cY$, we define $|C(X)|$ as the cardinality in classification problems when $|\cY|<\infty$, and the Lebesgue measure in regression problems when $\cY=\RR$.\footnote{For clarity, we assume sufficient regularity of the underlying distributions so the prediction sets considered are measurable.}


\vspace{-0.5em}
\paragraph{Optimal prediction set under conditional validity.} 
Consider the following problem for any  $x\in \cX$: 
\EQ\label{eq:opt_cond}
&\mathop{\mathrm{minimize}}_{C(\cdot)}~  |C(x)| \;=\; \int_{\mathcal{Y}} \mathbbm{1}_{\{y\in C(x)\}}\,d\mu(y) \\
&\text{s.t.}~ 
\int_{\mathcal{Y}} \mathbbm{1}_{\{y\in C(x)\}}\, f_k(y\given x)\, d\mu(y)\ \ge\ 1-\alpha,~\forall k\in[K]. \notag 
\EQ
Any sets obeying the constraints in~\eqref{eq:opt_cond} for every $x\in \cX$ also obeys uniform marginal coverage, so the conditional feasible set is smaller than the marginal one. Therefore, integrating $|C^*(x)|$ over any distribution over $\cX$ is no smaller than the corresponding marginal optimum (Appendix~\ref{app:subsec_mgn_optimal}).  

Solving~\eqref{eq:opt_cond} amounts to a change-of-variable via the indicator function $I_x(y):=\ind\{y\in {C}(x)\}$. 
For a clear presentation, we relax the range of $I_x(y)$ to $[0,1]$, so that $I_x(y)$ can be viewed as the probability of $y\in C(x)$ for a randomized prediction set. This relaxation is without loss for our characterization: the objective and constraints are linear in $I_x(y)$, so an optimum is attained by an indicator except possibly on the boundary where randomization can be used to achieve exact coverage.
Theorem~\ref{thm:global_conditional_optimal} offers the form of optimal prediction sets, whose proof relies on solving the dual problem of~\eqref{eq:opt_cond}, and is included in Appendix~\ref{app:subsec_proof_cond_opt}.

\begin{theorem}[$X$-conditional optimality]\label{thm:global_conditional_optimal}
For a fixed $x\in\cX$,  the dual problem of~\eqref{eq:opt_cond} is (writing \( u _+:=\max\{u,0\}\))
\EQ\label{eq:cond-dual-obj}
\hspace{-1em}\max_{\{\lambda_k\}_{k=1}^K}~&\textstyle{ (1 - \alpha) \sum_{k=1}^K \lambda_k(x) - \int_\cY \left( h_{\lambda}(x,y) - 1 \right)_+ d\mu(y)}, \notag \\ 
&\text{where } \textstyle{h_\lambda(x,y):=\sum_{k=1}^K \lambda_k(x)\,f_k(y\given x)}.
\EQ 
Let \(\lambda^*(x)=(\lambda_1^*(x),\ldots,\lambda_K^*(x))\) be an optimal solution to~\eqref{eq:cond-dual-obj}. Then an optimal solution to~\eqref{eq:opt_cond} is
\EQ\label{eq:opt_set_form}
& C^\ast(x)=\big\{y\in\cY: h_{\lambda^*}(x,y)>1\big\}\cup S(x)
\EQ 
for some $S(x)\subseteq \big\{y\in\cY: h_{\lambda^*}(x,y)=1\big\}$.
Moreover, $\lambda^*(x)$ determines which source-wise coverage constraints are met with equality: (i) if $\lambda_k^*(x)>0$, then \(P^{(k)}(Y_{n+1}\in C^*(x)\given X_{n+1}=x)=1-\alpha\); (ii) if $\lambda_k^*(x)=0$, then \(P^{(k)}(Y_{n+1}\in C^*(x)\given X_{n+1}=x)\ge 1-\alpha\); and (iii) there exists some $k^*\in[K]$ such that $\lambda_{k^*}^*(x)>0$. Finally, if additionally \(\mu(\{y\in\cY: h_{\lambda^*}(x,y)=1\})=0\), then $C^\ast(x)$ is unique up to $\mu$-null sets.
\end{theorem}

\vspace{-0.5em}
Here, in the complementary slackness results, the coverage $P^{(k)}(Y_{n+1}\in C^*(x)\given X_{n+1}=x)$ should be interpreted as randomizing the inclusion of $y\in \cY$ with some probability $I^*_x(y)\in[0,1]$, where $I_x^*(y)$ is the optimal solution to~\eqref{eq:opt_cond}.

Theorem~\ref{thm:global_conditional_optimal} reveals that for each $x$, the optimal set keeps the smallest $\mu$-measure subset of $\cY$ that contains at least $1-\alpha$ conditional probability mass under every source. The dual weights $\lambda_k^*(x)$ quantify which sources are locally hardest to cover: $\lambda_k^*(x)>0$ iff the $k$-th constraint is active at $x$. The resulting score $h_{\lambda^*}(x,y)=\sum_{k=1}^K \lambda_k^*(x)f_k(y\given x)$ acts as a ``shared score,'' and the optimal set is its superlevel set at threshold 1 (plus optional boundary randomization).



\subsection{Asymptotic Optimality of Max-p Aggregation}
\label{subsec:opt_estimated}

Having studied the population-level optimal solutions, we proceed to show that our max-p aggregation is indeed \emph{optimal} and \emph{tight}. 
Connecting Section~\ref{sec:global_optimization_of_size} with max-p aggregation,  
our result states that, when using individual conformity scores that converge to an optimal score, the resulting prediction set converges to the optimal (while retaining finite-sample coverage due to Theorem~\ref{thm:uniform_valid}). 

As discussed, we focus on the (conceptually natural) conditional problem~\eqref{eq:opt_cond}, and analogous results for the marginal problem follow similar ideas. 
Let $\lambda^*(x) \in \mathbb{R}_+^K$ be a dual maximizer for the program~\eqref{eq:opt_cond}, and let \(h^*(x, y) := \sum_{k=1}^K \lambda^*_k(x) f_k(y|x)\).
Recall that the optimal set is of the form $C^*(x) = \{ y : h^*(x, y) > 1 \} \cup S^*(x)$, with $S^*(x) \subseteq T(x) := \{ y : h^*(x, y) = 1 \}.$
Here, one may either randomize the  inclusion of the boundary set $T(x)$ to achieve exact $1-\alpha$ coverage, or include $T(x)$ with slightly inflated coverage. In classification problems, such choices would affect the prediction set size since $|T(x)|$ is non-negligible under the count measure. 
Therefore, to avoid over-complication, we stick to the generic form of $C^*(x)$  above and isolate $S^*(x)$ in our results.

To describe the prediction set under max-p aggregation, we follow the procedure in Section~\ref{sec:method}. 
Splitting each source into training and calibration folds, we write $n_k=|\cD_\calib^{(k)}|$. Assuming  access to estimators $\widehat{f}_k^{(n)}(\cdot \given \cdot)$ and $\widehat{\lambda}^{(n)}(\cdot)$ obtained from $\cup_{k=1}^K \cD_{\train}^{(k)}$, we  define $\widehat{h} (x, y) := \sum_{k=1}^K \widehat{\lambda}_k (x) \widehat{f}_k (y\given x)$, and use the conformity score $s_k(x, y) := -\widehat{h} (x, y)$ to construct the prediction set~\eqref{eq:mdcp_set}. 
To simplify boundary conditions, we adopt the randomized version of our general max-p aggregation which remains finite-sample valid. Specifically, for source $k\in[K]$ and a candidate label value $y\in \cY$ at a feature value $x\in\cX$, we define the randomized p-value
    $p^{(k)}(x, y)  :=
    \frac{\sum_{i\in\cD_\calib^{(k)}} \mathbbm{1}\{ S_{k,i}  > s_{k} (x, y) \} + ( 1 + \sum_{i\in\cD_\calib^{(k)}} \mathbbm{1}\{ S_{k,i}  = s_{k} (x, y) \} )\cdot U_k}{n_k + 1}$, 
    where $U_k \sim \mathrm{Unif}([0,1])$ are i.i.d.~and independent of everything else,  and we write $S_{k,i}  = -\widehat{h} (X_i^{(k)}, Y_i^{(k)})$, and $s_{k} (x, y) = -\widehat{h} (x, y)$.
Finally, we construct our prediction set at level $\alpha\in (0,1)$ via 
    $ 
        \widehat{C}^{(n)}(x) := \{ y : p (x, y) \geq \alpha \}$, where $p (x, y) := \max_k~ p^{(k)}(x, y).$ 
    We use the superscript to emphasize the dependence on the sample size.
    
    Theorem~\ref{thm:consistency} provides a size  guarantee for our aggregated set as $n_k\to \infty$, controlled by the tie-region size. With slight abuse of notation, we identify $C(x)$ from its graph $\{(x,y)\colon y\in C(x)\}\subseteq \cX\times\cY$, and denote the volume as $|C| := \int_X \mu(C(x)) d\nu(x)$. The proof is in Appendix~\ref{app:subsec_consistency}. 

\begin{theorem}\label{thm:consistency}
Assume for each $k$, there exists constants $B_k\in \RR^+$ such that $\sup_{x,y} f_k(y|x) \leq B_k$, and 
        $\sup_x \| \widehat{\lambda} (x) - \lambda^*(x) \|_{\infty} \overset{p}{\rightarrow} 0$, and 
        $\sup_{x, y} | \widehat{f}_k (y|x) - f_k(y|x) | \overset{p}{\rightarrow} 0$ as $n_k,n\to \infty$,
    where $\sup_x \| \widehat{\lambda} (x) \|_{\infty} \leq M$ almost surely for a constant $M>0$. Then  we have 
    $
        \sup_{x,y} |\widehat{h} (x, y) - h^*(x, y)| \overset{p}{\rightarrow} 0.
    $
Furthermore, let $T := \{ (x, y) : h^*(x, y) = 1 \}$ and write its measure as $\rho(T) = \int_{T} d\mu(y) d\nu(x)$. Then
for any optimal solution $C^*$ to~\eqref{eq:opt_cond}, we have
\EQS
        \limsup_{n \to \infty} \big|\, |\widehat{C}^{(n)}| - |C^*| \,\big| \leq \rho(T).
\EQS
Thus, if $\rho(T)=0$, then we have $\rho\! (\hat C^{(n)}\triangle C^* )\overset{p}\to 0$.
\end{theorem}

Among the conditions above,  the consistency of the conditional density estimator $\hat{f}_k$ holds as per-source sample sizes tend to $+\infty$ under classical conditions for parametric and nonparametric estimation theory with appropriate estimation strategies~\citep{van2000asymptotic,gyorfi2002distribution}. The consistent estimation of $\lambda_k^*(\cdot)$ will be studied in Section~\ref{sec:empirical_dual} under standard conditions. Accordingly, the estimation error of $\hat{f}_k$ and $\hat\lambda_k$ translates to the gap between the size of MDCP and that of the oracle sets; however, this only affects the efficiency, and the finite-sample uniform validity is always guaranteed by max-p aggregation.

In words, Theorem~\ref{thm:consistency} shows that, as long as our procedure in Section~\ref{sec:method} is instantiated with individual scores $\{s_k(\cdot,\cdot)\}$ that are consistent for $-h^*(\cdot,\cdot)$, the MDCP set is asymptotically equivalent to  $C^*(x)$ up to the boundary set $T(x)$ (whose inclusion may depend on practitioners' choice). This result has two key takeaways. First, the max-p aggregation is \emph{optimal} up to the set $T$, since it   attains the oracle-optimal prediction set size. Second, the max-p aggregation is (pointwise) \emph{tight}, since $C^*$ is shown in Theorem~\ref{thm:global_conditional_optimal} to achieve exact $1-\alpha$ (conditional) coverage for at least one distribution. 
\begin{remark}
The proof of Theorem~\ref{thm:consistency} also yields a characterization of $\hat{C}^{(n)}$ when $\rho(T)\neq 0$. 
In this case, for $\nu$-almost all $x\in\cX$, there exists a $\cY$-measurable set $S_\infty(x) \subseteq T(x) \subseteq \cY$  
such that there exists a subsequence \(\{n^{(j)}\}\), along which
    $ 
        \rho\big( \widehat{C}^{(n^{(j)})} \triangle  ( \{ h^* > 1 \} \cup S_\infty  ) \big)  \overset{p}{\rightarrow} 0.
    $
    Consequently, if $C^*$ in~\eqref{eq:opt_set_form} is chosen with $S^*(x) = S_\infty(x)$, then $|\widehat{C}^{(n^{(j)})}|  \overset{p}{\rightarrow} |C^*|$ (even when $\rho(T) > 0$).
\end{remark}


While we focus on the conditional problem throughout, we shall see \emph{empirically} that aiming for the conditionally-optimal prediction set typically leads to tight \emph{marginal} worst-case coverage when evaluated over specific test samples.

\section{Practical Algorithms}
\label{sec:algorithm}

Having established the conceptual foundations of MDCP, 
in this section, we develop  concrete algorithms  in classification and regression problems. 
%
%
We focus on approximating the conditionally optimal score in Theorem~\ref{thm:global_conditional_optimal}, which relies on the conditional models $f_k(y\given x)$ and the unknown dual functions $\{\lambda^*_k(x)\}_{k=1}^K$. 
A natural idea is to estimate the conditional models and approximate $s_k(x,y):= - h_{\lambda^*}(x,y)$, and couple them with the max-p aggregation.  
%
%

In Section~\ref{sec:empirical_dual}, we introduce the general dual objective that allows the estimation of $\{\lambda_k^*(x)\}_{k=1}^K$, and demonstrate the consistency of this approach under suitable conditions. 
We then present the concrete implementations for classification in Section~\ref{subsec:alg_classification} and for regression in Section~\ref{subsec:alg_regression}, respectively.


\subsection{Optimizing Scores via an Empirical Dual Objective}
\label{sec:empirical_dual}

Section~\ref{subsec:opt_estimated} motivates us to approximate the optimal solution $\lambda(\cdot)$ to the (integrated) dual problem 
\EQ\label{eq:Phi_lambda}
\textstyle{\Phi(\lambda) :=  \int_{\cX} \phi_x(\lambda) d\tilde\nu(x)} 
\EQ
for a properly chosen distribution $\tilde\nu(\cdot)$, where $\phi_x(\lambda)$ is the maximization objective in~\eqref{eq:cond-dual-obj}, and recall that $\mu(\cdot)$ is the counting (resp.~Lebesgue) measure in classification (resp.~regression). 
Theorem~\ref{thm:equiv_informal} is a simplified statement of a formal result in Appendix~\ref{app:subsubsec_equiv-integrated-conditional-dual}. 

\begin{theorem}[Informal]\label{thm:equiv_informal}
    For any $\tilde\nu(\cdot)$ that covers the support of $P^{(k)}(X)$,  the optimal solution $\lambda^*(\cdot)$ that maximizes $\Phi(\lambda)$ in~\eqref{eq:Phi_lambda} coincides with the dual solution $\lambda^*(x)$  in Theorem~\ref{thm:global_conditional_optimal}.
\end{theorem}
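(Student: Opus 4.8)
The plan is to show that maximizing the integrated dual $\Phi(\lambda)$ over functions $\lambda\colon\cX\to\RR_+^K$ decouples across $x$, so that the maximizer is obtained pointwise and agrees with the per-$x$ dual maximizer of $\Phi_x$ from Theorem~\ref{thm:global_conditional_optimal}. First I would rewrite $\Phi(\lambda) = \int_\cX \Phi_x(\lambda(x))\, d\tilde\nu(x)$, where $\Phi_x(\lambda(x)) = (1-\alpha)\sum_k \lambda_k(x) - \int_\cY (h_{\lambda(x)}(x,y)-1)_+\, d\mu(y)$ is exactly the conditional dual objective~\eqref{eq:cond-dual-obj}. Since the value of $\lambda$ at a point $x$ contributes to the integrand only through $\Phi_x(\lambda(x))$, and there are no cross-$x$ constraints (the constraint $\lambda(x)\in\RR_+^K$ is separable), the integral is maximized by choosing, for each $x$, a maximizer $\lambda^*(x)\in\arg\max_{\lambda\in\RR_+^K}\Phi_x(\lambda)$. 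This gives $\sup_\lambda \Phi(\lambda) = \int_\cX \sup_{\lambda(x)}\Phi_x(\lambda(x))\, d\tilde\nu(x)$, and any pointwise-optimal selection is globally optimal; conversely any global maximizer must be pointwise optimal for $\tilde\nu$-almost every $x$.

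Second, I would verify the measurability and integrability needed to justify the interchange of supremum and integral: $\Phi_x(\lambda)$ is jointly measurable in $(x,\lambda)$ (it is continuous in $\lambda$ for each $x$, being a finite sum minus an integral of a convex-in-$\lambda$ integrand, and measurable in $x$ under the standing regularity assumptions on the densities $f_k(\cdot\mid\cdot)$), so by a standard measurable-selection argument (e.g.\ the theory of normal integrands / Aumann's selection theorem) there is a measurable selector $x\mapsto\lambda^*(x)$ attaining the pointwise supremum, and $\Phi_x(\lambda^*(x))$ is integrable against $\tilde\nu$ because $\mu(\cY)<\infty$, $f_k$ is bounded, and $\|\lambda^*(x)\|$ is bounded under the assumptions used in Theorem~\ref{thm:global_conditional_optimal}. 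Then I would invoke strong duality for the pointwise program~\eqref{eq:opt_cond}, already established in Theorem~\ref{thm:global_conditional_optimal} (a linear program with Slater-type feasibility since $C(x)=\cY$ is feasible with strict inequality when $\alpha>0$), to conclude that this pointwise maximizer of $\Phi_x$ is precisely the dual solution $\lambda^*(x)$ characterizing the optimal set $C^*(x)$ in Theorem~\ref{thm:global_conditional_optimal}. The role of the hypothesis ``$\tilde\nu$ covers the support of $\cX$'' enters here: on the support of $\tilde\nu$ the global maximizer must coincide $\tilde\nu$-a.e.\ with the per-$x$ dual solution, and since the data distribution's support is contained in that of $\tilde\nu$, the learned score $h_{\lambda^*}$ matches the oracle score wherever it matters for calibration.

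The main obstacle I anticipate is not the decoupling itself — that is essentially Fubini plus separability of the constraint — but rather making the measurable-selection and uniqueness bookkeeping clean. In particular, $\arg\max_{\lambda\in\RR_+^K}\Phi_x(\lambda)$ need not be a singleton (the dual of an LP can have a face of optima, exactly as the primal boundary set $T(x)$ reflects), so the statement ``coincides with'' must be interpreted as: every global maximizer restricted to $\mathrm{supp}(\tilde\nu)$ is a pointwise dual maximizer, and conversely. I would state the formal version with this ``up to the set of dual maximizers'' caveat, and note that under the extra nondegeneracy condition $\mu(\{y: h_{\lambda^*}(x,y)=1\})=0$ from Theorem~\ref{thm:global_conditional_optimal} the pointwise maximizer is unique, so the correspondence is a genuine equality of functions $\tilde\nu$-a.e. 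Finally I would remark that boundedness of the maximizers (needed for integrability and to match the constant $M$ in Theorem~\ref{thm:consistency}) follows because $\Phi_x(\lambda)\to-\infty$ as $\|\lambda\|\to\infty$ along any ray with $h_\lambda$ not identically $\le 1$, which holds whenever each $f_k(\cdot\mid x)$ is a genuine density; this coercivity should be stated as a lemma and used both here and in the consistency analysis.
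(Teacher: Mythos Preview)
Your proposal is correct and follows essentially the same approach as the paper's proof (Proposition~\ref{prop:equiv-integrated-conditional-dual}): rewrite $\Phi$ as an integral of the pointwise conditional dual $\Phi_x$ against $\tilde\nu$, use separability to interchange $\sup$ and $\int$, and identify the pointwise maximizer with the dual solution of Theorem~\ref{thm:global_conditional_optimal}. If anything, your treatment is more careful than the paper's own proof about measurable selection, non-uniqueness of the dual argmax, and coercivity of $\Phi_x$; the paper simply asserts that a measurable pointwise maximizer exists and that replacing a suboptimal $\lambda(\cdot)$ on a set of positive $\tilde\nu$-measure strictly increases the objective.
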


A convenient option is to take $\tilde\nu(\cdot)$ as the covariate distribution for the pooled dataset. 
This leads to the empirical dual objective (replacing expectation by empirical average, and unknown quantities by estimates)
\EQ\label{eq:algo_empirical_objective}
\textstyle{\hat{\Phi}_{\mathrm{marg}}(\lambda(\cdot))} := & \textstyle{\frac{1}{n}\sum_{i\in \cD} \big[ \bigl(1 - \hat{h}_\lambda(X_i,Y_i)\bigr)_- /\hat{p}_\text{pool}( Y_i \given X_i) \big]} \notag \\
& + (1-\alpha) \textstyle{\frac{1}{n}\sum_{i\in \cD} \sum_{j=1}^K \lambda_j(X_i)}, 
\EQ
where $\cD=\{(X_i,Y_i)\}_{i=1}^n = \cup_{k=1}^K \{(X_i^{(k)},Y_i^{(k)})\}_{i\in I_k}$ is the pooled dataset. Here, $\hat{h}_\lambda(x,y) = \sum_{=1}^K \lambda_k(x)\hat{f}_k(y\given x)$ is a plug-in estimate of $h_\lambda(x,y)$ in~\eqref{eq:opt_set_form}, $\hat{p}_\text{pool}(y\given x)$ estimates $p_{\text{pool}}(y\given x) = {\sum_{k=1}^K  {w}_k  {f}_k(x,y)}\, /\, {\sum_{k=1}^K  {w}_k  {f}_k(x)}$, the conditional density for the pooled data, and $w_k$ is the fraction of the $k$-th source data among the pooled dataset. 
A natural idea is then to parameterize the function $\lambda(\cdot)$  and solve the empirical risk minimization (ERM) problem~\eqref{eq:algo_empirical_objective}. 
In Appendix~\ref{app:subsec_sieve}, we justify this approach with sieve approximation; below is a simplified statement of our formal results.

\begin{theorem}[Informal]\label{thm:consist_informal}
    Under smoothness conditions, with a sieve approximation for $\lambda^*(\cdot)$ and consistency of $\{\hat{f}_k(y\given x)\}_{k=1}^K$ and $\hat{p}_{\text{pool}}(y\given x)$, the ERM minimizer $\hat\lambda(\cdot)$ to~\eqref{eq:algo_empirical_objective} obeys $\|\hat\lambda-\lambda^*\|_{\infty}=o_P(1)$. 
\end{theorem}

In words, if $\{\hat{f}_k(y\given x)\}_{k=1}^K$ and $\hat{p}_{\text{pool}}(y\given x)$ are  consistent  (c.f.~the discussion below Theorem~\ref{thm:consistency}), and the underlying distribution obey standard smoothness conditions, the ERM minimizer $\hat\lambda(\cdot)$ obtained from sieve estimation~\citep{chen2007large} is consistent. These standard conditions then lead to the asymptotic optimality of MDCP by Theorem~\ref{thm:consistency}.

\subsection{Algorithm for Classification}
\label{subsec:alg_classification}

We now state the concrete MDCP algorithm for the classification setting. 
Recall that we split the labeled data into the training fold $\cD_{\train}=\cup_{k=1}^K \cD_{\train}^{(k)}$ and the calibration fold $\cD_{\calib}=\cup_{k=1}^K \cD_{\calib}^{(k)}$. 
For each source \( k \), we fit a classifier \(\hat{p}_k(y \given x)\) on the training fold $\cD_{\train}^{(k)}$ by any off-the-shelf algorithm. 
Next, we learn \(\lambda^*(x)\) via solving an empirical optimization objective. 
We approximate the covariate-dependent, nonnegative weights \(\lambda_k(x)\) via basis functions such as splines or hidden representations from neural networks. Let
\(
\Lambda(x) \in \mathbb{R}^m
\)
denote the vector of basis functions evaluated at a covariate value \(x\). 
For \(K\) sources, we collect the basis  coefficients into a matrix
\(
\Theta \in \mathbb{R}^{K \times m},
\)
with row \(\theta_j^\top\) parameterizing the \(j\)-th weight function. We then define
\EQ\label{eq:lambda_k_theta}
\lambda_k(x;\Theta) = \operatorname{softplus}\bigl(\Lambda(x)^\top \theta_k\bigr),
\quad k = 1,\dots,K,
\EQ
where \(\operatorname{softplus}(t) = \log(1 + e^{t})\) is applied elementwise. 
Accordingly, the score function with parameter $\Theta$ is 
$
h(x,y;\Theta) := \textstyle{\sum_{k=1}^K \lambda_k(x;\Theta) \cdot \hat{p}_k(y\given x)}.
$

We fit the parameters \(\hat\Theta\) by maximizing the Lagrangian-inspired empirical objective  in Section~\ref{sec:empirical_dual}. For a miscoverage level \(\alpha\), 
the objective as a function of $\Theta$ is given by 
\EQ\label{eq:classi_objective}
\hat{\mathbb{E}}_{\cD_{\train}} \left[\textstyle{\frac{(1 - h(X,Y;\Theta))_-}{\hat{p}_{\text{pool}}(Y \given X)}} + (1 - \alpha) \textstyle{\sum_{k=1}^K} \lambda_k(X;\Theta)\right],
\EQ
where $\hat\EE_{\cD_{\train}}[\cdot]$ denotes the empirical mean over the pooled training fold,  $(t)_-=\min\{t,0\}$, and $\hat{p}_{\text{pool}}(y\given x)$ is an estimator for $p_{\text{pool}}(y\given x)$ which can be obtained by fitting a classifier over the pooled data.
Finally, given the fitted parameters $\hat\Theta$, we define the (source-invariant) score function 
\EQS
s_k(X_i, Y_i) :=  -\textstyle{\sum_{\ell=1}^K} \lambda_{\ell }(X_i;\hat\Theta) \hat{p}_\ell(Y_i | X_i),~k=1,\dots,K,
\EQS
and use them to calibrate the MDCP set following~\eqref{eq:mdcp_set}. 
%
The entire procedure is summarized in Algorithm~\ref{algo:classi_MDCP} in Appendix~\ref{app:subsec_alg}, which also covers regression problems below. 



\subsection{Algorithm for Regression}
\label{subsec:alg_regression}

In regression problems, the data splitting, parameterization and estimation are similar, and we follow Section~\ref{subsec:alg_classification}. The key difference is in fitting the conditional density function $f_k(y\given x)$. 
While one can use any estimator, here we model $Y = \mu_k(X)+\sigma_k(X)\cdot\epsilon$ for some $\epsilon\sim N(0,1)$. 
Then, we use flexible methods such as gradient boosting to estimate $\mu_k(x)$ and $\sigma_k(x)$ using each $\cD_{\train}^{(k)}$; see Appendix~\ref{app:subsec_details} for a detailed procedure. 

Given estimators  $\hat\mu_k(x)$ and $\hat\sigma_k(x)$, our working model is $\hat{f}_k(y\given x) \propto e^{-\frac{(y-\hat\mu_k(x))^2}{2\hat\sigma_k(x)^2}}$. In the single-source case, this reduces to~\cite{lei2018distribution}. 
We follow the same parameterization and ERM objective as in classification to obtain parameters $\hat\Theta$ and scores $s_k(x,y) = -\sum_{\ell=1}^K \lambda_\ell(x;\hat\Theta)\hat{f}_\ell(y\given x)$, which are then used to calibrate the single-source p-values and the MDCP set identical to Section~\ref{subsec:alg_classification}. 

Finally, we note that the MDCP set is nontrivial to compute since thresholding the score function $s_k(x,y) = -\sum_{\ell=1}^K \lambda_\ell(x;\hat\Theta)\hat{f}_\ell(y\given x)$ does not necessarily lead to an interval. However, as we model $\hat{f}_k(y\given x)$ as a normal distribution, the MDCP set must be the union of a finite number of intervals. This allows us to compute a super-set of our MDCP set via an efficient grid search. 
For brevity, we defer the details and justifications to Appendix~\ref{app:subsubsec_y-grid-superset}.

\section{Simulation Studies}
\label{sec:simu}


In this section, we assess the  validity and efficiency of our algorithms in diverse classification and regression settings, and investigate how the heterogeneity and separation among sources impact the performance. 

\subsection{Simulation Settings}
\label{sec:sim-common}

We outline the common setup in both classification and regression settings. 
We consider $K=3$ sources, a feature  dimension of $d=10$, and a nominal level at $\alpha=0.1$. 
Across all settings, features follow $X_i^{(k)} \sim \mathcal{N}\!\big(0, \Sigma\big)$ with  $\Sigma_{ij}=0.2+0.8\ind\{i=j\}$, and the heterogeneity across sources is in the conditional label distribution.
In each run, we draw a set $\cI \subset \{1, \dots, d\}$ of size $|\cI| = 4$ uniformly at random, so labels depend on $X$ only through $X_{\cI}$. 
We study three suites: \textsc{Linear}, where labels depend linearly on $X_{\cI}$; \textsc{Nonlinear}, which replaces this dependence with a nonlinear function of $X_{\cI}$ (all else unchanged); and \textsc{Temperature}, a linear model in which a ``temperature'' parameter $\tau$ controls inter-source heterogeneity or separation.

The specific data generating processes (DGPs) are given in Section~\ref{sec:sim-classification} and Section~\ref{sec:sim-regression} for classification and regression settings, respectively. Across all experiments, we generate $n_k=2000$ labeled samples from each source, and randomly split pooled data into training (37.5\%), calibration (12.5\%), and test (50\%). We compare our MDCP Algorithm~\ref{algo:classi_MDCP} with two competing methods:
(1) \textsc{Baseline-src-$k$}: 
    Standard conformal prediction set $\hat C_{\text{src-}k}$ with calibration data from source $k$. 
(2) \textsc{Baseline-agg}: 
    A simple max-$p$ aggregation of per-source prediction sets $\hat C_\text{max-p}:= \cup_{k=1}^K \hat C_{\text{src-}k}$. This is the baseline without efficiency-oriented score learning. 
Finally, to demonstrate the efficiency gap from density estimation, we include an \textsc{Oracle} method with access to the ground-truth density for estimating $\lambda^*$ and the scores.
    
    

The results are averaged over 100 independent runs.
For fair comparison, all methods build on the same prediction models specified later, under which the single-source baseline is standard conformal prediction sets with the widely-used TPS score~\citep{sadinle2019least} 
in classification 
and the variance-adaptive score of~\cite{lei2018distribution} in regression. 

\subsection{Simulations in Classification Settings}
\label{sec:sim-classification}

We consider 6-class classification with source-dependent label distributions $f_k(y=c \given x) \propto \exp\{\eta_{kc}(x)\}$ with $\eta_{kc}(x) = \xi_k (b_{kc} + {\beta}_{kc}^\top x  ) + \mathbbm{1}\{c>1\} g(x)$ where $c\in[6]$; see Appendix~\ref{app:sim-class-dgp} for the full DGP and hyperparameter sampling, 
Appendix~\ref{app:sim-class-impl} for model-fitting  details.


Figure~\ref{fig:iter_eval_class_overall_vanilla} reports average coverage and set size on the pooled test set, along with worst-case coverage (\textsc{Linear}).
The single-source sets lead to severe under-coverage. 
Due to max-p aggregation, both \textsc{Baseline-agg} and MDCP achieve valid worst-case coverage, yet MDCP delivers (i) significant efficiency improvement, and (ii) tight worst-case coverage. MDCP  sets  are on average 34.39\% smaller than \textsc{Baseline-agg}. 
The tight worst-case coverage suggests that complementary slackness  of the conditional optimal problem remains strong marginally (Theorem~\ref{thm:global_conditional_optimal}). 
Finally, the efficiency of the MDCP is close to \textsc{Oracle}, showing the robustness to density estimation.


\begin{figure}[t]
    \centering

    \includegraphics[width=1\linewidth]{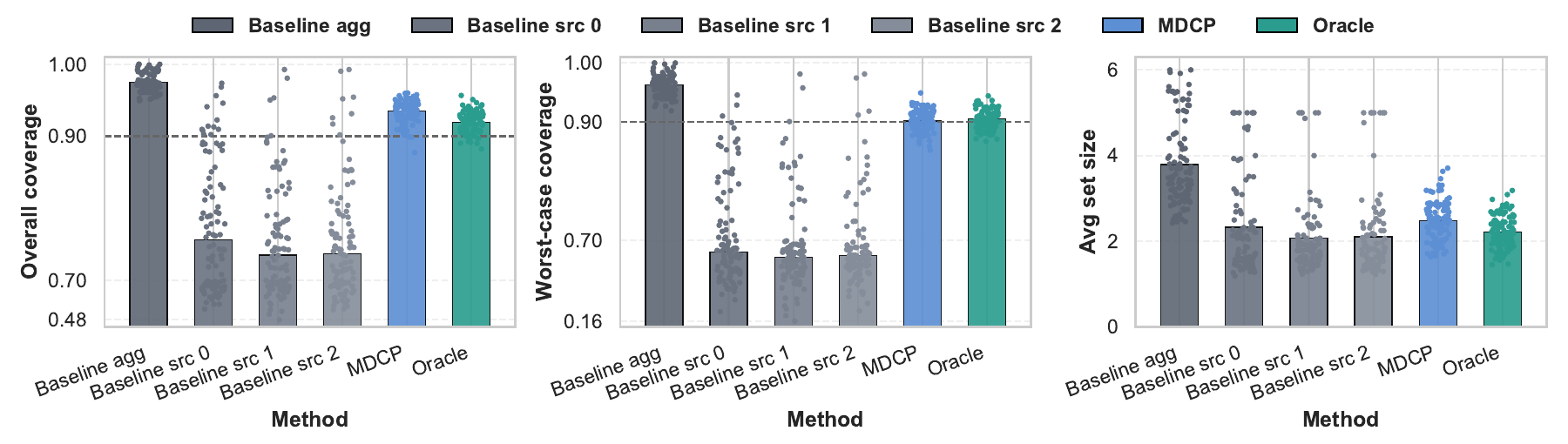}
    \caption{{\footnotesize Results in the classification \textsc{Linear} experiments, where the bars are averaged over $100$ runs, and the dots are per-run results. Left: coverage  over all test data. Middle: worst-case coverage over single-source test data. Right: average set size  over all test data.}}
    \label{fig:iter_eval_class_overall_vanilla}

    \medskip
    \centering
    \includegraphics[width=0.95\linewidth]{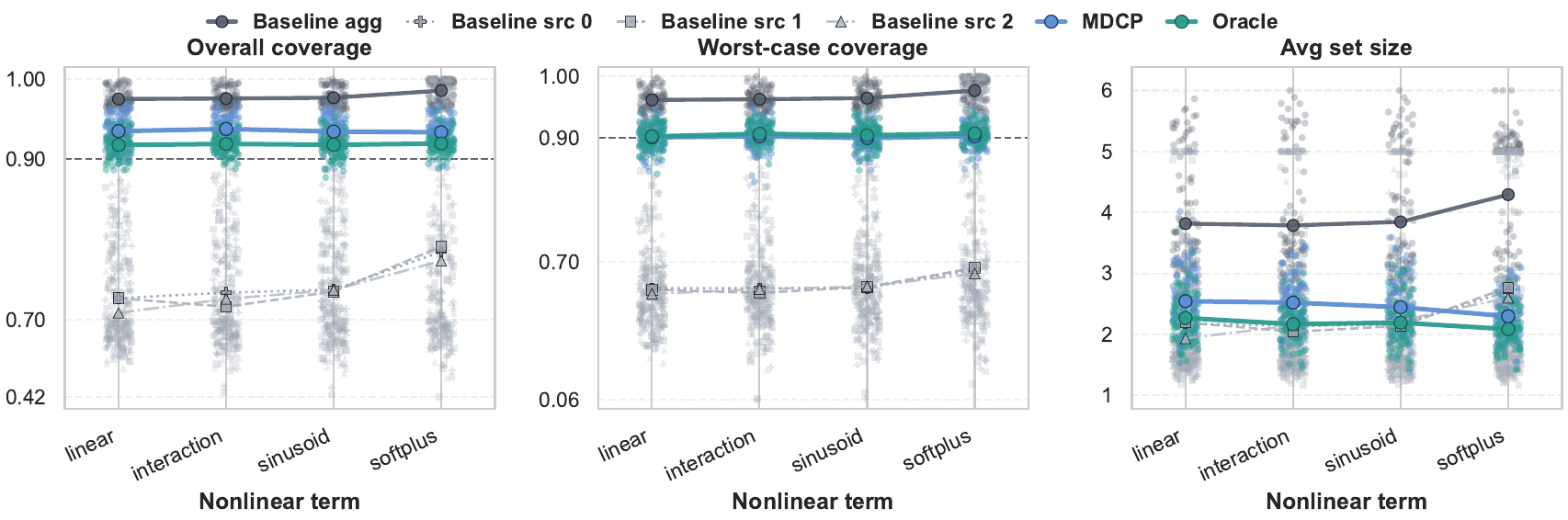}
    \caption{{\footnotesize Results in the classification \textsc{Nonlinear} experiments. The $x$-axis is the setting of $g(x)$. Lines show means over $100$ runs; dimmed dots are per-run results. Panels as in Fig.~\ref{fig:iter_eval_class_overall_vanilla}.}}
    \label{fig:nonlinear_class_overall_vanilla}

    \medskip
    \centering
    \includegraphics[width=1\linewidth]{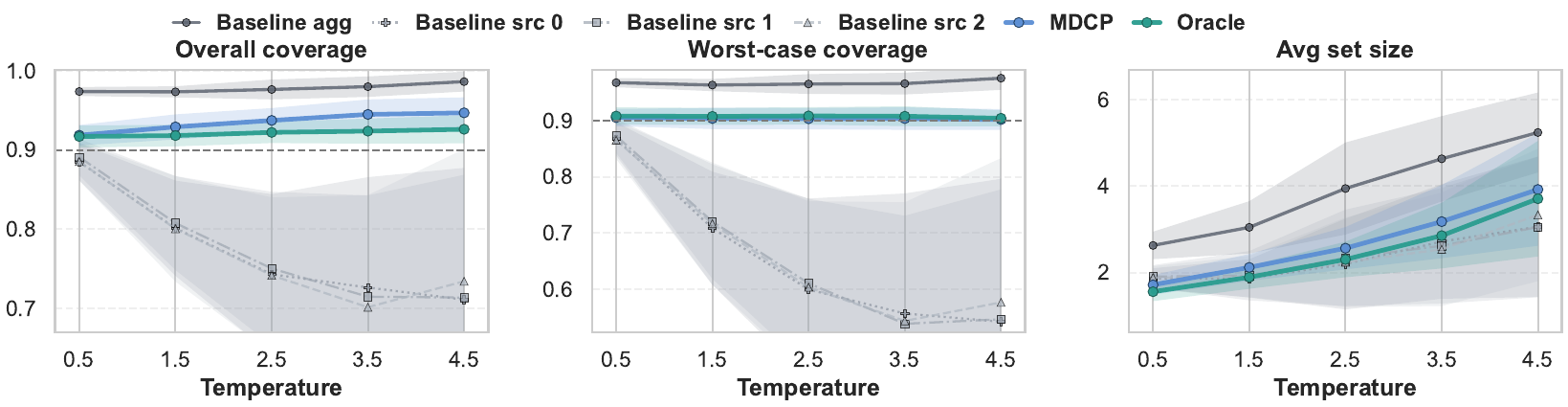}
    \caption{{\footnotesize Results in the classification \textsc{Temperature} experiments. The $x$-axis is the parameter $\tau$. Lines show means over $100$ runs with shaded $\pm1$ standard deviation. Panels as in Fig.~\ref{fig:iter_eval_class_overall_vanilla}.\vspace{-1.5em}}}
    \label{fig:temperature_class_nonpenalized}
\end{figure}

In \textsc{Nonlinear} settings (Figure~\ref{fig:nonlinear_class_overall_vanilla}), while single-source sets severely under-cover, MDCP maintains tight worst-case coverage. MDCP produces much smaller prediction sets than \textsc{Baseline-agg}.  
In the softplus setting, MDCP sets can be smaller than single-source sets, showing the benefits of both max-p aggregation and efficiency optimization. 
Across settings, MDCP performs similar to \textsc{Oracle}.

We further investigate the effect of the separation of sources in Figure~\ref{fig:temperature_class_nonpenalized}. As the temperature parameter $\tau$ increases and the per-source distributions move farther apart, the coverage of single-source baseline declines. Nevertheless, MDCP maintains tight worst-case coverage and substantial efficiency gain over \textsc{Baseline-agg}, and its performance remains close to \textsc{Oracle} throughout.

\noindent\textbf{Additional experiments}. As additional checks, we study (i) optimization stability and (ii) covariate shift settings. 
Appendix~\ref{app:subsec_ablation_opt} studies a norm-regularized version of~\eqref{eq:classi_objective} with regularization-tuning learning procedure; the results suggest the stability in learning $\hat\Theta$.
Appendix~\ref{app:subsubsec_classi_cov_shift} explores \emph{covariate-shift} and \emph{combined covariate and concept shift} settings, with qualitatively similar conclusions. 
Appendix


\begin{figure}[t]
    \centering
    \includegraphics[width=1\linewidth]{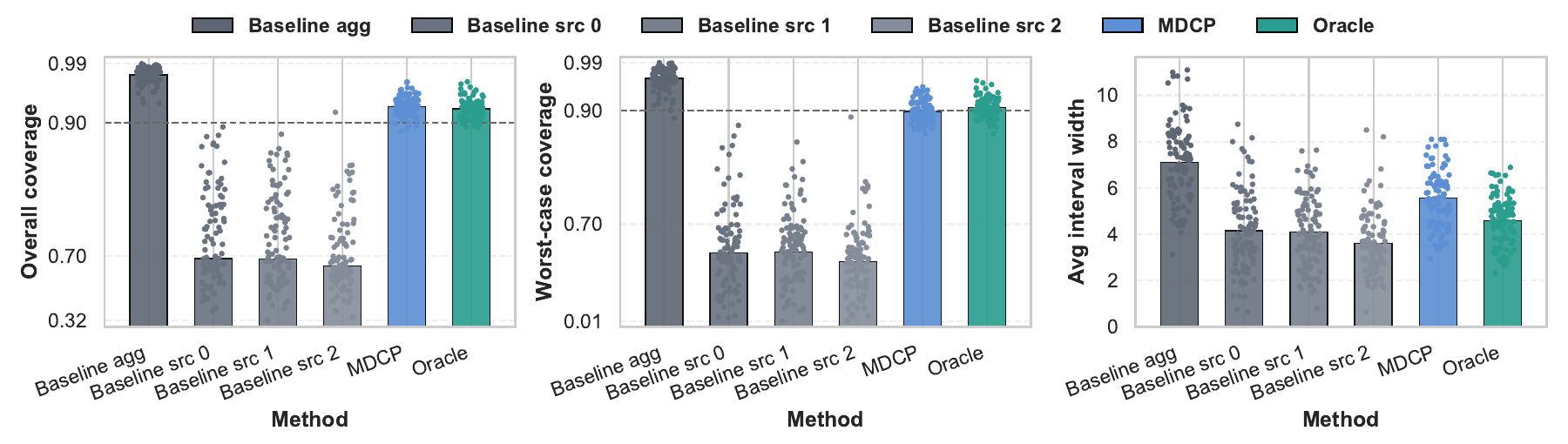}
    \caption{{\footnotesize Regression \textsc{Linear} suites; details are as in Fig.~\ref{fig:iter_eval_class_overall_vanilla}.}}
    \label{fig:iter_eval_reg_overall_vanilla}

    \medskip
    \includegraphics[width=0.95\linewidth]{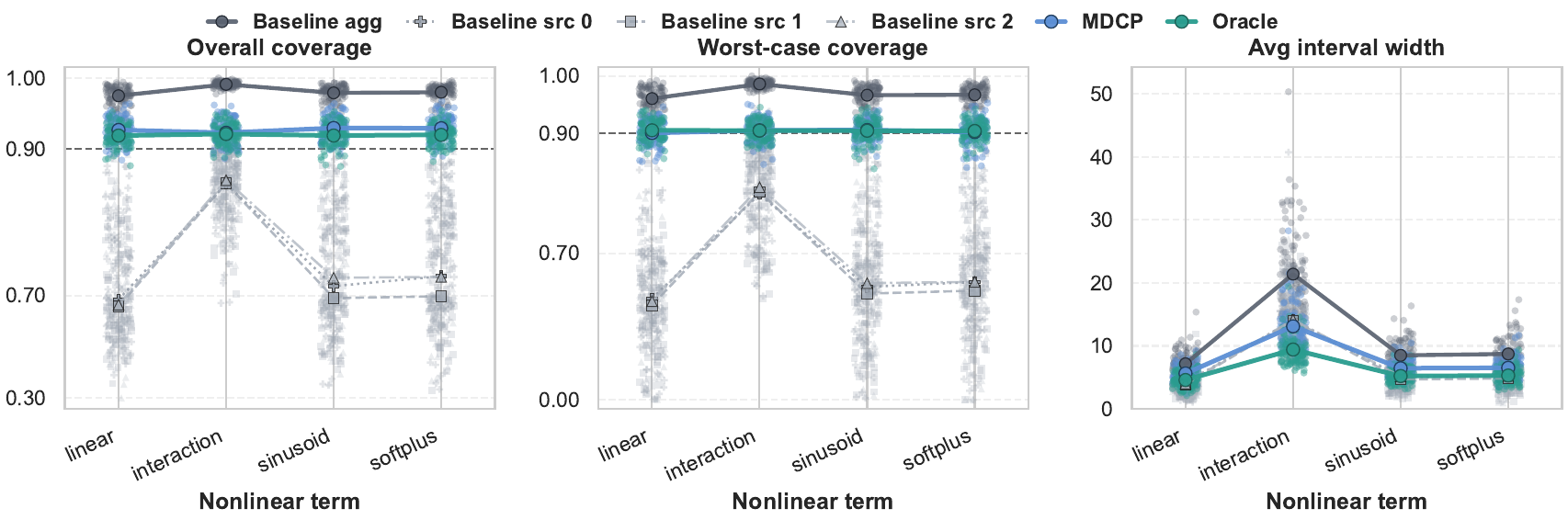}
    \caption{{\footnotesize Regression \textsc{Nonlinear} suites; details are as in Fig.~\ref{fig:nonlinear_class_overall_vanilla}.}}
    \label{fig:nonlinear_reg_overall_vanilla}

    \medskip
    \includegraphics[width=1\linewidth]{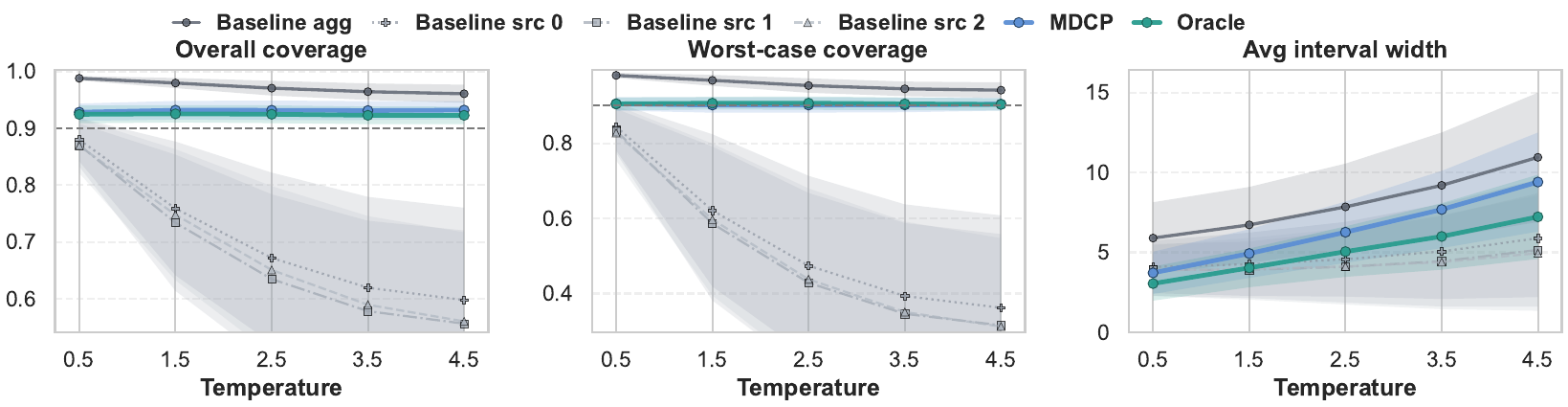}
    \caption{{\footnotesize Regression \textsc{Temperature} suites; details are as in Fig.~\ref{fig:temperature_class_nonpenalized}.\vspace{-1.5em}}}
    \label{fig:temperature_reg_nonpenalized}
\end{figure}




\subsection{Simulations in Regression Problems}
\label{sec:sim-regression}

In regression problems, we set the DGPs via $Y=\mu_k(X)+\varepsilon_k$, where $\varepsilon_k\sim \cN(0,\sigma_k^2)$ is independent noise and $\mu_k(x) = \beta_k^\top x+ b_k+g(x)$ is source-specific, across the same \textsc{Linear}, \textsc{Nonlinear}, and \textsc{Temperature} suites; see Appendix~\ref{app:sim-reg-dgp} for the full DGP details. We implement MDCP following Section~\ref{subsec:alg_regression} and compare to \textsc{Baseline-src-$k$} and \textsc{Baseline-agg} using the variance-adaptive conformity score of~\citet{lei2018distribution}, together with \textsc{Oracle} which has access to the true density functions throughout; see Appendix~\ref{app:sim-reg-impl} for estimation details.


Across the regression suites (Figures~\ref{fig:iter_eval_reg_overall_vanilla}, \ref{fig:nonlinear_reg_overall_vanilla}, and \ref{fig:temperature_reg_nonpenalized}), the single-source baseline can substantially under-cover under cross-source shift, whereas \textsc{Baseline-agg} maintains worst-case validity but is often conservative, producing unnecessarily wide intervals. MDCP consistently strikes a better balance: it achieves near-nominal worst-case coverage with shorter intervals (about 22\% narrower than \textsc{Baseline-agg} in \textsc{Linear}). This pattern persists in \textsc{Nonlinear} suite and, in \textsc{Temperature}, as inter-source separation increases with $\tau$, \textsc{Baseline-src-$k$} degrades, while MDCP remains tight. 
Finally, the moderate performance gap between MDCP and \textsc{Oracle} shows the fundamental challenge of density estimation for continuous outcomes, yet such gap is much smaller than the benefit relative to thenaive approach.

\noindent\textbf{Additional experiments}. Repeating optimization stability and covariate-shift ablations deliver similar messages as in Section~\ref{sec:sim-classification}, see Appendix~\ref{app:subsec_ablation_opt} and Appendix~\ref{app:subsubsec_reg_cov_shift}.

\subsection{Other Ablation Studies}

In addition to the main results, we design a suite of experiments across classification and regression settings to study randomization, runtime, and efficiency of MDCP in more breadth. 
In Appendix~\ref{app:subsec_rand_p}, we study the effect of including the tie-breaking randomness $U_k$, which shows negligible difference. 
In Appendix~\ref{app:subsec_runtime} and~\ref{app:subsec_scaling}, we study the computation efficiency of MDCP, varying the sample size, number of sources, and source balance. We find that the major computation-time bottleneck is in the estimation of $\hat\lambda_k$. 
In addition, the total runtime of MDCP scales approximately linearly with the total sample size $N=\sum_{k=1}^K n_k$, increases moderately (and linearly) with the number of sources $K$, but remains stable with respect to group balance. 




\begin{figure}[t]
    \centering
    \includegraphics[width=1\linewidth]{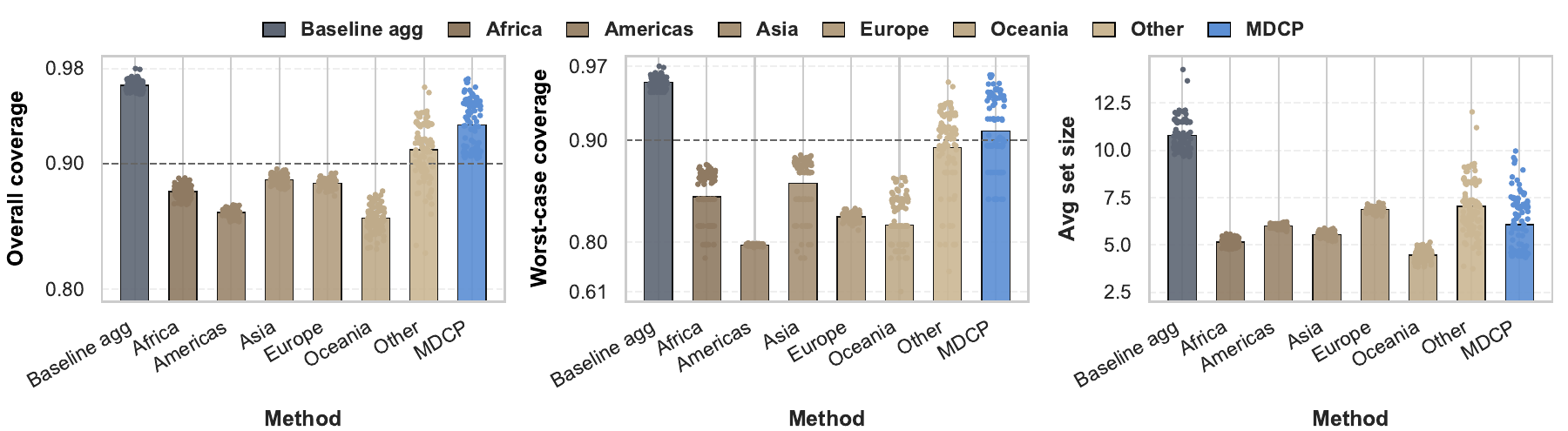}
    \caption{{\footnotesize Performance of MDCP, \textsc{Baseline-agg}, and \textsc{Baseline-src-$k$} by region on FMoW across six sources: Africa, Americas, Asia, Europe, Oceania, and \emph{Other}. Details are as in Fig.~\ref{fig:iter_eval_class_overall_vanilla}.}}
    \label{fig:fmow_overall_vanilla}

    \medskip
    \stepcounter{figure}\setcounter{subfigure}{0}
    \begin{subfigure}[c]{0.3\linewidth}
        \centering
        \includegraphics[width=\linewidth]{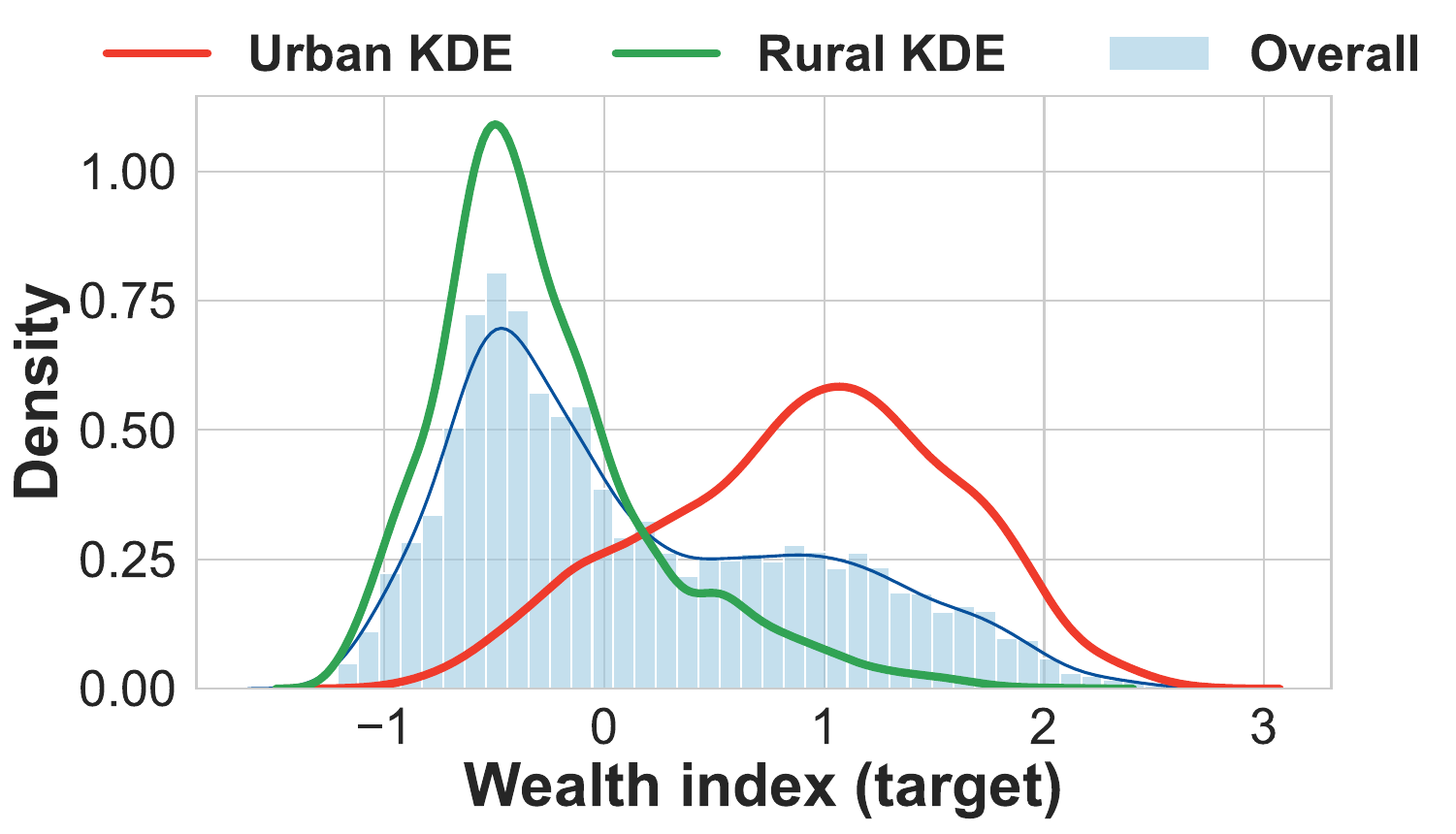}
        \caption{{\scriptsize Per-source and pooled label distribution. The curves come from a kernel density estimation.}}
        \label{fig:poverty_target_by_source}
    \end{subfigure}\hfill
    \begin{subfigure}[c]{0.68\linewidth}
        \centering
        \includegraphics[width=0.93\linewidth,height=0.4\linewidth]{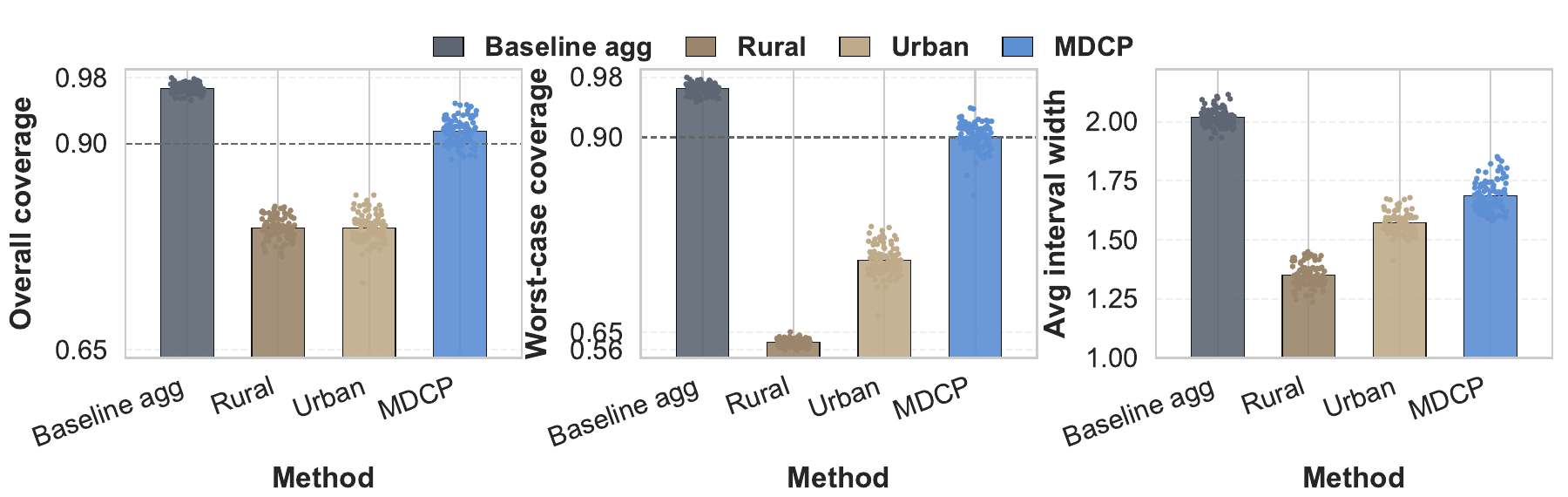}
        \vspace{-0.5em}
        \caption{{\footnotesize Performance of MDCP and baselines averaged over $100$ splits. Details are as in Fig.~\ref{fig:iter_eval_class_overall_vanilla}.}}
        \label{fig:poverty_overall_vanilla}
    \end{subfigure}
    \addtocounter{figure}{-1}
    \caption{{\footnotesize Results of MDCP and baselines on PovertyMap.}}
    \label{fig:poverty_merged}
    
    \medskip
    \includegraphics[width=0.8\linewidth]{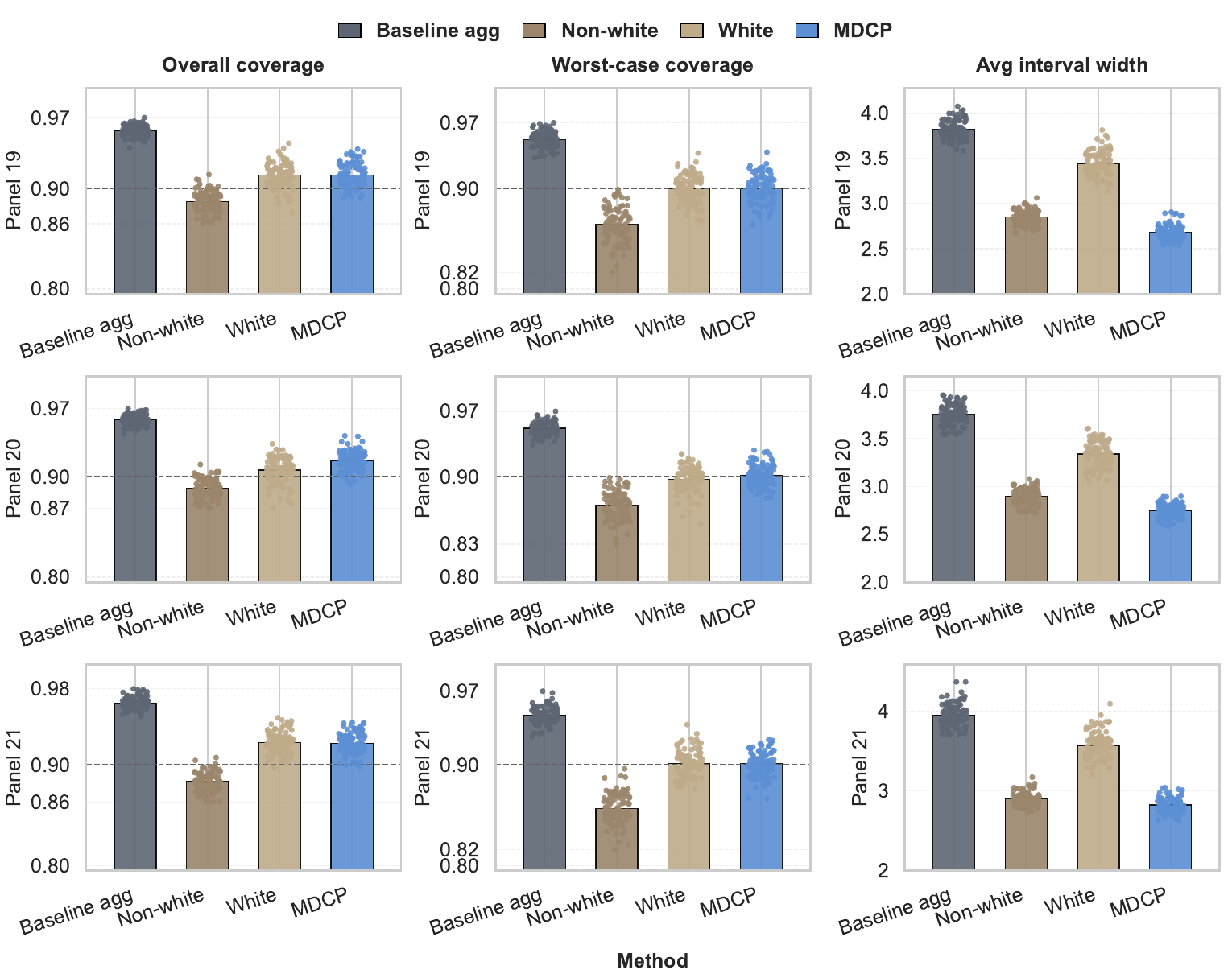}
    \vspace{-0.5em}
    \caption{{\footnotesize Results of MDCP and baselines on MEPS across three panels (rows). Each column corresponds to one metric, as in Fig.~\ref{fig:iter_eval_class_overall_vanilla}.\vspace{-2em}}}
    \label{fig:meps_overall_vanilla}
\end{figure}

\section{Real-Data Applications}
\label{sec:real-data}

\noindent\textbf{Data and setup.} We demonstrate the efficacy of MDCP on three real datasets exhibiting heterogeneous sources. For each dataset, we evaluate the same baselines and performance metrics as in Section~\ref{sec:sim-common}. The results are averaged over $100$ random splits at a nominal level of $\alpha=0.1$. 
We briefly outline the datasets and method implementations below. In \textbf{FMoW}~\citep{christie2018fmow}, a classification dataset exhibiting geographic shift, we use the 2016 time slice with 62 classes and treating six geographic regions (Africa, Americas, Asia, Europe, Oceania, \emph{Other}) as sources; we fit pooled and region-specific probabilistic classifiers on a shared image representation, then learn $\lambda^*(x)$ and calibrate MDCP following Section~\ref{subsec:alg_classification}. 
In \textbf{PovertyMap}~\citep{yeh2020poverty}, a regression dataset used to study urban-rural shift, we leverage a 2014-2016 subset with a continuous wealth-index label and treat Urban/Rural as two sources; we fit pooled and source-specific conditional Gaussian densities using neural nets and learn $\lambda^*(x)$ as in Section~\ref{subsec:alg_regression}. 
In \textbf{MEPS}, a regression dataset with heterogeneity across sensitive groups, we revisit MEPS Panels 19-21~\citep{meps_19,meps_20,meps_21} following~\citet{romano2019equalize_malice}, treating race (white vs non-white) as the source label; we log-transform the utilization outcome, then learn conditional densities and apply MDCP as in Section~\ref{subsec:alg_regression}.
Data preprocessing, density modeling, and other experimental details are deferred to Appendix~\ref{app:realdata-application-details}.

\noindent\textbf{Experimental results.} Across all three datasets, we consistently observe that calibrating on a single source (\textsc{Baseline-src-$k$}) can yield undercoverage when test points come from a different source, whereas naive aggregation (\textsc{Baseline-agg}) is typically conservative and produces larger prediction sets/intervals. In contrast, MDCP attains near-nominal worst-case coverage across sources while substantially improving efficiency. 

Concretely, on \textbf{FMoW} (Figure~\ref{fig:fmow_overall_vanilla}), the geographic shift induces noticeable coverage disparities for single-source baselines; single-source sets calibrated with data from---and models trained with---data-rich regions like America turn out to severely undercover  in other regions, because the calibration underestimates the uncertainty in the target region. With max-p aggregation, 
\textsc{Baseline-agg} remains conservative, while MDCP achieves near-tight worst-case coverage across regions with small sizes (even smaller than some single-source sets), providing an  efficient solution for multi-region robust prediction sets. 

On \textbf{PovertyMap} (Figure~\ref{fig:poverty_merged}),   Urban and Rural  environments exhibit substantially different outcome distributions (Figure~\ref{fig:poverty_target_by_source}). As such, single-source calibration does not reliably transfer across domains and leads to very coverage as low as about $0.6$. 
In contrast, MDCP maintains tight worst-case coverage with clear reductions in interval width relative to \textsc{Baseline-agg}, representing a robust solution to address the changes across rural and urban areas. 

On \textbf{MEPS} (Figure~\ref{fig:meps_overall_vanilla}), single-source baselines can undercover on the other sensitive group and/or yield wide intervals, whereas MDCP achieves near-nominal worst-case coverage and shorter intervals on average. In this dataset, MDCP provides equalized coverage across sensitive groups without requiring the group labels at test time. Indeed, the resulting uniformly valid prediction sets can even be more efficient than the single-source ones.

Finally, similar to the simulations, we study a penalty-tuning variant of MDCP, which leads to similar performance and further testifies the stability of our estimation procedure in these applications; see Appendix~\ref{app:subsubsec_real_ablation}.

\section{Conclusion}
MDCP builds one prediction set with finite-sample worst-case coverage across heterogeneous sources without test-source labels, using max-p aggregation and an optimized score. It is tight, asymptotically optimal, and empirically efficient, offering robust  solutions to various problems in multi-distribution robustness and fairness. Open questions include extending to regression scores that yield interval sets, and establishing guarantees with inferred sources or subpopulation shifts with unknown calibration sources.

\section*{Reproducibility Statement}
Reproduction code is available at \url{https://github.com/AragornBFRer/MDCP}.

\section*{Impact Statement}
This work advances the reliability of machine learning systems deployed in heterogeneous environments by offering multi-distribution robustness in uncertainty quantification. By ensuring worst-case coverage without requiring source labels at test time, our method advances algorithmic fairness, protecting marginalized subgroups against subpopulation shifts without necessitating access to sensitive attributes during inference. This framework is useful for quantifying the uncertainty of ML models in high-stakes applications in healthcare and policy, where it mitigates the risk of localized model failure in multiple environments.


\textbf{Limitations and potential negative societal impacts.} While our framework aims to mitigate the distribution shift issue across distributions, some limitations and risks are worth discussing. First, the worst-case marginal coverage guarantee operates as a ``high-probability'' statement at the population level and does not preclude risk for specific individuals. Second, in fairness-motivated applications, we still require source/group labels during calibration, which might not be available in highly sensitive scenarios. It would be interesting to study whether similar ideas may apply if one has access to a pooled dataset without group labels. 
Finally, if calibration data for a source is
  scarce, MDCP may produce unnecessarily wide prediction sets for that group, potentially lowering the utility in the underrepresented groups even though the uncertainty quantification is valid.

\bibliography{reference}
\bibliographystyle{icml2026}

\newpage
\appendix
\onecolumn

\section{Deferred Discussion}

\subsection{Related Work}
\label{app:related_work}

\paragraph{Multi-source/distribution conformal prediction.} 
Our setting is connected to several recent work on conformal prediction from multiple data sources, where the goals vary, including learning with limited communications across sources~\citep{lu2023federated}, using other sources to improve efficiency in one source~\citep{liu2024multi}, aggregating individual prediction sets without data sharing~\citep{spjuth2019combining} for i.i.d.~data~\citep{humbert2023one}, and leveraging density ratio for coverage over one single test distribution~\citep{plassier2024efficient}. Most recently, \citet{duchi2024predictive} consider the hierarchical setting, using jackknife and data-splitting methods to ensure coverage for a new, previously unseen test-time distribution drawn exchangeable from a super-population. In contrast, our goal is to leverage all data sources during training to construct a uniformly valid prediction set for a new test point drawn from any of the source distributions, thereby motivating different techniques and leading to different guarantees.  

\vspace{-0.75em}
\paragraph{Distributionally-robust conformal prediction.} This work falls broadly into the strand in conformal prediction regarding distribution robustness, which studies the construction of prediction sets with valid coverage when the test distribution differs from that of the labeled data. Earlier works assume the test distribution is unidentified but is under various types of perturbations around the labeled data distribution, and seeks to protect against the worst-case among these perturbations, such as a divergence ball~\citep{cauchois2024robust} or conditional shift within a divergence ball~\citep{jin2023sensitivity,yin2024conformal,ai2024not}, and adversarial attacks~\citep{gendler2021adversarially,ghosh2023probabilistically}. This work can be viewed as distributionally-robust conformal prediction when the test distribution is an unknown member of the source distributions or an arbitrary mixture of them.  The structure in our setting makes valid calibration possible. Compared with the covariate shift and label shift settings~\citep{TibshiraniBCR19}, in our setting the test distribution is non-identifiable, and both $P^{(k)}(X)$ and $P^{(k)}(Y\given X)$ are allowed to change across sources. These distinctions  necessitate  different techniques than these works. 

\vspace{-0.75em}
\paragraph{Group-conditional conformal prediction.} Within conformal prediction, our setting is close to the group-conditional conformal prediction, sometimes framed for fairness~\citep{romano2019equalize_malice} and extended to multi-validity~\citep{jung2022batch,gibbs2025conformal}. MDCP can be viewed as addressing a related problem when a distribution represents a group-conditional distribution. In contrast, however, our method achieves so without observing the group label at test time, which can be particularly useful in sensitive scenarios with protected labels. 
In particular, \citet{gibbs2025conformal} view the group membership as part of the covariate, and cast the problem as protecting against a family of covariate shifts (due to changes in the group membership), which further relates to conformal prediction under covariate shift~\citep{TibshiraniBCR19}. However, once the group membership is unknown, the distribution shift between labeled sources and test data cannot be captured by a covariate shift assumption. As such, the techniques we develop are in sharp contrast to those methods.

\vspace{-0.75em}

\paragraph{Distribution robustness and multi-source/group learning.} This work is connected to a rich line of work on the robustness to distribution shift and heterogeneity across data sources. Classical domain adaptation and multi-source learning frameworks aim to generalize models across environments with distinct data-generating mechanisms~\citep{crammer2008learning,NIPS2008_0e65972d,ben2010theory}. 
In the modern ML setting, robust optimization and distributionally robust optimization (DRO) offer a complementary worst-case perspective~\citep{bental2009robust,rahimian2022droreview}, and group-robust variants (e.g., group-DRO or relatedly, agnostic federated learning) seek worst-case performance over groups or mixtures of source distributions~\citep{hashimoto2018fairness,sagawa2019distributionally,mohri2019agnostic,santurkar2020breeds,lahoti2020fairness,martinez2020minimax,subbaswamy2021evaluating,yang2023change}.
Our approach parallels this perspective but operates in the space of coverage guarantees rather than loss minimization: MDCP ensures valid coverage holds for all source distributions, serving as a analogue of group-DRO in uncertainty quantification with exact finite-sample validity.

\textbf{Efficiency optimization in conformal prediction.} The efficiency optimization component of MDCP is connected to a body of literature in conformal prediction that optimizes the size/volume of prediction sets. The approach we take is to work out the population-level optimality results and plug in estimations of unknown quantities, similar to~\cite{lei2014distribution}. Alternatively, another line of work use a subset of data to optimize an empirical estimate of prediction set size~\citep{bai2022efficient,huang2023uncertainty,bars2025volume}, which may inspire alternative strategies to minimize MDCP set size.

\subsection{Optimality under Marginal Coverage}
\label{app:subsec_mgn_optimal}

In this part, we study the optimal prediction set under marginal validity. 
We consider the following optimization problem:
\EQ\label{eq:opt_mgn}
\mathop{\text{minimize}}_{C(X)\subseteq \cY,\text{ measurable}} ~~ & \int_{\cX} |C(x)| \,d\nu(x) \\ 
\text{subject to} ~~ & \PP^{(k)}(Y\in C(X))\geq 1-\alpha,\quad \forall k=1,\dots,K. \notag
\EQ
We integrate  $|C(x)|$ over $\nu(\cdot)$ to ensure a scalar objective.
By definition,~\eqref{eq:opt_mgn} seeks the measurable prediction set with the smallest size that achieves uniform coverage. 
Rigorously speaking, by ``measurable'', we mean $\ind\{y\in C(x)\}$ is a measurable function on $\cX\times\cY$, or $C(x)$ is a measurable subset of $\cY$ for $\nu$-a.s.~all $x\in \cX$.

Solving~\eqref{eq:opt_mgn} amounts to a change-of-variable via the indicator function $I(x,y):=\ind\{y\in {C}(x)\}$. 
For a clear presentation, we relax the range of $I(x,y)$ to $[0,1]$, so that $I(x,y)$ can be viewed as the probability of $y\in C(x)$ for a randomized prediction set. 
The optimization problem becomes 
\EQ\label{eq:opt_mgn_01}
\mathop{\text{minimize}}_{I(x,y)\in [0,1],\text{ measurable}} ~~ & \iint_{\cX\times\cY} I(x,y)d\rho(x,y) \\ 
\text{subject to} ~~ & \iint_{\cX\times\cY} I(x,y)r_k(x)f_k(y\given x)d\rho (x,y) \geq 1-\alpha,\quad \forall k=1,\dots,K. \notag
\EQ

Theorem~\ref{thm:global_marginal_optimal} characterizes the globally optimal prediction set with smallest size subject to uniform validity, whose proof is in Appendix~\ref{app:subsec_proof_opt_mgn}. 
Here, the coverage probability should be understood as that of a randomized prediction set with probability $I(x,y)\in[0,1]$. 

\begin{theorem}[Marginal optimality]\label{thm:global_marginal_optimal}
Consider the marginal size-minimization problem~\eqref{eq:opt_mgn_01}. 
There exists a vector of nonnegative constants $\lambda^*=(\lambda_1^*,\dots,\lambda_K^*)\in\mathbb{R}_+^K$ such that
with
\(
h_{\lambda}(x,y)=\sum_{k=1}^K \lambda_k\, r_k(x)\, f_k(y\given x),
\)
one optimal solution $C^*$ to~\eqref{eq:opt_mgn_01} takes the following form:
\[
C^*(x)\;=\;\{\,y\in\mathcal{Y} \colon  h_{\lambda^*}(x,y)>1\,\}\ \cup\ S(x),
\qquad S(x)\subseteq \{\,y\in \cY \colon  h_{\lambda^*}(x,y)=1\,\}.
\]
In particular, $\lambda^* =(\lambda^*_1 ,\dots,\lambda_K^* )\in \RR_+^K$ is the optimal solution to the dual problem
\EQ\label{eq:marg-dual-obj}
\Phi(\lambda)
= (1-\alpha)\sum_{k=1}^K \lambda_k \;-\; \int_\cX \int_\cY (h_\lambda(x,y)-1)_+\, d\mu(y)\, d\nu(x),
\EQ
where \((h_\lambda(x, y) - 1)_+ = \max\{ h_\lambda(x, y) - 1,\, 0 \}\). Moreover, the complementary slackness holds: 
\vspace{0.25em}
\begin{enumerate}[label=(\roman*)]
    \item If $\lambda^*_k>0$ then the $k$-th constraint is active, with $P^{(k)}(Y\in C^*(X))=1-\alpha$; 
\item If $\lambda^*_k=0$ then the $k$-th constraint is (weakly) inactive, with $P^{(k)}(Y\in C^*(X))\geq 1-\alpha$; 
\item There exists at least one $k^*\in [K]$ such that $\lambda^*_{k^*}>0$ and $P^{(k^*)}(Y\in C^*(X))=1-\alpha$. 
\end{enumerate}
\vspace{0.25em}
If additionally $\mu(\{y: h_\lambda(x,y)=1\})=0$ for $\nu$-a.e. $x$, then $C^*$ is unique up to $(\nu\otimes\mu)$-null sets.
\end{theorem}

Theorem~\ref{thm:global_marginal_optimal} reveals the central role of a single score function $h_{\lambda^*}(x,y)$: the optimal solution $C^*(x)$ is determined by thresholding this score value. Whether to include the boundary set $\{y\colon h_{\lambda^*}(x,y)=1\}$ in the prediction set is often subject to users' preference. If $h_{\lambda^*}(x,y)$ does not have point mass over $\nu\otimes \mu$, the inclusion of the boundary set does not affect the average size or coverage probability. Otherwise, one need to randomize the inclusion to achieve exact $1-\alpha$ coverage, or include it with slight over-coverage.

The complementary slackness in statement (iii) of   Theorem~\ref{thm:global_marginal_optimal} is worth noting: there always exists one source distribution under which $C^*(X)$ achieves exact $1-\alpha$ coverage. (In the presence of point mass, such coverage should be understood as that of a randomized prediction set with $\PP(y\in C^*(x))=I^*(x,y)\in[0,1]$). 

Finally, we remark that since the objective of~\eqref{eq:opt_mgn} integrates over the base measure $\nu(\cdot)$, the solution does not necessarily aim for the smallest average size for \emph{the} test distribution in a specific problem. Arguably, it would be more natural to study the conditional problem in the main paper for a fixed $x\in \cX$ subject to conditional uniform coverage, in which case the objective is inherently a scalar.

\subsection{Sieve Estimation Theory}
\label{app:subsec_sieve}

In this section, we present and theoretically justify such a procedure using the method of sieves~\citep{geman1982nonparametric}. 
The sieve analysis below is one concrete way to verify the high-level consistency assumptions in Theorem~\ref{thm:consistency}. Practically, we implement $\lambda(\cdot)$ using splines or neural networks (see both Sections~\ref{subsec:alg_classification},~\ref{subsec:alg_regression} and experiments in Sections~\ref{sec:simu} and~\ref{sec:real-data}). 

Consider an increasing sequence $\Theta_1\subset \Theta_2\subset \cdots$ of spaces of smooth functions.
%
%
To be consistent with the language of ERM, we write the loss function and our estimator via 
\EQ\label{eq:erm_empirical}
&\hat\lambda(\cdot) = \argmin_{\lambda(\cdot)\in \Theta_n}~
\hat\EE_n \big[\hat{\ell}(\lambda(\cdot),X,Y)\big],\\ 
\text{where} \quad &
\hat{\ell}(\lambda(\cdot),x,y) = - (1-\hat{h}_\lambda(x,y))_-/\hat{p}_{\text{pool}}(y\given x) \notag \\
& \qquad\qquad\qquad - (1-\alpha)\textstyle{\sum_{k=1}^K} \lambda_k(x). \notag
\EQ
Here, $(X_i,Y_i)$ are from the  distribution of the pooled data $p_{\text{pool}}(x,y)$, $\hat{p}_{\text{pool}}$ is  a pre-trained  estimator, and $\hat{h}_\lambda(x,y):= \sum_{k=1}^K \lambda_k(x) \hat{f}_k(y\given x)$ is the plug-in estimate of $h_\lambda(x,y)$ given the per-source models.

We consider two examples of sieves inspired by~\cite{yadlowsky2022bounds,jin2022sensitivity}.

\begin{example}[Polynomials] 
    \label{ex:poly}
Let  
Pol$(J,\epsilon)$ be the space of $J$-th order polynomials on $[0,1]$ 
truncated at $\epsilon>0$: 
\EQS
\textrm{Pol}(J,\epsilon) = \Big\{ x\mapsto \max\{ \epsilon, {\textstyle  \sum_{j=0}^J} a_j x^j \big\} \colon a_j \in \RR  \Big\}.
\EQS
Then we define $\Theta_n = \Theta_{n,0}^K$, 
where $\Theta_{n,0} = \{ x\mapsto \prod_{j=1}^d f_j(x_j)\colon f_j \in \mathrm{Pol}(J_n,0),j=1,\dots,d\}$   for $J_n\to \infty$. 
\end{example}

\begin{example}[Splines]  
    \label{ex:spl}
Let $0=t_0<\dots<t_{J+1}=1$ be knots that satisfy  
$\frac{\max_{0\leq j\leq J}(t_{j+1}-t_j)}{\min_{0\leq j\leq J (t_{j+1}-t_j)}} \leq c$ for some $c>0$. We define 
the space for $r$-th order truncated splines  with $J$ knots as 
\EQS
\textrm{Spl}(r,J) = \Big\{  x\mapsto \max\big\{ \epsilon, \,\,\, {\textstyle \sum_{k=0}^{r-1} a_k x^k \,\,\, + }  {\textstyle \sum_{j=1}^J b_j (x-t_j)_+^{r-1}}\big\}  \colon a_k, b_k \in \RR   \Big\}
\EQS
Then we define $\Theta_n = \Theta_{n,0}^K$, 
where $\Theta_{n,0} = \{ x\mapsto \prod_{j=1}^d f_j(x_j)\colon f_j \in \mathrm{Spl}(J_n,0),j=1,\dots,d\}$   for $J_n\to \infty$. 
\end{example}
\vspace{0.25em}

In both examples, we consider coordinate-wise function $\{f_j(x_j)\}_{j=1}^d$ for   $\cX\subseteq \RR^d$ in a sieve series, so that $ \prod_{j=1}^d f_j(x_j) \in \Theta_{n,0}$, and the optimal dual variables $\lambda^*(\cdot)\colon \cX\to \RR^K$ is approximated by elements in $\Theta_n = \Theta_{n,0}^K$. 
Here, we truncate the functions away from zero for simplicity. Note that if $\lambda_k^*(x)$ is always positive and continuous 
and $\cX$ is a compact set, then 
there exists a positive $\epsilon>0$ such that  $\inf_{x\in \cX}\lambda_k^*(x)\geq \epsilon$. In practice, we can set $\epsilon$ to be small enough, 
or let $\epsilon = \epsilon_n$ decays slowly to zero.

Next, we show the convergence of  $\hat\lambda (\cdot)\in\Theta_n$ $\lambda^*(\cdot)$. 
For $p_1 = \lceil p \rceil -1$ and $p_2 = p-p_1$, we define 
\EQS
\Lambda_c^p = \Bigg\{  h \in C^{p_1}(\cX)\colon \sup_{\substack{x\in \cX \\ \sum_{l=1}^d \alpha_l <p_1}} |D^\alpha h(x)| \quad +  \sup_{\substack{x\notin x'\in \cX \\ \sum_{l=1}^d \beta_l = p_1}}  \frac{|D^\beta h(x) - D^\beta h(x')|}{\|x-x'\|^{p_2}}  \leq c   \Bigg\}
\EQS
To ensure non-negativeness, we define the truncated function class 
$\Lambda_{c,+}^p := \big\{ x\mapsto\max\{f(x), 0 \} \colon f\in \Lambda_c^p \big\}$. 
We denote the oracle minimizer and loss function as 
\EQ\label{eq:erm_oracle}
&\lambda^*(\cdot)=\argmin_{\lambda\in\Theta = (\Lambda_{c,+}^p)^K}\EE_{\text{pool}}[\ell(\lambda(\cdot),X,Y)], \\[5px] 
\text{where} \quad &
\ell(\lambda(\cdot),x,y) = - (1-h_\lambda(x,y))_-/{p}_{\text{pool}}(y\given x) - (1-\alpha)\textstyle{\sum_{k=1}^K} \lambda_k(x). \notag
\EQ
Throughout, $\EE_{\text{pool}}[\cdot]$ denotes the expectation under the pooled distribution, and $\hat\ell(\cdot)$ is viewed as fixed.
Assuming the optimal dual functions in Theorem~\ref{thm:global_conditional_optimal} obeys $\lambda^*\in \Theta = (\Lambda_{c,+}^p)^K$, Theorem~\ref{thm:equiv_informal} ensures the minimizer $\lambda^*(\cdot)$ in~\eqref{eq:erm_oracle} coincides with the optimal $\lambda^*(\cdot)$ in Theorem~\ref{thm:global_conditional_optimal}; we thus use the same notation. 

Similar to~\citet[Theorem 1]{jin2022sensitivity}, we can show that the solution~\eqref{eq:erm_empirical} is close to  $\lambda^*$ once   $\hat{p}_{\text{pool}}$ and $\hat{f}_k$'s are accurate. Our formal results build on the following two assumptions. Assumption~\ref{assump:approx} is a natural condition that the estimation error in $\hat{p}_{\text{pool}}$ translates to errors in population risk minimizer of the same order. Assumption~\ref{assump:sieve} collects regularity conditions that are standard in the literature and hold for convex and smooth functions; it is needed to derive rates, but consistency holds under even weaker conditions.

\begin{assumption}\label{assump:approx}
    Assume $\|\lambda^* - \bar\lambda^*\|_{L_2} = O_P(\|\hat{p}_{\textnormal{pool}}-p_{\textnormal{pool}}\|_{L_2})$ and $\|\lambda^* - \bar\lambda^*\|_{\infty} = O_P(\|\hat{p}_{\textnormal{pool}}-p_{\textnormal{pool}}\|_{\infty})$. 
\end{assumption}

\begin{assumption}\label{assump:sieve}
Suppose $\cX = \prod_{j=1}^d \cX_j$ is the Cartesian product of 
compact intervals, and $\theta^*\in \Theta = (\Lambda_c^p)^K$ for some $c>0$. 
Suppose $P_{\text{pool}}$ has positive density on $\cX$. 
We assume the function $\EE_{\text{pool}}[\hat\ell(\lambda,x,Y)\given X=x]$ is $\eta$-strongly convex at $\bar\lambda^*(x)$ 
for all $x\in \cX$. Also, $|\hat\ell(\theta,x,y) - \hat\ell(\bar\lambda^* ,x,y)|\leq \bar\ell(x,y)\|\theta(x)-\bar\lambda^*(x)\|_2$ for $\|\theta(x) -\bar\lambda^*(x)\|_2<\epsilon$ 
for sufficiently small $\epsilon>0$, where $\|\cdot\|_2$ is the 
Euclidean norm, and 
$\sup_{x\in \cX}\EE_{\text{pool}}[\bar\ell(x,Y)^2]< M$ for 
some constant $M>0$. 
Furthermore, there exists a constant $C_1$ such that 
$\EE_{\text{pool}}[\hat\ell(\theta,X,Y) - \hat\ell(\bar\lambda^*,X,Y)] \leq C_1 \|\theta - \bar\lambda^*\|_{L_2}^2$ 
when $\theta\in (\Lambda_c^p)^K$ and 
$\|\theta - \bar\lambda^*\|_{L_2}$ is sufficiently small. 
\end{assumption}

Theorem~\ref{thm:sieve} justifies using sieve approximation and ERM to learn the functions $\lambda^*(\cdot)$. Its proof largely follows~\cite{jin2022sensitivity}, and is included in Appendix~\ref{app:subsec_sieve_approx} for completeness. 

\begin{theorem}\label{thm:sieve}
    Under Assumptions~\ref{assump:approx} and~\ref{assump:sieve}, We set $J_n \asymp (n/\log n)^{1/(2p+d)}$ for the sieve estimators 
in Examples~\ref{ex:poly} and~\ref{ex:spl}, 
and suppose $\hat\lambda = \argmin_{\theta\in\Theta}\hat\EE_{\textnormal{pool}} \big[\hat\ell(\theta,X,Y )\big]$. Then 
employing the function classes in the two examples, 
we have 
$\|\hat\lambda - \lambda^* \|_{L_2} = O_P\big((\frac{\log n}{n})^{p/(2p+d)}\big)+O_P(\|\hat{p}_{\textnormal{pool}}-p_{\textnormal{pool}}\|_{L_2})$ and 
$\|\hat\lambda - \lambda^* \|_{\infty } = O_P\big((\frac{\log n}{n})^{2p^2/(2p+d)^2}\big)+O_P(\|\hat{p}_{\textnormal{pool}}-p_{\textnormal{pool}}\|_{\infty})$. 
\end{theorem}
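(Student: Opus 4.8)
The plan is to follow the template of \citet[Theorem 1]{jin2022sensitivity}, adapting it to the present loss $\hat\ell$. The argument splits the total error into three pieces: (i) an \emph{approximation error} quantifying how well $\lambda^*(\cdot)\in\Theta=(\Lambda_c^p)^K$ can be represented by an element of the sieve $\Theta_n=\Theta_{n,0}^K$; (ii) an \emph{estimation error} controlling the gap between the empirical minimizer over $\Theta_n$ and the population minimizer over $\Theta_n$, via empirical-process bounds on $\Theta_n$; and (iii) a \emph{plug-in error} controlling the effect of using $\hat p_{\textnormal{pool}}$ in place of $p_{\textnormal{pool}}$ in the loss. The strong-convexity half of Assumption~\ref{assump:sieve} converts an excess-risk bound for $\hat\ell$ into an $L_2$ distance bound on $\hat\lambda-\bar\lambda^*$, where $\bar\lambda^*$ minimizes the population risk built from the \emph{estimated} $\hat p_{\textnormal{pool}}$; Assumption~\ref{assump:approx} then transfers $\|\bar\lambda^*-\lambda^*\|$ back in terms of $\|\hat p_{\textnormal{pool}}-p_{\textnormal{pool}}\|$, yielding the final additive $O_P(\|\hat p_{\textnormal{pool}}-p_{\textnormal{pool}}\|)$ terms in the statement.

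First I would fix the approximation rate: for the polynomial and spline classes in Examples~\ref{ex:poly} and~\ref{ex:spl}, standard sieve approximation theory (Jackson-type bounds, tensorized over the $d$ coordinates and over the $K$ outputs, with the truncation $\max\{\epsilon,\cdot\}$ being $1$-Lipschitz and hence harmless) gives $\inf_{\theta\in\Theta_n}\|\theta-\lambda^*\|_\infty = O(J_n^{-p})$ for $\lambda^*\in(\Lambda_c^p)^K$. Next I would bound the stochastic term: the sieve $\Theta_n$ has metric entropy controlled by its dimension, which is $\asymp K d J_n$, and the loss $\hat\ell$ is Lipschitz in $\theta(x)$ on the relevant neighborhood with an envelope $\bar\ell$ having a uniformly bounded second moment (Assumption~\ref{assump:sieve}); a localized empirical-process / peeling argument then gives an estimation error of order $\sqrt{K d J_n/n}$ in $L_2$ (up to logs), and a corresponding $\sup$-norm rate via the usual inflation by the dimension of the sieve. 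Balancing $J_n^{-p}$ against $\sqrt{J_n/n}$ (up to logarithmic factors) gives $J_n\asymp (n/\log n)^{1/(2p+d)}$ and the $L_2$ rate $(\log n/n)^{p/(2p+d)}$; the stated $\sup$-norm rate $(\log n/n)^{2p^2/(2p+d)^2}$ comes from the standard conversion that loses a power, exactly as in \citet{jin2022sensitivity}. The excess-risk-to-distance step uses the quadratic upper bound $\EE_{\text{pool}}[\hat\ell(\theta,X,Y)-\hat\ell(\bar\lambda^*,X,Y)]\le C_1\|\theta-\bar\lambda^*\|_{L_2}^2$ together with $\eta$-strong convexity, both assumed.

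The main obstacle I expect is the plug-in error: the loss $\hat\ell$ involves the division by $\hat p_{\textnormal{pool}}(y\mid x)$ inside the term $(1-h_\lambda(x,y))_-/\hat p_{\textnormal{pool}}(y\mid x)$, so replacing $\hat p_{\textnormal{pool}}$ by $p_{\textnormal{pool}}$ perturbs both the population risk and its minimizer in a way that is not automatically of the same order as $\|\hat p_{\textnormal{pool}}-p_{\textnormal{pool}}\|$. This is precisely what Assumption~\ref{assump:approx} is designed to absorb: it postulates that the induced shift in the population minimizer, $\|\lambda^*-\bar\lambda^*\|$, is $O_P$ of the corresponding norm of $\hat p_{\textnormal{pool}}-p_{\textnormal{pool}}$. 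Granting that, the remaining care is (a) checking that $\hat p_{\textnormal{pool}}$ is bounded away from zero on the support (so the division is well-behaved and the Lipschitz/envelope conditions of Assumption~\ref{assump:sieve} genuinely hold for $\hat\ell$), and (b) combining the three error sources by the triangle inequality, $\|\hat\lambda-\lambda^*\|\le\|\hat\lambda-\bar\lambda^*\|+\|\bar\lambda^*-\lambda^*\|$, with the first term handled by the sieve/ERM analysis relative to $\hat\ell$ and the second by Assumption~\ref{assump:approx}. Since every nonstandard step is either quoted from \citet{jin2022sensitivity} or packaged into Assumptions~\ref{assump:approx}--\ref{assump:sieve}, the proof is mostly bookkeeping; I would present the approximation and entropy computations for the two example classes explicitly and refer to \citet{jin2022sensitivity} for the localization argument.
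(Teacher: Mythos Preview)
Your proposal is correct and matches the paper's own proof essentially line for line: the paper simply invokes \citet[Theorem~1]{jin2022sensitivity} applied to the plug-in loss $\hat\ell$ to obtain $\|\hat\lambda-\bar\lambda^*\|_{L_2}=O_P((\log n/n)^{p/(2p+d)})$ and $\|\hat\lambda-\bar\lambda^*\|_\infty=O_P((\log n/n)^{2p^2/(2p+d)^2})$, and then combines this with the triangle inequality and Assumption~\ref{assump:approx} to absorb $\|\bar\lambda^*-\lambda^*\|$. Your plan unpacks the cited result into its approximation/entropy/localization components, which is more detailed than the paper, but the decomposition $\|\hat\lambda-\lambda^*\|\le\|\hat\lambda-\bar\lambda^*\|+\|\bar\lambda^*-\lambda^*\|$ and the roles of the two assumptions are exactly what the paper does.
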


Our results so far justify an ERM approach to learn the unknown  $\lambda_k^*(x)$ and approximate the optimal MDCP set. 
Suppose that the true optimal dual functions $\lambda_k^*(x)$ in Theorem~\ref{thm:global_conditional_optimal} is sufficiently smooth in $x$, and that we solve the ERM~\eqref{eq:erm_empirical} with a suitable sieve class based on a consistent  $\hat{p}_{\text{pool}}(y\given x)$. Theorem~\ref{thm:sieve} then ensures   $\hat\lambda(\cdot)$  converges to $\lambda^*(\cdot)$. Thus, as long as the $\hat{f}_{k}(\cdot\given\cdot)$'s are consistent,  taking $s_k(x,y) = -\sum_{k=1}^K \hat\lambda_k(x)\hat{f}_k(y\given x)$ yields an MDCP set that is asymptotically optimal.

\subsection{Detailed Algorithm}
\label{app:subsec_alg}

As in Section~\ref{subsec:opt_estimated}, to simplify boundary conditions, we adopt the randomized version of our general max-p aggregation which remains finite-sample valid. Specifically, for source $k\in[K]$ and a candidate label value $y\in \cY$ at a feature value $x\in\cX$, we define the randomized p-value
    \EQ\label{eq:rand-pvalue}
    p^{(k)}(x, y)  :=
    \frac{\sum_{i\in\cD_\calib^{(k)}} \mathbbm{1}\{ S_{k,i}  > s_{k} (x, y) \} + ( 1 + \sum_{i\in\cD_\calib^{(k)}} \mathbbm{1}\{ S_{k,i}  = s_{k} (x, y) \} )\cdot U_k}{n_k + 1},
    \EQ    
    where $U_k \sim \mathrm{Unif}([0,1])$ are i.i.d.~and independent of everything else,  and we write $S_{k,i}  = -\widehat{h} (X_i^{(k)}, Y_i^{(k)})$, and $s_{k} (x, y) = -\widehat{h} (x, y)$.

    \begin{algorithm}
    \caption{Multi-Distribution Conformal Prediction (MDCP)}
    \label{algo:classi_MDCP}
    
    \textbf{Input:} Data $\cD = \cup_{k=1}^K \cD^{(k)}$ from $K$ sources, test input $X_{n+1}$, significance level $\alpha$, problem \textsc{mode}.
    
    \begin{algorithmic}[1]      

      \STATE Split the data $\cD$ into $\cD_\text{train}=\cup_{k=1}^K \cD_\text{train}^{(k)}$ and $\cD_\text{calib} =\cup_{k=1}^K \cD_\text{calib}^{(k)}$.
      
      \STATE {\color{gray} {// Train per-source models}}
        \IF{\textsc{mode} = classification}
        \STATE  Fit any classifier $\widehat{p}_k(y\given x)$ on $\cD_\text{train}^{(k)}$ for $k\in [K]$ and $\hat{p}_{\text{pool}}(y\given x)$ on $\cD_{\text{train}}$.
        \ELSIF{\textsc{mode} = regression}
        \STATE Fit conditional density estimator $\hat{f}_k(y\given x)$ on $\cD^{(k)}_\text{train}$ via, e.g., conditional gaussian model for $k\in [K]$ and a pooled estimator $\hat{f}_{\text{pool}}(y\given x)$ on $\cD_{\text{train}}$.
        \ENDIF
    
      \STATE {\color{gray} {// Fit Lagrange multiplier $\lambda(\cdot)$}} 
      \STATE Solve the empirical objective \eqref{eq:classi_objective}  on $\cD_\text{train}$ to obtain spline parameters $\hat\Theta$.
    
    
      \STATE {\color{gray} {// MDCP set on test point $x$}}
      \IF{\textsc{mode} = classification} 
        \STATE Set $s_k(x,y)= -\sum_{\ell =1}^K \lambda_\ell(x;\hat\Theta) \hat{p}_{\ell}(y \given x)$ via~\eqref{eq:lambda_k_theta} for all $k\in[K]$.
        \STATE Compute $s_k(X_{n+1},y)$ for all $y\in\cY$, and $p^{(k)}(y)$ with $\cD_{\calib}^{(k)}$ using~\eqref{eq:rand-pvalue} for $k\in[K]$.
      \STATE Compute $\widehat{C}(x) = \{ y : p(y) \geq \alpha \}$ with $p(y) = \max_k p^{(k)}(y)$.
      \ELSIF{\textsc{mode} = regression}
        \STATE Set $s_k(x,y)= -\sum_{\ell=1}^K \lambda_\ell(x;\hat\Theta) \hat{f}_{\ell}(y \given x)$ via~\eqref{eq:lambda_k_theta} for all $k\in[K]$.
        \STATE  Generate $y$-grid and use a grid search to construct prediction set $\hat{C}(X_{n+1})$ (Appendix~\ref{app:subsubsec_y-grid-superset}).
      \ENDIF 
    \end{algorithmic}

    \textbf{Output:} Prediction set $\widehat{C}(X_{n+1})$.
    
    \end{algorithm}

\section{Technical Proofs}

\subsection{Proof of Theorem~\ref{thm:uniform_valid}}
\label{app:subsec_proof_thm}

\begin{proof}[Proof of Theorem~\ref{thm:uniform_valid}]
    Since $\sup_{j\in [K]}p^{(j)}(y) \leq \alpha$ implies $p^{(k)}(y) \leq \alpha$, we have
    \[
        \mathbb{P}\bigl( \sup_{j\in [K]}p^{(j)}(Y_{n+j}) \leq \alpha \bigr) \leq \mathbb{P}\bigl( p^{(k)}(Y_{n+1}) \leq \alpha \bigr) \leq \alpha,
    \]
    under $P^{(k)}$ and $\mathcal{D}$. The coverage statement follows by complement. The equality $\widehat{C} = \bigcup_k \widehat{C}^{(k)}$ is immediate from the definition of the supremum and the threshold rule:
    
    \begin{itemize}[leftmargin=15pt]
        \item If $y \in \left\{ y \in \mathcal{Y} : \sup_{j\in [K]}p^{(j)}(y) > \alpha \right\}$, then $y \in \bigcup_{j=1}^K \left\{ y \in \mathcal{Y} : p^{(j)}(y) > \alpha \right\}$, since $\exists j$ s.t. $p^{(j)}(y) > \alpha$.
        \item If $y \in \bigcup_{j=1}^K \left\{ y \in \mathcal{Y} : p^{(j)}(y) > \alpha \right\}$, then $\exists k$ s.t. $p^{(j)}(y) > \alpha$, then $y \in \left\{ y \in \mathcal{Y} : \sup_{j\in [K]}p^{(j)}(y) > \alpha \right\}$.
    \end{itemize}
This concludes the proof of Theorem~\ref{thm:uniform_valid}.
\end{proof}


\subsection{Proof of Theorem~\ref{thm:global_marginal_optimal}}
\label{app:subsec_proof_opt_mgn}
\begin{proof}[Proof of Theorem~\ref{thm:global_marginal_optimal}]
Write the joint density function $w_k(x,y):=r_k(x)f_k(y\mid x)$. The primal problem can be expressed as
\[
\min_{I\in\{0,1\}}\ \iint I\,d\mu d\nu
\quad\text{s.t.}\quad
\iint I\, w_k\, d\mu d\nu \ \ge\ 1-\alpha,\ \ k=1,\dots,K.
\]
Relax $I\in\{0,1\}$ to $I\in[0,1]$. Since functions $I\in [0,1]$ form a vector space~\citep{luenberger1997optimization}, we consider  the Lagrangian with constant multipliers $\lambda_k\ge 0$:
\[
\mathcal{L}(I,\lambda)
= \iint I(x,y)\,d\mu d\nu
- \sum_{k=1}^K \lambda_k\Big(\iint I(x,y)\,w_k(x,y)\,d\mu d\nu - (1-\alpha)\Big).
\]
Let $h_\lambda(x,y):=\sum_{k=1}^K \lambda_k w_k(x,y)$. Then we have 
\[
\mathcal{L}(I,\lambda) = \iint I(x,y)\,[1-h_\lambda(x,y)]\,d\mu d\nu \;+\; (1-\alpha)\sum_{k=1}^K \lambda_k.
\]
For a fixed value of $\lambda$, minimizing over $I(x,y)\in[0,1]$  pointwise in $(x,y)$ yields the minimizer 
\[
I_\lambda^*(x,y)\in
\begin{cases}
\{1\}, & h_\lambda(x,y)>1,\\
[0,1], & h_\lambda(x,y)=1,\\
\{0\}, & h_\lambda(x,y)<1,
\end{cases}
\]
which yields the threshold form
\EQ\label{eq:marg-h-threshold}
C_\lambda(x)=\{y: h_\lambda(x,y)>1\}\ \cup\ S(x),\quad S(x)\subseteq\{y: h_\lambda(x,y)=1\}.
\EQ
After minimizing over $I$, the dual objective is
\[
\Phi(\lambda)
= (1-\alpha)\sum_{k=1}^K \lambda_k \;-\; \iint (h_\lambda(x,y)-1)_+\, d\mu(y)\, d\nu(x),
\]
this gives the marginal dual objective~\eqref{eq:marg-dual-obj} mentioned in the theorem, and the dual problem is to maximize $\Phi(\lambda)$ over $\lambda\in\mathbb{R}_+^K$.

Note that Slater’s condition holds (e.g., \( C(x) \equiv \mathcal{Y} \) strictly satisfies each constraint for \( \alpha \in (0, 1) \)), so strong duality applies and a dual maximizer \( \lambda^* \) exists~\citep{luenberger1997optimization}. 
Let \(\lambda^*\) be a dual maximizer and define \(h^*(x, y) = \sum_k \lambda^*_k r_k(x) f_k(y \mid x)\) and the tie set \(T(x) = \{y : h^*(x, y) = 1\}.\)
There exists a primal optimizer
\[
I^*(x, y) = \mathbbm{1}\{h^* > 1\} + Z^*(x, y)\ \mathbbm{1}\{y \in T(x)\},
\]
with \(Z^* : X \times Y \to [0, 1]\) measurable, chosen so that
\[
\begin{aligned}
\lambda^*_k > 0 \quad&\Rightarrow\quad  \int_\cX\int_\cY I^* r_k f_k\, d\mu(y)\, d\nu(x) = 1 - \alpha,\\
\lambda^*_k = 0 \quad&\Rightarrow\quad \int_\cX\int_\cY I^* r_k f_k\, d\mu(y)\, d\nu(x) \ge 1 - \alpha,
\end{aligned}
\]
where the covariate distribution  $P_X^{(k)}$ admits a density $r_k(x)$ with respect to $\nu$. Equivalently, writing
\[
a_k := \int_\cX\int_\cY \mathbbm{1}\{h^* > 1\} r_k f_k\, d\mu(y)\, d\nu(x), \qquad b_k := \int_\cX\int_{y\in T(x)} Z^* r_k f_k\, d\mu(y)\, d\nu(x),
\]
\(Z^*\) must satisfy \(a_k + b_k = 1 - \alpha\), for all \(k\) with $\lambda_k>0$ and \(a_k + b_k \geq 1 - \alpha\) for all \(k\) with $\lambda_k=0$.
When multiple constraints with their Lagrangian multiplier $\lambda_k>0$, achieving all equalities generally requires a non-constant $Z^*$ (for example, using randomized inclusion on the boundary). In cases where the boundary has measure zero \((\nu \otimes \mu)(T) = 0\), one can implement $Z^*$ deterministically as an indicator of a measurable subset of the tie set; in cases where the boundary has non-zero measure \((\nu \otimes \mu)(T) > 0\), this corresponds to randomized tie-breaking.

Accordingly, complementary slackness yields, with \( g_k(C) := \iint_{\cX\times\cY} I w_k\, d\mu\, d\nu - (1 - \alpha) \geq 0 \), it must hold that 
\EQ\label{eq:marg-comp-slack}
\lambda^*_k\, g_k(C^*) = 0, \quad \forall k.
\EQ
Thus: (i) if \( \lambda^*_k > 0 \) then \( P^{(k)}(Y \in C^*(X)) = 1 - \alpha \), and (ii) if \( \lambda^*_k = 0 \) then \( P^{(k)}(Y \in C^*(X)) \geq 1 - \alpha \).

\vspace{0.5em}
We show next that at least one coordinate of \( \lambda^* \) is strictly positive, i.e., statement (iii). Let \( \rho := \nu \otimes \mu \) and recall that for each \( k \),
\[
w_k(x, y) := r_k(x)\, f_k(y \mid x) \geq 0
\]
is integrable with respect to \( \rho \) and satisfies \( \iint w_k\, d\rho = 1 \) since it is the joint density of \((X, Y)\) under \(P^{(k)}\) with respect to \(\rho\).

Notice that for the dual objective~\eqref{eq:marg-dual-obj} with \( h_\lambda(x, y) = \sum_k \lambda_k w_k(x, y) \), we have \( \Phi(0) = 0 \). Fix any \( j \in \{1, \ldots, K\} \) and consider \( \lambda = t e_j \) with \( t > 0 \), where \( e_j \) is the \( j \)-th unit vector. Then \( h_\lambda = t w_j \) and
\[
\Phi(t e_j) = (1 - \alpha) t - \iint (t w_j - 1)_+\, d\rho.
\]

For any \( a \geq 0 \) and \( t > 0 \), \( (t a - 1)_+ \leq t a\, \mathbbm{1}\{ a \geq 1/t \} \). Applying this pointwise with \( a = w_j(x, y) \) and integrating,
\[
\iint (t w_j - 1)_+\, d\rho \leq t \iint w_j\, \mathbbm{1}\{ w_j \geq 1/t \}\, d\rho.
\]
Define \( T_j(t) := \iint w_j\, \mathbbm{1}\{ w_j \geq 1/t \}\, d\rho \). Since \( w_j \) is integrable, \( T_j(t) \to 0 \) as \( t \downarrow 0 \) (the tail of an integrable function vanishes). Therefore,
\[
\iint (t w_j - 1)_+\, d\rho \leq t T_j(t) = o(t),
\]
and hence
\[
\Phi(t e_j) \geq t \left[ (1 - \alpha) - T_j(t) \right].
\]
Because \( T_j(t) \to 0 \) and \( 1 - \alpha > 0 \), there exists \( t_0 > 0 \) such that for all \( t \in (0, t_0) \),
\[
\Phi(t e_j) \geq t \frac{1 - \alpha}{2} > 0.
\]
Thus \( \sup_{\lambda \geq 0} \Phi(\lambda) > 0 \), so a dual maximizer cannot be \( \lambda^* = 0 \). Consequently, \( \sum_k \lambda^*_k > 0 \) and there exists at least one \( k^* \) with \( \lambda^*_{k^*} > 0 \). By complementary slackness~\eqref{eq:marg-comp-slack},
\[
\widehat P^{(k^*)}(Y \in C^*(X)) = 1 - \alpha.
\]
This proves item (iii).

Finally, if \( \mu(\{ y : h_{\lambda^*}(x, y) = 1 \}) = 0 \) for \( \nu \)-almost every \( x \), then the boundary set is \( \mu \)-null almost surely, making the optimizer unique up to \((\nu \otimes \mu)\)-null sets.
\end{proof}

\subsection{Proof of Theorem~\ref{thm:global_conditional_optimal}}
\label{app:subsec_proof_cond_opt}

\begin{proof}[Proof of Theorem~\ref{thm:global_conditional_optimal}]
Fix $x \in X$ and write $I(y) := \mathbbm{1}\{y \in C(x)\}$. The conditional program is
\[
\begin{array}{ll}
\min_{I\in\{0,1\}} & \int I(y) \, d\mu(y) \\[2ex]
\text{subject to} & \int I(y) f_k(y \mid x) \, d\mu(y) \geq 1 - \alpha, \quad \text{for } k = 1, \dots, K.
\end{array}
\]
Relax $I \in \{0, 1\}$ to $I \in [0, 1]$. 
Similar to the marginal problem, we can form the Lagrangian with multipliers $\lambda(x) = (\lambda_1(x), \dots, \lambda_K(x)) \in \mathbb{R}_+^K$ (here $x$ is treated as fixed, yet we write the argument in $x$ for clarity):
\[
\cL_x(I, \lambda(x)) = \int I(y) \, d\mu(y) - \sum_k \lambda_k(x) \left( \int I(y) f_k(y \mid x) \, d\mu(y) - (1 - \alpha) \right).
\]
Let $h_{\lambda(x)}(y) := \sum_k \lambda_k(x) f_k(y \mid x)$. Then
\[
\cL_x(I, \lambda(x)) = \int I(y) [1 - h_{\lambda(x)}(y)] \, d\mu(y) + (1 - \alpha) \sum_k \lambda_k(x).
\]
For fixed $\lambda(x)$, minimization over $I \in [0, 1]$ is pointwise in $y$. Any minimizer has the threshold form
\EQ\label{eq:cond-h-threshold}
C_{\lambda(x)}(x) = \left\{ y : h_{\lambda(x)}(y) > 1 \right\} \cup S(x), \quad \text{with } S(x) \subseteq \{ y : h_{\lambda(x)}(y) = 1 \}.
\EQ
The dual function is
\[
\Phi_x(\lambda(x)) = (1 - \alpha) \sum_k \lambda_k(x) - \int \left( h_{\lambda(x)}(y) - 1 \right)_+ d\mu(y),
\]
this gives the conditional dual objective~\eqref{eq:cond-dual-obj} mentioned in the theorem, and the dual problem is to maximize $\Phi_x$ over $\lambda(x) \in \mathbb{R}_+^K$.

Slater’s condition holds (e.g., $C(x) \equiv \cY$ yields strict feasibility since $\alpha \in (0, 1)$), so strong duality applies and a dual maximizer $\lambda^*(x)$ exists. Thresholding $h_{\lambda^*(x)}$ yields a primal optimum $C^*(x)$. Complementary slackness gives, for each $k$,
\EQ\label{eq:cond-comp-slack}
\lambda^*_k(x) \left[ \int \mathbbm{1}\{y \in C^*(x)\} f_k(y \mid x) \, d\mu(y) - (1 - \alpha) \right] = 0.
\EQ
Hence:
\begin{itemize}
    \item If $\lambda^*_k(x) > 0$, then $P^{(k)}(Y_{n+1} \in C^*(x) \mid X_{n+1} = x) = 1 - \alpha$.
    \item If $\lambda^*_k(x) = 0$, then $P^{(k)}(Y_{n+1} \in C^*(x) \mid X_{n+1} = x) \geq 1 - \alpha$.
\end{itemize}

\vspace{0.5em}
We now show that at least one coordinate of $\lambda^*(x)$ is strictly positive. Note that $\Phi_x(0) = 0$. Fix any $j \in \{1, \dots, K\}$ and consider $\lambda(x) = t e_j$ with $t > 0$, where $e_j$ is the $j$-th unit vector. Then $h_{\lambda(x)}(y) = t f_j(y \mid x)$ and
\[
\Phi_x(t e_j) = (1 - \alpha) t - \int (t f_j(y \mid x) - 1)_+ d\mu(y).
\]
For any $a \geq 0$ and $t > 0$, $(t a - 1)_+ \leq t a \cdot \mathbbm{1}\{t a - 1 \geq 0\} = t a \cdot \mathbbm{1}\{a \geq 1/t\}$. Applying this pointwise with $a = f_j(y \mid x)$ and integrating,
\[
\int (t f_j - 1)_+ d\mu \leq t \int f_j \mathbbm{1}\{f_j \geq 1/t\} d\mu.
\]
Because $f_j(\cdot \mid x)$ is a density (or probability mass function), $\int f_j d\mu = 1$. The set $\{f_j \geq 1/t\}$ shrinks to the empty set as $t \downarrow 0$, and $0 \leq f_j \mathbbm{1}\{f_j \geq 1/t\} \leq f_j$. By dominated convergence,
\[
T_j(t) := \int f_j \mathbbm{1}\{f_j \geq 1/t\} d\mu \to 0 \text{ as } t \downarrow 0.
\]
Therefore,
\[
\Phi_x(t e_j) \geq (1 - \alpha)t - t T_j(t) = t \left[ (1 - \alpha) - T_j(t) \right].
\]
Since $T_j(t) \to 0$ and $1 - \alpha > 0$, there exists $t_0 > 0$ such that for all $t \in (0, t_0)$,
\[
\Phi_x(t e_j) \geq t (1 - \alpha)/2 > 0.
\]
Thus $\sup_{\lambda(x) \geq 0} \Phi_x(\lambda(x)) > 0$, so a dual maximizer cannot be $\lambda^*(x) = 0$. Consequently, $\sum_k \lambda^*_k(x) > 0$ and there exists some $k^*$ with $\lambda^*_{k^*}(x) > 0$. By complementary slackness~\eqref{eq:cond-comp-slack},
\[
P^{(k^*)}(Y_{n+1} \in C^*(x) \mid X_{n+1} = x) = 1 - \alpha.
\]

Additionally, if $\mu(\{ y : h_{\lambda^*(x)}(y) = 1 \}) = 0$, then the boundary set is $\mu$-null, making the optimizer unique up to $\mu$-null sets.
\end{proof}

\subsection{Proof of Theorem~\ref{thm:consistency}}
\label{app:subsec_consistency}

We begin by introducing the notation employed throughout the proofs, as well as several auxiliary lemmas that will be relied upon in the main results. Proofs of the lemmas are deferred to Appendix~\ref{proof:lemma:quantile-uniform}. We begin with some useful definitions. 

\begin{definition}[Generalized quantile]
    For $\alpha \in (0,1)$, define the generalized $\alpha$-quantile set of a CDF $G$ as
    \EQ\label{eq:generalized-quantile}
        Q_\alpha(G) := \left\{ q \in \mathbb{R} : G(q^-) \leq \alpha \leq G(q) \right\}.
    \EQ
    We term each $q\in Q_\alpha(G)$ as a generalized $\alpha$-quantile.
\end{definition}

\begin{definition}[Randomized quantile]
    Given scores $W_1, \ldots, W_n \in \mathbb{R}$ and an auxiliary $U \sim \operatorname{Unif}(0,1)$ independent of the data, define the randomized empirical CDF
    \EQS
        \widehat{G}_U(t) := \frac{ \#\{i : W_i < t\} + (1 + \#\{i : W_i = t\}) U }{n+1 }.
    \EQS
    We define the randomized empirical $\alpha$-quantile of $\{W_1,\dots,W_n\}$ as 
    \EQ\label{eq:random-quantile}
        \widehat{q}_\alpha := \inf \left\{ t \in \mathbb{R} : \widehat{G}_U(t) \geq \alpha \right\}.
    \EQ
\end{definition}
 
\begin{lemma}[Quantile stability under uniform CDF convergence]\label{lemma:quantile-uniform}
Let $G$ be a CDF on $\mathbb{R}$ and let $G_n$ be CDFs with $\sup_t |G_n(t) - G(t)| \to 0$ as $n\to\infty$. Fix $\alpha \in (0,1)$, and let $Q_\alpha(G)$ denote the generalized $\alpha$-quantile set defined in Equation~\eqref{eq:generalized-quantile}.
Suppose $q_n \in \mathbb{R}$ satisfies \(G_n(q_n{-}) \leq \alpha \leq G_n(q_n)\), then
\[
\operatorname{dist}(q_n, Q_\alpha(G)) := \inf_{q \in Q_\alpha(G)} |q_n - q| \to 0.
\]
In particular, if $Q_\alpha(G) = \{q^*\}$ (i.e., $G$ is continuous at the $\alpha$-quantile and there is no flat segment at level $\alpha$), then $q_n \to q^*$.
\end{lemma}

\begin{lemma}[Quantile-set stability under CDF perturbations]\label{lemma:quantile-sandwich}
Let $F$ and $G$ be CDFs on $\mathbb{R}$. Suppose that for some $\varepsilon \ge 0$,
\[
    F(t - \varepsilon) \;\le\; G(t) \;\le\; F(t + \varepsilon),
    \qquad \forall t \in \mathbb{R}.
\]
If $Q_\alpha(F) = [a, b]$ for some $\alpha \in (0,1)$, then every $q \in Q_\alpha(G)$ satisfies $q \in [a - \varepsilon,\; b + \varepsilon]$. In particular,
\[
    \sup_{q \in Q_\alpha(G)} \operatorname{dist}\bigl(q, Q_\alpha(F)\bigr) \;\le\; \varepsilon.
\]
\end{lemma}

\subsubsection{Proof of Theorem~\ref{thm:consistency}}

\begin{proof}[Proof of Theorem~\ref{thm:consistency}]
To clearly denote the asymptotic regime as $n\to\infty$, throughout this proof we add the superscript $(n)$ to the estimated quantities $\hat\lambda$, $\hat{f}_k$, and $\hat{h}$. 

Since both $f_k$ and $\widehat{\lambda}^{(n)}$ are bounded, and by assumption, $\sup_x \| \widehat{\lambda}^{(n)}(x) - \lambda^*(x) \|_{\infty} \overset{p}{\rightarrow} 0$ and $\sup_{x, y} | \widehat{f}_k^{(n)}(y|x) - f_k(y|x) | \overset{p}{\rightarrow} 0$, decompose
    \begin{align*}
        \sup_{x,y} \left| \widehat{h}^{(n)}(x, y) - h^*(x, y) \right|
        &\leq \sum_k  \Bigg[
            \sup_x \left| \widehat{\lambda}_k^{(n)}(x) - \lambda^*_k(x) \right| \cdot \sup_{x,y} f_k(y|x) \\
        &\hspace{5em} + \sup_x \left| \widehat{\lambda}_k^{(n)}(x) \right| \cdot \sup_{x,y} \left| \widehat{f}_k^{(n)}(y|x) - f_k(y|x) \right|
        \Bigg].
    \end{align*}
    Each term tends to $0$ in probability, hence
    \EQ\label{eq:h-convergence}
    \sup_{x,y} \left| \widehat{h}^{(n)}(x, y) - h^*(x, y) \right| \overset{p}{\rightarrow} 0.
    \EQ

    Fix $k$ and condition on $\widehat{h}^{(n)}$. Let $W_{k,i} := \widehat{h}^{(n)}(X_i^{(k)}, Y_i^{(k)})$ and $W_{k,\text{test}} := \widehat{h}^{(n)}(x, y)$. Since $V = -\widehat{h}$, the randomized $p$-value from Section~\ref{subsec:opt_estimated} (equivalently, Equation~\eqref{eq:rand-pvalue}) is the randomized empirical CDF of $W$ at the test point:
    \[
    p_k^{(n)}(x, y) = \frac{
        \#\{ i : W_{k,i} < W_{k,\text{test}} \}
        + \left( 1 + \#\{ i : W_{k,i} = W_{k,\text{test}} \} \right) U_k
    }{ n_k + 1 }, \quad \text{with } U_k \sim \mathrm{Unif}(0,1).
    \]
    Thus, the single-source prediction set is based on thresholding $\widehat{h}^{(n)}$:
    \[
        \{ y : p_k^{(n)}(x, y) \geq \alpha \} = \{ y : \widehat{h}^{(n)}(x, y) > \widehat{q}_{k,\alpha}^{(n)} \},
    \]
    where $\widehat{q}_{k,\alpha}^{(n)}$ is the randomized empirical $\alpha$-quantile of $W_{k,i}$:
    \[
    \hat{q}_{k,\alpha}^{(n)} := \inf \left\{ t \in \mathbb{R} : \frac{\#\{i : W_{k,i} < t\} + \left(1 + \#\{i : W_{k,i} = t\}\right) \cdot U_k}{n_k + 1} \geq \alpha \right\}.
    \]
    Aggregating $K$ sources yields
    \[
        \widehat{C}^{(n)}(x) = \{ y : \widehat{h}^{(n)}(x, y) \geq \widehat{q}_{\min,\alpha}^{(n)} \},
    \]
    where $\widehat{q}_{\min,\alpha}^{(n)} := \min_k \widehat{q}_{k,\alpha}^{(n)}$.

    \vspace{1em}
    Let $F_k(t)$ be the CDF of $h^*(X,Y)$ under $P^{(k)}$ and $F_k^{(n)}(t)$ the CDF of $\widehat{h}^{(n)}(X,Y)$. 
    Conditional on the training data, the calibration scores are i.i.d.\ from a distribution with CDF \( F_k^{(n)} \). By the DKW inequality,
    \[
        \sup_t \left| \hat{F}_k^{(n)}(t) - F_k^{(n)}(t) \right| \overset{p}{\rightarrow} 0.
    \]
    Define
    \[
        \varepsilon_n := \sup_{x,y} \bigl| \widehat{h}^{(n)}(x, y) - h^*(x, y) \bigr|.
    \]
    Then $\varepsilon_n \overset{p}{\to} 0$ by~\eqref{eq:h-convergence}. Moreover, for any $t \in \mathbb{R}$ we have
    \[
        F_k(t - \varepsilon_n) \;\le\; F_k^{(n)}(t) \;\le\; F_k(t + \varepsilon_n),
    \]
    because $\bigl| \widehat{h}^{(n)} - h^* \bigr| \le \varepsilon_n$ implies
    \[
        \{ h^*(X,Y) \le t - \varepsilon_n \} \;\subseteq\; \{ \widehat{h}^{(n)}(X,Y) \le t \}
        \;\subseteq\; \{ h^*(X,Y) \le t + \varepsilon_n \}.
    \]

    Since our randomized $p$-value inversion selects an empirical generalized $\alpha$-quantile
    \(
        \widehat{q}_{k,\alpha}^{(n)} \in Q_\alpha(\widehat{F}_k^{(n)}),
    \)
    applying Lemma~\ref{lemma:quantile-uniform} with $G_n = \widehat{F}_k^{(n)}$ and $G = F_k^{(n)}$ yields
    \[
        \operatorname{dist}\left( \hat{q}_{k,\alpha}^{(n)}, Q_\alpha(F_k^{(n)}) \right) \overset{p}{\rightarrow} 0.
    \]
    Next, applying Lemma~\ref{lemma:quantile-sandwich} with $F = F_k$, $G = F_k^{(n)}$ and $\varepsilon = \varepsilon_n$ gives
    \[
        \operatorname{dist}\bigl(Q_\alpha(F_k^{(n)}), Q_\alpha(F_k)\bigr) \;\leq\; \varepsilon_n \;\overset{p}{\rightarrow}\; 0.
    \]
    By the triangle inequality,
    \[
    \operatorname{dist}(\hat{q}_{k,\alpha}^{(n)}, Q_\alpha(F_k))
    \leq \operatorname{dist}(\hat{q}_{k,\alpha}^{(n)}, Q_\alpha(F_k^{(n)})) + \operatorname{dist}(Q_\alpha(F_k^{(n)}), Q_\alpha(F_k))
    \overset{p}{\rightarrow} 0.
    \]
    That is, 
    \[
        \operatorname{dist}\left( \widehat{q}_{k,\alpha}^{(n)}, Q_\alpha(F_k) \right) \overset{p}{\rightarrow} 0.
    \]
    In particular, if \( 1 \) is the unique generalized \(\alpha\)-quantile of \( F_k \), then \( \widehat{q}_{k,\alpha}^{(n)} \overset{p}{\rightarrow} 1 \). 

    \vspace{1em}
    By KKT conditions and strong duality, we know the optimal rule thresholds at $1$; that is, it includes all $(x, y)$ with $h^*(x, y) > 1$ and, at most, a randomized fraction of those with $h^*(x, y) = 1$. Under $P^{(k)}$, the maximal coverage achievable without lowering the threshold is
    \[
        \PP^{(k)}\left(h^*(X, Y) \geq 1\right) = 1 - F_k(1^-).
    \]
    The coverage constraint $\PP^{(k)}(y\in C^*) \geq 1 - \alpha$ therefore forces $1 - F_k(1^-) \geq 1 - \alpha$, i.e., $F_k(1^-)\leq \alpha$ for every $k$; otherwise the threshold-$1$ solution would be infeasible, contradicting strong duality.
    
    By conclusion (iii) of Theorem~\ref{thm:global_conditional_optimal}, there exists at least one $k$ with $\lambda_k>0$, such that 
    $\alpha$ lies in the jump $[F_k(1^-), F_k(1)]$, hence the generalized $\alpha$-quantile is unique and equals $1$.
    Consequently, for each $k$, any $\alpha$-quantile of $F_k$ is no smaller than $1$, and for at least one $k$, it is exactly equal to $1$. Therefore,
    \EQ\label{eq:emp_q-to-1}
        \widehat{q}_{\min,\alpha}^{(n)} \overset{p}{\rightarrow} 1.
    \EQ  
    Let
    \[
        \delta_n := \left| \widehat{q}_{\min,\alpha}^{(n)} - 1 \right| + \sup_{x,y} \left| \widehat{h}^{(n)}(x, y) - h^*(x, y) \right|.
    \]
    Then $\delta_n \overset{p}{\rightarrow} 0$ by Equations~\eqref{eq:emp_q-to-1} and~\eqref{eq:h-convergence}. Also note that,

    \vspace{0.5em}
    \begin{itemize}
        \item If $h^*(x, y) > 1 + 2\delta_n$, then $\widehat{h}^{(n)}(x, y) > 1 + \delta_n \geq \widehat{q}_{\min,\alpha}^{(n)}$, so $(x, y) \in \widehat{C}^{(n)}$.
        \item If $h^*(x, y) < 1 - 2\delta_n$, then $\widehat{h}^{(n)}(x, y) < 1 - \delta_n \leq \widehat{q}_{\min,\alpha}^{(n)}$, so $(x, y) \notin \widehat{C}^{(n)}$.
    \end{itemize}
    \vspace{0.5em}
    Hence,
    \[
        \{ (x, y) : h^*(x, y) > 1 + 2\delta_n \} \subseteq \widehat{C}^{(n)} \subseteq \{ (x, y) : h^*(x, y) \geq 1 - 2\delta_n \} \cup \widehat{B}_n,
    \]
    where
    \[
        \widehat{B}_n \subseteq \{ (x, y) : | \widehat{h}^{(n)}(x, y) - \widehat{q}_{\min,\alpha}^{(n)} | = 0 \} \subseteq \{ (x, y) : | h^*(x, y) - 1 | \leq \delta_n \},
    \]
    which follows from Equation~\eqref{eq:h-convergence} derived earlier.
    Taking symmetric differences with $\{ (x, y) : h^*(x, y) \geq 1 \}$ and letting $n \to \infty$,
    \EQ\label{eq:sym-diff}
        \limsup_{n} \rho\left( \widehat{C}^{(n)} \triangle \{ (x, y) : h^*(x, y) \geq 1 \} \right)
        \leq \limsup_n \rho\left( \{ (x, y) : | h^*(x, y) - 1 | \leq 2\delta_n \} \right) \leq |T|
    \EQ
    since $T = \{ (x, y) : h^*(x, y) = 1 \}$ and $|T| = \rho(T) = \int_\cX \mu(T(x)) d\nu(x)$, and the measure of shrinking neighborhoods of $T$ tends to $|T|$.

    \vspace{1em}
    Finally, write
    \[
        |\widehat{C}^{(n)}| - |C^*| = \left( |\widehat{C}^{(n)}| - |\{ h^* \geq 1 \}| \right) + \left( |\{ h^* \geq 1 \}| - |C^*| \right).
    \]
    The first bracket is bounded in absolute value by $\rho( \widehat{C}^{(n)} \triangle \{ h^* \geq 1 \} ) \leq |T|$ from Inequality~\eqref{eq:sym-diff}.
    The second bracket equals $|T| - |S^*| + |\{ h^* > 1 \}| - |\{ h^* > 1 \}| = |T| - |S^*|$, whose absolute value is $\leq |T|$.
    Therefore,
    \[
        \limsup_{n \to \infty} \left|\, |\widehat{C}^{(n)}| - |C^*| \,\right| \leq |T|.
    \]
    Moreover, Inequality~\eqref{eq:sym-diff} shows there exists a subsequence \(\{n_j\}\) and a measurable set \(S_\infty \subseteq T := \{(x, y) : h^*(x, y) = 1\}\) such that
    \[
        \rho\left( \widehat{C}^{(n_j)} \,\Delta\, \left( \{h^* > 1\} \cup S_\infty \right) \right) \to 0.
    \]
    Consequently, choosing the oracle set \(C^*(x) = \{y : h^*(x, y) > 1\} \cup S_\infty(x)\) yields \(|\widehat{C}^{(n_j)}| \to |C^*|\).
\end{proof}

\subsubsection{Proof of Lemma~\ref{lemma:quantile-uniform}}
\label{proof:lemma:quantile-uniform}

\begin{proof}[Proof of Lemma~\ref{lemma:quantile-uniform}]
For a monotone right-continuous $H$, denote the left limit by $H(x{-}) := \sup_{t < x} H(t)$. If $\sup_t |H_n(t) - H(t)| \leq \varepsilon$, then also $\sup_x |H_n(x{-}) - H(x{-})| \leq \varepsilon$, because
\begin{align*}
H_n(x{-}) &= \sup_{t < x} H_n(t) \geq \sup_{t < x} [H(t) - \varepsilon] = H(x{-}) - \varepsilon, \\[2px]
H_n(x{-}) &= \sup_{t < x} H_n(t) \leq \sup_{t < x} [H(t) + \varepsilon] = H(x{-}) + \varepsilon.
\end{align*}
Let $a := \inf\{ t : G(t) \geq \alpha \}$ and $b := \sup\{ t : G(t) \leq \alpha \}$; then $a \leq b$ and $Q_\alpha(G) = [a, b]$. Then
\begin{itemize}
    \item For any $\delta > 0$, $G(a - \delta) < \alpha$. Define $\gamma_L(\delta) := \alpha - G(a - \delta) > 0$.
    \item For any $\delta > 0$, for all $x \geq b + \delta$ we have $G(x{-}) > \alpha$. Indeed, for any such $x$ pick $s$ with $b < s < x$; then $G(s) > \alpha$ by definition of $b$, so $G(x{-}) \geq G(s) > \alpha$. Hence define $\gamma_R(\delta) := G(b + \delta/2) - \alpha > 0$.
\end{itemize}

\vspace{-0.5em}
\paragraph{Left bound.} Fix $\delta > 0$ and choose $n$ large so that $\sup_t |G_n(t) - G(t)| \leq \varepsilon_n$ with $\varepsilon_n < \gamma_L(\delta)/2$. If $q \leq a - \delta$ then
\[
G_n(q) \leq G(q) + \varepsilon_n \leq G(a - \delta) + \varepsilon_n = \alpha - \gamma_L(\delta) + \varepsilon_n < \alpha,
\]
contradicting the requirement $\alpha \leq G_n(q)$ for $q \in Q_\alpha(G_n)$. Therefore any $q \in Q_\alpha(G_n)$ must satisfy $q > a - \delta$.

\vspace{-0.5em}
\paragraph{Right bound.} With the same $n$ and $\varepsilon_n$ and the $\gamma_R(\delta)$ defined above, if $q \geq b + \delta$ then
\[
G_n(q{-}) \geq G(q{-}) - \varepsilon_n \geq (\alpha + \gamma_R(\delta)) - \varepsilon_n > \alpha,
\]
contradicting the requirement $G_n(q{-}) \leq \alpha$ for $q \in Q_\alpha(G_n)$. Therefore any $q \in Q_\alpha(G_n)$ must satisfy $q < b + \delta$.

\vspace{0.5em}
Therefore, for any fixed $\delta > 0$ and all sufficiently large $n$ we have
\[
Q_\alpha(G_n) \subset (a - \delta, b + \delta).
\]
In particular, our selected $q_n \in Q_\alpha(G_n)$ lies within $\delta$ of the closed set $[a, b] = Q_\alpha(G)$, so $\operatorname{dist}(q_n, Q_\alpha(G)) \leq \delta$. Because $\delta > 0$ was arbitrary, $\operatorname{dist}(q_n, Q_\alpha(G)) \to 0$.

For the unique-quantile case $Q_\alpha(G) = \{q^*\}$, the distance convergence implies $q_n \to q^*$.
\end{proof}

\subsubsection{Proof of Lemma~\ref{lemma:quantile-sandwich}}
\begin{proof}[Proof of Lemma~\ref{lemma:quantile-sandwich}]
Let $Q_\alpha(F) = [a,b]$ and assume that
\[
    F(t - \varepsilon) \;\le\; G(t) \;\le\; F(t + \varepsilon), \qquad \forall t \in \mathbb{R}.
\]
Take any $q \in Q_\alpha(G)$, so that $G(q^-) \le \alpha \le G(q)$.

From $G(q) \le F(q+\varepsilon)$ and $\alpha \le G(q)$ we obtain
\[
    F(q+\varepsilon) \;\ge\; \alpha.
\]
By the definition $a = \inf\{t : F(t) \ge \alpha\}$, this implies $q+\varepsilon \ge a$, hence $q \ge a - \varepsilon$.

For the upper bound, note that for every $s < q$,
\[
    F(s-\varepsilon) \;\le\; G(s).
\]
Taking the supremum over $s<q$ yields
\[
    F((q-\varepsilon)^-) \;=\; \sup_{u < q-\varepsilon} F(u)
    \;=\; \sup_{s < q} F(s-\varepsilon)
    \;\le\; \sup_{s<q} G(s) \;=\; G(q^-) \;\le\; \alpha.
\]
Suppose, for the sake of contradiction, that $q > b + \varepsilon$. Then $q - \varepsilon > b$. By the definition
\[
    b = \sup\{ t : F(t) \le \alpha \},
\]
for any $u > b$ we must have $F(u) > \alpha$, and by monotonicity and right-continuity of $F$ this implies $F(u^-) > \alpha$ for all $u > b$. In particular,
\[
    F((q-\varepsilon)^-) \;>\; \alpha,
\]
which contradicts $F((q-\varepsilon)^-) \le \alpha$ above. Hence $q - \varepsilon \le b$, i.e., $q \le b + \varepsilon$.

We have shown that every $q \in Q_\alpha(G)$ lies in $[a-\varepsilon, b+\varepsilon]$, and therefore
\[
    \sup_{q \in Q_\alpha(G)} \operatorname{dist}\bigl(q, Q_\alpha(F)\bigr) \;\le\; \varepsilon.
\]
\end{proof}

\subsection{Optimality of the Integrated Dual Problem}
\label{app:subsubsec_equiv-integrated-conditional-dual}

In this part, we formalize the discussion at the beginning of Section~\ref{sec:empirical_dual} on the optimal $\lambda^*(x)$ as the solution to an integrated dual objective.

\begin{prop}[Equivalence of integrated dual and conditional dual]\label{prop:equiv-integrated-conditional-dual}
For \(k = 1, \ldots, K\), let \(f_k(\cdot \mid x)\) be the conditional density/pmf of \(Y \mid X = x\) with respect to \(\mu\). Fix \(\alpha \in (0, 1)\). For \(\lambda \in \mathbb{R}^K_+\) and \(x \in \cX\), define $h_\lambda(x, y) := \sum_{k=1}^K \lambda_k f_k(y \mid x)$, and
\[
    \varphi_x(\lambda) := (1 - \alpha) \sum_{k=1}^K \lambda_k - \int_\cY (h_\lambda(x, y) - 1)_+ \, d\mu(y).
\]
Let \(\tilde{\nu}\) be a \(\sigma\)-finite measure on \((\cX, \mathcal{A})\) with Radon--Nikodym density \(w(x) := \frac{d\tilde{\nu}}{d\nu}\) satisfying \(0 < w(x) < \infty\) for \(\nu\)-a.e.\ \(x\). Consider the integrated dual objective
\[
\Phi_{\tilde{\nu}}(\lambda(\cdot)) := \int_\cX \left[(1 - \alpha) \sum_{k=1}^K \lambda_k(x) - \int_\cY (h_\lambda(x, y) - 1)_+ \, d\mu(y) \right] d\tilde{\nu}(x).
\]
Then \(\Phi_{\tilde{\nu}}(\lambda(\cdot)) = \int_\cX w(x) \, \varphi_x(\lambda(x)) \, d\nu(x)\), and 
\begin{enumerate}[label=(\roman*)]
    \item A measurable \(\lambda^*(\cdot)\) maximizes \(\Phi_{\tilde{\nu}}\) if and only if
    \[
        \lambda^*(x) \in \arg\max_{\lambda \in \mathbb{R}^K_+} \varphi_x(\lambda) \qquad \text{for \(\nu\)-a.e.\ } x.
    \]
    Hence the set of maximizers is independent of the particular choice of \(\tilde{\nu}\), as long as \(d\tilde{\nu}/d\nu > 0\) \(\nu\)-a.e.
    \item For \(\nu\)-a.e.\ \(x\), any maximizer \(\lambda^*(x)\) is a dual maximizer of the \(x\)-conditional problem~\eqref{eq:opt_cond}. Thresholding \(h_{\lambda^*}\) at level 1 gives the conditionally optimal set
    \[
        C^*(x) = \left\{y \in Y : h_{\lambda^*}(x, y) > 1 \right\} \cup S(x), \quad \text{with } S(x) \subseteq \{y : h_{\lambda^*}(x, y) = 1\},
    \]
    as in Theorem~\ref{thm:global_conditional_optimal}. Thus the optimal score and set do not depend on \(\tilde{\nu}\).
\end{enumerate}
\end{prop}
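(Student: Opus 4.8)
The plan is to reduce everything to a pointwise maximization in $\lambda$. First I would unfold the definitions: since $w := d\tilde\nu/d\nu$ satisfies $0<w<\infty$ $\nu$-a.e., the Radon--Nikodym identity gives
\[
\Phi_{\tilde\nu}(\lambda(\cdot)) \;=\; \int_\cX\Big[(1-\alpha)\sum_{k=1}^K\lambda_k(x) - \int_\cY\big(h_\lambda(x,y)-1\big)_+\,d\mu(y)\Big]\,w(x)\,d\nu(x) \;=\; \int_\cX w(x)\,\varphi_x(\lambda(x))\,d\nu(x),
\]
which is the claimed identity. Here I would observe that the bracketed quantity is exactly the $x$-conditional dual objective $\Phi_x$ of~\eqref{eq:cond-dual-obj}, so $\varphi_x\equiv\Phi_x$; this makes part (ii) essentially a corollary of Theorem~\ref{thm:global_conditional_optimal} once (i) is in hand. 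Set $V(x):=\sup_{\lambda\in\RR_+^K}\varphi_x(\lambda)$, which by strong duality (Slater, as in Theorem~\ref{thm:global_conditional_optimal}) equals the conditional primal value $|C^*(x)|\in[0,\mu(\cY)]$, hence is finite.

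For part (i), the key points are: (a) $\lambda\mapsto\varphi_x(\lambda)$ is concave (as $h_\lambda$ is linear in $\lambda$ and $t\mapsto(t-1)_+$ is convex) and coercive on $\RR_+^K$ — using $\int_\cY f_k(\cdot\mid x)\,d\mu=1$ one gets $\varphi_x(\lambda)\le(1-\alpha)\sum_k\lambda_k-\big(\sum_k\lambda_k-\mu(\cY)\big)=-\alpha\sum_k\lambda_k+\mu(\cY)\to-\infty$ — so $\arg\max_\lambda\varphi_x(\lambda)$ is nonempty (also guaranteed by Theorem~\ref{thm:global_conditional_optimal}); and (b) by the measurable maximum theorem, $x\mapsto V(x)$ is measurable and the map $x\mapsto\arg\max_\lambda\varphi_x(\lambda)$ admits a measurable selector $\lambda^\star(\cdot)$. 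Then for any admissible $\lambda(\cdot)$ the pointwise bound $w(x)\varphi_x(\lambda(x))\le w(x)V(x)$ integrates to $\Phi_{\tilde\nu}(\lambda(\cdot))\le\int_\cX w(x)V(x)\,d\nu(x)=:\Phi^\star$, with equality for $\lambda^\star(\cdot)$. Thus the supremum is attained and equals $\Phi^\star$; and if $\lambda^*(\cdot)$ is any maximizer, then $\int_\cX w(x)\big(V(x)-\varphi_x(\lambda^*(x))\big)\,d\nu(x)=0$ with nonnegative integrand, so $\varphi_x(\lambda^*(x))=V(x)$ for $\nu$-a.e.\ $x$ (using $w>0$ $\nu$-a.e.), i.e.\ $\lambda^*(x)\in\arg\max_\lambda\varphi_x(\lambda)$. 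The converse is the equality case just noted. Since the pointwise problem $\max_\lambda\varphi_x(\lambda)$ carries no reference to $\tilde\nu$, the maximizer set is the same for every admissible $\tilde\nu$.

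Part (ii) then follows at once: because $\varphi_x=\Phi_x$, a pointwise maximizer $\lambda^*(x)$ is precisely a dual maximizer of the $x$-conditional program~\eqref{eq:opt_cond}, so Theorem~\ref{thm:global_conditional_optimal} applies verbatim and yields $C^*(x)=\{y:h_{\lambda^*}(x,y)>1\}\cup S(x)$ with $S(x)\subseteq\{y:h_{\lambda^*}(x,y)=1\}$ together with the complementary-slackness statements, none of which involves $\tilde\nu$.

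I expect the main technical obstacle to be two-fold. First, the measurable-selection step: establishing measurability of $x\mapsto V(x)$ and existence of a measurable maximizing selector, which I would handle by restricting the $\lambda$-search to a fixed compact set — legitimate by the coercivity bound above, which is uniform on compacta of $x$ under the standing boundedness of the $f_k$ — and then invoking the measurable maximum theorem. Second, the case $\tilde\nu(\cX)=\infty$, where $\Phi^\star$ may be $+\infty$ and "maximizer'' needs care; I would dispatch this either by restricting to the setting actually used in the algorithm, where $\tilde\nu$ is a probability measure (so $\Phi^\star\le\mu(\cY)<\infty$ and the argument above is immediate), or via a $\sigma$-finite exhaustion $\cX=\bigcup_m A_m$ with $\tilde\nu(A_m)<\infty$, running the equality-case argument on each $A_m$ to get pointwise optimality $\tilde\nu$-a.e.\ on $A_m$, hence $\nu$-a.e.\ on $\cX$.
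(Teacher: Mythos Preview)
Your proposal is correct and follows essentially the same route as the paper: rewrite $\Phi_{\tilde\nu}$ via Radon--Nikodym as $\int w(x)\varphi_x(\lambda(x))\,d\nu(x)$, bound it pointwise by $\int w(x)V(x)\,d\nu(x)$, show this bound is attained by a pointwise-maximizing selector, and read off necessity from the nonnegative integrand $w(x)\big(V(x)-\varphi_x(\lambda^*(x))\big)$ vanishing $\nu$-a.e.; part (ii) then reduces to Theorem~\ref{thm:global_conditional_optimal}. Your treatment is in fact more careful than the paper's: you make explicit the concavity/coercivity argument for nonemptiness of the argmax, the measurable-selection step, and the $\sigma$-finite case where $\Phi^\star$ may be infinite, all of which the paper's proof leaves implicit.
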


\begin{proof}[Proof of Proposition~\ref{prop:equiv-integrated-conditional-dual}]
By the Radon--Nikodym theorem, \( d\tilde{\nu} = w\, d\nu \) with \( w > 0 \) \(\nu\)-a.e. Substituting,
    \[
    \Phi_{\tilde\nu}(\lambda(\cdot)) = \int_\cX \left[ (1 - \alpha) \sum_k \lambda_k(x) - \int_\cY \left( h_\lambda(x, y) - 1 \right)_+ d\mu(y) \right] w(x)\, d\nu(x)
    = \int_\cX w(x)\, \varphi_x(\lambda(x))\, d\nu(x).
    \]
For any measurable \(\lambda(\cdot)\),
    \EQ\label{eq:integrate-upper-bound}
    \Phi_{\tilde\nu}(\lambda(\cdot)) = \int_\cX w(x)\, \varphi_x(\lambda(x))\, d\nu(x)
    \leq \int_\cX w(x)\, \sup_{\lambda \geq 0} \varphi_x(\lambda)\, d\nu(x),
    \EQ
    with equality iff \( \varphi_x(\lambda(x)) = \sup_{\lambda \geq 0} \varphi_x(\lambda) \) for \(\nu\)-a.e.\ \(x\). 
    Note that \(\hat{\lambda}(\cdot)\) with \(\hat{\lambda}(x) \in \operatorname{argmax} \varphi_x(\cdot)\) exists and attains the upper bound, for this $\hat{\lambda}$,
    \[
    \Phi_{\tilde{\nu}}(\hat{\lambda}) = \int_\cX w(x) \varphi_x(\hat{\lambda}(x)) \, d\nu(x) = \int_\cX w(x) \sup_{\lambda \geq 0} \varphi_x(\lambda) \, d\nu(x),
    \]
    which matches the upper bound in Equation~\eqref{eq:integrate-upper-bound} and is therefore optimal. Moreover, if a candidate $\lambda(\cdot)$ fails to maximize $\varphi_x$ at a set of $x$ with positive $\tilde{\nu}$-measure, replacing it by a pointwise maximizer on that set always strictly increases $\Phi_{\tilde{\nu}}$, proving the necessity.
    Because \( w(x) > 0 \), multiplying by \( w(x) \) does not change the pointwise argmax sets, so the maximizers are independent of \(\tilde{\nu}\).

By definition, \( \varphi_x(\cdot) \) is the dual objective of the \(x\)-conditional problem~\eqref{eq:opt_cond}. Therefore, a pointwise maximizer \( \lambda^*(x) \) is a dual maximizer for~\eqref{eq:opt_cond}. By KKT conditions for~\eqref{eq:opt_cond}, thresholding \( h_{\lambda^*}(x, \cdot) \) at 1 yields the conditionally optimal set stated above. Independence from \(\tilde{\nu}\) follows from (i).
\end{proof}

\subsection{Proof of Theorem~\ref{thm:sieve}}
\label{app:subsec_sieve_approx}

\begin{proof}[Proof of Theorem~\ref{thm:sieve}] 
First, following exactly the same conditions and proof in~\citet[Theorem 1]{jin2022sensitivity} applied to the loss function $\hat\ell(\cdot)$, we can show that $\|\hat\lambda - \bar\lambda^* \|_{L_2} = O_P\big((\frac{\log n}{n})^{p/(2p+d)}\big) $ and 
$\|\hat\lambda - \bar\lambda^* \|_{\infty } = O_P\big((\frac{\log n}{n})^{2p^2/(2p+d)^2}\big) $. Then, by triangle inequality and Assumption~\ref{assump:approx}, we obtain the desired results. 
\end{proof}

\section{Experimental Details}
\label{app:sec_experiment_detail}

\subsection{Simulation Details}
\label{app:sim-details}

\subsubsection{Classification Data-Generating Processes}
\label{app:sim-class-dgp}

Here we detail the data-generating processes used in the classification simulations of Section~\ref{sec:sim-classification}. 
We simulate $C=6$ classes. For source $k \in [K]$ and class $c \in [C]$, the conditional class probability is given by a multinomial model $f_k(y=c \given x) \propto \exp\{\eta_{kc}(x)\}$ with $\eta_{kc}(x) = \xi_k (b_{kc} +   {\beta}_{kc}^\top x  ) + \mathbbm{1}\{c>1\}\, g(x)$. 
Here, with a temperature parameter $\tau\in \RR$, the linear signal is $\xi_k = 2.5(1+0.25\tau \cdot u_k)$ with $u_k\iid \text{Unif}([-1,1])$, and the heterogeneous intercept is independently sampled as $b_{kc}\sim \cN(0, (0.4\tau)^2)$. 
The source-specific linear coefficients are  $\beta_{kc} = \bar\beta_c + \tau\cdot \Delta_{kc}$ where, after a random sample of   $\cI\subseteq [d]$ with $|\cI|=4$, we independently sample $(\bar\beta_c)_j\sim \cN(0,1)$ and $(\Delta_{kc})_j\sim \cN(0, 0.15^2)$ for each $j\in \cI$ and set $(\bar\beta_c)_j=(\Delta_{kc})_j=0$ for $j\notin \cI$.
Finally, the nonlinear component $g(x)$ is set as zero in the \textsc{Linear} experiments, and we vary its definition in three DGPs in the \textsc{Nonlinear} experiments: 
(a) interaction: $g(x)=2\,\sum_{(u,v)} w_{uv}\, x_u x_v$; (b) sinusoid: $g(x)=2\,\sum_{r=1}^{3} a_r \sin(u_r^\top x + b_r)$, (c) softplus: $g(x)=2\,\sum_{r=1}^{3} a_r \log\bigl(1 + \exp(u_r^\top x + b_r)\bigr)$. 
At the beginning of each experiment, we sample the weights $w_{uv}$, the linear coefficients $a_r$, $u_r$, and $b_r$ (the detailed processes are in Appendix~\ref{app:subsubsec_hyperparam}); afterwards, we draw the labeled and unlabeled data  conditional on them. 
In the \textsc{Linear} and \textsc{Nonlinear} experiments, the temperature parameter is fixed at $\tau=2.5$.  
In the evaluation of \textsc{Temperature} experiments, we focus only on the linear model with $g(x)\equiv 0$ and vary the temperature $\tau \in \{0.5,\,1.5,\,2.5,\,3.5,\,4.5\}$.

\subsubsection{Classification Implementation Details}
\label{app:sim-class-impl}
This subsection details the implementation of MDCP and baselines in the classification simulations of Section~\ref{sec:sim-classification}.
We implement the three competing methods based on the same estimators (built from the training folds) for fairness. 
We train a gradient boosting classifier $\hat{p}_k(y \mid x)$ to estimate  $P^{(k)}(Y = y \mid X = x)$ for each source $k$, and a separate gradient boosting classifier on the pooled training data to estimate  $p_{\text{pool}}(y\given x)$. 
Following Section~\ref{subsec:alg_classification}, we specify the per-source scores 
\EQS
 s_k(X_i, Y_i) := -\hat{h}_{\hat\lambda}(X_i, Y_i), \quad \text{where} \quad  \hat{h}_{\hat\lambda}(x,y) : = \textstyle{\sum_{k=1}^K} \hat{\lambda}_k(x) \hat{p}_k(y \given x),
\EQS 
where, following the procedure in Section~\ref{subsec:alg_classification},  we parameterize the nonnegative weight functions $\lambda_k(x)$ as spline functions,  
and learn $\hat{\lambda}_k(x)$ by minimizing the empirical objective~\eqref{eq:algo_empirical_objective}.
Specifically, we use a cubic B-spline basis with 3 polynomial degree and 5 knots placed uniformly over the range of the observed covariates, constructed using the \textsc{SplineTransformer} in the \textsc{scikit-learn} Python package.
The multipliers $\hat{\lambda}_k(x)$ are trained on the same training fold based on the fitted classifiers $\hat{p}_k(y\given x)$ and $\hat{p}_{\text{pool}}(y\given x)$, i.e., we reuse the training data. 
In both \textsc{Baseline-src-$k$} and \textsc{Baseline-agg}, we use the widely-used TPS score~\citep{sadinle2019least} 
with the same fitted probabilities $\hat{p}_k(y\given x)$ to build single-source and aggregated prediction sets, thereby serving as baselines with the same fitted models without optimizing for multi-distribution efficiency.

\subsubsection{Regression Data-Generating Processes}
\label{app:sim-reg-dgp}

Here we detail the data-generating processes used in the regression simulations of Section~\ref{sec:sim-regression}. 
In all regression settings, for source $k \in [K]$, we sample the labels via $Y=\mu_k(X)+\varepsilon_k$, with independent noise $\varepsilon_k\sim \cN(0,\sigma_k^2)$. Following similar design ideas as in the classification settings, given a temperature parameter $\tau\in \RR$, the regression function is $\mu_k(x) = \beta_k^\top x+ b_k+g(x)$, where the source-specific coefficient is given by $\beta_k=\bar\beta+0.2\tau \cdot \delta_k$, with $\bar\beta_j\sim \cN(0,1)$ and $(\delta_k)_j\sim \cN(0,1)$ independently drawn for $j\in \cI$ and $\bar\beta_j\equiv 0$ and $(\delta_k)_j\equiv 0$ for $j\notin \cI$, and $\cI$ is the randomly drawn set of signals. The source-specific intercept is given by $b_k=b+\tau\cdot v_k$ with independently drawn $b\sim \cN(0,0.5^2)$ and $v_k\sim \cN(0, 0.5^2)$. 
In each run, we randomly sample a signal-to-noise ratio from Unif($[5,10]$), and achieve it by adjusting the noise variance $\sigma_k^2$. Finally, the nonlinear component $g(x)$ is set to be zero in the \textsc{Linear} experiments, and we consider the same three choices of $g(x)$ in the \textsc{Nonlinear} experiments as in the classification settings (Section~\ref{sec:sim-classification}), with the same sampling process of the hyper-parameters. 
We fix $\tau=2.5$ in \textsc{Linear} and \textsc{Nonlinear} experiments. In \textsc{Temperature} setting, we focus only on the linear model and vary  $\tau \in \{0.5, 1.5, 2.5, 3.5, 4.5\}$; in addition, we sample $u\sim \mathrm{Unif}( [\{ 1- \tau/4\}_{+}, 1+ \tau/4])$  and multiply the SNR-calibrated $\sigma_k$ by $u$, so that the temperature  also affect the noise level.

\subsubsection{Regression Implementation Details}
\label{app:sim-reg-impl}

This subsection details the implementation of MDCP and baselines in the regression simulations of Section~\ref{sec:sim-regression}.
In the regression procedure, the optimal score function relies on the condition density $f_k(y\given x)$ in each source $k$.  
As mentioned in Section~\ref{subsec:alg_regression}, to avoid the challenging conditional density estimation, we model the data as $P^{(k)}_{Y\given X=x}\sim \cN(\mu_k(x),\sigma_k(x))$ for some functions $\mu_k(x)=\EE^{(k)}[Y\given X=x]$ and  $\sigma_k^2(x)=\Var^{(k)}(Y\given X=x)$, and obtain their estimates $\hat\mu_k(\cdot)$ and $\hat\sigma_k(\cdot)$ on the training fold via gradient boosting decision trees. Plugging in the two estimates leads to the estimated per-source conditional densities $\hat{f}_k(y \mid x)$ using gradient boosting decision trees. 
In the same way, we fit a pooled Gaussian working model on pooled training data to obtain $(\hat\mu_{\mathrm{pool}}(x),\hat\sigma_{\mathrm{pool}}(x))$ for the conditional density estimate $\hat p_{\mathrm{pool}}(y\mid x)$.
The conformity score for MDCP is then given by 
$ 
s_k(X_i, Y_i) :=  -\textstyle{\sum_{k=1}^K} \hat{\lambda}_k(X_i) \hat{f}_k(Y_i | X_i),
$  
where we parameterize   $\lambda(x) \in \mathbb{R}_+^{K}$ as the spline function as in Section~\ref{sec:sim-classification} and learn $\hat{\lambda}_k(x)$ by minimizing the empirical objective~\eqref{eq:algo_empirical_objective} on the same training fold.   
Both baselines use the conformity score \(V_k(x,y) = (y - \hat\mu_k(x)) / \hat\sigma_k(x)\) with the same estimated functions as in MDCP, paralleling the method in~\citep{lei2018distribution}.

\subsection{Hyperparameter Sampling}
\label{app:subsubsec_hyperparam}

We detail the sampling process of the hyperparameters in the simulations in Section~\ref{sec:sim-regression}. 

For the interaction family, we draw the weights i.i.d.~from \(w_{uv} \sim \mathcal{N}(0, 1.1^2)\) for \((u,v)\in \mathcal{I}\times\mathcal{I}\). For both the sinusoidal and softplus families, each unit \(r = 1,2,3\) uses a projection vector \(u_r \in \mathbb{R}^d\) constructed as follows: we first sample a support \(S_r \subset \mathcal{I}\) of size 3 uniformly at random and draw a magnitude \(M_r \sim \mathrm{Unif}(0.375,\,0.875)\), 
sample a random unit vector \(d_r \in \RR^3\)
and define \(u_r[S_r] = M_r d_r\) with \(u_r[\mathcal{I}\setminus S_r] = 0\). The sinusoidal component samples \(b_r \sim \mathrm{Unif}(-\pi/3,\,\pi/3)\) and \(a_r \sim \mathrm{Unif}(0.5,\,1.5)\), independently across \(r\). The softplus component utilizes the same construction for \(u_r\) as above, but with \(b_r \sim \mathrm{Unif}(-0.5,\,0.5)\) and \(a_r \sim \mathrm{Unif}(0.75,\,2.0)\), again independently across \(r\).

\subsection{Algorithm Instantiation}
\label{app:subsec_details}

This subsection includes omitted implementation details for the classification and regression algorithms used in our experiments. 

\paragraph{Classification algorithm}
In Section~\ref{sec:sim-classification}, we use nonparametric methods to estimate class probabilities (in our case, gradient-boosted trees), and calibrate their probabilistic outputs using stratified cross-validation with isotonic regression. We then optimize the spline-based weights $\lambda(\cdot)$ using minibatch updates. The optimization is implemented in PyTorch with automatic differentiation: we precompute spline features, data densities, source weights, and related terms, form the minibatch objective using these quantities and a trainable spline parameter matrix, and update the parameters with Adam. After each epoch, we evaluate the objective on the full training fold and apply early stopping to improve efficiency and mitigate overfitting.  

\paragraph{Regression algorithm} 
In Section~\ref{sec:sim-regression}, for each source $k$ we fit a heteroskedastic Gaussian plug-in model for the conditional density. We first learn a regression function $\hat{\mu}_k(x)$ using flexible nonparametric estimators (again, gradient-boosted trees). To model dispersion, we obtain out-of-fold predictions $\tilde{\mu}$ for the mean model via $K$-fold cross-validation: we partition the data into $K$ folds, fit the model on the other $K-1$ folds, and predict on the held-out fold. We then compute squared residuals $\hat{r}^2 = (Y - \tilde{\mu})^2$ and fit a second regressor to the logarithm of the residual squares using all $K$ folds. At prediction time we evaluate $\hat{\sigma}_k(x)$ from this variance model and form
\[
\hat{f}_k(y \mid x)
= \frac{1}{\sqrt{2\pi}\,\hat{\sigma}_k(x)}
  \exp\left(
    -\frac{1}{2}
    \left(
      \frac{y - \hat{\mu}_k(x)}{\hat{\sigma}_k(x)}
    \right)^2
  \right).
\]
As in classification, we treat the marginal density of $X$ as constant and use $\hat{f}_k(x,y) := \hat{f}_k(y \mid x)$ throughout.  

\paragraph{Optimization hyperparameters.}
For both classification and regression, we learn the spline parameterization of $\lambda(\cdot)$ by optimizing the empirical dual objective using Adam with a fixed learning rate of \(10^{-3}\) and minibatch size \(256\). We cap training at \(10{,}000\) epochs and shuffle minibatches each epoch. For early stopping, after each epoch we evaluate the \emph{full-data} objective value \(\widehat{\Phi}^{(t)}\) on the entire training fold and stop when the relative change between consecutive epochs satisfies
\[
\frac{\left|\widehat{\Phi}^{(t)}-\widehat{\Phi}^{(t-1)}\right|}
     {\max\left\{\left|\widehat{\Phi}^{(t-1)}\right|,\,10^{-8}\right\}}
< 10^{-4}.
\]
When \(\left|\widehat{\Phi}^{(t-1)}\right|\) is very small, this corresponds to an absolute tolerance of approximately \(10^{-12}\).
Unless otherwise specified, we use PyTorch’s default Adam hyperparameters \((\beta_1,\beta_2,\epsilon)=(0.9,0.999,10^{-8})\) and no weight decay. For numerical stability, we clip \(\hat p_{\mathrm{data}}(y_i\mid x_i)\) below at \(10^{-8}\).

\subsection{Grid Search Algorithm and Guarantee}
\label{app:subsubsec_y-grid-superset}

Let $\mathcal{D}_{\mathrm{train}}$ and $\mathcal{D}_{\mathrm{calib}}$ be the training and calibration data pooled across all $K$ sources. Define
\begin{align*}
  y_L &:= \min\{ Y_i : (X_i, Y_i) \in \mathcal{D}_{\mathrm{train}} \cup \mathcal{D}_{\mathrm{calib}} \}, \\
  y_U &:= \max\{ Y_i : (X_i, Y_i) \in \mathcal{D}_{\mathrm{train}} \cup \mathcal{D}_{\mathrm{calib}} \}.
\end{align*}
Fix an integer $M \geq 2$ (e.g., $M = 100$). Define a uniform grid on $[y_L, y_U]$ with $\Delta := \frac{y_U - y_L}{M-1}$:
\begin{equation}
  y^{(j)} := y_L + j\,\Delta, \quad \text{for } j = 0, 1, \ldots, M-1.
  \label{eq:gen-y-grid}
\end{equation}
We call $y^{(j)}$ and $y^{(j+1)}$ adjacent grid points. A subset \( B \) of indices is called \emph{consecutive} if it contains no gaps; equivalently, \( B \) can be written as \( \{a, a+1, \ldots, b\} \) for some integers \( a \leq b \). For example, \( \{3,4,5\} \) is consecutive, while \( \{3,5\} \) is not. For a test covariate $x$, we include a candidate $y$-grid point $y^{(j)}$ if the aggregated MDCP $p$-value
\[
  p(y^{(j)}) := \max_k p^{(k)}(x, y^{(j)}) \geq \alpha,
\]
where each $p^{(k)}$ is derived using formula~\eqref{eq:rand-pvalue}. Let
\begin{equation}
  J := \{ j \in \{0, \ldots, M-1\} : p(y^{(j)}) \geq \alpha \}
  \label{eq:included-y-grids}
\end{equation}
be the set of included grid indices. We decompose $J$ into consecutive blocks $B_r = \{j_{r,L}, \ldots, j_{r,R}\}$ for $r = 1, \ldots, R$. We say a decomposition is \emph{maximal} if the decomposed blocks are consecutive, disjoint, and cannot be enlarged by adding adjacent indices from \( J \).

\begin{algorithm}[t]
  \caption{Grid-Search Algorithm (Regression)}
  \label{algo:regression_grid_search}

  \textbf{Input:} Number of sources $K$, pooled calibration data $\mathcal{D} = \cup_{k=1}^K \mathcal{D}^{(k)}$, test input $x$, grid endpoints $y_L, y_U$, grid size $M$, grid spacing $\Delta$, significance level $\alpha$.

  \begin{algorithmic}[1]

    \STATE {\color{gray} \textsc{// Grid construction}}
    \STATE Construct grid points $y^{(j)}$, $j=0,\ldots,M-1$ over $[y_L, y_U]$ as in~\eqref{eq:gen-y-grid}.

    \STATE {\color{gray} \textsc{// Evaluate aggregated $p$-values on the grid}}
    \FOR{$j = 0 \,$ \textbf{to} $M-1$}
      \STATE Compute $p(y^{(j)}) = \max_k p^{(k)}(x, y^{(j)})$.
    \ENDFOR
    \STATE Collect included grid points, form $J$ following~\eqref{eq:included-y-grids}.

    \STATE {\color{gray} \textsc{// Merge included grid points into blocks}}
    \STATE Decompose $J$ into maximal consecutive blocks $B_r = \{j_{r,L}, \ldots, j_{r,R}\}$ for $r = 1, \ldots, R$.

    \STATE {\color{gray} \textsc{// Extend each block by one grid spacing}}
    \FOR{each block $B_r$}
      \STATE Create interval $I_r := [ y^{(j_{r,L})} - \Delta,\ y^{(j_{r,R})} + \Delta ]$.
    \ENDFOR

    \STATE Take the union of all intervals: $C_{\mathrm{grid}}(x) := \bigcup_r I_r$.

  \end{algorithmic}

  \textbf{Output:} Regression prediction set $C_{\mathrm{grid}}(x)$ for test input $x$.

\end{algorithm}

Let $C_{\mathrm{MDCP}}(x)$ denote the (exact) MDCP prediction set we want to construct with score
\[
  s(x,y) = -\sum_{k=1}^K \lambda_k(x;\hat\Theta)\,\hat{f}_k(y\mid x)
\]
and randomized $p$-values given by~\eqref{eq:rand-pvalue}. Let $C_{\mathrm{grid}}(x)$ denote the conformal set constructed from the grid-search Algorithm~\ref{algo:regression_grid_search}.

\begin{assumption}[Grid resolution]
\label{assump:grid-resolution}
For the fixed test covariate $x$, let $C := C_{\mathrm{MDCP}}(x) \cap [y_L, y_U]$. Every connected component of $C$ intersects the grid $\{y^{(j)}\}_{j=0}^{M-1}$; that is, for each connected component $[\ell, r] \subseteq C$ there exists some $j \in \{0,\dots,M-1\}$ such that $y^{(j)} \in [\ell, r]$.
\end{assumption}

\begin{remark}[Sufficient condition for Assumption~\ref{assump:grid-resolution}]
\label{rem:length-sufficient}
A simple sufficient condition for Assumption~\ref{assump:grid-resolution} is that every connected component $[\ell,r]\subseteq C$ has Lebesgue length at least $\Delta$ (the grid spacing). In that case $[\ell,r]$ cannot lie strictly inside a single open cell $(y^{(j)},y^{(j+1)})$ of length $\Delta$, so it must contain at least one grid point $y^{(j)}$.
\end{remark}

\begin{prop}[Superset on the grid range]
\label{prop:y-grid-superset}
Let $C := C_{\mathrm{MDCP}}(x) \cap [y_L, y_U]$ and suppose Assumption~\ref{assump:grid-resolution} holds. Then
\[
  C \subseteq C_{\mathrm{grid}}(x) \subseteq [y_L - \Delta,\ y_U + \Delta].
\]
In particular,
\[
  C \subseteq C_{\mathrm{grid}}(x) \cap [y_L - \Delta,\ y_U + \Delta],
\]
so within the observed $y$-range the grid merge-and-extend procedure never excludes any MDCP-accepted value and may only enlarge the set.
\end{prop}

\begin{proof}[Proof of Proposition~\ref{prop:y-grid-superset}]
  Under Assumption~\ref{assump:grid-resolution}, fix a connected component $[\ell, r] \subseteq C$ with $[\ell, r] \cap \{y^{(j)}\}_{j=0}^{M-1} \neq \emptyset$. Let
  \[
    j_1 := \min\{ j : y^{(j)} \in [\ell, r] \}, \qquad
    j_2 := \max\{ j : y^{(j)} \in [\ell, r] \}.
  \]
  Since $y^{(j_1)}, y^{(j_2)} \in [\ell, r] \subseteq C$, by definition of $C_{\mathrm{MDCP}}(x)$ we have $p(y^{(j_1)}) \geq \alpha$ and $p(y^{(j_2)}) \geq \alpha$, so $j_1, j_2 \in J$ and all indices $j \in [j_1, j_2]$ belong to the same consecutive block $B_r$.

  By the grid spacing, $y^{(j_1-1)} = y^{(j_1)} - \Delta$ (if $j_1 > 0$), and $y^{(j_2+1)} = y^{(j_2)} + \Delta$ (if $j_2 < M-1$).
  \begin{enumerate}[label=(\roman*).]
    \item Because $j_1$ is the first grid index inside $[\ell, r]$, we have $y^{(j_1-1)} < \ell \leq y^{(j_1)}$, hence $\ell \geq y^{(j_1)} - \Delta$.
    \item Because $j_2$ is the last grid index inside $[\ell, r]$, we have $y^{(j_2)} \leq r < y^{(j_2+1)}$, hence $r \leq y^{(j_2)} + \Delta$.
  \end{enumerate}

  Therefore $[\ell, r] \subseteq [y^{(j_1)} - \Delta,\, y^{(j_2)} + \Delta] = I_r$, where $I_r$ is one of the intervals produced by Algorithm~\ref{algo:regression_grid_search}. Taking the union over all connected components of $C$ yields
  \[
    C \subseteq \bigcup_r I_r = C_{\mathrm{grid}}(x).
  \]
  Finally, by construction $I_r \subseteq [y_L - \Delta,\, y_U + \Delta]$ for every $r$, so $C_{\mathrm{grid}}(x) \subseteq [y_L - \Delta,\, y_U + \Delta]$ and the stated inclusions follow.
\end{proof}


\subsection{Real-Data Application Details}\label{app:realdata-application-details}


\subsubsection{Functional Category of Satellite Image under Subpopulation Shift}
\label{subsec:real_fmow}

Satellite ML has been widely used to detect functional land uses, allocate resources, and inform risk analyses. A key challenge here is the geographic heterogeneity and acquisition variability. 
Here, we use MDCP to protect subpopulation shift between data-rich locations to  data-poor regions due to different materials, urban morphologies, and imaging conditions. 

We leverage the 2016 time slice in the Functional Map of the World (FMoW) dataset~\citep{christie2018fmow} with over one million images from 249 countries/regions, and the label is one of 62 functional classes. We focus on uniform coverage across regions in Africa, the Americas, Asia, Europe, Oceania and \emph{Other}. 
We treat each geographic region as a source. Let $X$ denote the image input and $Y$ the functional class label. In this context, uniform coverage ensures reliability under arbitrary changes in the composition of regions. 

The data contains 140,459 samples in total. We allocate 37.5\% as the model training fold $|\cD_\text{pre-train}| = 52,531$. The training distribution is highly imbalanced, with 30.27\% from Europe and 38.72\% from the Americas, yet only 2.23\% from Oceania and 0.05\% from {Other}.
Using the \textsc{DenseNet-121} backbone~\citep{huang2018denseNet-121} initialized with \textsc{ImageNet} weights~\citep{ImageNet}, we compute the penultimate representation $e(x)$ and fit a pooled probabilistic classifier $\widehat p_{\mathrm{pool}}(y\mid x)$ together with region-specific classifiers $\{\widehat p_k(y\mid x)\}_{k=1}^K$ on top of $e(x)$. The models are trained on $\cD_{\text{pre-train}}$ and these probability estimates are used by both the TPS baselines and MDCP.
More specifically, after training the \textsc{DenseNet-121} backbone on the pre-training split, we use its penultimate feature representation $e(x)$ for all subsequent conformal procedures. During the training, we fit a pooled multiclass probabilistic classifier on the model training fold $\cD_{\text{pre-train}}$ and, in parallel, source-specific classifier per geographic region on the corresponding region-specific model training fold. Each classifier is trained via cross-entropy and yields estimated class probabilities, denoted by $\widehat p_{\mathrm{data}}(y\mid x)$ for the pooled model and $\widehat p_k(y\mid x)$ for the $k$-th region.
To improve estimation quality under imbalance, we let Europe and the Americas retain separate heads, while Africa, Asia, Oceania, and \emph{Other} share one head during training, which stabilizes the conditional density estimation. 
These estimates are used to compute APS scores for the baselines and to form the MDCP score (Section~\ref{subsec:alg_classification}) through $\widehat h(x,y)=\sum_{k=1}^K\widehat\lambda_k(x)\widehat p_k(y\mid x)$, where $\widehat\lambda(\cdot)$ is learned from the \emph{auxiliary} training data. When fitting $\widehat\lambda(\cdot)$, we apply PCA to $e(x)$ on the \emph{auxiliary} training data and use the leading components as the input features.

Next, we perform $100$ random partitions of the remaining data into auxiliary train (12.5\%), calibration (37.5\%), and test (50\%) splits. Before fitting $\lambda(x)$, we apply PCA to the feature vectors $e(x)$ on the auxiliary training split and retain the first 16 components. We parameterize $\lambda(x)\in\mathbb{R}_+^{K}$ by a feedforward neural network containing two hidden layers of width 4 with ReLU activations and a $K$-dimensional output. 
We fit $\lambda(x)$ by optimizing the empirical dual objective in Section~\ref{subsec:alg_classification} on the auxiliary training split, then calibrate MDCP on the calibration split and evaluate on the test split. Baselines use the same fitted conditional models with TPS scores, calibrated on the 37.5\% calibration split and evaluated on the 50\% test split. The nominal coverage is set at $1-\alpha=0.9$, and the results are reported in Figure~\ref{fig:fmow_overall_vanilla}.

Due to nontrivial heterogeneity across regions, we observe unequal coverage for single-source baselines. 
Standard conformal prediction sets calibrated using data from the \emph{Other} region achieve overall coverage above $0.9$, yet still suffer from undercoverage in the worst‑case. Notably, the baseline calibrated on data-rich regions (e.g., Europe and the Americas) exhibits worse worst-case coverage (despite the lower variability in coverage across runs). A possible explanation is that the abundant data allow the model to be well trained, producing prediction sets that are tightly tuned to those specific source distributions but perform poorly in others. In contrast, for data-scarce regions such as \emph{Other}, the model performs poorly even in the original source, and thus must output wide sets. Consequently, models calibrated on scarce-data regions can yield better worst-case coverage than those calibrated on rich-data regions, albeit at the cost of larger prediction sets.
In comparison, MDCP remains valid across all sources, with near-tight worst-case coverage.
Moreover,  \textsc{Baseline-agg}   admits any signal deemed useful by any source and is conservative. MDCP mitigates this issue by joint training across sources. Indeed, its set size is even smaller than single-source prediction sets, showing the significant benefit of efficiency optimization. 

In Appendix~\ref{app:subsubsec_real_ablation}, we further examine the penalty-tuning approach similar to the simulations. In this task, we again observe negligible difference from standard MDCP, showing the stability of our procedure.

\subsubsection{Poverty Prediction under Urban-Rural Shift}
\label{subsec:real_poverty}
Household surveys for mapping economic well-being are infrequent or missing especially in regions where nationally representative surveys are limited by local resources~\citep{blumenstock2015predicting}. In these scenarios, satellite imagery offers a scalable proxy: a practical strategy is to learn from countries with the desired economic label  then transfer to countries with images only~\citep{brian2014targeting}. 
In this part, we visit the subset of a modified release of the \cite{yeh2020poverty} poverty-mapping dataset from  2014 to 2016 to show the application of MDCP to provide reliable uncertainty quantification across rural and urban areas. 
In this data, the features are the satellite image, and the label is a continuously-valued wealth index. 
Figure~\ref{fig:poverty_target_by_source} visualizes the label density  in the urban and rural areas which exhibits strong heterogeneity. 

The dataset contains \(n=7,535\) samples, with 2,664 from urban areas and 4,871 from rural, each treated as a source. We reserve 37.5\% of the data for training and fit a shared 8-channel \textsc{ResNet-18} backbone~\citep{he2015ResNet-18} with random initialization. 
On top of the shared \textsc{ResNet-18} representation $e(x)$, we fit both pooled and source-specific working models that output $(\widehat\mu(x),\widehat\sigma(x))$ as in Section~\ref{sec:sim-regression} and hence a conditional density $\widehat f(y\given x)=\mathcal{N}(y;\widehat\mu(x),\widehat\sigma^2(x))$. 
Specifically, we train an 8-channel \textsc{ResNet-18} backbone on the designated training split and denote its penultimate feature representation by $e(x)$. During the training, we fit (i) a pooled heteroskedastic Gaussian model on the auxiliary training split and (ii) two source-specific Gaussian models for Urban and Rural on their corresponding auxiliary training subsets. Each model outputs functions $\widehat\mu(x)\in\mathbb{R}$ and $\widehat\sigma(x)>0$ (we use softplus as a monotone link to enforce positivity) and defines the conditional density $\widehat f(y\mid x)=\mathcal{N}(y;\widehat\mu(x),\widehat\sigma^2(x))$. The models are trained by minimizing the Gaussian negative log-likelihood: for an observation $(x_i,y_i)$,
\[
-\log \widehat f(y_i\mid x_i)
= \log \widehat\sigma(x_i)+\frac{(y_i-\widehat\mu(x_i))^2}{2\widehat\sigma^2(x_i)}+\frac{1}{2}\log(2\pi).
\]
The fitted source-specific densities $\{\widehat f_k\}$ and the pooled density $\widehat f_{\mathrm{data}}$ are then used in both the regression baselines and MDCP via the score $\widehat h(x,y)=\sum_{k=1}^K\widehat\lambda_k(x)\widehat f_k(y\mid x)$ and the max-$p$ aggregation (Sections~\ref{subsec:alg_regression}).
This yields two source models $\hat f^{\text{Rural}}(y\mid x)$ and $\hat f^{\text{Urban}}(y\mid x)$, as well as a pooled model $\hat p_{\text{pool}}(y\mid x)$.
Next, we perform $100$ random splits of the remaining data into auxiliary train (12.5\%), calibration (37.5\%), and test (50\%) folds. As in FMoW, before fitting $\lambda(x)$ we apply PCA to $e(x)$ and keep the first 16 components. We use the same neural-network parameterization for $\lambda(x)$ as in FMoW, fit it on the auxiliary training fold, calibrate the MDCP set on the calibration fold, and evaluate on the test fold.

As shown in Figure~\ref{fig:poverty_overall_vanilla}, single-source models calibrated with single-source data fail to achieve valid coverage on the other domain. We see from Figure~\ref{fig:poverty_target_by_source} that the rural distribution is more skewed; under strong heterogeneity, despite the larger sample size from the rural source, single-source calibration still produces short intervals and low coverage. 
On the other hand, \textsc{Baseline-agg}, which naively combines single-source prediction sets,  is overly conservative. 
MDCP maintains tight worst‑case coverage with significant efficiency gains, striking a good balance in coverage allocation across sources.

Finally, we find that the penalty-tuning extension of MDCP still offers no clear advantage over MDCP. See Appendix~\ref{app:subsubsec_real_ablation} for further discussion.

\subsubsection{Medical Services Utilization Across Sensitive Groups}
\label{subsec:real_meps}

Our last application revisits the Medical Expenditure Panel Survey (MEPS) dataset used in~\cite{romano2019equalize_malice}, including Panels 19-21~\citep{meps_19,meps_20,meps_21}, to address equalized coverage even without observing the sensitive group label. 
The dataset contains  detailed individual-level information on demographics and health care utilization. The features include age, marital status, race, poverty level, and health status and insurance related covariates. The label is a continuously-valued medical service utilization score.

We follow the same pre-processing steps as~\cite{romano2019equalize_malice} with one-hot encoding of categorical variables. 
The feature dimension for $X$ is $139$, consistent across panels.
We apply a log transformation to the label due to its skewedness; without this step, the estimated variance would be excessively inflated which drastically degrades the efficiency of single-source baselines.
As reported by the \cite{romano2019conformalized}, predictive distributions vary across the sensitive attribute \textit{race}: a neural-network predictor tends to predict higher utilization for non-White than for White individuals. Motivated by this finding, we treat \textit{race} as the source label, assigning $k = 0$ to non-White and $k = 1$ to White, with sample sizes $n_0=9640$ and $n_1=6016$.

We split the data into training (60\%), calibration (20\%), and test (20\%) folds. For both MDCP and the baselines, we follow the same modeling procedure as in Section~\ref{sec:sim-regression}: conditional densities are modeled as $P^{(k)}_{Y\mid X=x} \sim \cN(\mu_k(x),\sigma_k(x)^2)$, with $\hat{\mu}_k(x)$ and $\hat{\sigma}_k(x)$ estimated via gradient-boosting decision trees trained on the source-specific training fold; in addition, we fit a pooled model on the union of the training data using the same approach as in Section~\ref{sec:sim-regression}.
MDCP further fits $\lambda(x)$ using the same training data and calibrates prediction sets on the entire calibration fold. In contrast, the single-source baselines (\textsc{Non-White} only and \textsc{White} only) calibrate solely on their respective source-specific calibration fold. The \textsc{Baseline-agg} combines the two single-source calibrated sets. Finally, the three methods are all evaluated on the same test fold.
The above protocol is applied independently to each panel, with results reported separately.

Figure~\ref{fig:meps_overall_vanilla} reports the performance of the competing methods. The single-source baseline trained and calibrated exclusively on the non-white group exhibits systematic undercoverage (both on average and worst-case) across panels. 
This is because  the white group is more right-skewed and the single-source baseline from the non-white group fails to cover its heavy tail.  
On the other hand, single-source sets trained and calibrated exclusively on the White group approximately attains worst-case coverage, yet the width of the prediction sets is exceedingly high. 
We conjecture that this may be due to the unreliable estimation of the working models with the skewed data, since the models are not trained to optimize efficiency in the downstream conformal prediction set. 
Similarly, \textsc{Baseline-agg} is overly conservative and has wide prediction sets. 
Finally, MDCP achieves tight worst-case coverage, showing the role of approximate complementary slackness. 
Efficiency optimization lets MDCP achieve even shorter sets than the single-source baselines.  

Finally, in Appendix~\ref{app:subsubsec_real_ablation}, we find that in this dataset,  the penalty-tuning extension of MDCP again yields similar performance as MDCP, showing the robustness of the current implementation. 


\section{Additional Experiments}

\subsection{Ablation Study on Optimization Stability}
\label{app:subsec_ablation_opt}

For the ablation study, we examine the difficulty of optimizing the dual objective~\eqref{eq:Phi_lambda} and assess the stability and reliability of the optimization procedure. The motivating idea is to test whether off-the-shelf optimization via \textsc{PyTorch} may lead to large fitted coefficients due to instability. 
To this end, we introduce the following penalty terms to encourage stability, 
recall in~\eqref{eq:lambda_k_theta}, \(\lambda_j(x) = \operatorname{softplus}\big(\Lambda(x)^\top \theta_j\big)\) for $j\in[K]$, where \(\Lambda(x) \in \mathbb{R}^m\) is a vector of spline basis functions and \(\theta_j \in \mathbb{R}^m\) are trainable coefficients:
\[
\hat{\mathbb{E}}_{\train}\left[\frac{(1 - h_\lambda)_-}{\hat{p}_{\text{data}}}\right] + (1 - \alpha) \hat{\mathbb{E}}_{\train}\left[\textstyle{\sum_k \lambda_k}\right] - \gamma \underbrace{\left( \hat{\mathbb{E}}_{\train}\left[\textstyle{\sum_k \lambda_k^2}\right] + \textstyle{\sum_k \|D \theta_k\|^2} \right)}_{\text{Penalty}},
\]
where $D$ is the second order difference operator:
\(
(D\theta_k)_i = \theta_{k,i} - 2\,\theta_{k,i+1} + \theta_{k,i+2},
\) with $i=1,\dots,m-2$,
which serves as a discrete analogue of penalizing the curvature of the underlying function \(\lambda_k(\cdot)\), and $\theta_{k,i}$ is the $i$-th parameter in the spline feature space.

We select the hyperparameter \(\gamma\) over the grid  
\([0.0, 0.001, 0.01, 0.1, 1.0, 10.0, 100.0, 1000.0]\).  
To use the data efficiently, we split the training data into a \textit{mimic calibration set} and a \textit{mimic test set}. For each individual run, we calibrate the method on the mimic calibration set for every candidate \(\gamma\), evaluate performance on the mimic test set, and choose the \(\gamma\) that yields the smallest average set size on this mimic test set. Denote this selected value by \(\gamma^*\).
We then fix \(\gamma^*\) and run MDCP with the original calibration and test data. 
Since the calibration and test data are not involved in this optimization process, the uniform coverage guarantee of MDCP still follows. 
Moreover, we expect the selected hyperparameter to perform at least as well as, and potentially better than, the non-penalized version (i.e., $\gamma=0$) in terms of the chosen efficiency criterion. We compare the results from the penalized MDCP with data-driven \(\gamma^*\) side by side with the non-penalized version (\(\gamma = 0\)). We evaluate this approach across all the simulation studies and real data applications. 

\subsubsection{Simulation Results}

For the classification simulations, using the setup in Section~\ref{sec:sim-classification}, we evaluate performance on the three suites from Section~\ref{sec:sim-common}: \textsc{Linear} (Figure~\ref{fig:iter_eval_class_overall_tuned}), \textsc{Nonlinear} (Figure~\ref{fig:nonlinear_class_overall_tuned}), and \textsc{Temperature} (Figure~\ref{fig:temperature_class_penalized}). After the initial training step, we split the training data into equal-sized mimic calibration and mimic test sets (50\%/50\%) and apply the parameter-selection procedure described above.
Across all three suites, tuning the penalty parameter \(\gamma\) produces at most negligible gains in set efficiency. This suggests that the MDCP optimization step is already stable and no additional penalty is required  in most of the simulation settings.

In the regression simulations, under the same setup as Section~\ref{sec:sim-regression}, we examine performance on the three suites defined in Section~\ref{sec:sim-common}: \textsc{Linear} (Figure~\ref{fig:iter_eval_reg_overall_tuned}), \textsc{Nonlinear} (Figure~\ref{fig:nonlinear_reg_overall_tuned}), and \textsc{Temperature} (Figure~\ref{fig:temperature_reg_penalized}).
Analogous to the classification experiments, once the model has been trained, we divide the training data evenly into a mimic calibration set and a mimic test set, and subsequently perform the parameter selection procedure described above.
Across all three suites, data-driven tuning of the penalty parameter \(\gamma\) produces, at best, marginal improvements in set efficiency. This finding indicates that the baseline MDCP optimization procedure is already sufficiently robust, and that, in most simulated scenarios, the dual optimization problem can be solved reliably without introducing an additional penalty term.

\begin{figure}[t]
    \centering
    \includegraphics[width=1\linewidth]{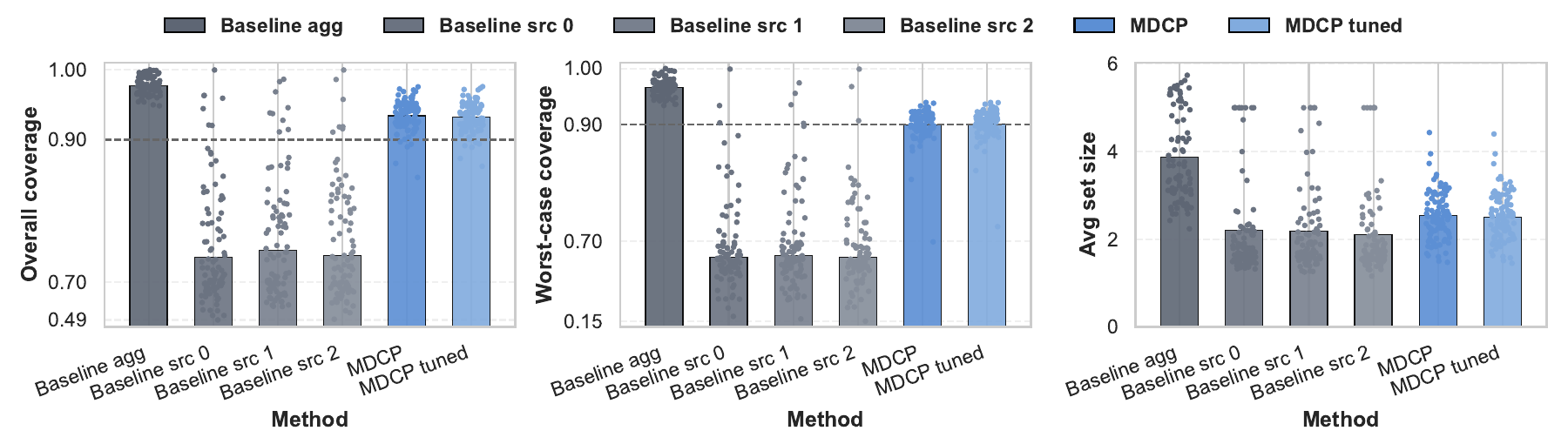}
    \vspace{-2em}
    \caption{{\small Evaluation on the classification \textsc{Linear} suites, where MDCP with data-driven \(\gamma^*\) is labeled as “MDCP tuned”. All other experimental settings are identical to those in Figure~\ref{fig:iter_eval_class_overall_vanilla}.}}
    \label{fig:iter_eval_class_overall_tuned}
\end{figure}

\begin{figure}[t]
    \centering
    \includegraphics[width=1\linewidth]{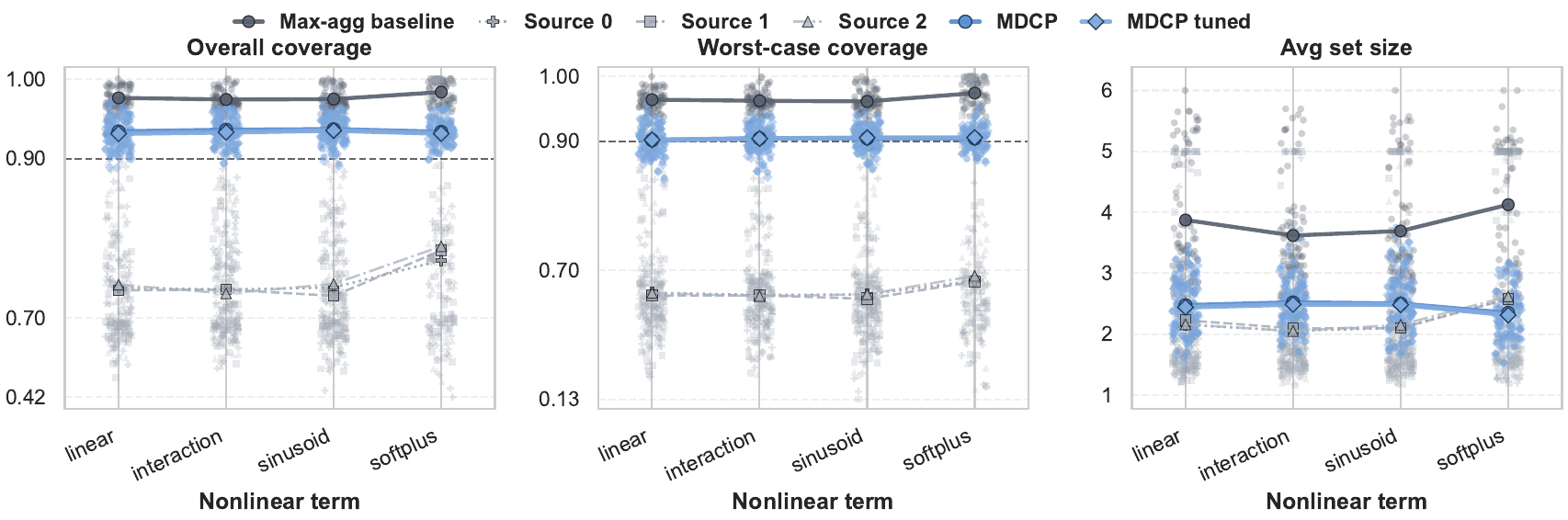}
    \vspace{-2em}
    \caption{{\small Evaluation on the classification \textsc{Nonlinear} suites. Experimental settings are identical to Figure~\ref{fig:nonlinear_class_overall_vanilla}. The differences between vanilla MDCP and tuned MDCP are small across all nonlinear term settings.}}
    \label{fig:nonlinear_class_overall_tuned}
\end{figure}

\begin{figure}[t]
    \centering
    \includegraphics[width=1\linewidth]{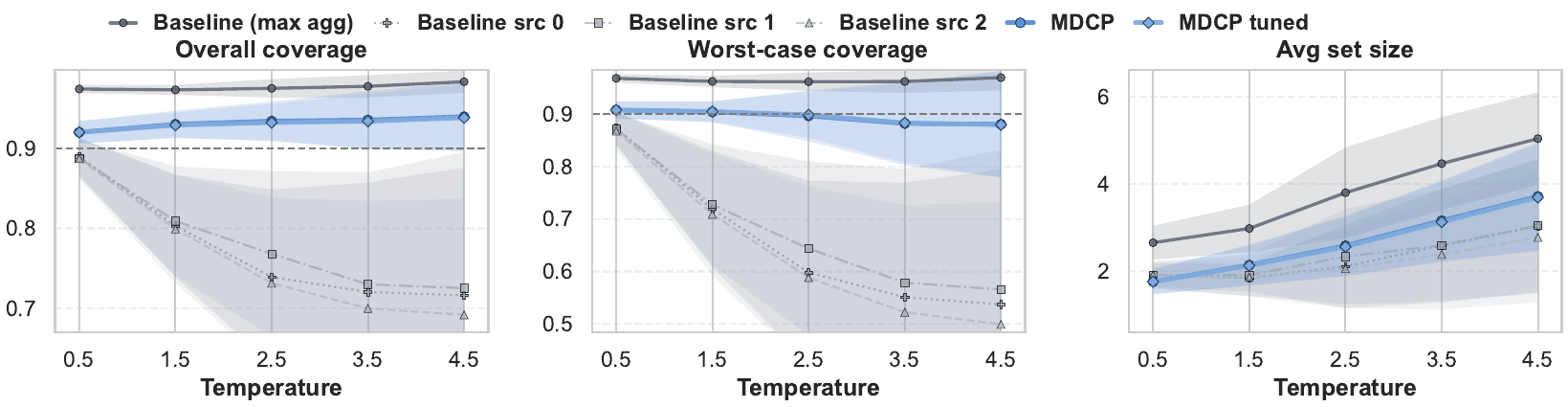}
    \vspace{-2em}
    \caption{{\small Evaluation on the classification \textsc{Temperature} suites. Experimental settings are identical to Figure~\ref{fig:temperature_class_nonpenalized}. Vanilla MDCP and tuned MDCP exhibit only minor differences across all temperature parameter settings.}}
    \label{fig:temperature_class_penalized}
\end{figure}

\begin{figure}[t]
    \centering
    \includegraphics[width=1\linewidth]{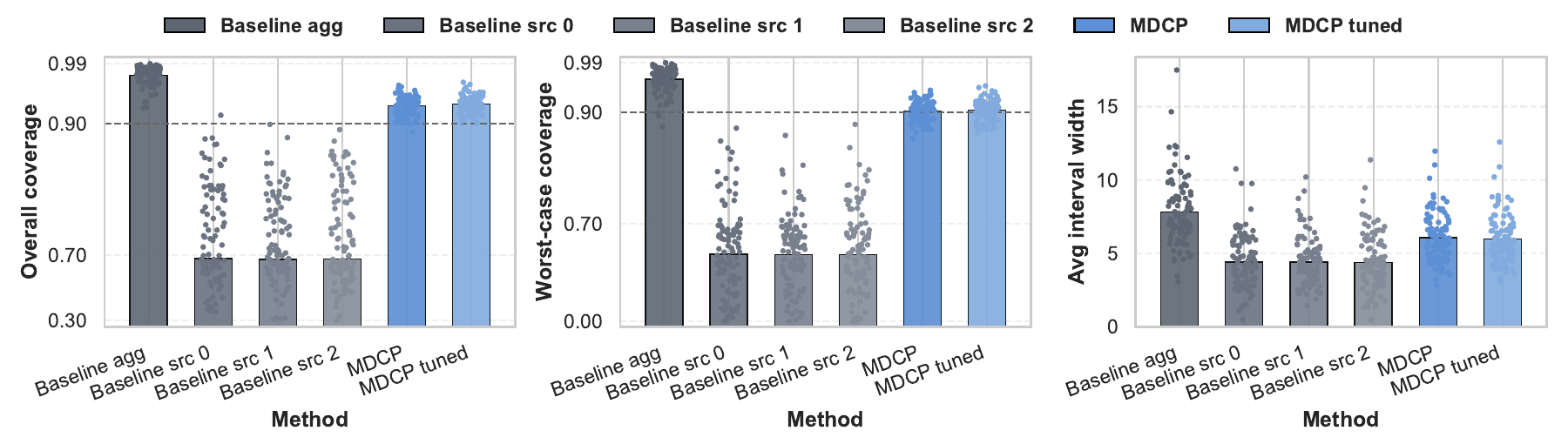}
    \vspace{-2em}
    \caption{{\small Results on the regression \textsc{Linear} suites, where MDCP with the selected penalty strength parameter \(\gamma^*\) appears as “MDCP tuned”. All other experimental settings match those in Figure~\ref{fig:iter_eval_reg_overall_vanilla}.}}
    \label{fig:iter_eval_reg_overall_tuned}
\end{figure}

\begin{figure}[t]
    \centering
    \includegraphics[width=1\linewidth]{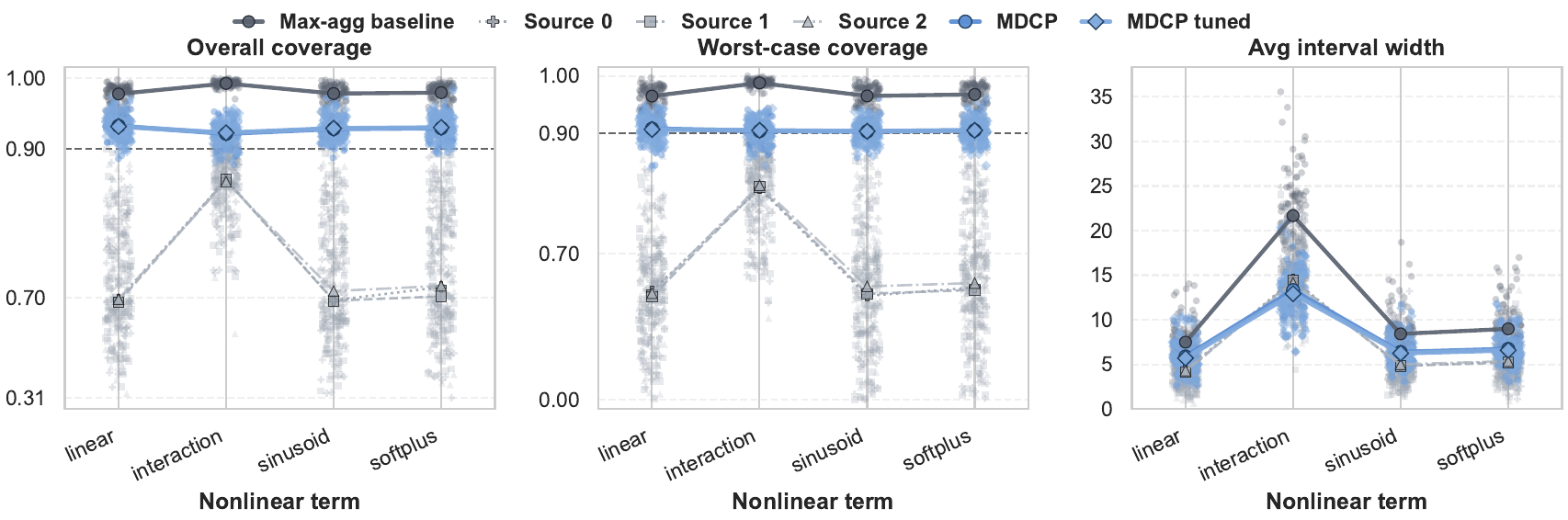}
    \vspace{-2em}
    \caption{{\small Results on the regression \textsc{Nonlinear} suites Experimental settings match those in Figure~\ref{fig:nonlinear_reg_overall_vanilla}. Across all choices of the nonlinear term, MDCP and tuned MDCP behave very similarly.}}
    \label{fig:nonlinear_reg_overall_tuned}
\end{figure}

\begin{figure}[t]
    \centering
    \includegraphics[width=1\linewidth]{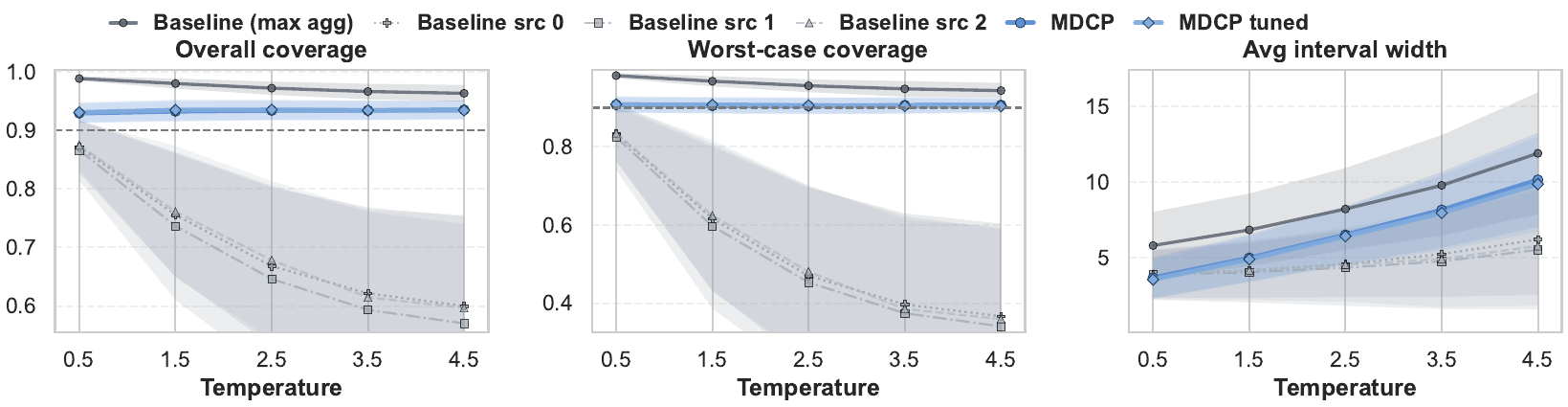}
    \vspace{-2em}
    \caption{{\small Results on the regression \textsc{Temperature} suites. Experimental settings match those in Figure~\ref{fig:temperature_reg_nonpenalized}. For all temperature parameter values, the gap between MDCP and tuned MDCP is negligible.}}
    \label{fig:temperature_reg_penalized}
\end{figure}

\subsubsection{Real Data Results}
\label{app:subsubsec_real_ablation}

We also assess the impact of \(\gamma\) on the real-world datasets. Following Section~\ref{sec:real-data}, we repeat the procedures for the FMoW, PovertyMap, and MEPS datasets, now including \(\gamma\) as an additional tuning parameter. The candidate values match those used above, ranging from \(0.001\) to \(1000\), with \(\gamma = 0\) corresponding to the non-penalized version. For all three datasets, the training set is split 50\%/50\% into mimic calibration and mimic test subsets.
For the FMoW and PovertyMap experiments, we use \(\gamma\) to control an $\ell_2$ penalty on the magnitude of the learned weight functions, of the form $\gamma\,\hat\EE_{\train}[\sum_k \lambda_k(X)^2]$. In particular, this tuning affects only the estimation of $\lambda(x)$; all subsequent calibration and evaluation steps remain unchanged.
For the FMoW dataset (Figure~\ref{fig:fmow_overall_tuned}), the original MDCP procedure is already stable, and introducing the penalty term yields little to no improvement.
For the PovertyMap dataset (Figure~\ref{fig:poverty_overall_tuned}), introducing \(\gamma\) does not improve overall efficiency but does increase variability in the results. We attribute this to the \(\gamma\)-selection procedure: the chosen value is optimal for the \textit{mimic calibration} and \textit{mimic test} sets (Appendix~\ref{app:subsec_ablation_opt}), but not necessarily for the true calibration and test sets. 
For the MEPS dataset (Figure~\ref{fig:meps_overall_tuned}), the low-density regions of the highly skewed target distribution are particularly challenging for baseline methods using score functions similar to \citet{lei2018distribution}. In this setting, the penalty term still influences the behavior of the \(\lambda_k\), helping prevent them from growing excessively large in low-density areas, but the resulting performance gains are modest. MDCP nonetheless maintains stable behavior while focusing more effectively on the higher-density and more practically relevant regions.

These results show that the mimic-split strategy can yield performance gains in cases where density estimation or optimization is particularly difficult, while remaining simple to implement with a 50\%/50\% calibration–test split. However, in most settings the vanilla MDCP procedure is already sufficiently robust, and the tuned MDCP variant offers little to no additional benefit.

\begin{figure}[t]
    \centering
    \includegraphics[width=1\linewidth]{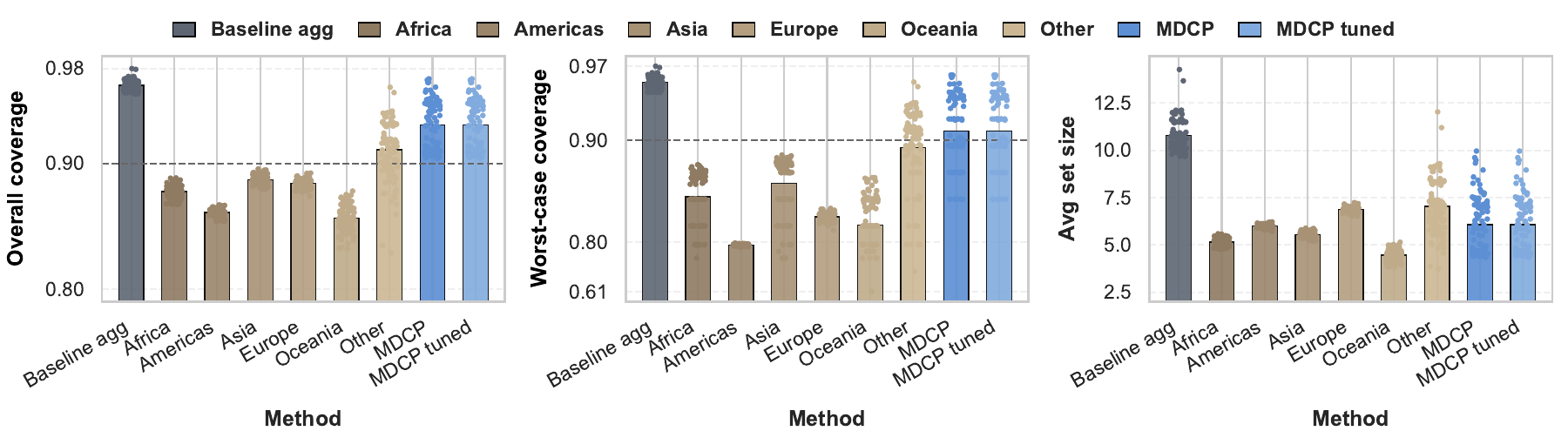}
    \caption{{\small Results on the FMoW data, using the algorithmic procedure described in Section~\ref{subsec:real_fmow}. MDCP and tuned MDCP produce closely aligned performance in this case.}}
    \label{fig:fmow_overall_tuned}
\end{figure}

\begin{figure}[t]
    \centering
    \includegraphics[width=0.9\linewidth]{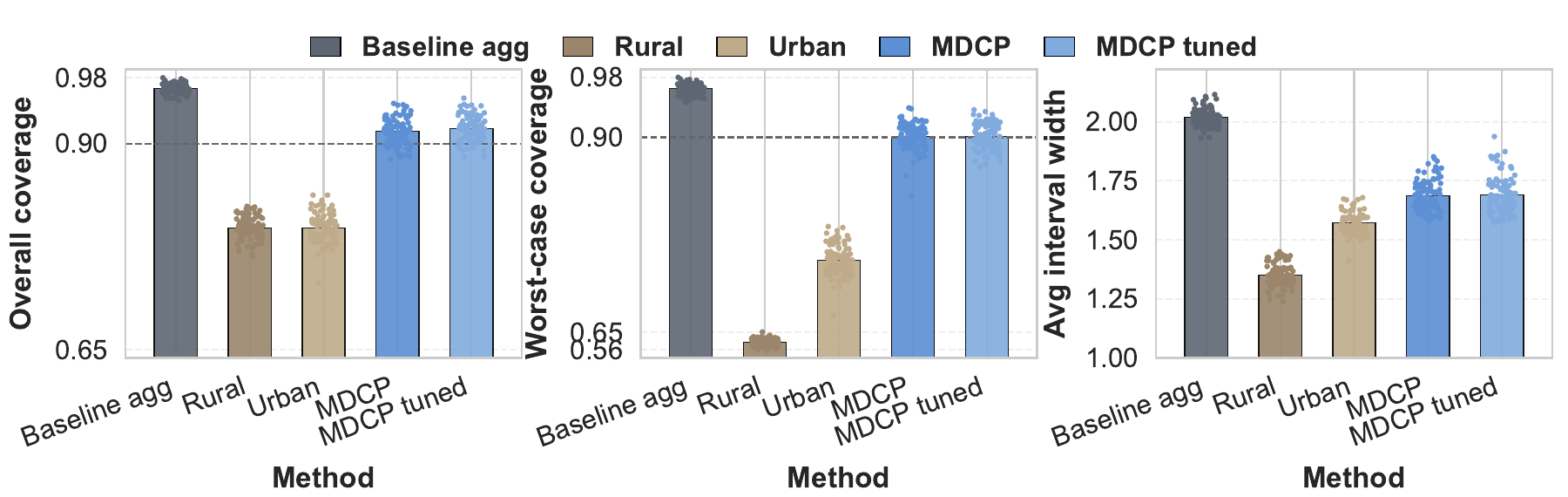}
    \caption{{\small Results on the PovertyMap data, using the algorithmic procedure described in Section~\ref{subsec:real_poverty}. Introducing the parameter \(\gamma\) increases the variability of efficiency.}}
    \label{fig:poverty_overall_tuned}
\end{figure}


\begin{figure}[t]
    \centering
    \includegraphics[width=0.75\linewidth]{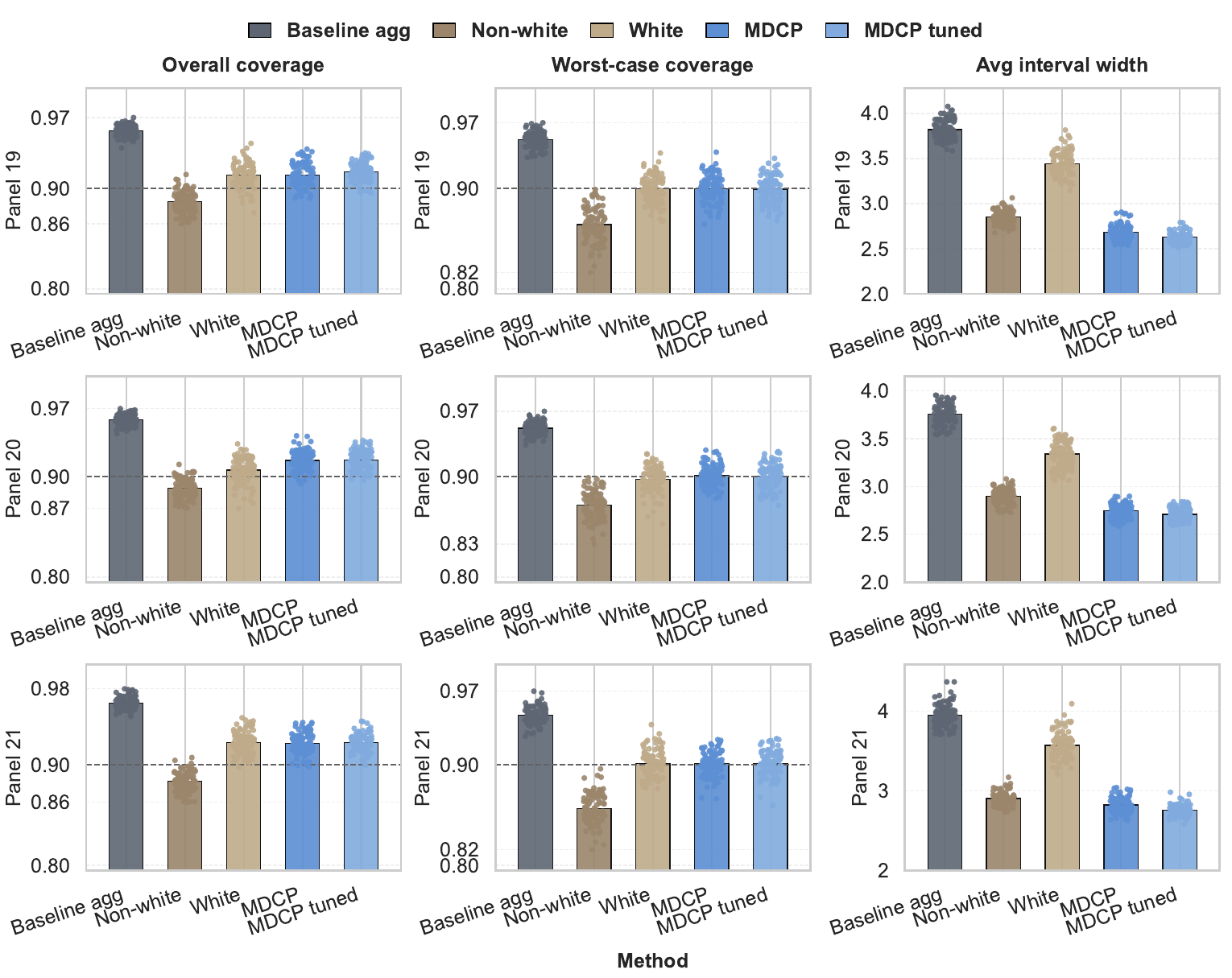}
    \caption{{\small Results on the MEPS data, using the algorithmic procedure described in Section~\ref{subsec:real_meps}. Introducing \(\gamma\) provides a modest improvement to tuned MDCP, offering slightly better efficiency on this highly skewed dataset.}}
    \label{fig:meps_overall_tuned}
\end{figure}

\subsection{Effect of Randomization in $p$-Value}
\label{app:subsec_rand_p}

To assess the impact of randomly including points in the tie set $T(x)$, or equivalently, introducing a tie-breaking variable $U_k$ in the conformal $p$-value $p_k$ as described in Section~\ref{subsec:opt_estimated}, we run additional experiments with $U_k=0$ and $U_k=1$ during calibration of MDCP, and compare them with MDCP that uses randomized $p$-values. Across the \textsc{Linear}, \textsc{Temperature}, and \textsc{Nonlinear} suites, for both classification and regression, we observe negligible differences between using randomized $U_k$ and fixing $U_k=0$ or $U_k=1$. The efficiency of MDCP with randomized $U_k$ lies between the two extremes, as expected: $U_k=0$ excludes all points in the tie set, while $U_k=1$ includes all of them. In Figures~\ref{fig:nonlinear_class_tie_break}, \ref{fig:nonlinear_reg_tie_break}, \ref{fig:temperature_class_tie_break}, and \ref{fig:temperature_reg_tie_break}, the three curves are indistinguishable. But we do observe slight under-coverage in the worst case when setting $U_k=0$, for example, on the \textsc{Linear} suite in Figures~\ref{fig:iter_eval_class_tie_break} and \ref{fig:iter_eval_reg_tie_break}.

\begin{figure}[t]
    \centering
    \includegraphics[width=0.95\linewidth]{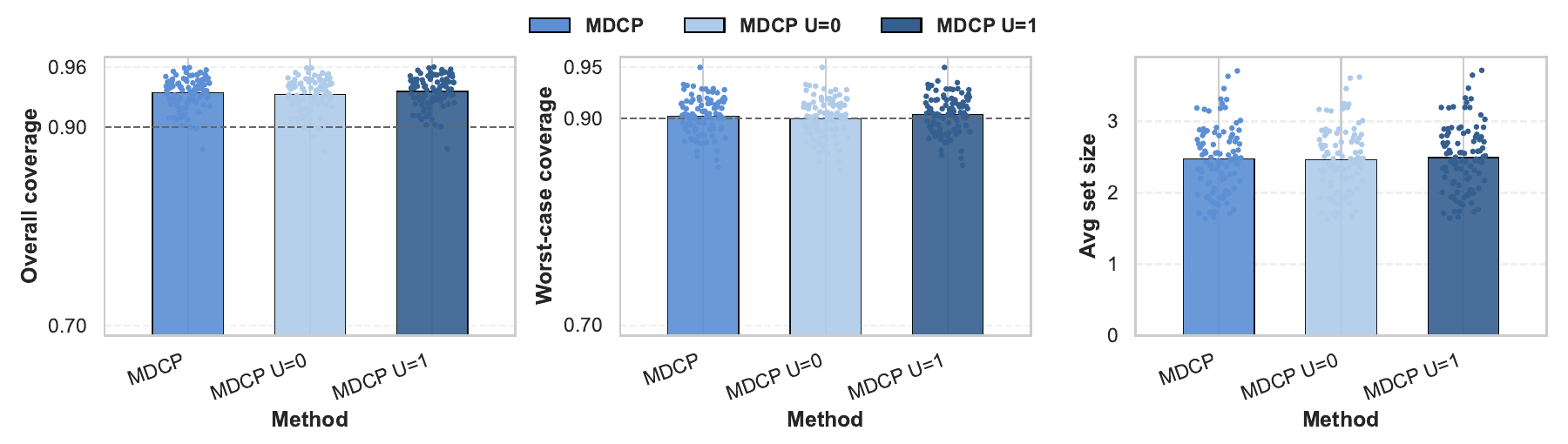}
    \vspace{-1em}
    \caption{{\small Performance of MDCP with randomized and deterministic $p$-values over the classification \textsc{Linear} suites; setting $U=0$ excludes scores in the tie set, while $U=1$ includes all points whose scores fall within the tie set.}}
    \label{fig:iter_eval_class_tie_break}
\end{figure}

\begin{figure}[t]
    \centering
    \includegraphics[width=1\linewidth]{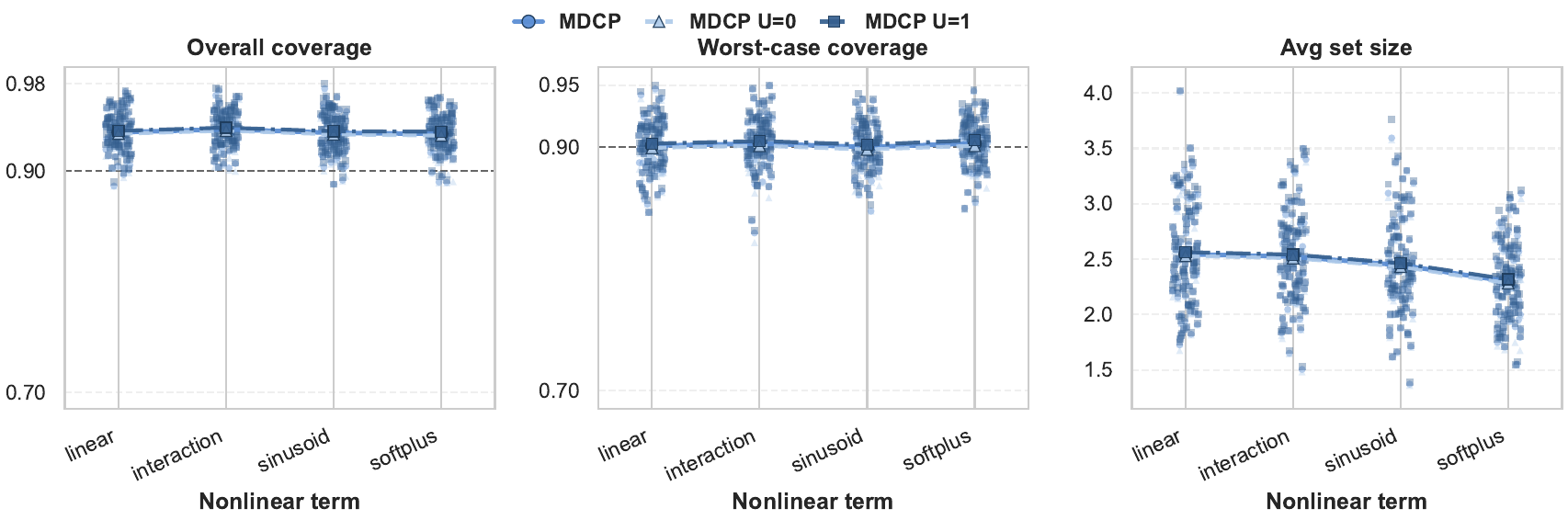}
    \vspace{-2em}
    \caption{{\small Performance of MDCP with randomized and deterministic $p$-values over the classification \textsc{Nonlinear} suite. Lines show means over 100 runs, dots represent results of each run. Panels use the same settings as Figure~\ref{fig:iter_eval_class_tie_break}.}}
    \label{fig:nonlinear_class_tie_break}
\end{figure}

\begin{figure}[t]
    \centering
    \includegraphics[width=1\linewidth]{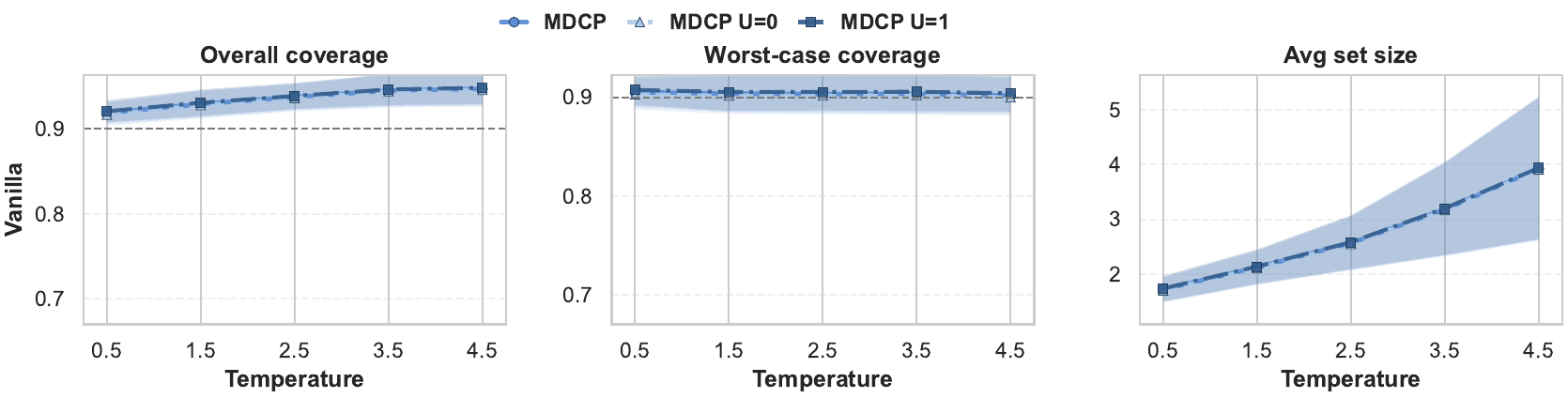}
    \vspace{-2em}
    \caption{{\small Performance of MDCP with randomized and deterministic $p$-values over the classification \textsc{Temperature} suite. Lines show means over 100 runs, with shaded $\pm 1$ standard deviation. Panels use the same settings as Figure~\ref{fig:iter_eval_class_tie_break}.}}
    \label{fig:temperature_class_tie_break}
\end{figure}

\begin{figure}[t]
    \centering
    \includegraphics[width=0.95\linewidth]{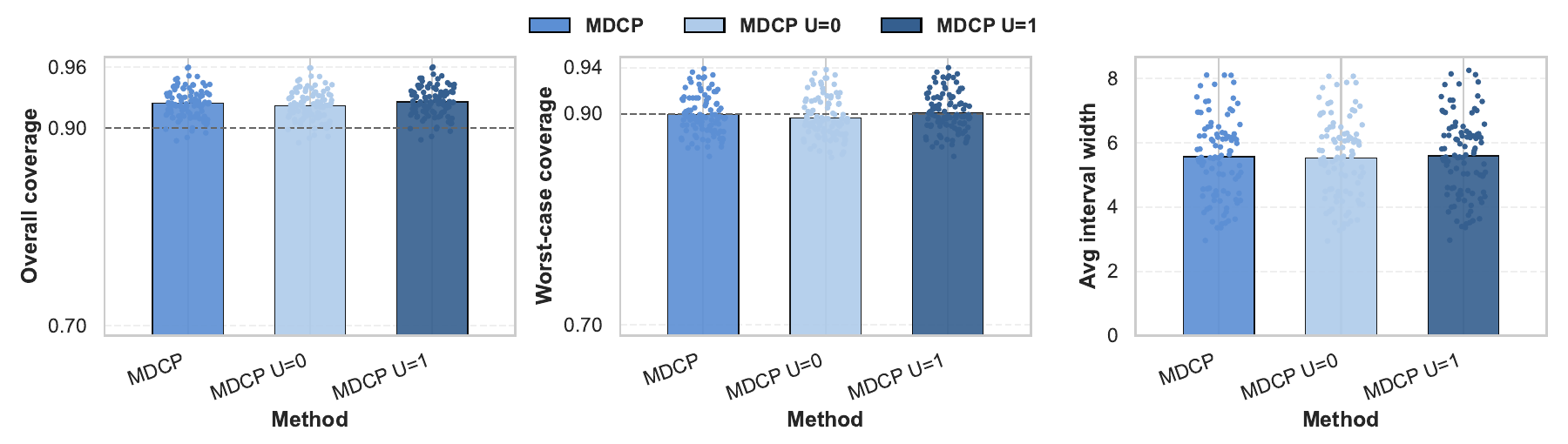}
    \vspace{-1em}
    \caption{{\small Performance of MDCP with randomized and deterministic $p$-values over the regression \textsc{Linear} suites; details are the same as in Figure~\ref{fig:iter_eval_class_tie_break}.}}
    \label{fig:iter_eval_reg_tie_break}
\end{figure}

\begin{figure}[t]
    \centering
    \includegraphics[width=1\linewidth]{figs/nonlinear/classification_mdcp_tie_break.pdf}
    \vspace{-2em}
    \caption{{\small Performance of MDCP with randomized and deterministic $p$-values over the regression \textsc{Nonlinear} suites; details are the same as in Figure~\ref{fig:nonlinear_class_tie_break}.}}
    \label{fig:nonlinear_reg_tie_break}
\end{figure}

\begin{figure}[t]
    \centering
    \includegraphics[width=1\linewidth]{figs/temperature/temperature_classification_mdcp_tie_break.pdf}
    \vspace{-2em}
    \caption{{\small Performance of MDCP with randomized and deterministic $p$-values over the regression \textsc{Temperature} suites; details are the same as in Figure~\ref{fig:temperature_class_tie_break}.}}
    \label{fig:temperature_reg_tie_break}
\end{figure}

\subsection{Additional Simulations in  Covariate Shift Settings}
\label{app:subsec_ablation_cov_shift}

To evaluate MDCP in regimes where covariate shift contributes to the source heterogeneity, we introduce two additional suites of simulation settings:
\vspace{0.25em}
\begin{enumerate}[label=(\arabic*).]
    \item \textsc{Covariate-shift}: 
    In this suite, \(P_X^{(k)}\) differs across sources but \(P(Y\mid X)\) is shared.
    \item \textsc{Covariate-and-concept-shift}: 
    In this suite, both \(P_X^{(k)}\) and \(P(Y\mid X)\) vary across sources.
\end{enumerate}
\vspace{0.25em}

These experiments follow the common protocol of Section~\ref{sec:sim-common}: we consider \(K=3\) sources, feature dimension \(d=10\), and nominal miscoverage level \(\alpha=0.1\). For each source \(k\in\{1,2,3\}\), we generate \(n_k=2000\) labeled samples. The pooled data are then randomly split into training (37.5\%), calibration (12.5\%), and test (50\%) folds. For each suite, to focus on the effect of covariate shift, we fix the temperature parameter at \(\tau=2.5\), exclude nonlinear terms in both classification and regression settings, and sweep the covariate-shift magnitude parameter $\delta_X$ over \(\delta_X \in \{0,\;0.5,\;1.5,\;2.5,\;3.5,\;4.5\}\). 
For each configuration, we repeat the experiments for \(100\) independent trials. In each setting, we evaluate the following competing methods:
\vspace{0.25em}
\begin{enumerate}[label=(\roman*).]
    \item \textsc{Baseline-src-$k$}: 
    The standard conformal prediction set $\hat C_{\text{src-}k}$ with calibration data from source $k$.  
    
    \item \textsc{Baseline-agg}: 
    A simple max-$p$ aggregation of per-source prediction sets $\hat C_\text{max-p}:= \cup_{k=1}^K \hat C_{\text{src-}k}$. This is the baseline without efficiency-oriented score learning. 
    
    \item \textsc{MDCP}: Our method in Algorithm~\ref{algo:classi_MDCP}.

    \item \textsc{MDCP-tuned}: The tuned variant of our method in Algorithm~\ref{algo:classi_MDCP}, employing a spline approximation for $\lambda$ and tuned penalty-term parameters, as detailed in Appendix~\ref{app:subsec_ablation_opt}.
\end{enumerate}

As in Section~\ref{sec:sim-common}, the single-source baseline is standard conformal prediction sets with the widely-used APS score~\citep{romano2020classification} in classification and the variance-adaptive score of~\cite{lei2018distribution} in regression problems.

During each randomized individual run, we first sample an informative index set \(I\subset\{1,\dots,d\}\) uniformly at random with \(|I|=4\). We then construct a shared covariance matrix $\Sigma\in\mathbb{R}^{d\times d}$ for all sources using the equicorrelated form $\Sigma_{ij}=0.2 + 0.8\,\mathbbm{1}\{i=j\}$.
Next, we sample a shift direction \(v\in\mathbb{R}^d\) supported on the informative coordinates: we draw \(v_I\sim\mathcal{N}(0,I_{|I|})\), set \(v_{I^c}=0\), and normalize $v$. For a given shift magnitude \(\delta_X\), we define the source-specific Gaussian means $\mu_1 = 0$, $\mu_2 = +\delta_X\,v$, $\mu_3 = -\delta_X\,v$, and generate covariates i.i.d.\ as
\[
X_i^{(k)} \sim \mathcal{N}(\mu_k,\Sigma),\qquad i=1,\dots,n_k,\;\; k\in\{1,2,3\}.
\]
Importantly, compared to the setup in Section~\ref{sec:sim-common}, we do not standardize \(X\) after sampling, so the mean shifts remain present in the observed covariates.

\subsubsection{Additional Simulations in Classification Settings}
\label{app:subsubsec_classi_cov_shift}

\paragraph{Data generating processes.}
For classification, the label space is \(\mathcal{Y}=\{1,2,3,4,5,6\}\), with a total of $C=6$ classes. We first draw class-specific base slopes \(\{\bar\beta_c\}_{c=1}^{C}\subset\mathbb{R}^d\) supported on \(I\), with \((\bar\beta_c)_j\sim\mathcal{N}(0,1)\) for \(j\in I\) and \((\bar\beta_c)_j=0\) for \(j\notin I\). We then generate \(Y\mid X\) using a multinomial logit model with a fixed temperature \(\tau=2.5\).

In the \textsc{Covariate-shift} suite, the conditional distribution \(P(Y\mid X)\) is shared across sources. We draw shared intercepts \(\bar b_c\sim\mathcal{N}(0,(0.4\tau)^2)\) and set \(\xi_k\equiv \tau\), \(\beta_{kc}\equiv \bar\beta_c\), and \(b_{kc}\equiv \bar b_c\) for all sources \(k\). Given a covariate value \(x\), we compute logits \(\eta_c(x) = \tau\,(\bar b_c+\bar\beta_c^\top x)\), and sample \(Y\) from \(\mathbb{P}(Y=c\mid X=x) = \frac{\exp\{\eta_c(x)\}}{\sum_{c'=1}^{C}\exp\{\eta_{c'}(x)\}}\), where $c\in[C]$.
In the \textsc{Covariate-and-concept-shift} suite, we follow the linear concept-shift mechanism with fixed \(\tau=2.5\), while retaining the mean-shifted covariates described above. Specifically, for each source \(k\), we draw \(u_k\stackrel{\mathrm{iid}}{\sim}\mathrm{Unif}([-1,1])\) and set \(\xi_k = \tau\,(1+0.25\tau u_k)\). We also draw source-specific intercepts \(b_{kc}\sim\mathcal{N}(0,(0.4\tau)^2)\) and perturb the slopes via \(\beta_{kc} = \bar\beta_c + \tau\,\Delta_{kc}\), where \((\Delta_{kc})_j\sim\mathcal{N}(0,0.15^2)\) for \(j\in I\) and \((\Delta_{kc})_j=0\) otherwise. Given \(x\) from source \(k\), we compute \(\eta_{kc}(x)=\xi_k(b_{kc}+\beta_{kc}^\top x)\) and sample \(Y\) according to the multinomial probabilities $\mathbb{P}\left(Y=c \mid X=x,\; \text{source}=k\right) = \frac{\exp\{\eta_{kc}(x)\}}{\sum_{c'=1}^{C}\exp\{\eta_{kc'}(x)\}}$, where $c\in[C]$.

\vspace{-0.5em}
\paragraph{Method implementations.} The first three methods are implemented as in Section~\ref{sec:sim-classification}, while the \textsc{MDCP-tuned} variant additionally uses the hyperparameter $\gamma$ for the penalty in the $\lambda$-optimization objective and follows the definitions and tuning procedures in Appendix~\ref{app:subsec_ablation_opt}.

\vspace{-0.5em}
\paragraph{Simulation results.} Figure~\ref{fig:cov_shift_class_overall_tuned} presents the results of the covariate-shift simulation in the classification setting. As the heterogeneity induced by the covariates across sources increases with the parameter $\delta_X$, MDCP maintains tight worst-case coverage, whereas the \textsc{Baseline-agg} method conservatively drives both the overall and worst-case coverage metrics to 1.0. MDCP also exhibits a more stable trajectory for the average set size (i.e., a smaller slope) compared with the baseline methods, while \textsc{Baseline-agg} shows a more rapid increase in average set size and a larger standard deviation at the same time.

Figure~\ref{fig:cov_and_concept_shift_class_overall_tuned} presents the simulation results under the classification setting when both covariate shift and concept shift are present. MDCP remains robust, achieving tight worst-case coverage on average. It also maintains stability across different values of $\delta_X$, yielding a relatively flat average set size curve, whereas \textsc{Baseline-agg} degrades gradually as $\delta_X$ increases.

\begin{figure}[t]
    \centering
    \includegraphics[width=1\linewidth]{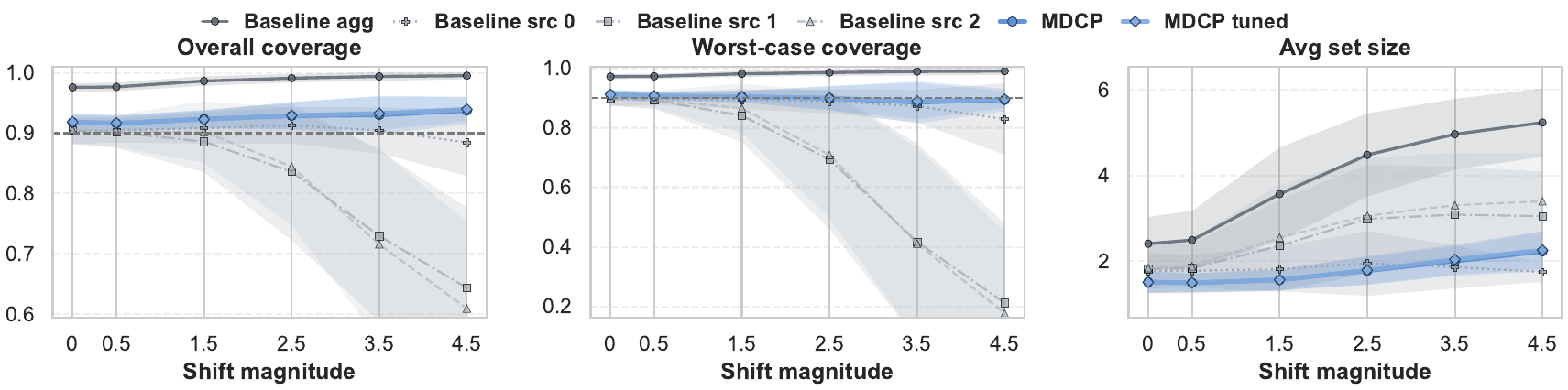}
    \vspace{-2em}
    \caption{{\small Performance of MDCP and baselines in the classification \textsc{Covariate-shift} experiments. The x‑axis is the covariate‑shift magnitude $\delta_X$, which determines $P_X^{(k)}$ separation while keeping $P(Y\mid X)$ fixed. Each line reports the mean over $100$ runs, and the shaded region indicates $\pm 1$ standard deviation across runs. Left: coverage over all test data. Middle: worst-case coverage over single-source test data. Right: average set size over all test data.}}
    \label{fig:cov_shift_class_overall_tuned}
\end{figure}

\begin{figure}[t]
    \centering
    \includegraphics[width=1\linewidth]{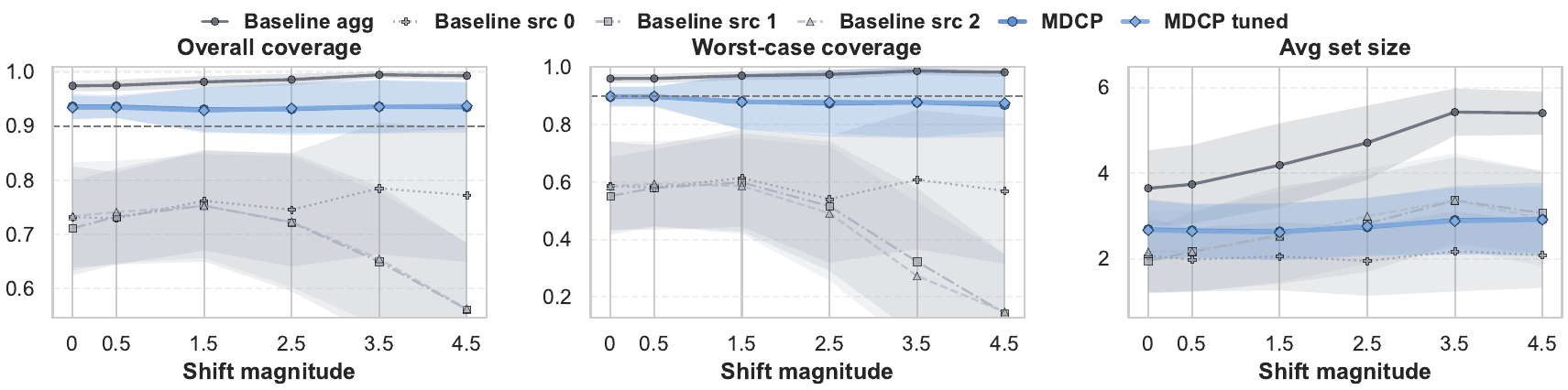}
    \vspace{-2em}
    \caption{{\small Performance of MDCP and baselines in the classification \textsc{Covariate-and-concept-shift} experiments. The x-axis is the covariate shift magnitude $\delta_X$, with both $P_X^{(k)}$ and $P^{(k)}(Y\mid X)$ varying across sources. Each line reports the mean over $100$ runs, and the shaded region indicates $\pm 1$ standard deviation. Left: coverage over all test data. Middle: worst-case coverage over single-source test data. Right: average set size over all test data.}}
    \label{fig:cov_and_concept_shift_class_overall_tuned}
\end{figure}

\begin{figure}[t]
    \centering
    \includegraphics[width=1\linewidth]{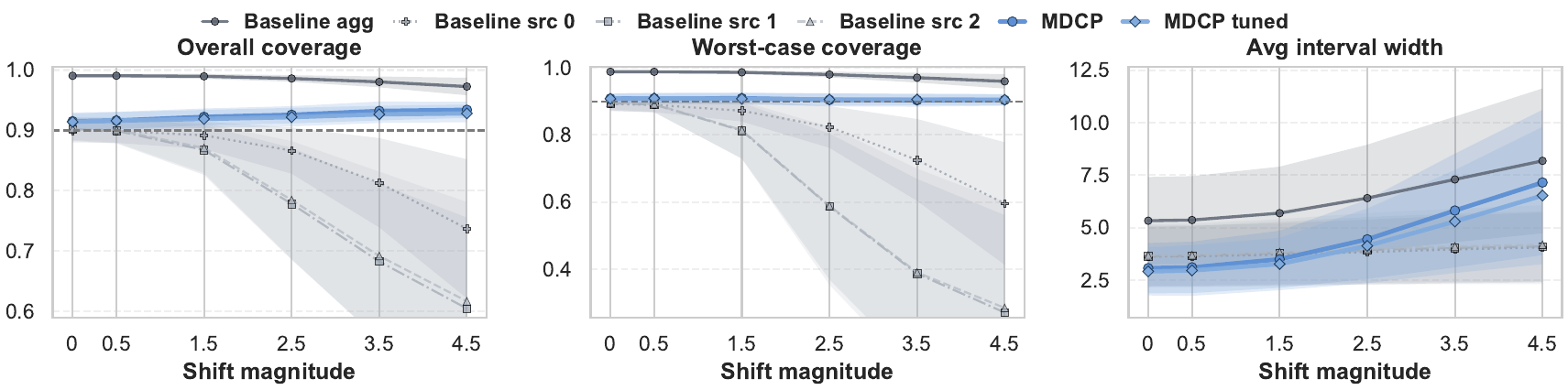}
    \vspace{-2em}
    \caption{{\small Evaluation with regression \textsc{Covariate-shift} suites; details are otherwise the same as in Figure~\ref{fig:cov_shift_class_overall_tuned}.}}
    \label{fig:cov_shift_reg_overall_tuned}
\end{figure}

\begin{figure}[t]
    \centering
    \includegraphics[width=1\linewidth]{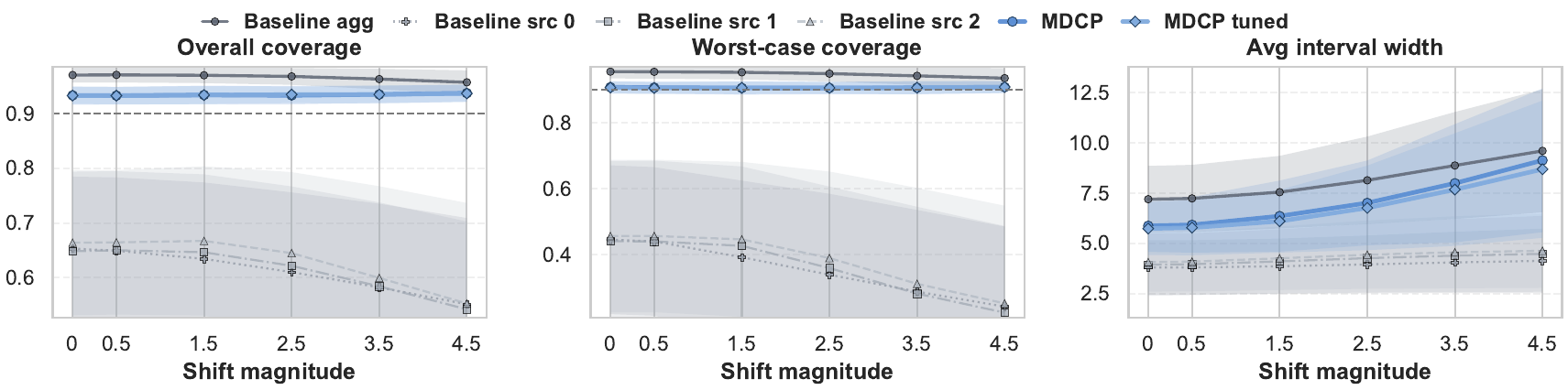}
    \vspace{-2em}
    \caption{{\small Evaluation with regression \textsc{Covariate-and-concept-shift} suites; details are otherwise the same as in Figure~\ref{fig:cov_and_concept_shift_class_overall_tuned}.}}
    \label{fig:cov_and_concept_shift_reg_overall_tuned}
\end{figure}

\subsubsection{Additional Simulations in Regression Settings}
\label{app:subsubsec_reg_cov_shift}

\paragraph{Data generating processes.}
For regression, we use a linear Gaussian model \(Y = \beta^\top X + b + \varepsilon, \,\, \varepsilon\sim\mathcal{N}(0,\sigma^2)\). In each run, we randomly sample a signal-to-noise ratio from Unif($[5,10]$), and achieve it by adjusting the noise variance $\sigma_k^2$.

In the \textsc{Covariate-shift} suite, \(P(Y\mid X)\) is shared across sources and the noise level is also shared. We draw a shared slope vector \(\bar\beta\in\mathbb{R}^d\) with \(\bar\beta_j\sim\mathcal{N}(0,1)\) for \(j\in I\) and \(\bar\beta_j=0\) otherwise, and a shared intercept \(\bar b\sim\mathcal{N}(0,0.5^2)\). To enforce common noise, we compute \(\sigma^2\) once using the realized covariates from source 1, and reuse this \(\sigma^2\) for all sources. We then generate, for each source \(k\), \(Y_i^{(k)} = \bar\beta^\top X_i^{(k)} + \bar b + \varepsilon_i^{(k)}\), where \(\varepsilon_i^{(k)}\stackrel{\mathrm{iid}}{\sim}\mathcal{N}(0,\sigma^2)\).
In the \textsc{Covariate-and-concept-shift} suite, we introduce source-specific regression functions and calibrate noise per source. We draw a base slope \(\bar\beta\in\mathbb{R}^d\) supported on $I$ same as the \textsc{Covariate-shift} suite, and a base intercept \(b\sim\mathcal{N}(0,0.5^2)\). For each source \(k\), we sample \(\delta_k\in\mathbb{R}^d\) supported on \(I\) with \((\delta_k)_j\sim\mathcal{N}(0,1)\) for \(j\in I\) and 0 otherwise, and set $\beta_k=\bar\beta+0.2\tau\,\delta_k$, $b_k=b+\tau v_k$, $v_k\sim\mathcal{N}(0,0.5^2)$. We then compute \(\sigma_k^2\), and sample \(Y_i^{(k)}=\beta_k^\top X_i^{(k)}+b_k+\varepsilon_i^{(k)}\), where \(\varepsilon_i^{(k)}\stackrel{\mathrm{iid}}{\sim}\mathcal{N}(0,\sigma_k^2)\).

\vspace{-0.5em}
\paragraph{Method implementations.} The first three methods are implemented exactly as in Section~\ref{sec:sim-regression}. The \textsc{MDCP-tuned} variant further introduces a hyperparameter \(\gamma\) to control the penalty in the \(\lambda\)-optimization objective and adheres to the definitions and tuning procedures specified in Appendix~\ref{app:subsec_ablation_opt}.

\vspace{-0.5em}
\paragraph{Simulation results.} 
The evaluation results for the \textsc{Covariate-shift} suite under the regression setting are shown in Figure~\ref{fig:cov_shift_reg_overall_tuned}. MDCP achieves tight worst-case coverage, while the performance of individual sources steadily degrades as $\delta_X$ increases, as expected. The average interval width of MDCP consistently stays below that of \textsc{Baseline-agg}, although the MDCP curve gradually increases and tends to be get closer with \textsc{Baseline-agg}. This behavior is also expected: as $\delta_X$, which approximately controls the separation among sources, continues to grow, in the extreme case where sources become perfectly separated, the optimal strategy is to optimize the per-source interval widths, and the efficiency gains from leveraging information across multiple sources become minimal.

The evaluation results for the \textsc{Covariate-and-concept-shift} suite under the regression setting are shown in Figure~\ref{fig:cov_and_concept_shift_reg_overall_tuned}. Again, MDCP achieves tight worst-case coverage, while both the overall and worst-case coverage of the per-source-calibrated methods \textsc{Baseline-src-$k$} continually degrade. As explained earlier, the MDCP curve also tends to approach that of \textsc{Baseline-agg} as the shift magnitude increases.

\subsection{Runtime Breakdown and Analyses}
\label{app:subsec_runtime}

To assess the time efficiency of our proposed algorithm, we decompose runtime across the \textsc{Linear}, \textsc{Nonlinear}, and \textsc{Temperature} suites and further investigate computational bottlenecks. By decomposing both the MDCP pipeline and that of \textsc{Baseline-agg} (Figure~\ref{fig:iter_eval_class_runtime}), we find that the main bottleneck in MDCP is computing the weights $\lambda$. Since both methods share the same per-source conditional density estimation, the \emph{Fit sources} cost is identical, while MDCP additionally requires fitting a conditional density on the pooled data (shown in light gray). The difference in calibration and evaluation time is negligible. Notably, by comparing the classification suites (Figure~\ref{fig:iter_eval_class_runtime}, \ref{fig:nonlinaer_class_runtime}, \ref{fig:temperature_class_runtime}), and regression suites (Figure~\ref{fig:iter_eval_reg_runtime}, \ref{fig:nonlinaer_reg_runtime}, \ref{fig:temperature_reg_runtime}), MDCP is much slower for classification than for regression, and also slower than \textsc{Baseline-agg}. Empirically, we find that classification often requires many more iteration steps to solve the optimization problem. 

\begin{figure}[t]
    \centering
    \includegraphics[width=0.6\linewidth]{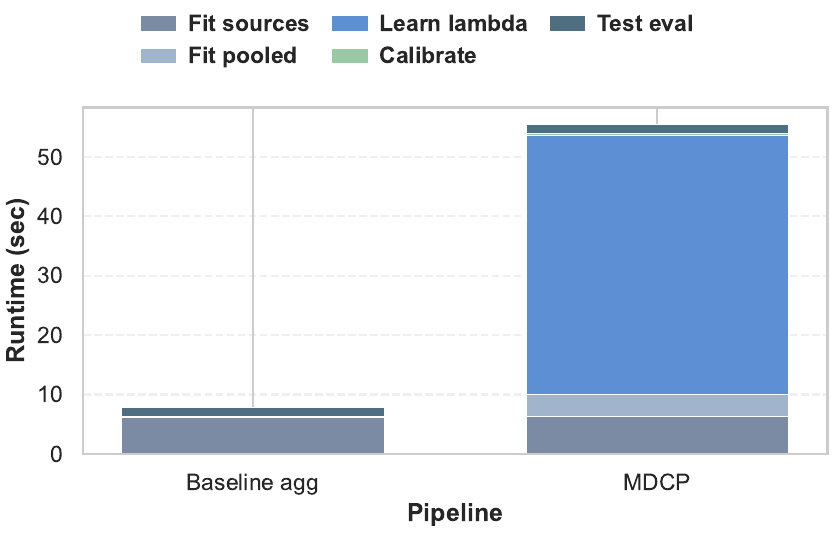}
    \vspace{-1em}
    \caption{{\small Runtime breakdown of MDCP on the classification \textsc{Linear} suites, compared with \textsc{Baseline-agg} introduced in Section~\ref{sec:sim-common}. Here, \emph{Fit sources} denotes the time to fit per-source conditional densities; \emph{Fit pooled} the time to fit a conditional density for the pooled mixture; \emph{Learn $\lambda$} the time to solve the optimization problem for the weights $\lambda_k$; and \emph{Calibrate} and \emph{Test eval} the time for calibration and test evaluation, respectively.}}
    \label{fig:iter_eval_class_runtime}
\end{figure}

\begin{figure}[t]
    \centering
    \includegraphics[width=0.9\linewidth]{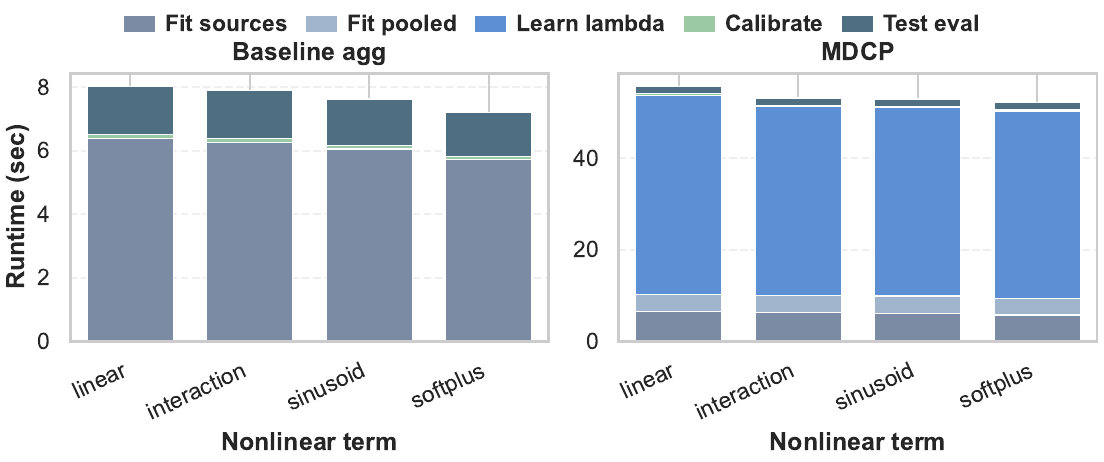}
    \vspace{-0.8em}
    \caption{{\small Runtime breakdown of MDCP and \textsc{Baseline-agg} on the classification \textsc{Nonlinear} suites; details are as in Fig.~\ref{fig:iter_eval_class_runtime}.}}
    \label{fig:nonlinaer_class_runtime}
\end{figure}

\begin{figure}[t]
    \centering
    \includegraphics[width=0.9\linewidth]{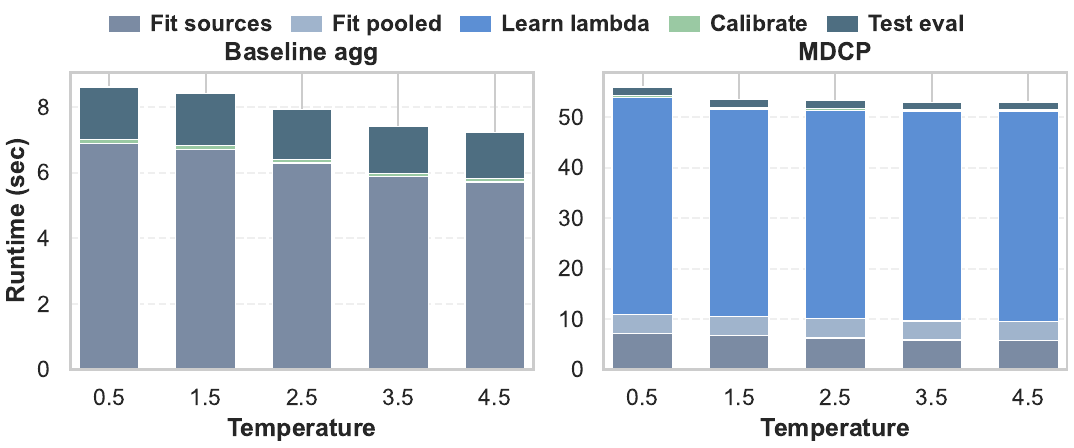}
    \vspace{-0.8em}
    \caption{{\small Runtime breakdown of MDCP and \textsc{Baseline-agg} on the classification \textsc{Temperature} suites; details are as in Fig.~\ref{fig:iter_eval_class_runtime}.}}
    \label{fig:temperature_class_runtime}
\end{figure}

\begin{figure}[t]
    \centering
    \includegraphics[width=0.6\linewidth]{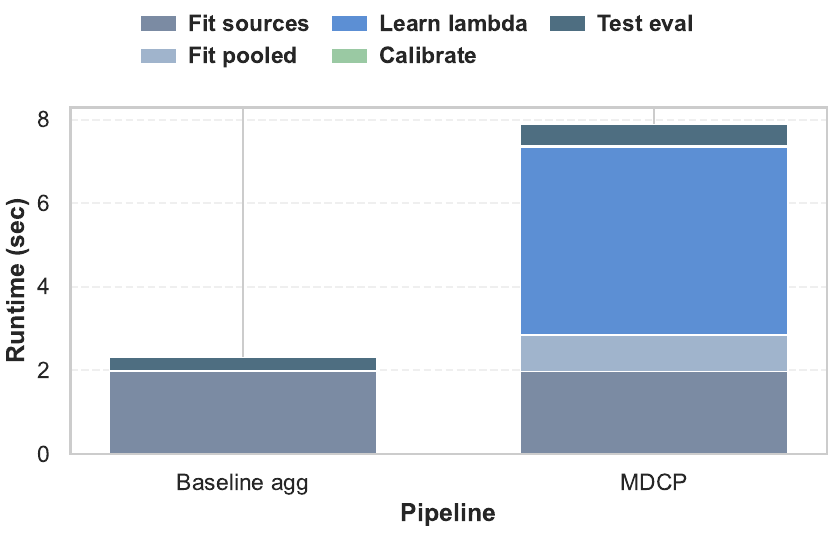}
    \vspace{-1em}
    \caption{{\small Runtime breakdown of MDCP and \textsc{Baseline-agg} on the regression \textsc{Linear} suites; details are as in Fig.~\ref{fig:iter_eval_class_runtime}.}}
    \label{fig:iter_eval_reg_runtime}
\end{figure}

\begin{figure}[t]
    \centering
    \includegraphics[width=0.9\linewidth]{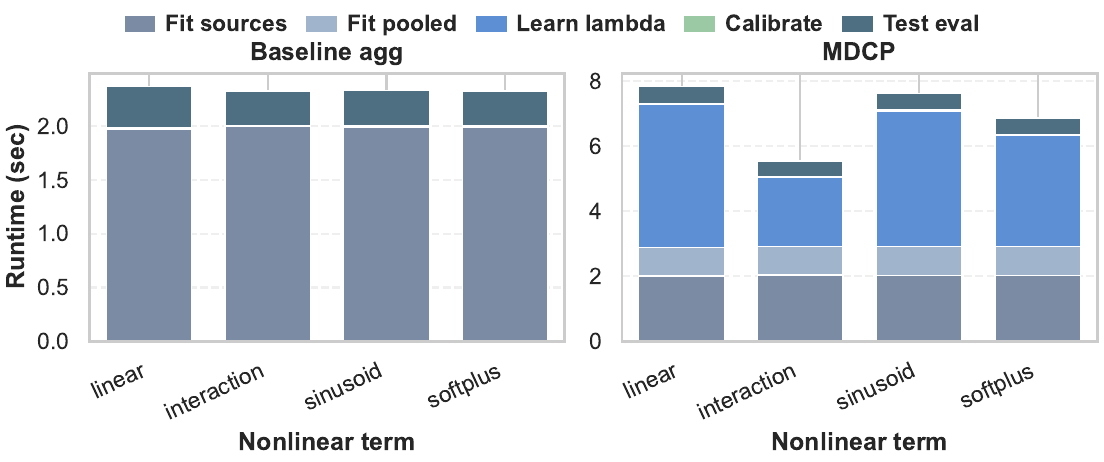}
    \vspace{-0.8em}
    \caption{{\small Runtime breakdown of MDCP and \textsc{Baseline-agg} on the regression \textsc{Nonlinear} suites; details are as in Fig.~\ref{fig:iter_eval_class_runtime}.}}
    \label{fig:nonlinaer_reg_runtime}
\end{figure}

\begin{figure}[t]
    \centering
    \includegraphics[width=0.9\linewidth]{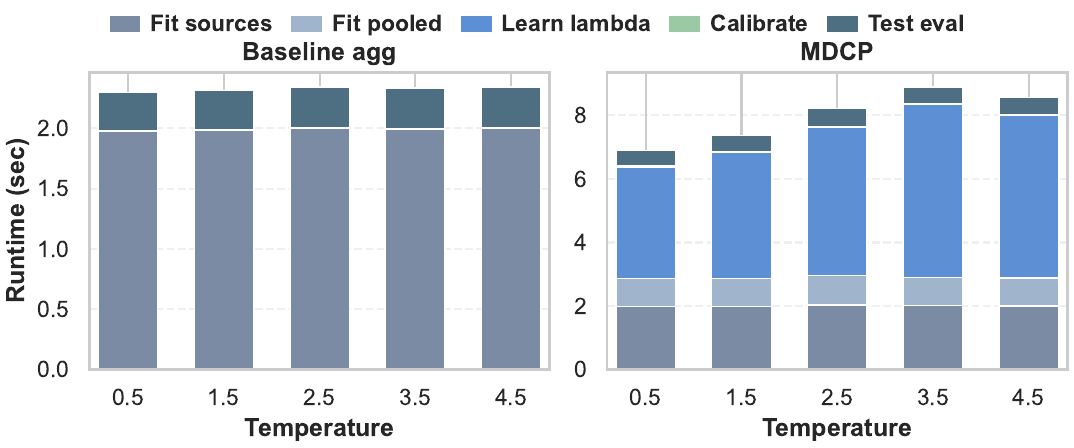}
    \vspace{-0.8em}
    \caption{{\small Runtime breakdown of MDCP and \textsc{Baseline-agg} on the regression \textsc{Temperature} suites; details are as in Fig.~\ref{fig:iter_eval_class_runtime}.}}
    \label{fig:temperature_reg_runtime}
\end{figure}

\subsection{Scaling Behavior with Imbalance Ratio, Sample Size, and Number of Sources}
\label{app:subsec_scaling}
 {In this subsection, we present additional empirical results to address the following questions: (1) How robust is MDCP when the sample size is small, and how does its efficiency change as the sample size increases? (2) Is MDCP sensitive to the number of sources? (3) Can MDCP be reliably deployed in scenarios where the sources are highly imbalanced?}

\subsubsection{Scaling Studies in Classification Settings}
\paragraph{Data generating processes.} As below, we vary three parameters of interest. All other aspects remain the same as in Section~\ref{sec:sim-classification}. We first vary the number of sources, $K \in \{2,3,5,10\}$, while fixing the total sample size at $N=6000$ and maintaining same size for each source. We next vary the balance-ratio $\rho=\max_k n_k / \min_k n_k \in \{1,2,4,8\}$ with corresponding sample sizes $(2000,2000,2000), (3000,1500,1500), (4000,1000,1000), (4800,600,600)$ and fixes $K=3$ and $N=6000$. For $\rho>1$, we randomly assign sample sizes for different source labels in each trial so the dominant source is not tied to a fixed source index. Finally, we vary the total sample size $N \in \{1500, 3000, 4500, 6000\}$ while fixing $K = 3$ and assuming balanced sources, that is, each source contains the same number of samples. 

\paragraph{Method implementations.} All methods follow Section~\ref{sec:sim-classification}. Additionally, we include an \textsc{Oracle} method that has access to the ground-truth conditional density during both $\lambda$ optimization and score calculation. For simplicity we do not tune MDCP here.

\paragraph{Simulation results.} Tables~\ref{tab:scaling_classification_balance}--\ref{tab:scaling_classification_n_total} collect the $120$ classification configurations. Across all of them, MDCP keeps worst-case coverage in $[0.898,\,0.922]$, much closer to the nominal $0.9$ target than \textsc{Baseline-agg}, whose worst-case coverage ranges from $0.941$ to $1.000$ and is systematically conservative. The same pattern appears in overall coverage: MDCP ranges from $0.910$ to $0.980$, whereas \textsc{Baseline-agg} ranges from $0.953$ to $1.000$. Relative to \textsc{Baseline-agg}, MDCP reduces the prediction-set size by a median $32.1\%$, with reductions ranging from $8.9\%$ to $51.0\%$ across the $120$ configurations. 
The runtime-breakdown tables, Tables~\ref{tab:scaling_runtime_classification_balance}--\ref{tab:scaling_runtime_classification_n_total}, show that learning $\lambda$ is the dominant MDCP cost in classification, accounting for a median $77.9\%$ of the total runtime across the $120$ configurations, whereas the time for calibration and test evaluation remain comparatively small. In contrast, \textsc{Baseline-agg} is faster since, by construction, it does not require fitting conditional density on pooled data and learning $\lambda$. Among the three parameters we vary, increasing $N$ leads to the most significant increase in MDCP’s runtime, while the absolute changes are comparatively smaller when varying $K$ and $\rho$.

\subsubsection{Scaling Studies in Regression Settings}
\paragraph{Data generating processes.}  The DGP follows that in Section~\ref{sec:sim-regression}. The scaling scheme is the same as in classification. 

\paragraph{Method implementations.}  All methods follow Section~\ref{sec:sim-regression}, with an additional \textsc{Oracle} method as in the classification.

\paragraph{Simulation results.} Tables~\ref{tab:scaling_regression_balance}--\ref{tab:scaling_regression_n_total} report the $120$ regression configurations. Again MDCP stays close to the target worst-case coverage while \textsc{Baseline-agg} remains conservative. MDCP shortens the average interval width by a median $26.3\%$ relative to \textsc{Baseline-agg}, with reductions ranging from $8.8\%$ to $61.6\%$.  The runtime breakdown tables (Tables~\ref{tab:scaling_runtime_regression_balance}--\ref{tab:scaling_runtime_regression_n_total}) show that the runtime overhead is milder than in classification. This is mainly because the $\lambda$-optimization step is less dominant than in classification, and we find that, during this step, regression generally requires fewer iterations to converge. 

Finally, we find that the total runtime seems to scale linearly with the total sample size $N$, mildly increases with the number of sources $K$, and remains robust to the balance between groups.

\begin{table*}[t]
\centering
\footnotesize
\setlength{\tabcolsep}{4pt}
\renewcommand{\arraystretch}{0.98}
\resizebox{\textwidth}{!}{%
\begin{tabular}{@{}llccccc@{}}
\toprule
Suite & Setting & $\rho$ & Overall coverage & Worst-case coverage & Avg set size & Total runtime \\
\midrule
\multirow{4}{*}{\textsc{Linear}} & \multirow{4}{*}{$\tau=2.5$} & 1 & 0.975 / 0.936 / 0.921 & 0.961 / 0.903 / 0.908 & 3.78 / 2.53 / 2.26 & 7.1 / 53.3 / 2.3 \\
 &  & 2 & 0.975 / 0.934 / 0.922 & 0.960 / 0.903 / 0.908 & 3.72 / 2.67 / 2.32 & 7.0 / 51.3 / 2.3 \\
 &  & 4 & 0.975 / 0.927 / 0.922 & 0.963 / 0.901 / 0.907 & 3.90 / 3.44 / 2.32 & 6.1 / 50.6 / 2.8 \\
 &  & 8 & 0.981 / 0.916 / 0.925 & 0.970 / 0.902 / 0.910 & 4.32 / 3.74 / 2.44 & 5.4 / 50.1 / 3.5 \\
\midrule
\multirow{20}{*}{\textsc{Temperature}} & \multirow{4}{*}{$\tau=0.5$} & 1 & 0.974 / 0.917 / 0.917 & 0.967 / 0.904 / 0.907 & 2.66 / 1.74 / 1.58 & 8.4 / 54.7 / 1.4 \\
 &  & 2 & 0.975 / 0.918 / 0.915 & 0.968 / 0.904 / 0.905 & 2.72 / 1.78 / 1.58 & 8.0 / 52.4 / 1.5 \\
 &  & 4 & 0.977 / 0.919 / 0.922 & 0.971 / 0.907 / 0.911 & 2.90 / 1.99 / 1.63 & 7.1 / 52.0 / 1.5 \\
 &  & 8 & 0.980 / 0.921 / 0.921 & 0.974 / 0.911 / 0.909 & 3.24 / 2.20 / 1.63 & 6.2 / 51.6 / 1.6 \\
\addlinespace[0.15em]
 & \multirow{4}{*}{$\tau=1.5$} & 1 & 0.972 / 0.926 / 0.916 & 0.962 / 0.903 / 0.904 & 2.98 / 2.08 / 1.88 & 8.1 / 54.7 / 2.2 \\
 &  & 2 & 0.973 / 0.922 / 0.918 & 0.963 / 0.900 / 0.906 & 2.97 / 2.10 / 1.89 & 7.7 / 52.4 / 2.4 \\
 &  & 4 & 0.976 / 0.922 / 0.919 & 0.964 / 0.901 / 0.906 & 3.29 / 2.70 / 1.93 & 6.8 / 51.7 / 2.4 \\
 &  & 8 & 0.979 / 0.915 / 0.923 & 0.967 / 0.902 / 0.910 & 3.49 / 2.89 / 2.00 & 6.0 / 51.3 / 3.0 \\
\addlinespace[0.15em]
 & \multirow{4}{*}{$\tau=2.5$} & 1 & 0.974 / 0.933 / 0.916 & 0.962 / 0.902 / 0.904 & 3.63 / 2.47 / 2.21 & 8.1 / 55.4 / 2.3 \\
 &  & 2 & 0.975 / 0.931 / 0.923 & 0.963 / 0.903 / 0.910 & 3.73 / 2.60 / 2.28 & 7.6 / 52.9 / 2.4 \\
 &  & 4 & 0.978 / 0.926 / 0.923 & 0.964 / 0.900 / 0.908 & 3.96 / 3.27 / 2.27 & 6.7 / 52.2 / 2.8 \\
 &  & 8 & 0.982 / 0.919 / 0.924 & 0.970 / 0.902 / 0.909 & 4.29 / 3.76 / 2.35 & 5.7 / 51.5 / 3.7 \\
\addlinespace[0.15em]
 & \multirow{4}{*}{$\tau=3.5$} & 1 & 0.977 / 0.939 / 0.923 & 0.964 / 0.901 / 0.908 & 4.30 / 2.96 / 2.70 & 7.4 / 55.0 / 2.1 \\
 &  & 2 & 0.978 / 0.939 / 0.924 & 0.964 / 0.904 / 0.905 & 4.48 / 3.22 / 2.83 & 7.0 / 52.5 / 2.1 \\
 &  & 4 & 0.980 / 0.932 / 0.925 & 0.965 / 0.904 / 0.907 & 4.46 / 3.86 / 2.70 & 6.4 / 52.0 / 2.6 \\
 &  & 8 & 0.982 / 0.923 / 0.926 & 0.969 / 0.905 / 0.907 & 4.77 / 4.19 / 2.82 & 5.4 / 51.1 / 3.5 \\
\addlinespace[0.15em]
 & \multirow{4}{*}{$\tau=4.5$} & 1 & 0.981 / 0.944 / 0.920 & 0.966 / 0.901 / 0.902 & 4.86 / 3.51 / 3.24 & 7.3 / 54.7 / 1.8 \\
 &  & 2 & 0.984 / 0.939 / 0.925 & 0.971 / 0.900 / 0.906 & 5.11 / 3.87 / 3.53 & 6.9 / 52.3 / 1.8 \\
 &  & 4 & 0.986 / 0.934 / 0.925 & 0.973 / 0.905 / 0.905 & 5.14 / 4.38 / 3.23 & 6.1 / 51.5 / 2.1 \\
 &  & 8 & 0.985 / 0.923 / 0.928 & 0.972 / 0.902 / 0.905 & 5.17 / 4.71 / 3.47 & 5.3 / 50.9 / 3.1 \\
\midrule
\multirow{16}{*}{\textsc{Nonlinear}} & \multirow{4}{*}{\textsc{Linear}} & 1 & 0.974 / 0.936 / 0.919 & 0.962 / 0.903 / 0.907 & 3.64 / 2.42 / 2.15 & 7.9 / 54.8 / 2.3 \\
 &  & 2 & 0.975 / 0.931 / 0.922 & 0.963 / 0.902 / 0.908 & 3.75 / 2.60 / 2.22 & 7.5 / 52.7 / 2.4 \\
 &  & 4 & 0.977 / 0.926 / 0.922 & 0.964 / 0.902 / 0.907 & 3.82 / 3.34 / 2.30 & 6.7 / 52.1 / 2.9 \\
 &  & 8 & 0.981 / 0.920 / 0.925 & 0.969 / 0.904 / 0.909 & 4.24 / 3.67 / 2.38 & 5.7 / 51.3 / 3.4 \\
\addlinespace[0.15em]
 & \multirow{4}{*}{\textsc{Interaction}} & 1 & 0.973 / 0.936 / 0.919 & 0.961 / 0.902 / 0.905 & 3.56 / 2.45 / 2.11 & 7.8 / 52.9 / 2.4 \\
 &  & 2 & 0.974 / 0.931 / 0.916 & 0.961 / 0.903 / 0.902 & 3.56 / 2.47 / 2.07 & 7.6 / 53.1 / 2.4 \\
 &  & 4 & 0.975 / 0.926 / 0.921 & 0.963 / 0.901 / 0.907 & 3.69 / 3.19 / 2.08 & 6.8 / 52.5 / 2.8 \\
 &  & 8 & 0.983 / 0.920 / 0.926 & 0.972 / 0.905 / 0.912 & 4.28 / 3.63 / 2.21 & 6.0 / 51.9 / 3.4 \\
\addlinespace[0.15em]
 & \multirow{4}{*}{\textsc{Sinusoid}} & 1 & 0.973 / 0.934 / 0.917 & 0.961 / 0.903 / 0.906 & 3.57 / 2.43 / 2.17 & 8.5 / 56.4 / 2.4 \\
 &  & 2 & 0.975 / 0.931 / 0.917 & 0.961 / 0.903 / 0.904 & 3.66 / 2.45 / 2.14 & 8.1 / 54.1 / 2.4 \\
 &  & 4 & 0.976 / 0.926 / 0.919 & 0.963 / 0.901 / 0.905 & 3.74 / 3.18 / 2.11 & 7.1 / 53.4 / 2.7 \\
 &  & 8 & 0.983 / 0.917 / 0.928 & 0.969 / 0.900 / 0.913 & 4.29 / 3.43 / 2.26 & 6.1 / 52.5 / 3.6 \\
\addlinespace[0.15em]
 & \multirow{4}{*}{\textsc{Softplus}} & 1 & 0.982 / 0.934 / 0.921 & 0.973 / 0.906 / 0.908 & 4.02 / 2.37 / 2.13 & 7.8 / 53.4 / 2.3 \\
 &  & 2 & 0.985 / 0.927 / 0.917 & 0.976 / 0.899 / 0.903 & 4.20 / 2.41 / 2.11 & 7.4 / 53.4 / 2.5 \\
 &  & 4 & 0.987 / 0.928 / 0.921 & 0.979 / 0.903 / 0.908 & 4.41 / 3.11 / 2.15 & 6.6 / 52.7 / 2.7 \\
 &  & 8 & 0.991 / 0.917 / 0.922 & 0.984 / 0.900 / 0.907 & 4.67 / 3.22 / 2.15 & 5.7 / 51.9 / 3.6 \\
\bottomrule
\end{tabular}
}
\vspace{3pt}
\caption{Classification results when varying the balance ratio $\rho$. Each metric cell reports the mean over 100 trials in the order \textsc{Baseline-agg} / \textsc{MDCP} / \textsc{Oracle}. Runtime values are reported in seconds. Except for the varied parameters, the \textsc{Linear} suite setup follows that of Figure~\ref{fig:iter_eval_class_overall_vanilla}; the \textsc{Nonlinear} suite setup follows that of Figure~\ref{fig:nonlinear_class_overall_vanilla}; and the \textsc{Temperature} suite setup follows that of Figure~\ref{fig:temperature_class_nonpenalized}.}
\label{tab:scaling_classification_balance}
\end{table*}

\begin{table*}[t]
\centering
\footnotesize
\setlength{\tabcolsep}{4pt}
\renewcommand{\arraystretch}{0.98}
\resizebox{\textwidth}{!}{%
\begin{tabular}{@{}llccccc@{}}
\toprule
Suite & Setting & $K$ & Overall coverage & Worst-case coverage & Avg set size & Total runtime \\
\midrule
\multirow{4}{*}{\textsc{Linear}} & \multirow{4}{*}{$\tau=2.5$} & 2 & 0.953 / 0.923 / 0.909 & 0.942 / 0.901 / 0.901 & 2.97 / 2.25 / 1.95 & 6.2 / 50.1 / 1.7 \\
 &  & 3 & 0.975 / 0.935 / 0.921 & 0.962 / 0.902 / 0.908 & 3.86 / 2.51 / 2.25 & 7.0 / 51.2 / 2.3 \\
 &  & 5 & 0.990 / 0.949 / 0.932 & 0.978 / 0.905 / 0.910 & 4.63 / 2.96 / 2.66 & 8.1 / 54.3 / 3.7 \\
 &  & 10 & 0.999 / 0.963 / 0.954 & 0.996 / 0.913 / 0.917 & 5.66 / 3.39 / 3.19 & 10.1 / 59.6 / 10.3 \\
\midrule
\multirow{20}{*}{\textsc{Temperature}} & \multirow{4}{*}{$\tau=0.5$} & 2 & 0.958 / 0.910 / 0.908 & 0.953 / 0.902 / 0.902 & 2.31 / 1.68 / 1.51 & 7.2 / 49.1 / 1.3 \\
 &  & 3 & 0.974 / 0.917 / 0.917 & 0.967 / 0.904 / 0.907 & 2.66 / 1.74 / 1.58 & 8.3 / 52.6 / 1.4 \\
 &  & 5 & 0.987 / 0.931 / 0.929 & 0.981 / 0.912 / 0.913 & 3.25 / 1.90 / 1.70 & 9.7 / 56.5 / 1.8 \\
 &  & 10 & 0.998 / 0.949 / 0.949 & 0.993 / 0.922 / 0.926 & 4.56 / 2.24 / 1.92 & 12.1 / 62.7 / 2.9 \\
\addlinespace[0.15em]
 & \multirow{4}{*}{$\tau=1.5$} & 2 & 0.954 / 0.916 / 0.910 & 0.945 / 0.901 / 0.904 & 2.49 / 1.93 / 1.71 & 7.0 / 50.7 / 1.7 \\
 &  & 3 & 0.972 / 0.927 / 0.915 & 0.963 / 0.903 / 0.903 & 3.00 / 2.08 / 1.88 & 8.0 / 52.6 / 2.2 \\
 &  & 5 & 0.987 / 0.941 / 0.931 & 0.976 / 0.906 / 0.911 & 3.72 / 2.40 / 2.17 & 9.2 / 56.1 / 3.3 \\
 &  & 10 & 0.998 / 0.958 / 0.952 & 0.993 / 0.918 / 0.922 & 5.10 / 2.84 / 2.59 & 11.3 / 61.8 / 7.3 \\
\addlinespace[0.15em]
 & \multirow{4}{*}{$\tau=2.5$} & 2 & 0.954 / 0.920 / 0.908 & 0.941 / 0.900 / 0.901 & 2.86 / 2.19 / 1.94 & 6.9 / 51.5 / 2.8 \\
 &  & 3 & 0.973 / 0.933 / 0.917 & 0.962 / 0.903 / 0.905 & 3.61 / 2.48 / 2.21 & 7.8 / 52.8 / 2.3 \\
 &  & 5 & 0.990 / 0.946 / 0.931 & 0.979 / 0.902 / 0.911 & 4.65 / 2.92 / 2.65 & 8.8 / 56.3 / 3.9 \\
 &  & 10 & 0.999 / 0.963 / 0.955 & 0.995 / 0.917 / 0.920 & 5.66 / 3.42 / 3.22 & 10.8 / 61.7 / 10.4 \\
\addlinespace[0.15em]
 & \multirow{4}{*}{$\tau=3.5$} & 2 & 0.955 / 0.924 / 0.912 & 0.941 / 0.900 / 0.902 & 3.50 / 2.69 / 2.36 & 6.6 / 51.4 / 1.5 \\
 &  & 3 & 0.978 / 0.939 / 0.922 & 0.966 / 0.901 / 0.908 & 4.41 / 2.99 / 2.72 & 7.4 / 52.5 / 2.1 \\
 &  & 5 & 0.993 / 0.957 / 0.933 & 0.983 / 0.903 / 0.906 & 5.26 / 3.58 / 3.21 & 8.6 / 55.9 / 3.4 \\
 &  & 10 & 1.000 / 0.972 / 0.961 & 0.998 / 0.908 / 0.919 & 5.94 / 4.22 / 4.04 & 10.2 / 60.9 / 11.0 \\
\addlinespace[0.15em]
 & \multirow{4}{*}{$\tau=4.5$} & 2 & 0.961 / 0.925 / 0.917 & 0.946 / 0.904 / 0.905 & 4.05 / 3.25 / 2.88 & 6.7 / 51.8 / 1.5 \\
 &  & 3 & 0.982 / 0.943 / 0.921 & 0.968 / 0.898 / 0.903 & 5.01 / 3.57 / 3.32 & 7.2 / 52.5 / 1.8 \\
 &  & 5 & 0.995 / 0.963 / 0.935 & 0.987 / 0.901 / 0.903 & 5.68 / 4.43 / 4.18 & 8.1 / 55.6 / 2.6 \\
 &  & 10 & 1.000 / 0.980 / 0.962 & 1.000 / 0.912 / 0.916 & 6.00 / 5.17 / 5.11 & 9.9 / 60.6 / 6.7 \\
\midrule
\multirow{16}{*}{\textsc{Nonlinear}} & \multirow{4}{*}{\textsc{Linear}} & 2 & 0.955 / 0.918 / 0.908 & 0.947 / 0.901 / 0.900 & 2.87 / 2.05 / 1.84 & 6.8 / 51.5 / 1.7 \\
 &  & 3 & 0.973 / 0.935 / 0.920 & 0.961 / 0.903 / 0.908 & 3.55 / 2.44 / 2.19 & 7.7 / 52.9 / 2.3 \\
 &  & 5 & 0.988 / 0.949 / 0.934 & 0.977 / 0.908 / 0.912 & 4.51 / 2.97 / 2.70 & 8.9 / 56.3 / 4.0 \\
 &  & 10 & 0.999 / 0.964 / 0.954 & 0.996 / 0.914 / 0.919 & 5.73 / 3.45 / 3.24 & 10.6 / 61.7 / 9.7 \\
\addlinespace[0.15em]
 & \multirow{4}{*}{\textsc{Interaction}} & 2 & 0.953 / 0.921 / 0.910 & 0.944 / 0.903 / 0.903 & 2.76 / 2.06 / 1.78 & 6.8 / 49.6 / 1.7 \\
 &  & 3 & 0.974 / 0.936 / 0.919 & 0.961 / 0.901 / 0.905 & 3.56 / 2.45 / 2.10 & 7.8 / 53.0 / 2.3 \\
 &  & 5 & 0.989 / 0.947 / 0.933 & 0.978 / 0.905 / 0.914 & 4.47 / 2.85 / 2.49 & 8.7 / 56.4 / 4.0 \\
 &  & 10 & 0.999 / 0.963 / 0.950 & 0.995 / 0.909 / 0.915 & 5.60 / 3.47 / 3.03 & 10.9 / 62.0 / 10.4 \\
\addlinespace[0.15em]
 & \multirow{4}{*}{\textsc{Sinusoid}} & 2 & 0.956 / 0.923 / 0.910 & 0.946 / 0.905 / 0.903 & 2.85 / 2.08 / 1.80 & 7.1 / 52.2 / 1.7 \\
 &  & 3 & 0.973 / 0.934 / 0.917 & 0.960 / 0.903 / 0.905 & 3.48 / 2.38 / 2.12 & 8.3 / 54.4 / 2.5 \\
 &  & 5 & 0.990 / 0.945 / 0.931 & 0.978 / 0.901 / 0.909 & 4.49 / 2.80 / 2.55 & 9.4 / 57.8 / 3.8 \\
 &  & 10 & 0.999 / 0.966 / 0.956 & 0.995 / 0.919 / 0.922 & 5.60 / 3.46 / 3.22 & 11.2 / 63.5 / 10.9 \\
\addlinespace[0.15em]
 & \multirow{4}{*}{\textsc{Softplus}} & 2 & 0.967 / 0.921 / 0.909 & 0.960 / 0.902 / 0.902 & 3.39 / 2.03 / 1.74 & 6.5 / 48.6 / 1.7 \\
 &  & 3 & 0.984 / 0.931 / 0.919 & 0.976 / 0.903 / 0.906 & 4.09 / 2.27 / 2.06 & 7.7 / 53.4 / 2.4 \\
 &  & 5 & 0.995 / 0.942 / 0.933 & 0.987 / 0.905 / 0.912 & 4.94 / 2.63 / 2.47 & 8.6 / 57.0 / 3.9 \\
 &  & 10 & 0.999 / 0.964 / 0.954 & 0.994 / 0.914 / 0.919 & 5.43 / 3.20 / 3.01 & 10.3 / 62.0 / 10.6 \\
\bottomrule
\end{tabular}
}
\vspace{2pt}
\caption{Classification results when varying the number of sources $K$. Details are otherwise the same as in Table~\ref{tab:scaling_classification_balance}.}
\label{tab:scaling_classification_k}
\end{table*}

\begin{table*}[t]
\centering
\footnotesize
\setlength{\tabcolsep}{4pt}
\renewcommand{\arraystretch}{0.98}
\resizebox{\textwidth}{!}{%
\begin{tabular}{@{}llccccc@{}}
\toprule
Suite & Setting & $N$ & Overall coverage & Worst-case coverage & Avg set size & Total runtime \\
\midrule
\multirow{4}{*}{\textsc{Linear}} & \multirow{4}{*}{$\tau=2.5$} & 1500 & 0.984 / 0.941 / 0.934 & 0.974 / 0.911 / 0.913 & 4.59 / 3.00 / 2.47 & 2.0 / 19.3 / 1.2 \\
 &  & 3000 & 0.980 / 0.938 / 0.925 & 0.968 / 0.905 / 0.908 & 4.20 / 2.72 / 2.36 & 3.7 / 29.4 / 1.7 \\
 &  & 4500 & 0.978 / 0.940 / 0.924 & 0.966 / 0.907 / 0.909 & 3.95 / 2.57 / 2.26 & 5.4 / 40.6 / 2.1 \\
 &  & 6000 & 0.973 / 0.933 / 0.920 & 0.960 / 0.901 / 0.908 & 3.64 / 2.45 / 2.20 & 6.9 / 51.1 / 2.3 \\
\midrule
\multirow{20}{*}{\textsc{Temperature}} & \multirow{4}{*}{$\tau=0.5$} & 1500 & 0.982 / 0.931 / 0.933 & 0.975 / 0.915 / 0.919 & 3.60 / 2.20 / 1.71 & 2.4 / 19.9 / 0.6 \\
 &  & 3000 & 0.977 / 0.923 / 0.923 & 0.971 / 0.908 / 0.912 & 3.04 / 1.92 / 1.65 & 4.5 / 30.3 / 0.9 \\
 &  & 4500 & 0.976 / 0.919 / 0.917 & 0.970 / 0.907 / 0.906 & 2.80 / 1.80 / 1.58 & 6.4 / 41.8 / 1.2 \\
 &  & 6000 & 0.974 / 0.917 / 0.917 & 0.967 / 0.904 / 0.907 & 2.66 / 1.74 / 1.58 & 8.3 / 52.5 / 1.4 \\
\addlinespace[0.15em]
 & \multirow{4}{*}{$\tau=1.5$} & 1500 & 0.981 / 0.937 / 0.931 & 0.970 / 0.912 / 0.912 & 3.89 / 2.58 / 2.04 & 2.3 / 19.6 / 1.0 \\
 &  & 3000 & 0.976 / 0.929 / 0.926 & 0.965 / 0.902 / 0.912 & 3.36 / 2.24 / 1.96 & 4.1 / 30.0 / 1.5 \\
 &  & 4500 & 0.975 / 0.931 / 0.923 & 0.965 / 0.908 / 0.910 & 3.15 / 2.17 / 1.93 & 6.1 / 41.6 / 1.9 \\
 &  & 6000 & 0.972 / 0.926 / 0.915 & 0.962 / 0.902 / 0.903 & 2.97 / 2.08 / 1.88 & 8.0 / 52.4 / 2.2 \\
\addlinespace[0.15em]
 & \multirow{4}{*}{$\tau=2.5$} & 1500 & 0.985 / 0.942 / 0.936 & 0.975 / 0.912 / 0.917 & 4.77 / 2.97 / 2.41 & 2.1 / 19.8 / 1.1 \\
 &  & 3000 & 0.980 / 0.936 / 0.922 & 0.968 / 0.903 / 0.905 & 4.24 / 2.68 / 2.32 & 4.1 / 30.6 / 1.7 \\
 &  & 4500 & 0.978 / 0.936 / 0.922 & 0.966 / 0.904 / 0.907 & 3.91 / 2.51 / 2.23 & 5.9 / 42.1 / 2.0 \\
 &  & 6000 & 0.973 / 0.933 / 0.917 & 0.961 / 0.902 / 0.905 & 3.59 / 2.49 / 2.23 & 7.9 / 53.1 / 2.3 \\
\addlinespace[0.15em]
 & \multirow{4}{*}{$\tau=3.5$} & 1500 & 0.988 / 0.949 / 0.940 & 0.977 / 0.916 / 0.915 & 5.15 / 3.46 / 2.91 & 2.1 / 19.8 / 1.1 \\
 &  & 3000 & 0.983 / 0.943 / 0.927 & 0.970 / 0.902 / 0.908 & 4.70 / 3.13 / 2.74 & 3.8 / 30.3 / 1.7 \\
 &  & 4500 & 0.981 / 0.942 / 0.923 & 0.967 / 0.902 / 0.906 & 4.58 / 3.08 / 2.73 & 5.7 / 42.0 / 1.8 \\
 &  & 6000 & 0.977 / 0.939 / 0.922 & 0.964 / 0.901 / 0.907 & 4.34 / 2.95 / 2.68 & 7.3 / 52.7 / 2.1 \\
\addlinespace[0.15em]
 & \multirow{4}{*}{$\tau=4.5$} & 1500 & 0.990 / 0.952 / 0.937 & 0.979 / 0.909 / 0.906 & 5.50 / 4.19 / 3.78 & 2.1 / 19.8 / 1.0 \\
 &  & 3000 & 0.987 / 0.949 / 0.929 & 0.975 / 0.902 / 0.903 & 5.23 / 3.86 / 3.55 & 3.7 / 30.2 / 1.3 \\
 &  & 4500 & 0.985 / 0.944 / 0.926 & 0.973 / 0.902 / 0.905 & 5.10 / 3.59 / 3.34 & 5.4 / 41.6 / 1.6 \\
 &  & 6000 & 0.983 / 0.942 / 0.922 & 0.969 / 0.899 / 0.903 & 5.00 / 3.61 / 3.38 & 7.3 / 52.6 / 1.8 \\
\midrule
\multirow{16}{*}{\textsc{Nonlinear}} & \multirow{4}{*}{\textsc{Linear}} & 1500 & 0.985 / 0.944 / 0.936 & 0.975 / 0.914 / 0.916 & 4.57 / 2.87 / 2.30 & 2.2 / 19.8 / 1.2 \\
 &  & 3000 & 0.979 / 0.938 / 0.926 & 0.968 / 0.907 / 0.908 & 4.01 / 2.63 / 2.26 & 3.9 / 30.2 / 1.8 \\
 &  & 4500 & 0.975 / 0.937 / 0.921 & 0.962 / 0.905 / 0.905 & 3.84 / 2.57 / 2.25 & 5.8 / 41.8 / 2.0 \\
 &  & 6000 & 0.973 / 0.935 / 0.920 & 0.960 / 0.902 / 0.908 & 3.59 / 2.45 / 2.20 & 7.7 / 52.8 / 2.4 \\
\addlinespace[0.15em]
 & \multirow{4}{*}{\textsc{Interaction}} & 1500 & 0.985 / 0.936 / 0.932 & 0.975 / 0.906 / 0.912 & 4.66 / 2.88 / 2.20 & 2.1 / 17.8 / 1.3 \\
 &  & 3000 & 0.978 / 0.937 / 0.923 & 0.966 / 0.906 / 0.905 & 4.00 / 2.66 / 2.18 & 4.1 / 30.5 / 1.8 \\
 &  & 4500 & 0.974 / 0.936 / 0.919 & 0.961 / 0.901 / 0.904 & 3.72 / 2.51 / 2.13 & 6.1 / 42.1 / 2.2 \\
 &  & 6000 & 0.972 / 0.934 / 0.919 & 0.960 / 0.900 / 0.904 & 3.45 / 2.40 / 2.08 & 7.8 / 53.0 / 2.3 \\
\addlinespace[0.15em]
 & \multirow{4}{*}{\textsc{Sinusoid}} & 1500 & 0.986 / 0.939 / 0.937 & 0.976 / 0.908 / 0.915 & 4.63 / 2.78 / 2.29 & 2.2 / 19.9 / 1.3 \\
 &  & 3000 & 0.979 / 0.935 / 0.924 & 0.967 / 0.903 / 0.908 & 4.04 / 2.51 / 2.19 & 4.1 / 31.0 / 1.9 \\
 &  & 4500 & 0.976 / 0.932 / 0.920 & 0.963 / 0.899 / 0.905 & 3.72 / 2.43 / 2.18 & 6.1 / 42.7 / 2.1 \\
 &  & 6000 & 0.973 / 0.934 / 0.917 & 0.961 / 0.904 / 0.905 & 3.62 / 2.43 / 2.16 & 8.2 / 54.2 / 2.4 \\
\addlinespace[0.15em]
 & \multirow{4}{*}{\textsc{Softplus}} & 1500 & 0.993 / 0.937 / 0.934 & 0.986 / 0.909 / 0.916 & 4.96 / 2.66 / 2.20 & 2.0 / 17.5 / 1.2 \\
 &  & 3000 & 0.988 / 0.935 / 0.926 & 0.980 / 0.904 / 0.911 & 4.52 / 2.43 / 2.15 & 4.0 / 30.6 / 1.7 \\
 &  & 4500 & 0.986 / 0.933 / 0.921 & 0.977 / 0.898 / 0.904 & 4.27 / 2.36 / 2.12 & 5.9 / 42.0 / 2.1 \\
 &  & 6000 & 0.981 / 0.935 / 0.920 & 0.972 / 0.905 / 0.907 & 3.90 / 2.31 / 2.09 & 7.8 / 53.7 / 2.3 \\
\bottomrule
\end{tabular}
}
\vspace{2pt}
\caption{Classification results when varying the total sample size $N$. Details are otherwise the same as in Table~\ref{tab:scaling_classification_balance}.}
\label{tab:scaling_classification_n_total}
\end{table*}

\begin{table*}[t]
\centering
\footnotesize
\setlength{\tabcolsep}{4pt}
\renewcommand{\arraystretch}{0.98}
\resizebox{\textwidth}{!}{%
\begin{tabular}{@{}lllcccccc@{}}
\toprule
Suite & Setting & $\rho$ & Fit sources & Fit pooled & Learn $\lambda$ & Calibrate & Test eval & Total runtime \\
\midrule
\multirow{4}{*}{\textsc{Linear}} & \multirow{4}{*}{$\tau=2.5$} & 1 & 5.6 / 5.5 / 0.0 & -- / 3.4 / 0.0 & -- / 42.7 / 2.0 & 0.1 / 0.3 / 0.0 & 1.4 / 1.5 / 0.3 & 7.1 / 53.3 / 2.3 \\
 &  & 2 & 5.5 / 5.5 / 0.0 & -- / 3.4 / 0.0 & -- / 40.6 / 2.0 & 0.1 / 0.3 / 0.0 & 1.4 / 1.5 / 0.3 & 7.0 / 51.3 / 2.3 \\
 &  & 4 & 4.7 / 4.7 / 0.0 & -- / 3.4 / 0.0 & -- / 40.8 / 2.5 & 0.1 / 0.3 / 0.0 & 1.3 / 1.4 / 0.3 & 6.1 / 50.6 / 2.8 \\
 &  & 8 & 4.1 / 4.1 / 0.0 & -- / 3.4 / 0.0 & -- / 41.1 / 3.2 & 0.1 / 0.3 / 0.0 & 1.2 / 1.4 / 0.3 & 5.4 / 50.1 / 3.5 \\
\midrule
\multirow{20}{*}{\textsc{Temperature}} & \multirow{4}{*}{$\tau=0.5$} & 1 & 6.7 / 6.6 / 0.0 & -- / 3.6 / 0.0 & -- / 42.6 / 1.1 & 0.1 / 0.3 / 0.0 & 1.6 / 1.6 / 0.3 & 8.4 / 54.7 / 1.4 \\
 &  & 2 & 6.4 / 6.4 / 0.0 & -- / 3.6 / 0.0 & -- / 40.5 / 1.2 & 0.1 / 0.3 / 0.0 & 1.5 / 1.7 / 0.4 & 8.0 / 52.4 / 1.5 \\
 &  & 4 & 5.5 / 5.5 / 0.0 & -- / 3.6 / 0.0 & -- / 41.0 / 1.2 & 0.1 / 0.3 / 0.0 & 1.5 / 1.5 / 0.3 & 7.1 / 52.0 / 1.5 \\
 &  & 8 & 4.7 / 4.7 / 0.0 & -- / 3.6 / 0.0 & -- / 41.5 / 1.3 & 0.1 / 0.3 / 0.0 & 1.3 / 1.4 / 0.3 & 6.2 / 51.6 / 1.6 \\
\addlinespace[0.15em]
 & \multirow{4}{*}{$\tau=1.5$} & 1 & 6.5 / 6.4 / 0.0 & -- / 3.6 / 0.0 & -- / 42.8 / 1.9 & 0.1 / 0.3 / 0.0 & 1.5 / 1.6 / 0.3 & 8.1 / 54.7 / 2.2 \\
 &  & 2 & 6.1 / 6.1 / 0.0 & -- / 3.6 / 0.0 & -- / 40.7 / 2.0 & 0.1 / 0.3 / 0.0 & 1.5 / 1.6 / 0.3 & 7.7 / 52.4 / 2.4 \\
 &  & 4 & 5.3 / 5.3 / 0.0 & -- / 3.6 / 0.0 & -- / 41.1 / 2.1 & 0.1 / 0.3 / 0.0 & 1.4 / 1.5 / 0.3 & 6.8 / 51.7 / 2.4 \\
 &  & 8 & 4.6 / 4.6 / 0.0 & -- / 3.6 / 0.0 & -- / 41.4 / 2.7 & 0.1 / 0.3 / 0.0 & 1.3 / 1.4 / 0.3 & 6.0 / 51.3 / 3.0 \\
\addlinespace[0.15em]
 & \multirow{4}{*}{$\tau=2.5$} & 1 & 6.5 / 6.3 / 0.0 & -- / 3.7 / 0.0 & -- / 43.4 / 2.0 & 0.1 / 0.3 / 0.0 & 1.5 / 1.6 / 0.3 & 8.1 / 55.4 / 2.3 \\
 &  & 2 & 6.0 / 6.0 / 0.0 & -- / 3.7 / 0.0 & -- / 41.3 / 2.1 & 0.1 / 0.3 / 0.0 & 1.5 / 1.6 / 0.3 & 7.6 / 52.9 / 2.4 \\
 &  & 4 & 5.2 / 5.2 / 0.0 & -- / 3.7 / 0.0 & -- / 41.6 / 2.5 & 0.1 / 0.3 / 0.0 & 1.4 / 1.4 / 0.3 & 6.7 / 52.2 / 2.8 \\
 &  & 8 & 4.4 / 4.4 / 0.0 & -- / 3.7 / 0.0 & -- / 41.7 / 3.4 & 0.1 / 0.3 / 0.0 & 1.2 / 1.4 / 0.3 & 5.7 / 51.5 / 3.7 \\
\addlinespace[0.15em]
 & \multirow{4}{*}{$\tau=3.5$} & 1 & 5.9 / 5.8 / 0.0 & -- / 3.7 / 0.0 & -- / 43.7 / 1.8 & 0.1 / 0.3 / 0.0 & 1.4 / 1.5 / 0.3 & 7.4 / 55.0 / 2.1 \\
 &  & 2 & 5.5 / 5.5 / 0.0 & -- / 3.7 / 0.0 & -- / 41.5 / 1.8 & 0.1 / 0.3 / 0.0 & 1.4 / 1.5 / 0.3 & 7.0 / 52.5 / 2.1 \\
 &  & 4 & 5.0 / 4.9 / 0.0 & -- / 3.7 / 0.0 & -- / 41.7 / 2.3 & 0.1 / 0.3 / 0.0 & 1.3 / 1.4 / 0.3 & 6.4 / 52.0 / 2.6 \\
 &  & 8 & 4.2 / 4.2 / 0.0 & -- / 3.7 / 0.0 & -- / 41.7 / 3.2 & 0.1 / 0.3 / 0.0 & 1.2 / 1.3 / 0.3 & 5.4 / 51.1 / 3.5 \\
\addlinespace[0.15em]
 & \multirow{4}{*}{$\tau=4.5$} & 1 & 5.8 / 5.7 / 0.0 & -- / 3.7 / 0.0 & -- / 43.6 / 1.6 & 0.1 / 0.3 / 0.0 & 1.4 / 1.5 / 0.3 & 7.3 / 54.7 / 1.8 \\
 &  & 2 & 5.5 / 5.5 / 0.0 & -- / 3.7 / 0.0 & -- / 41.4 / 1.5 & 0.1 / 0.3 / 0.0 & 1.4 / 1.5 / 0.3 & 6.9 / 52.3 / 1.8 \\
 &  & 4 & 4.8 / 4.7 / 0.0 & -- / 3.7 / 0.0 & -- / 41.5 / 1.9 & 0.1 / 0.3 / 0.0 & 1.3 / 1.3 / 0.3 & 6.1 / 51.5 / 2.1 \\
 &  & 8 & 4.0 / 4.0 / 0.0 & -- / 3.7 / 0.0 & -- / 41.6 / 2.8 & 0.1 / 0.3 / 0.0 & 1.2 / 1.3 / 0.3 & 5.3 / 50.9 / 3.1 \\
\midrule
\multirow{16}{*}{\textsc{Nonlinear}} & \multirow{4}{*}{\textsc{Linear}} & 1 & 6.3 / 6.1 / 0.0 & -- / 3.7 / 0.0 & -- / 43.2 / 2.1 & 0.1 / 0.3 / 0.0 & 1.5 / 1.5 / 0.3 & 7.9 / 54.8 / 2.3 \\
 &  & 2 & 5.9 / 5.9 / 0.0 & -- / 3.7 / 0.0 & -- / 41.3 / 2.1 & 0.1 / 0.3 / 0.0 & 1.4 / 1.5 / 0.3 & 7.5 / 52.7 / 2.4 \\
 &  & 4 & 5.2 / 5.2 / 0.0 & -- / 3.7 / 0.0 & -- / 41.5 / 2.6 & 0.1 / 0.3 / 0.0 & 1.4 / 1.4 / 0.3 & 6.7 / 52.1 / 2.9 \\
 &  & 8 & 4.4 / 4.4 / 0.0 & -- / 3.7 / 0.0 & -- / 41.6 / 3.2 & 0.1 / 0.3 / 0.0 & 1.2 / 1.3 / 0.3 & 5.7 / 51.3 / 3.4 \\
\addlinespace[0.15em]
 & \multirow{4}{*}{\textsc{Interaction}} & 1 & 6.2 / 6.1 / 0.0 & -- / 3.7 / 0.0 & -- / 41.1 / 2.1 & 0.1 / 0.3 / 0.0 & 1.5 / 1.6 / 0.3 & 7.8 / 52.9 / 2.4 \\
 &  & 2 & 6.0 / 6.0 / 0.0 & -- / 3.7 / 0.0 & -- / 41.3 / 2.1 & 0.1 / 0.3 / 0.0 & 1.5 / 1.6 / 0.3 & 7.6 / 53.1 / 2.4 \\
 &  & 4 & 5.3 / 5.3 / 0.0 & -- / 3.7 / 0.0 & -- / 41.6 / 2.5 & 0.1 / 0.3 / 0.0 & 1.4 / 1.5 / 0.3 & 6.8 / 52.5 / 2.8 \\
 &  & 8 & 4.6 / 4.6 / 0.0 & -- / 3.7 / 0.0 & -- / 41.8 / 3.1 & 0.1 / 0.3 / 0.0 & 1.3 / 1.4 / 0.3 & 6.0 / 51.9 / 3.4 \\
\addlinespace[0.15em]
 & \multirow{4}{*}{\textsc{Sinusoid}} & 1 & 6.8 / 6.7 / 0.0 & -- / 4.0 / 0.0 & -- / 43.8 / 2.1 & 0.1 / 0.4 / 0.0 & 1.5 / 1.6 / 0.3 & 8.5 / 56.4 / 2.4 \\
 &  & 2 & 6.5 / 6.4 / 0.0 & -- / 4.0 / 0.0 & -- / 41.8 / 2.1 & 0.1 / 0.3 / 0.0 & 1.5 / 1.6 / 0.3 & 8.1 / 54.1 / 2.4 \\
 &  & 4 & 5.6 / 5.6 / 0.0 & -- / 4.0 / 0.0 & -- / 42.0 / 2.4 & 0.1 / 0.3 / 0.0 & 1.4 / 1.5 / 0.3 & 7.1 / 53.4 / 2.7 \\
 &  & 8 & 4.7 / 4.7 / 0.0 & -- / 4.0 / 0.0 & -- / 42.2 / 3.3 & 0.1 / 0.3 / 0.0 & 1.3 / 1.3 / 0.3 & 6.1 / 52.5 / 3.6 \\
\addlinespace[0.15em]
 & \multirow{4}{*}{\textsc{Softplus}} & 1 & 6.3 / 6.2 / 0.0 & -- / 3.8 / 0.0 & -- / 41.4 / 2.0 & 0.1 / 0.3 / 0.0 & 1.5 / 1.6 / 0.3 & 7.8 / 53.4 / 2.3 \\
 &  & 2 & 5.9 / 5.9 / 0.0 & -- / 3.9 / 0.0 & -- / 41.7 / 2.2 & 0.1 / 0.3 / 0.0 & 1.4 / 1.6 / 0.3 & 7.4 / 53.4 / 2.5 \\
 &  & 4 & 5.2 / 5.2 / 0.0 & -- / 3.8 / 0.0 & -- / 42.0 / 2.4 & 0.1 / 0.3 / 0.0 & 1.3 / 1.5 / 0.3 & 6.6 / 52.7 / 2.7 \\
 &  & 8 & 4.4 / 4.4 / 0.0 & -- / 3.7 / 0.0 & -- / 42.2 / 3.3 & 0.1 / 0.3 / 0.0 & 1.2 / 1.4 / 0.3 & 5.7 / 51.9 / 3.6 \\
\bottomrule
\end{tabular}
}
\vspace{3pt}
\caption{Classification runtime breakdown when varying the balance ratio $\rho$. Each cell reports the mean runtime in seconds over 100 trials in the order \textsc{Baseline-agg} / \textsc{MDCP} / \textsc{Oracle}; ``--'' indicates that the method does not use that stage. Details are otherwise the same as in Table~\ref{tab:scaling_classification_balance}.}
\label{tab:scaling_runtime_classification_balance}
\end{table*}

\begin{table*}[t]
\centering
\footnotesize
\setlength{\tabcolsep}{4pt}
\renewcommand{\arraystretch}{0.98}
\resizebox{\textwidth}{!}{%
\begin{tabular}{@{}lllcccccc@{}}
\toprule
Suite & Setting & $K$ & Fit sources & Fit pooled & Learn $\lambda$ & Calibrate & Test eval & Total runtime \\
\midrule
\multirow{4}{*}{\textsc{Linear}} & \multirow{4}{*}{$\tau=2.5$} & 2 & 5.1 / 5.0 / 0.0 & -- / 3.3 / 0.0 & -- / 40.5 / 1.5 & 0.1 / 0.2 / 0.0 & 1.0 / 1.1 / 0.2 & 6.2 / 50.1 / 1.7 \\
 &  & 3 & 5.5 / 5.5 / 0.0 & -- / 3.4 / 0.0 & -- / 40.6 / 2.0 & 0.1 / 0.3 / 0.0 & 1.4 / 1.5 / 0.3 & 7.0 / 51.2 / 2.3 \\
 &  & 5 & 5.9 / 5.8 / 0.0 & -- / 3.4 / 0.0 & -- / 42.3 / 3.3 & 0.1 / 0.6 / 0.0 & 2.1 / 2.2 / 0.4 & 8.1 / 54.3 / 3.7 \\
 &  & 10 & 6.3 / 6.3 / 0.0 & -- / 3.4 / 0.0 & -- / 44.4 / 9.5 & 0.2 / 1.8 / 0.0 & 3.6 / 3.7 / 0.8 & 10.1 / 59.6 / 10.3 \\
\midrule
\multirow{20}{*}{\textsc{Temperature}} & \multirow{4}{*}{$\tau=0.5$} & 2 & 6.0 / 5.9 / 0.0 & -- / 3.6 / 0.0 & -- / 38.2 / 1.1 & 0.1 / 0.2 / 0.0 & 1.1 / 1.2 / 0.2 & 7.2 / 49.1 / 1.3 \\
 &  & 3 & 6.6 / 6.6 / 0.0 & -- / 3.6 / 0.0 & -- / 40.4 / 1.1 & 0.1 / 0.4 / 0.0 & 1.6 / 1.6 / 0.3 & 8.3 / 52.6 / 1.4 \\
 &  & 5 & 7.2 / 7.1 / 0.0 & -- / 3.6 / 0.0 & -- / 42.6 / 1.3 & 0.1 / 0.7 / 0.0 & 2.4 / 2.5 / 0.4 & 9.7 / 56.5 / 1.8 \\
 &  & 10 & 7.8 / 7.7 / 0.0 & -- / 3.6 / 0.0 & -- / 45.0 / 2.1 & 0.2 / 2.1 / 0.0 & 4.1 / 4.3 / 0.8 & 12.1 / 62.7 / 2.9 \\
\addlinespace[0.15em]
 & \multirow{4}{*}{$\tau=1.5$} & 2 & 5.8 / 5.7 / 0.0 & -- / 3.6 / 0.0 & -- / 40.1 / 1.5 & 0.1 / 0.2 / 0.0 & 1.1 / 1.1 / 0.2 & 7.0 / 50.7 / 1.7 \\
 &  & 3 & 6.4 / 6.4 / 0.0 & -- / 3.6 / 0.0 & -- / 40.7 / 1.9 & 0.1 / 0.3 / 0.0 & 1.5 / 1.6 / 0.3 & 8.0 / 52.6 / 2.2 \\
 &  & 5 & 6.8 / 6.7 / 0.0 & -- / 3.6 / 0.0 & -- / 42.7 / 2.9 & 0.1 / 0.7 / 0.0 & 2.3 / 2.4 / 0.4 & 9.2 / 56.1 / 3.3 \\
 &  & 10 & 7.2 / 7.1 / 0.0 & -- / 3.6 / 0.0 & -- / 45.1 / 6.5 & 0.2 / 1.9 / 0.0 & 3.9 / 4.1 / 0.8 & 11.3 / 61.8 / 7.3 \\
\addlinespace[0.15em]
 & \multirow{4}{*}{$\tau=2.5$} & 2 & 5.7 / 5.6 / 0.0 & -- / 3.7 / 0.0 & -- / 40.9 / 2.6 & 0.1 / 0.2 / 0.0 & 1.1 / 1.1 / 0.2 & 6.9 / 51.5 / 2.8 \\
 &  & 3 & 6.2 / 6.2 / 0.0 & -- / 3.7 / 0.0 & -- / 41.0 / 2.0 & 0.1 / 0.3 / 0.0 & 1.5 / 1.6 / 0.3 & 7.8 / 52.8 / 2.3 \\
 &  & 5 & 6.5 / 6.4 / 0.0 & -- / 3.7 / 0.0 & -- / 43.2 / 3.4 & 0.1 / 0.7 / 0.0 & 2.2 / 2.3 / 0.4 & 8.8 / 56.3 / 3.9 \\
 &  & 10 & 6.8 / 6.8 / 0.0 & -- / 3.7 / 0.0 & -- / 45.4 / 9.6 & 0.2 / 1.9 / 0.0 & 3.8 / 3.9 / 0.8 & 10.8 / 61.7 / 10.4 \\
\addlinespace[0.15em]
 & \multirow{4}{*}{$\tau=3.5$} & 2 & 5.5 / 5.4 / 0.0 & -- / 3.7 / 0.0 & -- / 41.1 / 1.3 & 0.1 / 0.2 / 0.0 & 1.0 / 1.1 / 0.2 & 6.6 / 51.4 / 1.5 \\
 &  & 3 & 5.8 / 5.8 / 0.0 & -- / 3.7 / 0.0 & -- / 41.2 / 1.8 & 0.1 / 0.3 / 0.0 & 1.4 / 1.5 / 0.3 & 7.4 / 52.5 / 2.1 \\
 &  & 5 & 6.3 / 6.2 / 0.0 & -- / 3.7 / 0.0 & -- / 43.1 / 2.9 & 0.1 / 0.6 / 0.0 & 2.2 / 2.2 / 0.4 & 8.6 / 55.9 / 3.4 \\
 &  & 10 & 6.4 / 6.4 / 0.0 & -- / 3.7 / 0.0 & -- / 45.3 / 10.1 & 0.2 / 1.8 / 0.0 & 3.6 / 3.7 / 0.8 & 10.2 / 60.9 / 11.0 \\
\addlinespace[0.15em]
 & \multirow{4}{*}{$\tau=4.5$} & 2 & 5.5 / 5.4 / 0.0 & -- / 3.7 / 0.0 & -- / 41.4 / 1.3 & 0.1 / 0.2 / 0.0 & 1.0 / 1.1 / 0.2 & 6.7 / 51.8 / 1.5 \\
 &  & 3 & 5.7 / 5.7 / 0.0 & -- / 3.7 / 0.0 & -- / 41.4 / 1.5 & 0.1 / 0.3 / 0.0 & 1.4 / 1.5 / 0.3 & 7.2 / 52.5 / 1.8 \\
 &  & 5 & 5.9 / 5.9 / 0.0 & -- / 3.7 / 0.0 & -- / 43.2 / 2.1 & 0.1 / 0.6 / 0.0 & 2.1 / 2.2 / 0.4 & 8.1 / 55.6 / 2.6 \\
 &  & 10 & 6.2 / 6.2 / 0.0 & -- / 3.7 / 0.0 & -- / 45.4 / 5.9 & 0.2 / 1.7 / 0.0 & 3.5 / 3.6 / 0.8 & 9.9 / 60.6 / 6.7 \\
\midrule
\multirow{16}{*}{\textsc{Nonlinear}} & \multirow{4}{*}{\textsc{Linear}} & 2 & 5.7 / 5.5 / 0.0 & -- / 3.7 / 0.0 & -- / 40.9 / 1.5 & 0.1 / 0.2 / 0.0 & 1.0 / 1.1 / 0.2 & 6.8 / 51.5 / 1.7 \\
 &  & 3 & 6.0 / 6.1 / 0.0 & -- / 3.7 / 0.0 & -- / 41.2 / 2.0 & 0.1 / 0.3 / 0.0 & 1.5 / 1.5 / 0.3 & 7.7 / 52.9 / 2.3 \\
 &  & 5 & 6.5 / 6.4 / 0.0 & -- / 3.7 / 0.0 & -- / 43.2 / 3.6 & 0.1 / 0.6 / 0.0 & 2.2 / 2.3 / 0.4 & 8.9 / 56.3 / 4.0 \\
 &  & 10 & 6.7 / 6.8 / 0.0 & -- / 3.7 / 0.0 & -- / 45.5 / 8.9 & 0.2 / 1.8 / 0.0 & 3.8 / 3.9 / 0.8 & 10.6 / 61.7 / 9.7 \\
\addlinespace[0.15em]
 & \multirow{4}{*}{\textsc{Interaction}} & 2 & 5.6 / 5.6 / 0.0 & -- / 3.7 / 0.0 & -- / 39.0 / 1.5 & 0.1 / 0.2 / 0.0 & 1.1 / 1.1 / 0.2 & 6.8 / 49.6 / 1.7 \\
 &  & 3 & 6.2 / 6.1 / 0.0 & -- / 3.7 / 0.0 & -- / 41.3 / 2.1 & 0.1 / 0.3 / 0.0 & 1.5 / 1.6 / 0.3 & 7.8 / 53.0 / 2.3 \\
 &  & 5 & 6.4 / 6.5 / 0.0 & -- / 3.7 / 0.0 & -- / 43.3 / 3.5 & 0.1 / 0.6 / 0.0 & 2.2 / 2.3 / 0.4 & 8.7 / 56.4 / 4.0 \\
 &  & 10 & 6.9 / 6.9 / 0.0 & -- / 3.7 / 0.0 & -- / 45.5 / 9.6 & 0.2 / 1.9 / 0.0 & 3.8 / 4.0 / 0.8 & 10.9 / 62.0 / 10.4 \\
\addlinespace[0.15em]
 & \multirow{4}{*}{\textsc{Sinusoid}} & 2 & 6.0 / 5.8 / 0.0 & -- / 4.0 / 0.0 & -- / 41.0 / 1.5 & 0.1 / 0.2 / 0.0 & 1.1 / 1.1 / 0.2 & 7.1 / 52.2 / 1.7 \\
 &  & 3 & 6.6 / 6.6 / 0.0 & -- / 4.0 / 0.0 & -- / 41.8 / 2.2 & 0.1 / 0.3 / 0.0 & 1.6 / 1.6 / 0.3 & 8.3 / 54.4 / 2.5 \\
 &  & 5 & 6.9 / 6.9 / 0.0 & -- / 4.0 / 0.0 & -- / 43.9 / 3.4 & 0.1 / 0.7 / 0.0 & 2.3 / 2.4 / 0.4 & 9.4 / 57.8 / 3.8 \\
 &  & 10 & 7.1 / 7.2 / 0.0 & -- / 4.0 / 0.0 & -- / 46.3 / 10.1 & 0.2 / 2.0 / 0.0 & 3.9 / 4.1 / 0.8 & 11.2 / 63.5 / 10.9 \\
\addlinespace[0.15em]
 & \multirow{4}{*}{\textsc{Softplus}} & 2 & 5.4 / 5.4 / 0.0 & -- / 3.8 / 0.0 & -- / 38.2 / 1.5 & 0.1 / 0.2 / 0.0 & 1.0 / 1.1 / 0.2 & 6.5 / 48.6 / 1.7 \\
 &  & 3 & 6.2 / 6.1 / 0.0 & -- / 3.8 / 0.0 & -- / 41.6 / 2.1 & 0.1 / 0.3 / 0.0 & 1.4 / 1.5 / 0.3 & 7.7 / 53.4 / 2.4 \\
 &  & 5 & 6.3 / 6.4 / 0.0 & -- / 3.9 / 0.0 & -- / 43.9 / 3.5 & 0.1 / 0.7 / 0.0 & 2.1 / 2.2 / 0.4 & 8.6 / 57.0 / 3.9 \\
 &  & 10 & 6.5 / 6.5 / 0.0 & -- / 3.8 / 0.0 & -- / 46.2 / 9.8 & 0.2 / 1.7 / 0.0 & 3.7 / 3.8 / 0.8 & 10.3 / 62.0 / 10.6 \\
\bottomrule
\end{tabular}
}
\vspace{2pt}
\caption{Classification runtime breakdown when varying the number of sources $K$. Details are otherwise the same as in Table~\ref{tab:scaling_runtime_classification_balance}.}
\label{tab:scaling_runtime_classification_k}
\end{table*}

\begin{table*}[t]
\centering
\footnotesize
\setlength{\tabcolsep}{4pt}
\renewcommand{\arraystretch}{0.98}
\resizebox{\textwidth}{!}{%
\begin{tabular}{@{}lllcccccc@{}}
\toprule
Suite & Setting & $N$ & Fit sources & Fit pooled & Learn $\lambda$ & Calibrate & Test eval & Total runtime \\
\midrule
\multirow{4}{*}{\textsc{Linear}} & \multirow{4}{*}{$\tau=2.5$} & 1500 & 1.6 / 1.5 / 0.0 & -- / 1.7 / 0.0 & -- / 15.6 / 1.1 & 0.0 / 0.1 / 0.0 & 0.3 / 0.3 / 0.1 & 2.0 / 19.3 / 1.2 \\
 &  & 3000 & 3.0 / 3.0 / 0.0 & -- / 2.9 / 0.0 & -- / 22.6 / 1.6 & 0.1 / 0.2 / 0.0 & 0.7 / 0.7 / 0.1 & 3.7 / 29.4 / 1.7 \\
 &  & 4500 & 4.3 / 4.3 / 0.0 & -- / 3.2 / 0.0 & -- / 31.7 / 1.9 & 0.1 / 0.3 / 0.0 & 1.0 / 1.2 / 0.2 & 5.4 / 40.6 / 2.1 \\
 &  & 6000 & 5.4 / 5.4 / 0.0 & -- / 3.4 / 0.0 & -- / 40.5 / 2.0 & 0.1 / 0.3 / 0.0 & 1.4 / 1.5 / 0.3 & 6.9 / 51.1 / 2.3 \\
\midrule
\multirow{20}{*}{\textsc{Temperature}} & \multirow{4}{*}{$\tau=0.5$} & 1500 & 2.0 / 1.9 / 0.0 & -- / 1.8 / 0.0 & -- / 15.7 / 0.5 & 0.1 / 0.2 / 0.0 & 0.4 / 0.4 / 0.1 & 2.4 / 19.9 / 0.6 \\
 &  & 3000 & 3.6 / 3.6 / 0.0 & -- / 3.1 / 0.0 & -- / 22.5 / 0.8 & 0.1 / 0.2 / 0.0 & 0.8 / 0.8 / 0.1 & 4.5 / 30.3 / 0.9 \\
 &  & 4500 & 5.1 / 5.1 / 0.0 & -- / 3.4 / 0.0 & -- / 31.7 / 1.0 & 0.1 / 0.3 / 0.0 & 1.2 / 1.3 / 0.2 & 6.4 / 41.8 / 1.2 \\
 &  & 6000 & 6.6 / 6.6 / 0.0 & -- / 3.6 / 0.0 & -- / 40.3 / 1.1 & 0.1 / 0.3 / 0.0 & 1.6 / 1.7 / 0.3 & 8.3 / 52.5 / 1.4 \\
\addlinespace[0.15em]
 & \multirow{4}{*}{$\tau=1.5$} & 1500 & 1.9 / 1.8 / 0.0 & -- / 1.8 / 0.0 & -- / 15.5 / 0.9 & 0.1 / 0.2 / 0.0 & 0.3 / 0.4 / 0.1 & 2.3 / 19.6 / 1.0 \\
 &  & 3000 & 3.3 / 3.3 / 0.0 & -- / 3.1 / 0.0 & -- / 22.6 / 1.4 & 0.1 / 0.2 / 0.0 & 0.7 / 0.7 / 0.1 & 4.1 / 30.0 / 1.5 \\
 &  & 4500 & 4.9 / 4.9 / 0.0 & -- / 3.4 / 0.0 & -- / 31.8 / 1.7 & 0.1 / 0.3 / 0.0 & 1.1 / 1.2 / 0.2 & 6.1 / 41.6 / 1.9 \\
 &  & 6000 & 6.3 / 6.3 / 0.0 & -- / 3.6 / 0.0 & -- / 40.6 / 1.9 & 0.1 / 0.3 / 0.0 & 1.5 / 1.6 / 0.3 & 8.0 / 52.4 / 2.2 \\
\addlinespace[0.15em]
 & \multirow{4}{*}{$\tau=2.5$} & 1500 & 1.7 / 1.6 / 0.0 & -- / 1.8 / 0.0 & -- / 15.8 / 1.0 & 0.1 / 0.2 / 0.0 & 0.3 / 0.3 / 0.1 & 2.1 / 19.8 / 1.1 \\
 &  & 3000 & 3.3 / 3.3 / 0.0 & -- / 3.2 / 0.0 & -- / 23.1 / 1.6 & 0.1 / 0.2 / 0.0 & 0.7 / 0.7 / 0.1 & 4.1 / 30.6 / 1.7 \\
 &  & 4500 & 4.8 / 4.8 / 0.0 & -- / 3.6 / 0.0 & -- / 32.3 / 1.8 & 0.1 / 0.3 / 0.0 & 1.1 / 1.2 / 0.2 & 5.9 / 42.1 / 2.0 \\
 &  & 6000 & 6.3 / 6.3 / 0.0 & -- / 3.7 / 0.0 & -- / 41.2 / 2.0 & 0.1 / 0.3 / 0.0 & 1.5 / 1.6 / 0.3 & 7.9 / 53.1 / 2.3 \\
\addlinespace[0.15em]
 & \multirow{4}{*}{$\tau=3.5$} & 1500 & 1.7 / 1.6 / 0.0 & -- / 1.8 / 0.0 & -- / 15.8 / 1.1 & 0.1 / 0.2 / 0.0 & 0.3 / 0.3 / 0.1 & 2.1 / 19.8 / 1.1 \\
 &  & 3000 & 3.1 / 3.1 / 0.0 & -- / 3.2 / 0.0 & -- / 23.0 / 1.6 & 0.1 / 0.2 / 0.0 & 0.7 / 0.7 / 0.1 & 3.8 / 30.3 / 1.7 \\
 &  & 4500 & 4.5 / 4.5 / 0.0 & -- / 3.6 / 0.0 & -- / 32.4 / 1.6 & 0.1 / 0.3 / 0.0 & 1.0 / 1.2 / 0.2 & 5.7 / 42.0 / 1.8 \\
 &  & 6000 & 5.8 / 5.8 / 0.0 & -- / 3.7 / 0.0 & -- / 41.4 / 1.8 & 0.1 / 0.3 / 0.0 & 1.4 / 1.5 / 0.3 & 7.3 / 52.7 / 2.1 \\
\addlinespace[0.15em]
 & \multirow{4}{*}{$\tau=4.5$} & 1500 & 1.7 / 1.6 / 0.0 & -- / 1.8 / 0.0 & -- / 15.9 / 0.9 & 0.0 / 0.1 / 0.0 & 0.3 / 0.3 / 0.1 & 2.1 / 19.8 / 1.0 \\
 &  & 3000 & 2.9 / 3.0 / 0.0 & -- / 3.2 / 0.0 & -- / 23.1 / 1.1 & 0.1 / 0.2 / 0.0 & 0.6 / 0.7 / 0.1 & 3.7 / 30.2 / 1.3 \\
 &  & 4500 & 4.3 / 4.3 / 0.0 & -- / 3.5 / 0.0 & -- / 32.3 / 1.4 & 0.1 / 0.3 / 0.0 & 1.0 / 1.1 / 0.2 & 5.4 / 41.6 / 1.6 \\
 &  & 6000 & 5.8 / 5.7 / 0.0 & -- / 3.6 / 0.0 & -- / 41.5 / 1.5 & 0.1 / 0.3 / 0.0 & 1.4 / 1.5 / 0.3 & 7.3 / 52.6 / 1.8 \\
\midrule
\multirow{16}{*}{\textsc{Nonlinear}} & \multirow{4}{*}{\textsc{Linear}} & 1500 & 1.8 / 1.7 / 0.0 & -- / 1.9 / 0.0 & -- / 15.7 / 1.1 & 0.1 / 0.2 / 0.0 & 0.3 / 0.4 / 0.1 & 2.2 / 19.8 / 1.2 \\
 &  & 3000 & 3.2 / 3.2 / 0.0 & -- / 3.2 / 0.0 & -- / 22.9 / 1.7 & 0.1 / 0.2 / 0.0 & 0.7 / 0.7 / 0.1 & 3.9 / 30.2 / 1.8 \\
 &  & 4500 & 4.7 / 4.7 / 0.0 & -- / 3.5 / 0.0 & -- / 32.2 / 1.8 & 0.1 / 0.3 / 0.0 & 1.1 / 1.1 / 0.2 & 5.8 / 41.8 / 2.0 \\
 &  & 6000 & 6.1 / 6.1 / 0.0 & -- / 3.7 / 0.0 & -- / 41.1 / 2.1 & 0.1 / 0.3 / 0.0 & 1.5 / 1.5 / 0.3 & 7.7 / 52.8 / 2.4 \\
\addlinespace[0.15em]
 & \multirow{4}{*}{\textsc{Interaction}} & 1500 & 1.7 / 1.7 / 0.0 & -- / 1.8 / 0.0 & -- / 13.7 / 1.2 & 0.1 / 0.2 / 0.0 & 0.3 / 0.4 / 0.1 & 2.1 / 17.8 / 1.3 \\
 &  & 3000 & 3.3 / 3.3 / 0.0 & -- / 3.2 / 0.0 & -- / 23.0 / 1.6 & 0.1 / 0.2 / 0.0 & 0.7 / 0.7 / 0.1 & 4.1 / 30.5 / 1.8 \\
 &  & 4500 & 4.9 / 4.9 / 0.0 & -- / 3.5 / 0.0 & -- / 32.2 / 1.9 & 0.1 / 0.3 / 0.0 & 1.1 / 1.2 / 0.2 & 6.1 / 42.1 / 2.2 \\
 &  & 6000 & 6.2 / 6.2 / 0.0 & -- / 3.7 / 0.0 & -- / 41.2 / 2.0 & 0.1 / 0.3 / 0.0 & 1.5 / 1.6 / 0.3 & 7.8 / 53.0 / 2.3 \\
\addlinespace[0.15em]
 & \multirow{4}{*}{\textsc{Sinusoid}} & 1500 & 1.8 / 1.7 / 0.0 & -- / 1.9 / 0.0 & -- / 15.7 / 1.2 & 0.1 / 0.2 / 0.0 & 0.3 / 0.4 / 0.1 & 2.2 / 19.9 / 1.3 \\
 &  & 3000 & 3.3 / 3.4 / 0.0 & -- / 3.5 / 0.0 & -- / 23.2 / 1.7 & 0.1 / 0.2 / 0.0 & 0.7 / 0.7 / 0.1 & 4.1 / 31.0 / 1.9 \\
 &  & 4500 & 4.9 / 4.9 / 0.0 & -- / 3.8 / 0.0 & -- / 32.6 / 1.9 & 0.1 / 0.3 / 0.0 & 1.1 / 1.2 / 0.2 & 6.1 / 42.7 / 2.1 \\
 &  & 6000 & 6.6 / 6.6 / 0.0 & -- / 4.0 / 0.0 & -- / 41.7 / 2.1 & 0.1 / 0.3 / 0.0 & 1.5 / 1.6 / 0.3 & 8.2 / 54.2 / 2.4 \\
\addlinespace[0.15em]
 & \multirow{4}{*}{\textsc{Softplus}} & 1500 & 1.6 / 1.6 / 0.0 & -- / 1.8 / 0.0 & -- / 13.6 / 1.1 & 0.1 / 0.2 / 0.0 & 0.3 / 0.3 / 0.1 & 2.0 / 17.5 / 1.2 \\
 &  & 3000 & 3.2 / 3.3 / 0.0 & -- / 3.3 / 0.0 & -- / 23.1 / 1.6 & 0.1 / 0.2 / 0.0 & 0.7 / 0.7 / 0.1 & 4.0 / 30.6 / 1.7 \\
 &  & 4500 & 4.7 / 4.7 / 0.0 & -- / 3.6 / 0.0 & -- / 32.4 / 1.9 & 0.1 / 0.3 / 0.0 & 1.1 / 1.1 / 0.2 & 5.9 / 42.0 / 2.1 \\
 &  & 6000 & 6.2 / 6.2 / 0.0 & -- / 3.9 / 0.0 & -- / 41.7 / 2.0 & 0.1 / 0.3 / 0.0 & 1.5 / 1.6 / 0.3 & 7.8 / 53.7 / 2.3 \\
\bottomrule
\end{tabular}
}
\vspace{2pt}
\caption{Classification runtime breakdown when varying the total sample size $N$. Details are otherwise the same as in Table~\ref{tab:scaling_runtime_classification_balance}.}
\label{tab:scaling_runtime_classification_n_total}
\end{table*}

\begin{table*}[t]
\centering
\footnotesize
\setlength{\tabcolsep}{4pt}
\renewcommand{\arraystretch}{0.98}
\resizebox{\textwidth}{!}{%
\begin{tabular}{@{}llccccc@{}}
\toprule
Suite & Setting & $\rho$ & Overall coverage & Worst-case coverage & Avg interval width & Total runtime \\
\midrule
\multirow{4}{*}{\textsc{Linear}} & \multirow{4}{*}{$\tau=2.5$} & 1 & 0.973 / 0.927 / 0.920 & 0.961 / 0.903 / 0.905 & 7.50 / 5.87 / 4.79 & 2.3 / 9.2 / 1.8 \\
 &  & 2 & 0.973 / 0.930 / 0.921 & 0.959 / 0.903 / 0.907 & 7.85 / 6.23 / 5.01 & 2.1 / 7.3 / 1.9 \\
 &  & 4 & 0.975 / 0.931 / 0.924 & 0.959 / 0.909 / 0.909 & 7.92 / 6.19 / 4.91 & 1.9 / 6.6 / 2.2 \\
 &  & 8 & 0.977 / 0.933 / 0.923 & 0.960 / 0.908 / 0.906 & 8.63 / 6.64 / 5.01 & 1.7 / 7.5 / 2.7 \\
\midrule
\multirow{20}{*}{\textsc{Temperature}} & \multirow{4}{*}{$\tau=0.5$} & 1 & 0.988 / 0.925 / 0.922 & 0.981 / 0.901 / 0.904 & 5.35 / 3.38 / 2.78 & 2.4 / 9.0 / 1.4 \\
 &  & 2 & 0.988 / 0.928 / 0.927 & 0.981 / 0.906 / 0.907 & 5.55 / 3.50 / 2.85 & 2.2 / 7.2 / 1.5 \\
 &  & 4 & 0.988 / 0.925 / 0.924 & 0.981 / 0.904 / 0.906 & 5.77 / 3.51 / 2.83 & 2.0 / 6.9 / 1.5 \\
 &  & 8 & 0.991 / 0.932 / 0.928 & 0.985 / 0.909 / 0.906 & 6.61 / 3.78 / 2.89 & 1.7 / 6.0 / 1.7 \\
\addlinespace[0.15em]
 & \multirow{4}{*}{$\tau=1.5$} & 1 & 0.979 / 0.931 / 0.922 & 0.964 / 0.901 / 0.905 & 6.14 / 4.56 / 3.75 & 2.5 / 9.9 / 1.8 \\
 &  & 2 & 0.978 / 0.931 / 0.926 & 0.965 / 0.902 / 0.907 & 6.48 / 4.77 / 3.90 & 2.2 / 7.3 / 1.9 \\
 &  & 4 & 0.979 / 0.930 / 0.923 & 0.966 / 0.901 / 0.906 & 6.65 / 4.71 / 3.81 & 1.9 / 7.3 / 2.2 \\
 &  & 8 & 0.985 / 0.934 / 0.924 & 0.972 / 0.906 / 0.904 & 7.53 / 4.99 / 3.84 & 1.7 / 7.4 / 2.7 \\
\addlinespace[0.15em]
 & \multirow{4}{*}{$\tau=2.5$} & 1 & 0.970 / 0.930 / 0.922 & 0.952 / 0.901 / 0.905 & 7.30 / 5.94 / 4.76 & 2.4 / 10.7 / 1.8 \\
 &  & 2 & 0.970 / 0.931 / 0.925 & 0.953 / 0.905 / 0.906 & 7.73 / 6.26 / 4.97 & 2.2 / 7.6 / 2.0 \\
 &  & 4 & 0.970 / 0.930 / 0.921 & 0.953 / 0.905 / 0.904 & 7.81 / 6.12 / 4.83 & 1.9 / 7.5 / 2.4 \\
 &  & 8 & 0.980 / 0.935 / 0.924 & 0.963 / 0.905 / 0.904 & 8.94 / 6.46 / 4.95 & 1.7 / 7.3 / 3.0 \\
\addlinespace[0.15em]
 & \multirow{4}{*}{$\tau=3.5$} & 1 & 0.960 / 0.931 / 0.922 & 0.940 / 0.904 / 0.902 & 8.57 / 7.41 / 5.73 & 2.4 / 10.3 / 1.8 \\
 &  & 2 & 0.964 / 0.932 / 0.924 & 0.945 / 0.906 / 0.903 & 9.21 / 7.86 / 6.01 & 2.2 / 7.8 / 2.0 \\
 &  & 4 & 0.965 / 0.933 / 0.921 & 0.946 / 0.906 / 0.905 & 9.27 / 7.60 / 5.83 & 2.0 / 7.5 / 2.5 \\
 &  & 8 & 0.974 / 0.937 / 0.924 & 0.954 / 0.904 / 0.904 & 10.47 / 8.01 / 6.06 & 1.8 / 6.6 / 3.0 \\
\addlinespace[0.15em]
 & \multirow{4}{*}{$\tau=4.5$} & 1 & 0.956 / 0.929 / 0.922 & 0.935 / 0.901 / 0.901 & 10.20 / 8.99 / 6.90 & 2.5 / 11.1 / 1.8 \\
 &  & 2 & 0.962 / 0.931 / 0.924 & 0.942 / 0.904 / 0.904 & 11.28 / 9.63 / 7.34 & 2.3 / 8.3 / 1.9 \\
 &  & 4 & 0.961 / 0.930 / 0.924 & 0.941 / 0.902 / 0.906 & 11.23 / 9.26 / 7.13 & 2.0 / 8.2 / 2.4 \\
 &  & 8 & 0.974 / 0.937 / 0.926 & 0.954 / 0.906 / 0.905 & 13.00 / 10.01 / 7.43 & 1.8 / 7.6 / 2.7 \\
\midrule
\multirow{16}{*}{\textsc{Nonlinear}} & \multirow{4}{*}{\textsc{Linear}} & 1 & 0.971 / 0.926 / 0.917 & 0.959 / 0.901 / 0.904 & 7.54 / 6.01 / 4.89 & 2.3 / 9.1 / 1.8 \\
 &  & 2 & 0.975 / 0.927 / 0.919 & 0.961 / 0.902 / 0.905 & 7.85 / 6.04 / 4.92 & 2.1 / 7.2 / 1.8 \\
 &  & 4 & 0.976 / 0.929 / 0.919 & 0.962 / 0.905 / 0.906 & 8.05 / 6.24 / 4.92 & 2.0 / 6.6 / 2.0 \\
 &  & 8 & 0.980 / 0.928 / 0.925 & 0.965 / 0.905 / 0.908 & 8.81 / 6.43 / 4.92 & 1.7 / 7.3 / 2.7 \\
\addlinespace[0.15em]
 & \multirow{4}{*}{\textsc{Interaction}} & 1 & 0.988 / 0.921 / 0.923 & 0.983 / 0.903 / 0.906 & 22.16 / 13.55 / 9.88 & 2.2 / 5.3 / 1.6 \\
 &  & 2 & 0.989 / 0.921 / 0.920 & 0.985 / 0.905 / 0.904 & 22.86 / 13.55 / 9.67 & 2.1 / 4.8 / 1.6 \\
 &  & 4 & 0.990 / 0.922 / 0.923 & 0.985 / 0.904 / 0.908 & 24.60 / 14.20 / 9.73 & 1.9 / 4.7 / 1.6 \\
 &  & 8 & 0.992 / 0.930 / 0.928 & 0.987 / 0.912 / 0.909 & 29.15 / 15.73 / 9.92 & 1.7 / 4.9 / 1.8 \\
\addlinespace[0.15em]
 & \multirow{4}{*}{\textsc{Sinusoid}} & 1 & 0.976 / 0.931 / 0.919 & 0.964 / 0.904 / 0.905 & 9.16 / 6.99 / 5.66 & 2.6 / 9.0 / 1.7 \\
 &  & 2 & 0.978 / 0.926 / 0.920 & 0.968 / 0.902 / 0.907 & 9.20 / 6.75 / 5.51 & 2.4 / 7.0 / 1.7 \\
 &  & 4 & 0.979 / 0.923 / 0.924 & 0.967 / 0.901 / 0.909 & 9.37 / 6.79 / 5.49 & 2.2 / 6.6 / 1.9 \\
 &  & 8 & 0.983 / 0.933 / 0.922 & 0.968 / 0.909 / 0.909 & 10.46 / 7.48 / 5.68 & 1.9 / 7.0 / 2.4 \\
\addlinespace[0.15em]
 & \multirow{4}{*}{\textsc{Softplus}} & 1 & 0.975 / 0.926 / 0.919 & 0.963 / 0.901 / 0.906 & 9.15 / 6.90 / 5.67 & 2.5 / 6.7 / 1.6 \\
 &  & 2 & 0.979 / 0.927 / 0.920 & 0.967 / 0.904 / 0.906 & 9.15 / 6.75 / 5.52 & 2.4 / 6.5 / 1.7 \\
 &  & 4 & 0.979 / 0.928 / 0.921 & 0.966 / 0.905 / 0.907 & 9.43 / 6.87 / 5.48 & 2.1 / 6.8 / 2.0 \\
 &  & 8 & 0.982 / 0.931 / 0.924 & 0.967 / 0.907 / 0.910 & 10.49 / 7.38 / 5.69 & 1.9 / 7.0 / 2.5 \\
\bottomrule
\end{tabular}
}
\vspace{3pt}
\caption{Regression results when varying the balance ratio $\rho$. Each metric cell reports the mean over 100 trials in the order \textsc{Baseline-agg} / \textsc{MDCP} / \textsc{Oracle}. Runtime values are reported in seconds. Except for the varied parameters, the \textsc{Linear} suite setup follows that of Figure~\ref{fig:iter_eval_reg_overall_vanilla}; the \textsc{Nonlinear} suite setup follows that of Figure~\ref{fig:nonlinear_reg_overall_vanilla}; and the \textsc{Temperature} suite setup follows that of Figure~\ref{fig:temperature_reg_nonpenalized}.}
\label{tab:scaling_regression_balance}
\end{table*}

\begin{table*}[t]
\centering
\footnotesize
\setlength{\tabcolsep}{4pt}
\renewcommand{\arraystretch}{0.98}
\resizebox{\textwidth}{!}{%
\begin{tabular}{@{}llccccc@{}}
\toprule
Suite & Setting & $K$ & Overall coverage & Worst-case coverage & Avg interval width & Total runtime \\
\midrule
\multirow{4}{*}{\textsc{Linear}} & \multirow{4}{*}{$\tau=2.5$} & 2 & 0.950 / 0.913 / 0.912 & 0.941 / 0.902 / 0.904 & 6.03 / 5.18 / 4.14 & 1.9 / 8.0 / 1.1 \\
 &  & 3 & 0.973 / 0.927 / 0.920 & 0.961 / 0.903 / 0.905 & 7.50 / 5.87 / 4.79 & 2.2 / 7.1 / 1.8 \\
 &  & 5 & 0.989 / 0.943 / 0.930 & 0.974 / 0.902 / 0.906 & 9.76 / 6.94 / 5.87 & 2.7 / 7.7 / 3.1 \\
 &  & 10 & 0.998 / 0.968 / 0.956 & 0.988 / 0.909 / 0.918 & 13.72 / 8.49 / 7.29 & 3.9 / 9.5 / 8.0 \\
\midrule
\multirow{20}{*}{\textsc{Temperature}} & \multirow{4}{*}{$\tau=0.5$} & 2 & 0.970 / 0.913 / 0.913 & 0.964 / 0.901 / 0.903 & 4.47 / 3.27 / 2.66 & 1.9 / 8.5 / 1.2 \\
 &  & 3 & 0.988 / 0.925 / 0.922 & 0.981 / 0.901 / 0.904 & 5.35 / 3.38 / 2.78 & 2.3 / 6.9 / 1.5 \\
 &  & 5 & 0.997 / 0.940 / 0.938 & 0.993 / 0.909 / 0.910 & 6.91 / 3.61 / 3.03 & 2.7 / 8.2 / 1.7 \\
 &  & 10 & 1.000 / 0.959 / 0.955 & 0.998 / 0.920 / 0.914 & 10.20 / 4.03 / 3.30 & 4.0 / 9.3 / 3.8 \\
\addlinespace[0.15em]
 & \multirow{4}{*}{$\tau=1.5$} & 2 & 0.957 / 0.914 / 0.914 & 0.947 / 0.899 / 0.905 & 5.10 / 4.12 / 3.34 & 1.9 / 8.6 / 1.3 \\
 &  & 3 & 0.979 / 0.931 / 0.922 & 0.964 / 0.901 / 0.905 & 6.14 / 4.56 / 3.75 & 2.2 / 7.6 / 1.8 \\
 &  & 5 & 0.993 / 0.948 / 0.938 & 0.983 / 0.908 / 0.910 & 8.09 / 5.17 / 4.37 & 2.7 / 8.1 / 2.9 \\
 &  & 10 & 0.999 / 0.966 / 0.958 & 0.994 / 0.907 / 0.912 & 11.41 / 6.01 / 5.14 & 4.0 / 10.2 / 8.1 \\
\addlinespace[0.15em]
 & \multirow{4}{*}{$\tau=2.5$} & 2 & 0.946 / 0.916 / 0.912 & 0.935 / 0.901 / 0.903 & 5.90 / 5.12 / 4.02 & 1.9 / 9.7 / 1.3 \\
 &  & 3 & 0.970 / 0.930 / 0.922 & 0.952 / 0.901 / 0.905 & 7.30 / 5.94 / 4.76 & 2.3 / 8.5 / 1.8 \\
 &  & 5 & 0.989 / 0.948 / 0.938 & 0.973 / 0.906 / 0.912 & 9.89 / 7.01 / 5.90 & 2.8 / 8.2 / 3.1 \\
 &  & 10 & 0.997 / 0.970 / 0.960 & 0.986 / 0.907 / 0.918 & 13.78 / 8.45 / 7.25 & 4.0 / 9.9 / 9.5 \\
\addlinespace[0.15em]
 & \multirow{4}{*}{$\tau=3.5$} & 2 & 0.939 / 0.918 / 0.912 & 0.928 / 0.902 / 0.902 & 6.81 / 6.14 / 4.67 & 2.0 / 8.9 / 1.4 \\
 &  & 3 & 0.960 / 0.931 / 0.922 & 0.940 / 0.904 / 0.902 & 8.57 / 7.41 / 5.73 & 2.4 / 8.2 / 1.8 \\
 &  & 5 & 0.985 / 0.949 / 0.937 & 0.965 / 0.907 / 0.910 & 11.96 / 9.14 / 7.41 & 2.8 / 8.9 / 3.1 \\
 &  & 10 & 0.997 / 0.970 / 0.960 & 0.983 / 0.906 / 0.916 & 16.67 / 11.00 / 9.37 & 4.2 / 10.4 / 9.7 \\
\addlinespace[0.15em]
 & \multirow{4}{*}{$\tau=4.5$} & 2 & 0.937 / 0.918 / 0.912 & 0.925 / 0.901 / 0.901 & 8.23 / 7.50 / 5.63 & 2.0 / 10.3 / 1.4 \\
 &  & 3 & 0.956 / 0.929 / 0.922 & 0.935 / 0.901 / 0.901 & 10.20 / 8.99 / 6.90 & 2.4 / 8.9 / 1.8 \\
 &  & 5 & 0.984 / 0.949 / 0.937 & 0.964 / 0.907 / 0.909 & 14.80 / 11.33 / 9.20 & 2.8 / 8.8 / 2.8 \\
 &  & 10 & 0.996 / 0.970 / 0.960 & 0.981 / 0.907 / 0.914 & 20.57 / 13.71 / 11.67 & 4.1 / 10.5 / 9.1 \\
\midrule
\multirow{16}{*}{\textsc{Nonlinear}} & \multirow{4}{*}{\textsc{Linear}} & 2 & 0.950 / 0.915 / 0.912 & 0.941 / 0.902 / 0.904 & 5.97 / 5.12 / 4.07 & 1.9 / 8.4 / 1.2 \\
 &  & 3 & 0.971 / 0.926 / 0.917 & 0.959 / 0.901 / 0.904 & 7.54 / 6.01 / 4.89 & 2.2 / 7.2 / 1.8 \\
 &  & 5 & 0.989 / 0.947 / 0.933 & 0.975 / 0.908 / 0.910 & 9.71 / 6.99 / 5.93 & 2.8 / 7.6 / 3.0 \\
 &  & 10 & 0.998 / 0.968 / 0.956 & 0.989 / 0.902 / 0.917 & 13.98 / 8.59 / 7.39 & 3.8 / 9.4 / 7.4 \\
\addlinespace[0.15em]
 & \multirow{4}{*}{\textsc{Interaction}} & 2 & 0.969 / 0.911 / 0.907 & 0.965 / 0.902 / 0.897 & 17.32 / 12.61 / 9.02 & 1.8 / 4.3 / 1.5 \\
 &  & 3 & 0.988 / 0.921 / 0.923 & 0.983 / 0.903 / 0.906 & 22.16 / 13.55 / 9.88 & 2.2 / 5.3 / 1.6 \\
 &  & 5 & 0.997 / 0.936 / 0.936 & 0.993 / 0.910 / 0.909 & 28.82 / 14.67 / 10.55 & 2.7 / 6.0 / 1.9 \\
 &  & 10 & 1.000 / 0.952 / 0.958 & 0.997 / 0.919 / 0.920 & 46.04 / 17.66 / 11.94 & 3.8 / 7.4 / 2.5 \\
\addlinespace[0.15em]
 & \multirow{4}{*}{\textsc{Sinusoid}} & 2 & 0.952 / 0.913 / 0.907 & 0.945 / 0.901 / 0.899 & 7.32 / 6.10 / 4.85 & 2.2 / 7.6 / 1.2 \\
 &  & 3 & 0.976 / 0.931 / 0.919 & 0.964 / 0.904 / 0.905 & 9.16 / 6.99 / 5.66 & 2.5 / 7.0 / 1.7 \\
 &  & 5 & 0.991 / 0.946 / 0.934 & 0.979 / 0.905 / 0.910 & 11.50 / 7.80 / 6.58 & 3.1 / 7.6 / 2.7 \\
 &  & 10 & 0.999 / 0.969 / 0.959 & 0.993 / 0.916 / 0.921 & 16.66 / 9.28 / 7.92 & 4.2 / 10.1 / 6.1 \\
\addlinespace[0.15em]
 & \multirow{4}{*}{\textsc{Softplus}} & 2 & 0.953 / 0.912 / 0.907 & 0.944 / 0.899 / 0.898 & 7.30 / 6.08 / 4.85 & 2.0 / 6.6 / 1.2 \\
 &  & 3 & 0.975 / 0.926 / 0.919 & 0.963 / 0.901 / 0.906 & 9.15 / 6.90 / 5.67 & 2.5 / 6.7 / 1.6 \\
 &  & 5 & 0.990 / 0.946 / 0.936 & 0.977 / 0.906 / 0.912 & 11.31 / 7.72 / 6.52 & 3.0 / 7.7 / 2.7 \\
 &  & 10 & 0.999 / 0.967 / 0.959 & 0.993 / 0.909 / 0.919 & 16.60 / 9.22 / 7.90 & 4.2 / 9.5 / 6.2 \\
\bottomrule
\end{tabular}
}
\vspace{2pt}
\caption{Regression scaling results when varying the number of sources $K$. Details are otherwise the same as in Table~\ref{tab:scaling_regression_balance}.}
\label{tab:scaling_regression_k}
\end{table*}

\begin{table*}[t]
\centering
\footnotesize
\setlength{\tabcolsep}{4pt}
\renewcommand{\arraystretch}{0.98}
\resizebox{\textwidth}{!}{%
\begin{tabular}{@{}llccccc@{}}
\toprule
Suite & Setting & $N$ & Overall coverage & Worst-case coverage & Avg interval width & Total runtime \\
\midrule
\multirow{4}{*}{\textsc{Linear}} & \multirow{4}{*}{$\tau=2.5$} & 1500 & 0.979 / 0.940 / 0.932 & 0.966 / 0.913 / 0.909 & 9.62 / 6.76 / 4.88 & 0.8 / 5.3 / 0.8 \\
 &  & 3000 & 0.976 / 0.933 / 0.926 & 0.962 / 0.906 / 0.909 & 8.49 / 6.36 / 4.98 & 1.3 / 4.5 / 1.3 \\
 &  & 4500 & 0.974 / 0.930 / 0.923 & 0.962 / 0.906 / 0.908 & 7.88 / 6.03 / 4.85 & 1.7 / 6.1 / 1.6 \\
 &  & 6000 & 0.973 / 0.927 / 0.920 & 0.961 / 0.903 / 0.905 & 7.50 / 5.87 / 4.79 & 2.2 / 7.0 / 1.8 \\
\midrule
\multirow{20}{*}{\textsc{Temperature}} & \multirow{4}{*}{$\tau=0.5$} & 1500 & 0.992 / 0.932 / 0.933 & 0.986 / 0.910 / 0.909 & 7.60 / 4.02 / 2.93 & 0.8 / 5.8 / 0.7 \\
 &  & 3000 & 0.990 / 0.926 / 0.929 & 0.984 / 0.903 / 0.908 & 6.22 / 3.60 / 2.86 & 1.4 / 5.3 / 1.1 \\
 &  & 4500 & 0.989 / 0.927 / 0.924 & 0.983 / 0.905 / 0.906 & 5.77 / 3.50 / 2.81 & 1.8 / 6.1 / 1.3 \\
 &  & 6000 & 0.988 / 0.925 / 0.922 & 0.981 / 0.901 / 0.904 & 5.35 / 3.38 / 2.78 & 2.3 / 7.0 / 1.4 \\
\addlinespace[0.15em]
 & \multirow{4}{*}{$\tau=1.5$} & 1500 & 0.985 / 0.936 / 0.938 & 0.973 / 0.907 / 0.914 & 8.40 / 5.24 / 3.95 & 0.8 / 5.3 / 0.9 \\
 &  & 3000 & 0.980 / 0.933 / 0.927 & 0.968 / 0.905 / 0.906 & 7.09 / 4.90 / 3.86 & 1.4 / 5.3 / 1.4 \\
 &  & 4500 & 0.981 / 0.932 / 0.923 & 0.970 / 0.906 / 0.906 & 6.67 / 4.76 / 3.84 & 1.8 / 6.0 / 1.6 \\
 &  & 6000 & 0.979 / 0.931 / 0.922 & 0.964 / 0.901 / 0.905 & 6.14 / 4.56 / 3.75 & 2.3 / 7.7 / 1.8 \\
\addlinespace[0.15em]
 & \multirow{4}{*}{$\tau=2.5$} & 1500 & 0.980 / 0.940 / 0.937 & 0.965 / 0.909 / 0.913 & 9.75 / 6.85 / 5.02 & 0.8 / 5.1 / 1.0 \\
 &  & 3000 & 0.974 / 0.933 / 0.925 & 0.959 / 0.908 / 0.904 & 8.36 / 6.37 / 4.89 & 1.4 / 5.1 / 1.5 \\
 &  & 4500 & 0.971 / 0.930 / 0.924 & 0.956 / 0.901 / 0.907 & 7.90 / 6.19 / 4.92 & 1.8 / 6.7 / 1.7 \\
 &  & 6000 & 0.970 / 0.930 / 0.922 & 0.952 / 0.901 / 0.905 & 7.30 / 5.94 / 4.76 & 2.3 / 8.8 / 1.8 \\
\addlinespace[0.15em]
 & \multirow{4}{*}{$\tau=3.5$} & 1500 & 0.973 / 0.937 / 0.940 & 0.956 / 0.905 / 0.916 & 11.32 / 8.40 / 6.10 & 0.9 / 5.0 / 1.0 \\
 &  & 3000 & 0.969 / 0.935 / 0.927 & 0.950 / 0.906 / 0.904 & 9.94 / 7.93 / 5.94 & 1.4 / 5.5 / 1.5 \\
 &  & 4500 & 0.965 / 0.930 / 0.924 & 0.947 / 0.902 / 0.906 & 9.35 / 7.64 / 5.96 & 1.9 / 7.4 / 1.7 \\
 &  & 6000 & 0.960 / 0.931 / 0.922 & 0.940 / 0.904 / 0.902 & 8.57 / 7.41 / 5.73 & 2.4 / 8.2 / 1.8 \\
\addlinespace[0.15em]
 & \multirow{4}{*}{$\tau=4.5$} & 1500 & 0.972 / 0.937 / 0.940 & 0.953 / 0.906 / 0.917 & 13.89 / 10.32 / 7.39 & 0.9 / 5.2 / 1.0 \\
 &  & 3000 & 0.967 / 0.935 / 0.927 & 0.947 / 0.906 / 0.904 & 12.04 / 9.66 / 7.21 & 1.4 / 5.5 / 1.5 \\
 &  & 4500 & 0.964 / 0.931 / 0.924 & 0.945 / 0.903 / 0.903 & 11.33 / 9.54 / 7.24 & 1.9 / 6.9 / 1.6 \\
 &  & 6000 & 0.956 / 0.929 / 0.922 & 0.935 / 0.901 / 0.901 & 10.20 / 8.99 / 6.90 & 2.3 / 8.9 / 1.8 \\
\midrule
\multirow{16}{*}{\textsc{Nonlinear}} & \multirow{4}{*}{\textsc{Linear}} & 1500 & 0.981 / 0.939 / 0.931 & 0.968 / 0.913 / 0.911 & 9.55 / 6.79 / 4.90 & 0.8 / 5.0 / 0.8 \\
 &  & 3000 & 0.978 / 0.930 / 0.923 & 0.965 / 0.904 / 0.906 & 8.54 / 6.32 / 4.94 & 1.3 / 5.0 / 1.2 \\
 &  & 4500 & 0.976 / 0.929 / 0.923 & 0.964 / 0.904 / 0.908 & 7.95 / 6.02 / 4.89 & 1.7 / 5.7 / 1.5 \\
 &  & 6000 & 0.971 / 0.926 / 0.917 & 0.959 / 0.901 / 0.904 & 7.54 / 6.01 / 4.89 & 2.2 / 7.3 / 1.8 \\
\addlinespace[0.15em]
 & \multirow{4}{*}{\textsc{Interaction}} & 1500 & 0.991 / 0.934 / 0.932 & 0.985 / 0.917 / 0.910 & 34.35 / 18.35 / 10.02 & 0.9 / 1.9 / 0.8 \\
 &  & 3000 & 0.991 / 0.924 / 0.928 & 0.986 / 0.909 / 0.909 & 27.28 / 15.10 / 9.78 & 1.3 / 3.2 / 1.1 \\
 &  & 4500 & 0.989 / 0.924 / 0.923 & 0.984 / 0.906 / 0.906 & 24.03 / 14.18 / 9.79 & 1.7 / 3.8 / 1.3 \\
 &  & 6000 & 0.988 / 0.921 / 0.923 & 0.983 / 0.903 / 0.906 & 22.16 / 13.55 / 9.88 & 2.2 / 5.3 / 1.6 \\
\addlinespace[0.15em]
 & \multirow{4}{*}{\textsc{Sinusoid}} & 1500 & 0.983 / 0.941 / 0.932 & 0.972 / 0.915 / 0.909 & 11.89 / 8.03 / 5.84 & 0.9 / 4.8 / 0.8 \\
 &  & 3000 & 0.981 / 0.931 / 0.926 & 0.969 / 0.905 / 0.909 & 10.00 / 7.14 / 5.59 & 1.5 / 4.3 / 1.2 \\
 &  & 4500 & 0.980 / 0.930 / 0.922 & 0.968 / 0.905 / 0.906 & 9.59 / 7.03 / 5.64 & 2.0 / 5.7 / 1.4 \\
 &  & 6000 & 0.976 / 0.931 / 0.919 & 0.964 / 0.904 / 0.905 & 9.16 / 6.99 / 5.66 & 2.5 / 7.1 / 1.7 \\
\addlinespace[0.15em]
 & \multirow{4}{*}{\textsc{Softplus}} & 1500 & 0.982 / 0.937 / 0.931 & 0.970 / 0.910 / 0.909 & 12.25 / 7.97 / 5.75 & 1.0 / 2.7 / 0.8 \\
 &  & 3000 & 0.981 / 0.933 / 0.926 & 0.970 / 0.908 / 0.908 & 10.09 / 7.11 / 5.54 & 1.5 / 5.2 / 1.2 \\
 &  & 4500 & 0.979 / 0.931 / 0.921 & 0.967 / 0.906 / 0.906 & 9.53 / 7.01 / 5.60 & 2.0 / 6.5 / 1.3 \\
 &  & 6000 & 0.975 / 0.926 / 0.919 & 0.963 / 0.901 / 0.906 & 9.15 / 6.90 / 5.67 & 2.5 / 6.6 / 1.6 \\
\bottomrule
\end{tabular}
}
\vspace{2pt}
\caption{Regression scaling results when varying the total sample size $N$. Details are otherwise the same as in Table~\ref{tab:scaling_regression_balance}.}
\label{tab:scaling_regression_n_total}
\end{table*}

\begin{table*}[t]
\centering
\footnotesize
\setlength{\tabcolsep}{4pt}
\renewcommand{\arraystretch}{0.98}
\resizebox{\textwidth}{!}{%
\begin{tabular}{@{}lllcccccc@{}}
\toprule
Suite & Setting & $\rho$ & Fit sources & Fit pooled & Learn $\lambda$ & Calibrate & Test eval & Total runtime \\
\midrule
\multirow{4}{*}{\textsc{Linear}} & \multirow{4}{*}{$\tau=2.5$} & 1 & 1.9 / 1.8 / 0.0 & -- / 0.8 / 0.0 & -- / 6.0 / 1.4 & 0.0 / 0.0 / 0.0 & 0.4 / 0.5 / 0.4 & 2.3 / 9.2 / 1.8 \\
 &  & 2 & 1.8 / 1.7 / 0.0 & -- / 0.8 / 0.0 & -- / 4.2 / 1.5 & 0.0 / 0.0 / 0.0 & 0.3 / 0.5 / 0.4 & 2.1 / 7.3 / 1.9 \\
 &  & 4 & 1.5 / 1.5 / 0.0 & -- / 0.8 / 0.0 & -- / 3.8 / 1.8 & 0.0 / 0.0 / 0.0 & 0.3 / 0.5 / 0.4 & 1.9 / 6.6 / 2.2 \\
 &  & 8 & 1.3 / 1.3 / 0.0 & -- / 0.8 / 0.0 & -- / 4.9 / 2.3 & 0.0 / 0.0 / 0.0 & 0.3 / 0.5 / 0.4 & 1.7 / 7.5 / 2.7 \\
\midrule
\multirow{20}{*}{\textsc{Temperature}} & \multirow{4}{*}{$\tau=0.5$} & 1 & 2.0 / 1.9 / 0.0 & -- / 0.8 / 0.0 & -- / 5.8 / 1.0 & 0.0 / 0.0 / 0.0 & 0.4 / 0.5 / 0.4 & 2.4 / 9.0 / 1.4 \\
 &  & 2 & 1.8 / 1.8 / 0.0 & -- / 0.8 / 0.0 & -- / 4.1 / 1.1 & 0.0 / 0.0 / 0.0 & 0.3 / 0.5 / 0.4 & 2.2 / 7.2 / 1.5 \\
 &  & 4 & 1.6 / 1.6 / 0.0 & -- / 0.8 / 0.0 & -- / 4.0 / 1.2 & 0.0 / 0.0 / 0.0 & 0.3 / 0.5 / 0.4 & 2.0 / 6.9 / 1.5 \\
 &  & 8 & 1.4 / 1.4 / 0.0 & -- / 0.8 / 0.0 & -- / 3.3 / 1.3 & 0.0 / 0.0 / 0.0 & 0.3 / 0.5 / 0.4 & 1.7 / 6.0 / 1.7 \\
\addlinespace[0.15em]
 & \multirow{4}{*}{$\tau=1.5$} & 1 & 2.1 / 1.9 / 0.0 & -- / 0.8 / 0.0 & -- / 6.5 / 1.4 & 0.0 / 0.1 / 0.0 & 0.4 / 0.5 / 0.4 & 2.5 / 9.9 / 1.8 \\
 &  & 2 & 1.8 / 1.8 / 0.0 & -- / 0.8 / 0.0 & -- / 4.1 / 1.6 & 0.0 / 0.0 / 0.0 & 0.3 / 0.5 / 0.4 & 2.2 / 7.3 / 1.9 \\
 &  & 4 & 1.6 / 1.6 / 0.0 & -- / 0.8 / 0.0 & -- / 4.4 / 1.8 & 0.0 / 0.0 / 0.0 & 0.3 / 0.5 / 0.4 & 1.9 / 7.3 / 2.2 \\
 &  & 8 & 1.4 / 1.4 / 0.0 & -- / 0.9 / 0.0 & -- / 4.7 / 2.3 & 0.0 / 0.0 / 0.0 & 0.3 / 0.5 / 0.4 & 1.7 / 7.4 / 2.7 \\
\addlinespace[0.15em]
 & \multirow{4}{*}{$\tau=2.5$} & 1 & 2.0 / 1.9 / 0.0 & -- / 0.8 / 0.0 & -- / 7.4 / 1.4 & 0.0 / 0.0 / 0.0 & 0.4 / 0.5 / 0.4 & 2.4 / 10.7 / 1.8 \\
 &  & 2 & 1.8 / 1.8 / 0.0 & -- / 0.8 / 0.0 & -- / 4.4 / 1.6 & 0.0 / 0.0 / 0.0 & 0.3 / 0.5 / 0.4 & 2.2 / 7.6 / 2.0 \\
 &  & 4 & 1.6 / 1.6 / 0.0 & -- / 0.8 / 0.0 & -- / 4.5 / 2.0 & 0.0 / 0.0 / 0.0 & 0.3 / 0.5 / 0.4 & 1.9 / 7.5 / 2.4 \\
 &  & 8 & 1.4 / 1.4 / 0.0 & -- / 0.8 / 0.0 & -- / 4.5 / 2.6 & 0.0 / 0.0 / 0.0 & 0.3 / 0.5 / 0.4 & 1.7 / 7.3 / 3.0 \\
\addlinespace[0.15em]
 & \multirow{4}{*}{$\tau=3.5$} & 1 & 2.0 / 2.0 / 0.0 & -- / 0.9 / 0.0 & -- / 6.9 / 1.4 & 0.0 / 0.0 / 0.0 & 0.4 / 0.5 / 0.4 & 2.4 / 10.3 / 1.8 \\
 &  & 2 & 1.9 / 1.9 / 0.0 & -- / 0.9 / 0.0 & -- / 4.5 / 1.6 & 0.0 / 0.0 / 0.0 & 0.3 / 0.5 / 0.4 & 2.2 / 7.8 / 2.0 \\
 &  & 4 & 1.7 / 1.6 / 0.0 & -- / 0.9 / 0.0 & -- / 4.5 / 2.1 & 0.0 / 0.0 / 0.0 & 0.3 / 0.5 / 0.4 & 2.0 / 7.5 / 2.5 \\
 &  & 8 & 1.5 / 1.4 / 0.0 & -- / 0.9 / 0.0 & -- / 3.8 / 2.6 & 0.0 / 0.0 / 0.0 & 0.3 / 0.5 / 0.4 & 1.8 / 6.6 / 3.0 \\
\addlinespace[0.15em]
 & \multirow{4}{*}{$\tau=4.5$} & 1 & 2.0 / 2.0 / 0.0 & -- / 0.9 / 0.0 & -- / 7.7 / 1.3 & 0.0 / 0.0 / 0.0 & 0.4 / 0.5 / 0.4 & 2.5 / 11.1 / 1.8 \\
 &  & 2 & 1.9 / 1.9 / 0.0 & -- / 0.9 / 0.0 & -- / 5.0 / 1.5 & 0.0 / 0.0 / 0.0 & 0.3 / 0.5 / 0.4 & 2.3 / 8.3 / 1.9 \\
 &  & 4 & 1.7 / 1.6 / 0.0 & -- / 0.9 / 0.0 & -- / 5.2 / 2.0 & 0.0 / 0.0 / 0.0 & 0.3 / 0.5 / 0.4 & 2.0 / 8.2 / 2.4 \\
 &  & 8 & 1.4 / 1.4 / 0.0 & -- / 0.9 / 0.0 & -- / 4.8 / 2.3 & 0.0 / 0.0 / 0.0 & 0.3 / 0.5 / 0.4 & 1.8 / 7.6 / 2.7 \\
\midrule
\multirow{16}{*}{\textsc{Nonlinear}} & \multirow{4}{*}{\textsc{Linear}} & 1 & 1.9 / 1.8 / 0.0 & -- / 0.8 / 0.0 & -- / 6.0 / 1.4 & 0.0 / 0.0 / 0.0 & 0.4 / 0.5 / 0.4 & 2.3 / 9.1 / 1.8 \\
 &  & 2 & 1.8 / 1.7 / 0.0 & -- / 0.8 / 0.0 & -- / 4.2 / 1.4 & 0.0 / 0.0 / 0.0 & 0.3 / 0.5 / 0.4 & 2.1 / 7.2 / 1.8 \\
 &  & 4 & 1.6 / 1.5 / 0.0 & -- / 0.8 / 0.0 & -- / 3.7 / 1.6 & 0.0 / 0.0 / 0.0 & 0.4 / 0.5 / 0.4 & 2.0 / 6.6 / 2.0 \\
 &  & 8 & 1.3 / 1.3 / 0.0 & -- / 0.8 / 0.0 & -- / 4.6 / 2.3 & 0.0 / 0.0 / 0.0 & 0.3 / 0.5 / 0.4 & 1.7 / 7.3 / 2.7 \\
\addlinespace[0.15em]
 & \multirow{4}{*}{\textsc{Interaction}} & 1 & 1.9 / 1.8 / 0.0 & -- / 0.8 / 0.0 & -- / 2.2 / 1.3 & 0.0 / 0.0 / 0.0 & 0.3 / 0.5 / 0.4 & 2.2 / 5.3 / 1.6 \\
 &  & 2 & 1.8 / 1.7 / 0.0 & -- / 0.8 / 0.0 & -- / 1.8 / 1.3 & 0.0 / 0.0 / 0.0 & 0.3 / 0.5 / 0.4 & 2.1 / 4.8 / 1.6 \\
 &  & 4 & 1.5 / 1.5 / 0.0 & -- / 0.8 / 0.0 & -- / 1.9 / 1.3 & 0.0 / 0.0 / 0.0 & 0.3 / 0.5 / 0.4 & 1.9 / 4.7 / 1.6 \\
 &  & 8 & 1.4 / 1.3 / 0.0 & -- / 0.8 / 0.0 & -- / 2.3 / 1.4 & 0.0 / 0.0 / 0.0 & 0.3 / 0.5 / 0.4 & 1.7 / 4.9 / 1.8 \\
\addlinespace[0.15em]
 & \multirow{4}{*}{\textsc{Sinusoid}} & 1 & 2.2 / 2.1 / 0.0 & -- / 0.9 / 0.0 & -- / 5.3 / 1.3 & 0.0 / 0.1 / 0.0 & 0.4 / 0.5 / 0.4 & 2.6 / 9.0 / 1.7 \\
 &  & 2 & 2.1 / 2.0 / 0.0 & -- / 0.9 / 0.0 & -- / 3.5 / 1.3 & 0.0 / 0.0 / 0.0 & 0.3 / 0.5 / 0.4 & 2.4 / 7.0 / 1.7 \\
 &  & 4 & 1.8 / 1.8 / 0.0 & -- / 0.9 / 0.0 & -- / 3.4 / 1.5 & 0.0 / 0.0 / 0.0 & 0.4 / 0.5 / 0.4 & 2.2 / 6.6 / 1.9 \\
 &  & 8 & 1.6 / 1.5 / 0.0 & -- / 0.9 / 0.0 & -- / 4.0 / 2.0 & 0.0 / 0.0 / 0.0 & 0.3 / 0.5 / 0.4 & 1.9 / 7.0 / 2.4 \\
\addlinespace[0.15em]
 & \multirow{4}{*}{\textsc{Softplus}} & 1 & 2.2 / 2.1 / 0.0 & -- / 0.9 / 0.0 & -- / 3.0 / 1.2 & 0.0 / 0.0 / 0.0 & 0.3 / 0.5 / 0.4 & 2.5 / 6.7 / 1.6 \\
 &  & 2 & 2.1 / 2.0 / 0.0 & -- / 0.9 / 0.0 & -- / 3.0 / 1.3 & 0.0 / 0.0 / 0.0 & 0.3 / 0.5 / 0.4 & 2.4 / 6.5 / 1.7 \\
 &  & 4 & 1.8 / 1.8 / 0.0 & -- / 0.9 / 0.0 & -- / 3.6 / 1.6 & 0.0 / 0.0 / 0.0 & 0.3 / 0.5 / 0.4 & 2.1 / 6.8 / 2.0 \\
 &  & 8 & 1.6 / 1.5 / 0.0 & -- / 1.0 / 0.0 & -- / 4.0 / 2.1 & 0.0 / 0.0 / 0.0 & 0.3 / 0.5 / 0.5 & 1.9 / 7.0 / 2.5 \\
\bottomrule
\end{tabular}
}
\vspace{2pt}
\caption{Regression runtime breakdown when varying the balance ratio $\rho$. Each cell reports the mean runtime in seconds over 100 trials in the order \textsc{Baseline-agg} / \textsc{MDCP} / \textsc{Oracle}; ``--'' indicates that the method does not use that stage. Details are otherwise the same as in Table~\ref{tab:scaling_regression_balance}.}
\label{tab:scaling_runtime_regression_balance}
\end{table*}

\begin{table*}[t]
\centering
\footnotesize
\setlength{\tabcolsep}{4pt}
\renewcommand{\arraystretch}{0.98}
\resizebox{\textwidth}{!}{%
\begin{tabular}{@{}lllcccccc@{}}
\toprule
Suite & Setting & $K$ & Fit sources & Fit pooled & Learn $\lambda$ & Calibrate & Test eval & Total runtime \\
\midrule
\multirow{4}{*}{\textsc{Linear}} & \multirow{4}{*}{$\tau=2.5$} & 2 & 1.6 / 1.5 / 0.0 & -- / 0.8 / 0.0 & -- / 5.2 / 0.8 & 0.0 / 0.0 / 0.0 & 0.3 / 0.5 / 0.3 & 1.9 / 8.0 / 1.1 \\
 &  & 3 & 1.8 / 1.8 / 0.0 & -- / 0.8 / 0.0 & -- / 3.9 / 1.4 & 0.0 / 0.0 / 0.0 & 0.3 / 0.5 / 0.4 & 2.2 / 7.1 / 1.8 \\
 &  & 5 & 2.1 / 2.1 / 0.0 & -- / 0.8 / 0.0 & -- / 4.0 / 2.5 & 0.0 / 0.1 / 0.0 & 0.6 / 0.7 / 0.6 & 2.7 / 7.7 / 3.1 \\
 &  & 10 & 2.6 / 2.6 / 0.0 & -- / 0.8 / 0.0 & -- / 4.6 / 7.0 & 0.0 / 0.1 / 0.0 & 1.3 / 1.3 / 1.0 & 3.9 / 9.5 / 8.0 \\
\midrule
\multirow{20}{*}{\textsc{Temperature}} & \multirow{4}{*}{$\tau=0.5$} & 2 & 1.6 / 1.6 / 0.0 & -- / 0.8 / 0.0 & -- / 5.5 / 0.9 & 0.0 / 0.0 / 0.0 & 0.3 / 0.5 / 0.3 & 1.9 / 8.5 / 1.2 \\
 &  & 3 & 2.0 / 1.9 / 0.0 & -- / 0.8 / 0.0 & -- / 3.7 / 1.1 & 0.0 / 0.0 / 0.0 & 0.3 / 0.5 / 0.4 & 2.3 / 6.9 / 1.5 \\
 &  & 5 & 2.2 / 2.2 / 0.0 & -- / 0.9 / 0.0 & -- / 4.4 / 1.2 & 0.0 / 0.1 / 0.0 & 0.5 / 0.7 / 0.5 & 2.7 / 8.2 / 1.7 \\
 &  & 10 & 2.7 / 2.8 / 0.0 & -- / 0.9 / 0.0 & -- / 4.2 / 2.9 & 0.0 / 0.1 / 0.0 & 1.2 / 1.3 / 1.0 & 4.0 / 9.3 / 3.8 \\
\addlinespace[0.15em]
 & \multirow{4}{*}{$\tau=1.5$} & 2 & 1.6 / 1.6 / 0.0 & -- / 0.8 / 0.0 & -- / 5.7 / 1.0 & 0.0 / 0.0 / 0.0 & 0.3 / 0.5 / 0.3 & 1.9 / 8.6 / 1.3 \\
 &  & 3 & 1.9 / 1.9 / 0.0 & -- / 0.8 / 0.0 & -- / 4.4 / 1.4 & 0.0 / 0.0 / 0.0 & 0.3 / 0.5 / 0.4 & 2.2 / 7.6 / 1.8 \\
 &  & 5 & 2.2 / 2.1 / 0.0 & -- / 0.8 / 0.0 & -- / 4.3 / 2.3 & 0.0 / 0.1 / 0.0 & 0.5 / 0.7 / 0.6 & 2.7 / 8.1 / 2.9 \\
 &  & 10 & 2.7 / 2.7 / 0.0 & -- / 0.8 / 0.0 & -- / 5.2 / 7.1 & 0.0 / 0.2 / 0.0 & 1.3 / 1.3 / 1.0 & 4.0 / 10.2 / 8.1 \\
\addlinespace[0.15em]
 & \multirow{4}{*}{$\tau=2.5$} & 2 & 1.6 / 1.6 / 0.0 & -- / 0.8 / 0.0 & -- / 6.7 / 1.0 & 0.0 / 0.0 / 0.0 & 0.3 / 0.5 / 0.3 & 1.9 / 9.7 / 1.3 \\
 &  & 3 & 2.0 / 1.9 / 0.0 & -- / 0.8 / 0.0 & -- / 5.3 / 1.4 & 0.0 / 0.0 / 0.0 & 0.3 / 0.5 / 0.4 & 2.3 / 8.5 / 1.8 \\
 &  & 5 & 2.2 / 2.2 / 0.0 & -- / 0.8 / 0.0 & -- / 4.4 / 2.5 & 0.0 / 0.1 / 0.0 & 0.6 / 0.7 / 0.6 & 2.8 / 8.2 / 3.1 \\
 &  & 10 & 2.7 / 2.7 / 0.0 & -- / 0.8 / 0.0 & -- / 4.9 / 8.5 & 0.0 / 0.1 / 0.0 & 1.3 / 1.3 / 1.0 & 4.0 / 9.9 / 9.5 \\
\addlinespace[0.15em]
 & \multirow{4}{*}{$\tau=3.5$} & 2 & 1.7 / 1.6 / 0.0 & -- / 0.9 / 0.0 & -- / 5.9 / 1.1 & 0.0 / 0.0 / 0.0 & 0.3 / 0.5 / 0.3 & 2.0 / 8.9 / 1.4 \\
 &  & 3 & 2.0 / 2.0 / 0.0 & -- / 0.9 / 0.0 & -- / 4.8 / 1.4 & 0.0 / 0.0 / 0.0 & 0.3 / 0.5 / 0.4 & 2.4 / 8.2 / 1.8 \\
 &  & 5 & 2.3 / 2.2 / 0.0 & -- / 0.9 / 0.0 & -- / 5.0 / 2.5 & 0.0 / 0.1 / 0.0 & 0.6 / 0.8 / 0.6 & 2.8 / 8.9 / 3.1 \\
 &  & 10 & 2.8 / 2.8 / 0.0 & -- / 0.9 / 0.0 & -- / 5.3 / 8.7 & 0.0 / 0.2 / 0.0 & 1.3 / 1.3 / 1.0 & 4.2 / 10.4 / 9.7 \\
\addlinespace[0.15em]
 & \multirow{4}{*}{$\tau=4.5$} & 2 & 1.7 / 1.6 / 0.0 & -- / 0.9 / 0.0 & -- / 7.2 / 1.1 & 0.0 / 0.0 / 0.0 & 0.3 / 0.5 / 0.3 & 2.0 / 10.3 / 1.4 \\
 &  & 3 & 2.0 / 2.0 / 0.0 & -- / 0.9 / 0.0 & -- / 5.5 / 1.3 & 0.0 / 0.0 / 0.0 & 0.3 / 0.5 / 0.4 & 2.4 / 8.9 / 1.8 \\
 &  & 5 & 2.3 / 2.2 / 0.0 & -- / 0.9 / 0.0 & -- / 4.9 / 2.2 & 0.0 / 0.1 / 0.0 & 0.6 / 0.8 / 0.6 & 2.8 / 8.8 / 2.8 \\
 &  & 10 & 2.8 / 2.8 / 0.0 & -- / 0.9 / 0.0 & -- / 5.4 / 8.0 & 0.0 / 0.1 / 0.0 & 1.3 / 1.3 / 1.0 & 4.1 / 10.5 / 9.1 \\
\midrule
\multirow{16}{*}{\textsc{Nonlinear}} & \multirow{4}{*}{\textsc{Linear}} & 2 & 1.6 / 1.5 / 0.0 & -- / 0.8 / 0.0 & -- / 5.6 / 0.9 & 0.0 / 0.0 / 0.0 & 0.3 / 0.5 / 0.3 & 1.9 / 8.4 / 1.2 \\
 &  & 3 & 1.9 / 1.8 / 0.0 & -- / 0.8 / 0.0 & -- / 4.1 / 1.4 & 0.0 / 0.0 / 0.0 & 0.3 / 0.5 / 0.4 & 2.2 / 7.2 / 1.8 \\
 &  & 5 & 2.1 / 2.1 / 0.0 & -- / 0.8 / 0.0 & -- / 3.9 / 2.4 & 0.0 / 0.1 / 0.0 & 0.7 / 0.7 / 0.6 & 2.8 / 7.6 / 3.0 \\
 &  & 10 & 2.6 / 2.7 / 0.0 & -- / 0.8 / 0.0 & -- / 4.4 / 6.4 & 0.0 / 0.1 / 0.0 & 1.2 / 1.3 / 1.0 & 3.8 / 9.4 / 7.4 \\
\addlinespace[0.15em]
 & \multirow{4}{*}{\textsc{Interaction}} & 2 & 1.6 / 1.5 / 0.0 & -- / 0.8 / 0.0 & -- / 1.6 / 1.2 & 0.0 / 0.0 / 0.0 & 0.2 / 0.4 / 0.3 & 1.8 / 4.3 / 1.5 \\
 &  & 3 & 1.9 / 1.8 / 0.0 & -- / 0.8 / 0.0 & -- / 2.2 / 1.3 & 0.0 / 0.0 / 0.0 & 0.3 / 0.5 / 0.4 & 2.2 / 5.3 / 1.6 \\
 &  & 5 & 2.1 / 2.1 / 0.0 & -- / 0.8 / 0.0 & -- / 2.3 / 1.4 & 0.0 / 0.1 / 0.0 & 0.5 / 0.7 / 0.5 & 2.7 / 6.0 / 1.9 \\
 &  & 10 & 2.6 / 2.7 / 0.0 & -- / 0.8 / 0.0 & -- / 2.6 / 1.6 & 0.0 / 0.1 / 0.0 & 1.1 / 1.2 / 0.9 & 3.8 / 7.4 / 2.5 \\
\addlinespace[0.15em]
 & \multirow{4}{*}{\textsc{Sinusoid}} & 2 & 1.9 / 1.8 / 0.0 & -- / 1.0 / 0.0 & -- / 4.3 / 0.9 & 0.0 / 0.0 / 0.0 & 0.3 / 0.4 / 0.3 & 2.2 / 7.6 / 1.2 \\
 &  & 3 & 2.2 / 2.1 / 0.0 & -- / 0.9 / 0.0 & -- / 3.4 / 1.3 & 0.0 / 0.0 / 0.0 & 0.3 / 0.5 / 0.4 & 2.5 / 7.0 / 1.7 \\
 &  & 5 & 2.4 / 2.4 / 0.0 & -- / 0.9 / 0.0 & -- / 3.4 / 2.1 & 0.0 / 0.1 / 0.0 & 0.7 / 0.7 / 0.6 & 3.1 / 7.6 / 2.7 \\
 &  & 10 & 3.0 / 3.1 / 0.0 & -- / 1.0 / 0.0 & -- / 4.6 / 5.1 & 0.0 / 0.2 / 0.0 & 1.2 / 1.3 / 1.0 & 4.2 / 10.1 / 6.1 \\
\addlinespace[0.15em]
 & \multirow{4}{*}{\textsc{Softplus}} & 2 & 1.8 / 1.8 / 0.0 & -- / 0.9 / 0.0 & -- / 3.4 / 0.9 & 0.0 / 0.0 / 0.0 & 0.2 / 0.4 / 0.3 & 2.0 / 6.6 / 1.2 \\
 &  & 3 & 2.2 / 2.2 / 0.0 & -- / 0.9 / 0.0 & -- / 3.0 / 1.2 & 0.0 / 0.0 / 0.0 & 0.3 / 0.5 / 0.4 & 2.5 / 6.7 / 1.6 \\
 &  & 5 & 2.4 / 2.4 / 0.0 & -- / 0.9 / 0.0 & -- / 3.5 / 2.1 & 0.0 / 0.1 / 0.0 & 0.6 / 0.7 / 0.6 & 3.0 / 7.7 / 2.7 \\
 &  & 10 & 3.0 / 3.0 / 0.0 & -- / 1.0 / 0.0 & -- / 4.1 / 5.2 & 0.0 / 0.2 / 0.0 & 1.2 / 1.3 / 1.0 & 4.2 / 9.5 / 6.2 \\
\bottomrule
\end{tabular}
}
\vspace{2pt}
\caption{Regression runtime breakdown when varying the number of sources $K$. Details are otherwise the same as in Table~\ref{tab:scaling_runtime_regression_balance}.}
\label{tab:scaling_runtime_regression_k}
\end{table*}

\begin{table*}[t]
\centering
\footnotesize
\setlength{\tabcolsep}{4pt}
\renewcommand{\arraystretch}{0.98}
\resizebox{\textwidth}{!}{%
\begin{tabular}{@{}lllcccccc@{}}
\toprule
Suite & Setting & $N$ & Fit sources & Fit pooled & Learn $\lambda$ & Calibrate & Test eval & Total runtime \\
\midrule
\multirow{4}{*}{\textsc{Linear}} & \multirow{4}{*}{$\tau=2.5$} & 1500 & 0.7 / 0.7 / 0.0 & -- / 0.6 / 0.0 & -- / 3.9 / 0.7 & 0.0 / 0.0 / 0.0 & 0.1 / 0.1 / 0.1 & 0.8 / 5.3 / 0.8 \\
 &  & 3000 & 1.2 / 1.1 / 0.0 & -- / 0.8 / 0.0 & -- / 2.3 / 1.1 & 0.0 / 0.0 / 0.0 & 0.2 / 0.3 / 0.2 & 1.3 / 4.5 / 1.3 \\
 &  & 4500 & 1.5 / 1.4 / 0.0 & -- / 0.8 / 0.0 & -- / 3.5 / 1.3 & 0.0 / 0.0 / 0.0 & 0.2 / 0.4 / 0.3 & 1.7 / 6.1 / 1.6 \\
 &  & 6000 & 1.8 / 1.8 / 0.0 & -- / 0.8 / 0.0 & -- / 3.9 / 1.4 & 0.0 / 0.0 / 0.0 & 0.3 / 0.5 / 0.4 & 2.2 / 7.0 / 1.8 \\
\midrule
\multirow{20}{*}{\textsc{Temperature}} & \multirow{4}{*}{$\tau=0.5$} & 1500 & 0.7 / 0.7 / 0.0 & -- / 0.6 / 0.0 & -- / 4.3 / 0.6 & 0.0 / 0.0 / 0.0 & 0.1 / 0.1 / 0.1 & 0.8 / 5.8 / 0.7 \\
 &  & 3000 & 1.2 / 1.1 / 0.0 & -- / 1.3 / 0.0 & -- / 2.6 / 1.0 & 0.0 / 0.0 / 0.0 & 0.2 / 0.2 / 0.2 & 1.4 / 5.3 / 1.1 \\
 &  & 4500 & 1.6 / 1.5 / 0.0 & -- / 0.8 / 0.0 & -- / 3.3 / 1.0 & 0.0 / 0.0 / 0.0 & 0.2 / 0.4 / 0.3 & 1.8 / 6.1 / 1.3 \\
 &  & 6000 & 1.9 / 1.9 / 0.0 & -- / 0.9 / 0.0 & -- / 3.7 / 1.0 & 0.0 / 0.0 / 0.0 & 0.3 / 0.5 / 0.4 & 2.3 / 7.0 / 1.4 \\
\addlinespace[0.15em]
 & \multirow{4}{*}{$\tau=1.5$} & 1500 & 0.8 / 0.7 / 0.0 & -- / 0.6 / 0.0 & -- / 3.8 / 0.8 & 0.0 / 0.0 / 0.0 & 0.1 / 0.1 / 0.1 & 0.8 / 5.3 / 0.9 \\
 &  & 3000 & 1.2 / 1.1 / 0.0 & -- / 0.9 / 0.0 & -- / 3.0 / 1.2 & 0.0 / 0.0 / 0.0 & 0.2 / 0.3 / 0.2 & 1.4 / 5.3 / 1.4 \\
 &  & 4500 & 1.5 / 1.5 / 0.0 & -- / 0.8 / 0.0 & -- / 3.2 / 1.3 & 0.0 / 0.0 / 0.0 & 0.2 / 0.4 / 0.3 & 1.8 / 6.0 / 1.6 \\
 &  & 6000 & 1.9 / 1.9 / 0.0 & -- / 0.8 / 0.0 & -- / 4.4 / 1.4 & 0.0 / 0.0 / 0.0 & 0.3 / 0.5 / 0.4 & 2.3 / 7.7 / 1.8 \\
\addlinespace[0.15em]
 & \multirow{4}{*}{$\tau=2.5$} & 1500 & 0.7 / 0.7 / 0.0 & -- / 0.6 / 0.0 & -- / 3.6 / 0.9 & 0.0 / 0.0 / 0.0 & 0.1 / 0.1 / 0.1 & 0.8 / 5.1 / 1.0 \\
 &  & 3000 & 1.2 / 1.1 / 0.0 & -- / 0.9 / 0.0 & -- / 2.9 / 1.3 & 0.0 / 0.0 / 0.0 & 0.2 / 0.3 / 0.2 & 1.4 / 5.1 / 1.5 \\
 &  & 4500 & 1.6 / 1.5 / 0.0 & -- / 0.8 / 0.0 & -- / 4.0 / 1.4 & 0.0 / 0.0 / 0.0 & 0.2 / 0.4 / 0.3 & 1.8 / 6.7 / 1.7 \\
 &  & 6000 & 1.9 / 2.1 / 0.0 & -- / 0.8 / 0.0 & -- / 5.3 / 1.4 & 0.0 / 0.0 / 0.0 & 0.3 / 0.5 / 0.4 & 2.3 / 8.8 / 1.8 \\
\addlinespace[0.15em]
 & \multirow{4}{*}{$\tau=3.5$} & 1500 & 0.8 / 0.8 / 0.0 & -- / 0.6 / 0.0 & -- / 3.5 / 0.9 & 0.0 / 0.0 / 0.0 & 0.1 / 0.1 / 0.1 & 0.9 / 5.0 / 1.0 \\
 &  & 3000 & 1.3 / 1.2 / 0.0 & -- / 0.9 / 0.0 & -- / 3.1 / 1.3 & 0.0 / 0.0 / 0.0 & 0.2 / 0.3 / 0.2 & 1.4 / 5.5 / 1.5 \\
 &  & 4500 & 1.6 / 1.6 / 0.0 & -- / 0.9 / 0.0 & -- / 4.5 / 1.4 & 0.0 / 0.0 / 0.0 & 0.3 / 0.4 / 0.3 & 1.9 / 7.4 / 1.7 \\
 &  & 6000 & 2.0 / 2.0 / 0.0 & -- / 0.9 / 0.0 & -- / 4.8 / 1.4 & 0.0 / 0.0 / 0.0 & 0.3 / 0.5 / 0.4 & 2.4 / 8.2 / 1.8 \\
\addlinespace[0.15em]
 & \multirow{4}{*}{$\tau=4.5$} & 1500 & 0.8 / 0.8 / 0.0 & -- / 0.6 / 0.0 & -- / 3.7 / 0.9 & 0.0 / 0.0 / 0.0 & 0.1 / 0.1 / 0.1 & 0.9 / 5.2 / 1.0 \\
 &  & 3000 & 1.3 / 1.2 / 0.0 & -- / 0.9 / 0.0 & -- / 3.2 / 1.3 & 0.0 / 0.0 / 0.0 & 0.2 / 0.3 / 0.2 & 1.4 / 5.5 / 1.5 \\
 &  & 4500 & 1.6 / 1.6 / 0.0 & -- / 0.8 / 0.0 & -- / 4.1 / 1.3 & 0.0 / 0.0 / 0.0 & 0.3 / 0.4 / 0.3 & 1.9 / 6.9 / 1.6 \\
 &  & 6000 & 2.0 / 2.0 / 0.0 & -- / 0.9 / 0.0 & -- / 5.5 / 1.3 & 0.0 / 0.0 / 0.0 & 0.3 / 0.5 / 0.4 & 2.3 / 8.9 / 1.8 \\
\midrule
\multirow{16}{*}{\textsc{Nonlinear}} & \multirow{4}{*}{\textsc{Linear}} & 1500 & 0.7 / 0.8 / 0.0 & -- / 0.5 / 0.0 & -- / 3.6 / 0.7 & 0.0 / 0.0 / 0.0 & 0.1 / 0.1 / 0.1 & 0.8 / 5.0 / 0.8 \\
 &  & 3000 & 1.1 / 1.1 / 0.0 & -- / 0.8 / 0.0 & -- / 2.8 / 1.0 & 0.0 / 0.0 / 0.0 & 0.2 / 0.3 / 0.2 & 1.3 / 5.0 / 1.2 \\
 &  & 4500 & 1.5 / 1.5 / 0.0 & -- / 0.8 / 0.0 & -- / 3.0 / 1.2 & 0.0 / 0.0 / 0.0 & 0.2 / 0.4 / 0.3 & 1.7 / 5.7 / 1.5 \\
 &  & 6000 & 1.8 / 1.8 / 0.0 & -- / 0.8 / 0.0 & -- / 4.1 / 1.4 & 0.0 / 0.0 / 0.0 & 0.3 / 0.6 / 0.4 & 2.2 / 7.3 / 1.8 \\
\addlinespace[0.15em]
 & \multirow{4}{*}{\textsc{Interaction}} & 1500 & 0.8 / 0.7 / 0.0 & -- / 0.5 / 0.0 & -- / 0.5 / 0.7 & 0.0 / 0.0 / 0.0 & 0.1 / 0.1 / 0.1 & 0.9 / 1.9 / 0.8 \\
 &  & 3000 & 1.1 / 1.1 / 0.0 & -- / 0.8 / 0.0 & -- / 1.1 / 0.9 & 0.0 / 0.0 / 0.0 & 0.2 / 0.2 / 0.2 & 1.3 / 3.2 / 1.1 \\
 &  & 4500 & 1.5 / 1.5 / 0.0 & -- / 0.8 / 0.0 & -- / 1.2 / 1.0 & 0.0 / 0.0 / 0.0 & 0.2 / 0.4 / 0.3 & 1.7 / 3.8 / 1.3 \\
 &  & 6000 & 1.9 / 1.8 / 0.0 & -- / 0.8 / 0.0 & -- / 2.2 / 1.3 & 0.0 / 0.0 / 0.0 & 0.3 / 0.5 / 0.4 & 2.2 / 5.3 / 1.6 \\
\addlinespace[0.15em]
 & \multirow{4}{*}{\textsc{Sinusoid}} & 1500 & 0.8 / 0.9 / 0.0 & -- / 0.6 / 0.0 & -- / 3.2 / 0.7 & 0.0 / 0.0 / 0.0 & 0.1 / 0.1 / 0.1 & 0.9 / 4.8 / 0.8 \\
 &  & 3000 & 1.3 / 1.3 / 0.0 & -- / 0.9 / 0.0 & -- / 1.9 / 1.0 & 0.0 / 0.0 / 0.0 & 0.2 / 0.3 / 0.2 & 1.5 / 4.3 / 1.2 \\
 &  & 4500 & 1.7 / 1.8 / 0.0 & -- / 0.9 / 0.0 & -- / 2.5 / 1.1 & 0.0 / 0.0 / 0.0 & 0.2 / 0.4 / 0.3 & 2.0 / 5.7 / 1.4 \\
 &  & 6000 & 2.2 / 2.1 / 0.0 & -- / 1.0 / 0.0 & -- / 3.4 / 1.3 & 0.0 / 0.0 / 0.0 & 0.3 / 0.6 / 0.4 & 2.5 / 7.1 / 1.7 \\
\addlinespace[0.15em]
 & \multirow{4}{*}{\textsc{Softplus}} & 1500 & 0.9 / 0.8 / 0.0 & -- / 0.6 / 0.0 & -- / 1.2 / 0.7 & 0.0 / 0.0 / 0.0 & 0.1 / 0.1 / 0.1 & 1.0 / 2.7 / 0.8 \\
 &  & 3000 & 1.3 / 1.3 / 0.0 & -- / 0.9 / 0.0 & -- / 2.7 / 1.0 & 0.0 / 0.0 / 0.0 & 0.2 / 0.3 / 0.2 & 1.5 / 5.2 / 1.2 \\
 &  & 4500 & 1.7 / 1.7 / 0.0 & -- / 0.9 / 0.0 & -- / 3.4 / 1.0 & 0.0 / 0.1 / 0.0 & 0.2 / 0.4 / 0.3 & 2.0 / 6.5 / 1.3 \\
 &  & 6000 & 2.2 / 2.1 / 0.0 & -- / 0.9 / 0.0 & -- / 3.0 / 1.2 & 0.0 / 0.0 / 0.0 & 0.3 / 0.5 / 0.4 & 2.5 / 6.6 / 1.6 \\
\bottomrule
\end{tabular}
}
\vspace{2pt}
\caption{Regression runtime breakdown when varying the total sample size $N$. Details are otherwise the same as in Table~\ref{tab:scaling_runtime_regression_balance}.}
\label{tab:scaling_runtime_regression_n_total}
\end{table*}

\end{document}